\documentclass{article}

\usepackage{microtype}
\usepackage[pdftex]{graphicx} 
\usepackage{subfigure} 
\usepackage{booktabs} 

\usepackage[backref=page]{hyperref}

\usepackage[accepted]{icml2026}

\usepackage{amsmath, amssymb, mathtools, amsthm}

\usepackage{multirow} 
\usepackage[page, header]{appendix}
\usepackage{titletoc}
\contentsuse{section}{sections}  

\usepackage[capitalize,noabbrev]{cleveref}

\usepackage{lipsum} 

\usepackage{amsmath,amsfonts,bm}

\def\Figref#1{Figure~\ref{#1}}
\def\twofigref#1#2{Figures \ref{#1} and \ref{#2}}

\def\Secref#1{Section~\ref{#1}}

\def\eqref#1{equation~\ref{#1}}
\def\Eqref#1{Equation~\ref{#1}}
\def\twoeqrefs#1#2{Equations \ref{#1} and \ref{#2}}

\def\Algref#1{Algorithm~\ref{#1}}

\def\1{\bm{1}}
\newcommand{\train}{\mathcal{D}}

\def\eps{{\epsilon}}

\DeclareMathAlphabet{\mathsfit}{\encodingdefault}{\sfdefault}{m}{sl}
\SetMathAlphabet{\mathsfit}{bold}{\encodingdefault}{\sfdefault}{bx}{n}

\newcommand{\E}{\mathbb{E}}
\newcommand{\Ls}{\mathcal{L}}
\newcommand{\R}{\mathbb{R}}

\newcommand{\KL}{D_{\mathrm{KL}}}

\newcommand{\normltwo}{\ell_2}

\newcommand{\normmax}{\ell_\infty}

\DeclareMathOperator{\sign}{sign}

\def\Lemref#1{Lemma~\ref{#1}}

\def\threetabref#1#2#3{Tables~\ref{#1},~\ref{#2}, and~\ref{#3}}
\def\Tabref#1{Table~\ref{#1}}

\def\Appref#1{Appendix~\ref{#1}}
\def\TwoAppref#1#2{Appendices~\ref{#1} and~\ref{#2}}
\def\ThreeAppref#1#2#3{Appendices~\ref{#1},~\ref{#2}, and~\ref{#3}}

\def\unif{{\mathcal{U}}}

\newcommand{\norm}[1]{\left\lVert#1\right\rVert}
\newcommand{\grad}[1]{\nabla #1}
\newcommand{\AlignmentHessianLemma}{
Let $f_\theta: \R^d \to \R^\mathcal{C}$ be a classifier and let $g = \grad_x \Ls(f_\theta(x), y)$, $H = \grad_x^2 \Ls(f_\theta(x), y)$. Then for $\alpha \ll 1$, \textnormal{PertAlign} can be approximated by:
\begin{equation*}
    1 - \textnormal{PertAlign} \approx \frac{\alpha^2}{2} \norm{h_{\perp g}}^2, \quad h = \frac{Hv}{\norm{g}}
\end{equation*}

where $h_{\perp g}$ denotes component of $h$, orthogonal to $g$.
}

\newcommand{\OptimalStepSizeSign}{
For a second order loss function, let the perturbation of the input be in the direction of $\sign(g)$:
\[ p = \sign(g) \in \{-1,0, 1\}^d, \quad v = \alpha p. \]
Then the optimal step-size that maximizes this loss is
\[
\alpha^* =
\begin{cases}
    \min\left(\alpha_{\max}, \frac{\alpha_0}{1 - \frac{p^T g'}{\lVert g \rVert_1}}\right), & \frac{p^T g'}{\lVert g \rVert_1} < 1,\\[3pt]
    \alpha_{\max}, & \text{otherwise}.
\end{cases}
\]
where $g' = g + \alpha H p$, $\norm{\cdot}_1$ is the $\ell_1$ norm, and $\alpha_{\max}$ is the maximum step-size budget, and $\alpha_0$ is a fixed numerator.
}

\newcommand{\ManualMiniTOC}[1]{%
  \par\smallskip
  {
  \small
  \setlength{\parindent}{0pt}%
  \setlength{\parskip}{1pt}%
  \vspace{0.5em}
  \noindent\textbf{\large Contents}\par
  \vspace{0.5em}
  #1
  \vspace{0.5em}
  }%
  \par\smallskip
}
\newcommand{\TOCline}[2]{%
  \vspace{3pt}
  \noindent\hyperref[#2]{\textbf{#1}\hfill \textbf{\pageref*{#2}}}\par
  \vspace{3pt}
}
\newcommand{\TOCsubline}[2]{%
  \noindent\hspace*{1.8em}\hyperref[#2]{#1\dotfill \pageref*{#2}}\par
  \vspace{1.5pt}
}
\usepackage{url}
\usepackage{xfrac}
\usepackage{bm}
\usepackage{colortbl}
\definecolor{LightGreen}{rgb}{0.93,0.98,0.96}
\xdefinecolor{PertAlign}{RGB}{255, 117, 35}

\theoremstyle{plain}
\newtheorem{theorem}{Theorem}[section]
\newtheorem{proposition}[theorem]{Proposition}
\newtheorem{lemma}[theorem]{Lemma}
\newtheorem{corollary}[theorem]{Corollary}
\theoremstyle{definition}

\theoremstyle{remark}
\newtheorem{remark}[theorem]{Remark}

\usepackage[textsize=tiny]{todonotes}

\icmltitlerunning{Free Second-Order Attacks in Fast Adversarial Training}

\begin{document}

\twocolumn[
  \icmltitle{SORA: Free Second-Order Attacks in Fast Adversarial Training}



  \icmlsetsymbol{equal}{*}

  \begin{icmlauthorlist}
    \icmlauthor{Mazdak Teymourian}{CE,equal}
    \icmlauthor{Ramtin Moslemi}{CE,equal}
    \icmlauthor{Farzan Rahmani}{CE}
    \icmlauthor{Mohammad Hossein Rohban}{CE}
  \end{icmlauthorlist}

  \icmlaffiliation{CE}{Department of Computer Engineering, Sharif University of Technology, Tehran, Iran}

  \icmlcorrespondingauthor{Mohammad Hossein Rohban}{rohban@sharif.edu}

  \icmlkeywords{Machine Learning, Adversarial Training, Fast Adversarial Training, Catastrophic Overfitting, Second Order Optimization}

  \vskip 0.3in
]



\printAffiliationsAndNotice{\icmlEqualContribution}
    
    \begin{abstract}
    Adversarial Training (AT) is a leading defense against adversarial examples but often suffers from \emph{Catastrophic Overfitting} (CO) in efficient single-step variants, where robustness to multi-step attacks collapses despite high single-step performance. 
    We address this failure mode with two contributions. 
    First, we formalize \emph{Epsilon Overfitting} (EO), a perspective in which fixed perturbation magnitudes and directions exacerbate CO, and show that introducing perturbation variability significantly improves robust generalization across different architectures and datasets. 
    Second, we propose \textbf{PertAlign} (Perturbation Alignment), a theoretically grounded, computationally negligible metric that predicts CO onset by measuring gradient alignment across attack stages. 
    Leveraging these insights, we introduce \textbf{SORA}, an adaptive step-size AT method that dynamically adjusts perturbations based on loss surface geometry. 
    SORA consistently prevents CO, achieves state-of-the-art robustness and clean accuracy, and generalizes across datasets and architectures using a single fixed set of hyperparameters, which is essential for applicability in fast AT.
    Extensive experiments on diverse datasets and architectures show that SORA matches or surpasses the robustness of prior methods while delivering higher clean accuracy and superior efficiency.
    Code is available at \url{https://github.com/SecondOrderAT/SORA}.
\end{abstract}

    \section{Introduction}\label{sec:intro}

Deep neural networks have achieved remarkable success across a wide range of domains.
However, their vulnerability to adversarial examples poses significant challenges to their reliability and security~\citep{szegedy2013intriguing}. 
These adversarial examples are often subtle and imperceptible to the human eye, yet can cause models to make incorrect predictions with high confidence.

Adversarial Training (AT)~\citep{madry2019towards} has emerged as one of the most effective defenses against adversarial examples \citep{athalye2018obfuscated}. 
By training models on adversarially perturbed samples, AT improves robustness to attacks during deployment.  
Despite these benefits, multi-step AT methods such as Projected Gradient Descent (PGD)~\citep{madry2019towards} are computationally expensive, as they require several gradient ascent steps to generate adversarial examples at each training iteration. 
This cost severely limits their scalability to large datasets and deep architectures.

To mitigate this overhead, single-step adversarial training approaches~\citep{wong2020fast} have been proposed.  
While significantly more efficient, these methods introduce new challenges.  
Notably, \citet{wong2020fast} identified a failure mode in single-step AT, commonly observed when using the Fast Gradient Sign Method (FGSM)~\citep{goodfellow2014explaining}, that has been termed \emph{Catastrophic Overfitting} (CO); related limitations of one-step attacks were also discussed in earlier AT work~\citep{madry2019towards}.
In this setting, robust accuracy against multi-step attacks such as PGD collapses to nearly $0\%$, while FGSM accuracy can rise sharply and even exceed the clean accuracy.  
This suggests that the model overfits to the specific attack used in training and becomes brittle against stronger iterative attacks.

\begin{figure*}
    \centering
    \includegraphics[width=\textwidth]{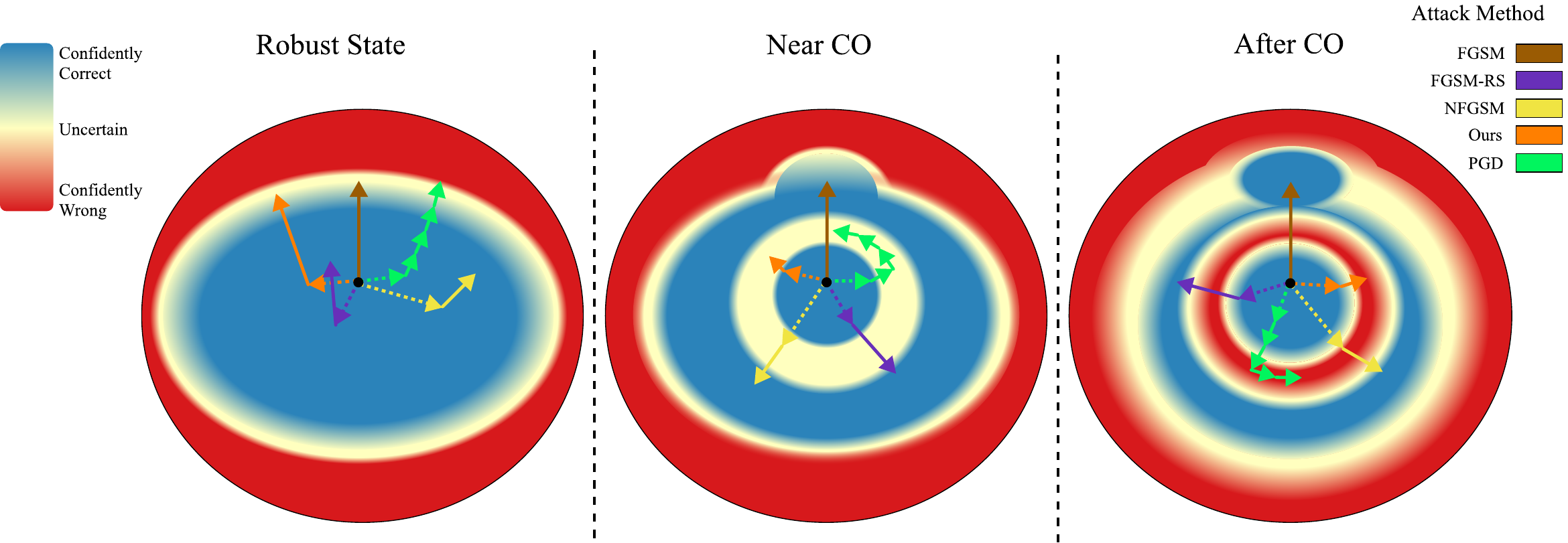}
    \caption{Evolution of the loss landscape geometry around a sample during FGSM training.
    \textbf{Left:} Early in training, before CO, the model is robust, with FGSM and PGD accuracies closely matched. The loss surface is approximately semilinear.
    \textbf{Middle:} A few batch updates before CO, the decision boundary begins to wrap around the original and adversarial examples, forming a nonlinear region that is not yet misclassified.
    \textbf{Right:} This nonlinearity escalates, leading to misclassification in that region, which stronger attacks such as PGD exploit. Perturbations from other attacks and our method (SORA) are also visualized.}
    \label{fig:db-sketch}
\end{figure*}

Although many methods have been proposed to address this challenge, we argue that an effective fast AT method should satisfy the following properties:
\begin{itemize}
    \item \textbf{Robustness across datasets and architectures.}  
    It should reliably prevent CO on a diverse range of datasets and models. As shown in \Secref{sec:experiments} and \Appref{sec:extended_results}, our proposed method, Second-Order Adaptive (SORA), succeeds not only on commonly used datasets but also on challenging benchmarks where existing single-step techniques often fail.
    \item \textbf{High clean and robust accuracy with low cost.}  
    A known trade-off exists between clean and robust accuracy \citep{tsipras2019robustness, zhang2019theoretically}. Many prior methods achieve strong robustness by restricting model flexibility (e.g., stronger regularization or reduced capacity), which harms clean accuracy. In contrast, SORA maintains high clean accuracy and state-of-the-art robustness while adding negligible computational and memory overhead by efficiently choosing the attack step-size distribution.
    \item \textbf{Dataset and architecture agnostic hyperparameters.}  
    Several methods require extensive, dataset- and model-specific hyperparameter tuning to avoid CO (\Appref{sec:brittleness}), which undermines the original goal of Fast Adversarial Training (FAT): robustness with minimal training cost. In our experiments (\Secref{sec:experiments} and \Appref{sec:extended_results}), we fix SORA's hyperparameters across all settings and adopt the single best configuration reported in prior work for a fair comparison, demonstrating strong generalization without costly tuning for each setting.
\end{itemize}

To the best of our knowledge, no metric has been explicitly designed to predict CO without significant cost.  
Nevertheless, several existing measures can serve as indirect early indicators of CO onset \citep{andriushchenko2020understanding, lin2023eliminating, rocamora2024efficient}.  
These indicators often incur high computational cost, requiring extra backpropagation or forward passes.
To address this gap, we propose PertAlign, a fast, gradient-based metric that is theoretically linked to the curvature of the loss landscape and can predict CO with negligible overhead. It uses gradients from the first and second attack iterations.
We further show that PertAlign can anticipate CO earlier than existing indicators.

In addition, we identify a previously overlooked phenomenon, which we term \emph{Epsilon Overfitting} (\Figref{fig:db-sketch}).  
We show in \Secref{sec:eps-overfitting} that variability in the attack step-size plays a crucial role in the emergence of CO.
By incorporating this insight, SORA achieves improved robustness across a wider range of settings compared to previous baselines.  
Our experiments demonstrate that methods incorporating perturbation variability generalize more effectively and consistently avoid CO.

\textbf{Our contributions are as follows:}
\begin{itemize}
    \item We introduce and analyze \emph{Epsilon Overfitting}, demonstrating its importance for robust generalization and CO.
    \item We propose \emph{PertAlign}, a theoretically grounded and computationally efficient metric for early and reliable CO prediction. 
    \item We present \emph{SORA}, an efficient training paradigm that leverages these insights to adaptively determine attack step-sizes. We show that it matches or surpasses state-of-the-art generalization across datasets and architectures with minimal hyperparameter tuning.
    \item We identify key criteria for making fast methods deployable in practice and evaluate them on well-known domains, as well as on important but less frequently tested medical domains.
\end{itemize}

    \section{Background and Related Work}
\label{sec:related}

\subsection{Adversarial Training}
\label{subsec:adv-train}
Adversarial Training (AT) can be formulated as a min–max optimization problem, where the inner maximization seeks adversarial perturbations that maximize the training loss, while the outer minimization updates model parameters to minimize the loss on these adversarial examples~\citep{madry2019towards}.
Formally, this can be expressed as:

\begin{equation}\label{eq:adversarial_training}
    \min_\theta \E_{(x,y)\sim \train} \left[ \max_{\delta \in \Delta} \Ls \left(f_\theta(x+\delta), y\right) \right],
\end{equation}

where $\theta$ denotes the parameters of the model $f$, $(x,y)$ are the training data drawn from the distribution $\train$, $\Ls(\cdot, \cdot)$ denotes the cross-entropy loss, and $\delta$ is the perturbation confined within a given boundary $\Delta$.

In the standard multi-step AT by \cite{madry2019towards}, the loss function used is the cross-entropy loss and $\Delta$ is defined as the $\normmax$-norm ball with radius $\epsilon$, and the inner maximization is solved via the Projected Gradient Descent (PGD) attack. 
While effective, this iterative approach significantly increases the computational cost for large-scale models and datasets.
\citet{zhang2019theoretically} use a regularization term that minimizes the Kullback–Leibler (KL) divergence between the output distributions of clean and adversarial examples.

\citet{shafahi2019adversarial} use the gradient used for generating adversarial examples for updating the model weights at the same time and therefore eliminate the overhead of the attack.

Following \citet{wong2020fast}, a family of methods, collectively referred to as Fast Adversarial Training (FAT), emerged with the goal of producing robust models at a fraction of the cost of multi-step AT. 
However, while these methods improve efficiency, fast single-step AT approaches are prone to a distinctive failure mode known as Catastrophic Overfitting (CO). 
Understanding and mitigating CO has thus become a critical research direction.

\subsection{Catastrophic Overfitting}
\label{subsec:co}
CO is a phenomenon observed during AT, more prevalently during single-step AT, where robustness against multi-step attacks such as PGD suddenly decreases to 0\% just within a few epochs, whereas robustness against the FGSM attack rapidly increases. 
Many studies have focused on explaining CO and developing FAT variants that avoid it. 
Broadly, prior works can be grouped by their main mitigation strategy:

\paragraph{Randomization.}
Randomization (noise-based) methods inject stochasticity into adversarial example generation to prevent harmful overfitting. 
\citet{wong2020fast} demonstrated that adding random starts to FGSM training can prevent CO. 
\citet{de2022make} extended this by increasing the magnitude of random noise and removing the clipping step to boost accuracy.

\paragraph{Regularization.}
Regularization-based methods constrain model behavior to discourage CO.
\citet{andriushchenko2020understanding} proposed maximizing gradient alignment within the perturbation set. 
\citet{sriramanan2020guided} used Euclidean norm regularization while \citet{sriramanan2021towards} used nuclear norm regularization to enforce function smoothing in the vicinity of data samples.
\citet{lin2023eliminating} identified \emph{abnormal} adversarial examples as a source of CO and mitigated them via regularization.
Similarly, \citet{rocamora2024efficient} encouraged local linearity in the loss surface.

\paragraph{Adaptive Step-Sizes.}
These methods modify the attack step-size based on training dynamics. 
\citet{nie2021adaptive} used reinforcement learning to determine step-sizes.
\citet{huang2023fast} adapted them using gradient norms inspired by \citet{zheng2020efficient}.
\citet{zhao2025avoiding} determined the step-size for each sample based on the similarity between the input noise and the gradient direction.

\paragraph{Other Approaches.}
Some works address CO from alternative perspectives. 
For instance, \citet{golgooni2021zerograd} omitted updates with negligible gradient magnitudes to stabilize training, while \citet{jia2022prior, jia2023improving} use prior-guided adversarial initialization to generate stronger adversarial examples by using perturbations from the historical training process.
\citet{zhang2022revisiting, wang2024preventing} view FAT from the perspective of bi-level optimization.

\subsection{Second-Order Optimization}
\label{subsec:soo}
Second-order methods have been used to enhance adversarial robustness.
Inspired by the relation between curvature and robustness~\citep{fawzi2016robustness},
\citet{moosavidezfooli2018robustness,ma2021soar,tsiligkaridis2020second} train robust models using curvature regularization.
Alternatively \citet{singla2020second} use second-order information to improve certifiable robustness and \citet{qian2022hessian} use second-order optimization to generate stronger adversarial examples to improve AT.
However, none of these methods fall within the FAT paradigm.
    \section{Epsilon Overfitting}\label{sec:eps-overfitting}

\begin{figure*}[!t]
    \centering
    \subfigure[FGSM Adversarial Training]{\label{fig:fgsm-eps-acc}\includegraphics[width=0.48\textwidth]{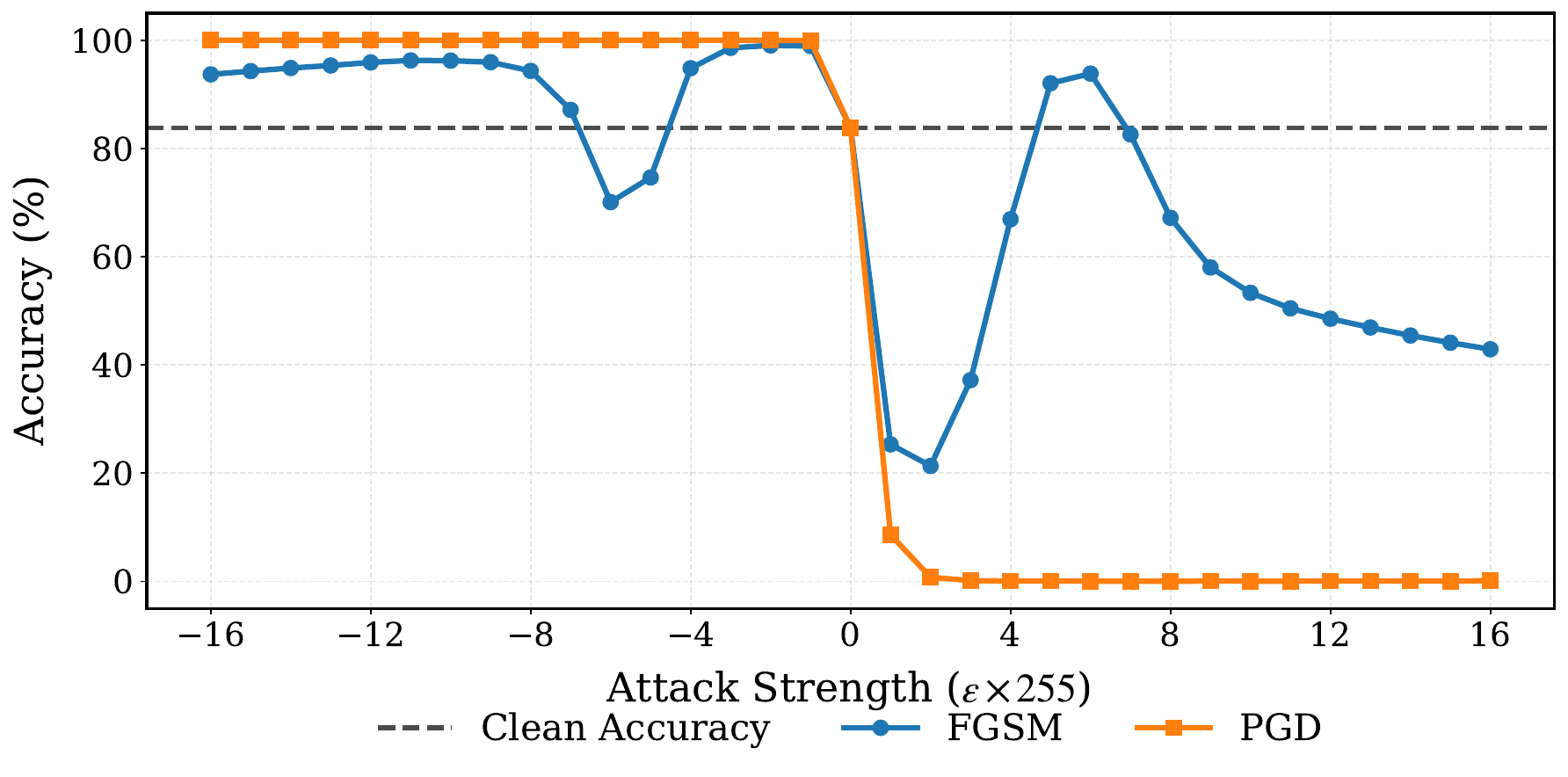}}
    \subfigure[SORA Adversarial Training]{\label{fig:sora-eps-acc}\includegraphics[width=0.48\textwidth]{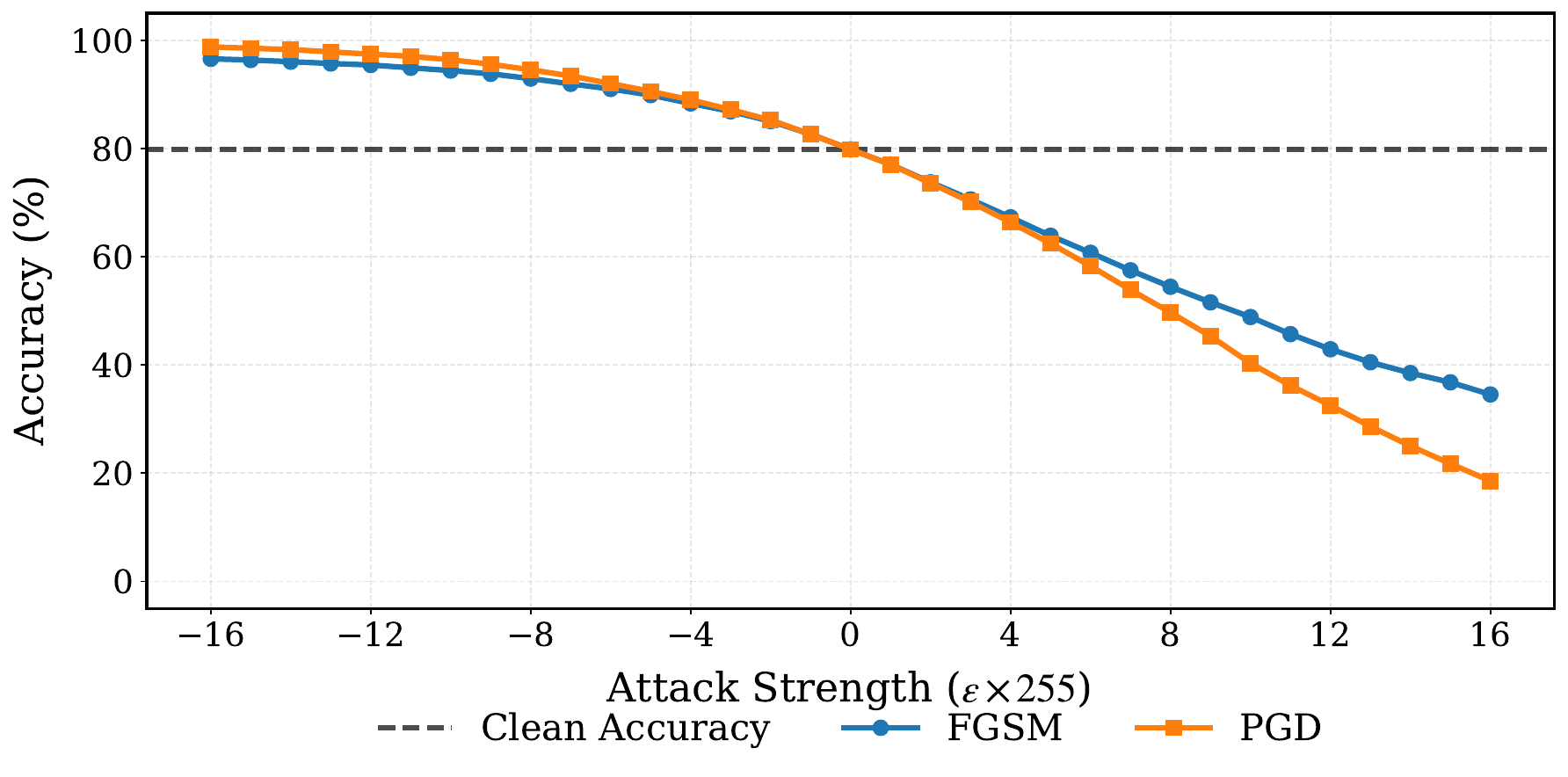}}
    \caption{
    Comparison of a model exhibiting CO \subref{fig:fgsm-eps-acc} with a robust model \subref{fig:sora-eps-acc}, evaluated using FGSM and PGD-10.
    }
    \label{fig:eps-acc}
\end{figure*}

Although numerous studies have examined CO from various perspectives, a comprehensive understanding of its underlying mechanisms remains elusive.  
One well‑established viewpoint associates CO with changes in the geometry of the model’s loss landscape, with several works showing that, at its onset, the loss surface becomes highly distorted~\citep{kim2021understanding, kang2021understanding}.  
However, a more precise characterization of this distortion is still lacking.  
We build on this perspective by analyzing the loss landscape and robustness across different training states, offering a clearer depiction of how the surface evolves throughout CO.  
Specifically, we investigate whether this distortion follows a consistent, characteristic pattern.

\subsection{Evolution of Loss Landscape Geometry}
\label{subsec:evolution}
A striking property of CO is that it does not degrade clean accuracy, yet it often causes a substantial increase in FGSM accuracy, sometimes even surpassing clean accuracy on the test set.  
This counter-intuitive effect prompted us to examine FGSM accuracy across a range of $\epsilon$ values for a model trained with a fixed perturbation budget (e.g., $\epsilon = \sfrac{8}{255}$), as shown in \Figref{fig:fgsm-eps-acc}. See \Appref{subsec:eo-ablation} for more examples.

Two patterns can be observed in this plot.  
First, there exist certain $\epsilon$ values for which FGSM accuracy drops sharply (e.g., $\epsilon = \sfrac{2}{255}$, $\epsilon = \sfrac{-6}{255}$), while for other values it remains unexpectedly high (e.g., $\epsilon = \sfrac{-2}{255}$, $\epsilon = \sfrac{6}{255}$).  
Such high variation in FGSM accuracies indicates pronounced nonlinearity in the loss surface, even along the FGSM perturbation direction used during training. 
Since the model effectively \emph{overfits} to specific $\epsilon$ values, performing well for those perturbation magnitudes but poorly for others, we refer to this as \emph{Epsilon Overfitting} (EO).

Epsilon Overfitting arises when the attack perturbation magnitude is fixed and relatively large, with \textbf{limited diversity in magnitude or direction}.  
Under such constraints, the model can learn to reduce the loss only within the narrow regions reached by the fixed perturbation, rather than across the broader $\ell_\infty$-norm ball. 
This results in a peculiar form of loss distortion on the onset of CO.
See \Appref{sec:loss_visualization} for a 3D visualization of the loss surface. 

\citet{s2020single, lin2023eliminating} identified \emph{Abnormal Adversarial Examples} (AAEs), which have lower loss than the original clean samples and become more frequent in the presence of CO.  
From \Figref{fig:db-sketch}, we see that most AAEs arise in the presence of EO when adversarial examples are generated with the same step-size used during training.  
In this case, the model enlarges its decision boundary around the adversarial example relative to that around the clean example, producing the observed FGSM accuracy spike, which can even exceed clean accuracy.

\subsection{Overcoming Epsilon Overfitting}
\label{subsec:overcoming}
These findings, along with prior work done by~\citet{ding2018mma, huang2023fast}, highlight the importance of adaptive step-size methods.  
Such methods adjust step-sizes in response to local loss curvature, avoiding interpolation over highly nonlinear regions between the clean and adversarial points and instead targeting the local maxima along the perturbation direction. 
Randomization-based methods also try to avoid EO as discussed in \Appref{subsec:randomization}.

The FGSM trend in \Figref{fig:fgsm-eps-acc}, where accuracy evolves in a smooth yet non-monotonic manner for small $\epsilon$, indicates that the loss landscape near the clean point is not purely jagged but exhibits structured curvature.  
This motivates approximating the local neighborhood around $\epsilon \!=\! 0$ with a second-order (quadratic) model, thereby enabling curvature-aware step-size selection for loss maximization.

Although second-order methods can explicitly solve the maximization problem via local curvature estimation, their computational cost is prohibitive for large-scale problems due to Hessian computation~\citep{ghojogh2021kkt}.
Consequently, most practical approaches infer curvature indirectly to strike a balance between cost and performance.  
In the following section, we present a technique for approximating loss surface curvature with \textbf{nearly zero} computational overhead.

    \section{Perturbation Alignment}\label{sec:pertalign}

\begin{figure*}[!t]
    \centering
    \includegraphics[width=\textwidth]{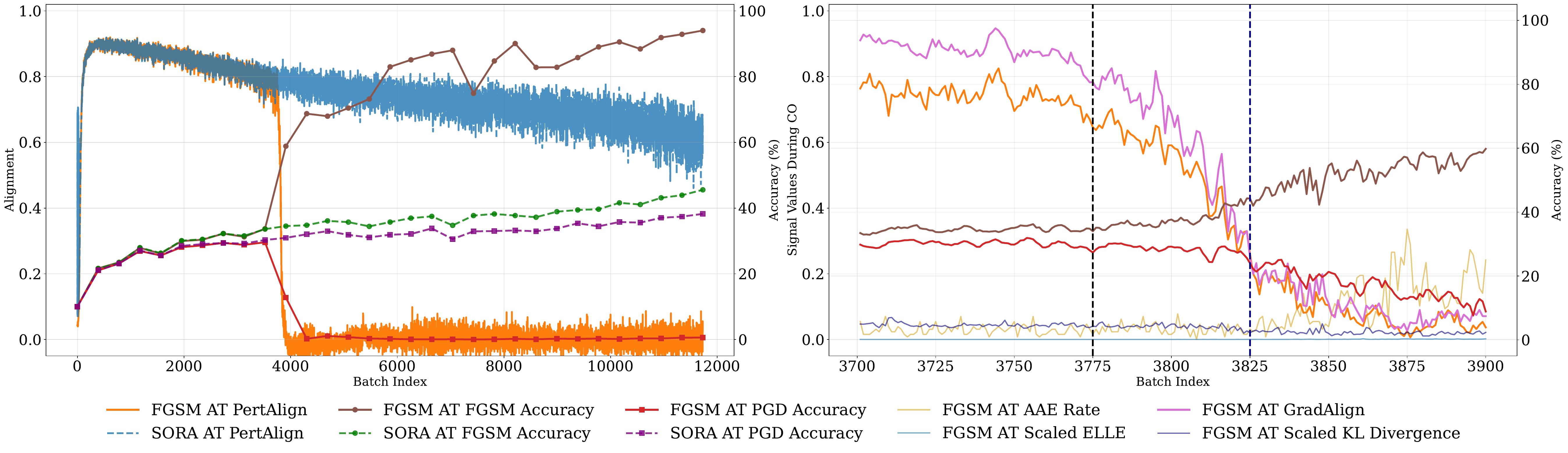}
    \caption{
        \textbf{Left:} Tracking PertAlign during FGSM AT and SORA AT for \textsc{CIFAR-10} dataset, PertAlign collapses on the occurrence of CO.
        \textbf{Right:}
        At batch 3775 of FGSM AT, PertAlign and GradAlign begin to drop, forecasting CO, while FGSM and PGD accuracies visibly diverge only around batch 3825. 
        Other metrics, AAE Share of the Batch, Scaled KL Divergence from TRADES, and Scaled ELLE nonlinearity measure, react later in response to the model updates.
        For more details see \Appref{sec:reliability_PertAlign}.
    }
    \label{fig:alignments}
\end{figure*}

One of the key characteristics of CO is that, in most cases, it arises very quickly, often within a single epoch. 
Once CO occurs, most single-step attack methods cannot recover after the robust accuracy drop, and recovery via multi-step evaluation (e.g., PGD) is computationally expensive.  
Thus, predicting CO \textbf{before} single-step and PGD accuracies diverge can enable training methods to take suitable, low-cost corrective actions.  
Ideally, we seek an accurate and reliable prognosis tool that can detect when optimization has gone off course, well before the onset of CO.

We propose a novel metric, \textbf{PertAlign}, which can predict CO by measuring the nonlinearity of the loss surface with negligible additional computational cost.  
PertAlign is defined as the cosine similarity between:
\begin{enumerate}
    \item The gradient of the loss with respect to the input, computed during adversarial example generation, and
    \item The backpropagation gradient, propagated back to the input layer, computed when performing backpropagation on the adversarial example passed through the model.
\end{enumerate}

Formally \textbf{PertAlign} is defined as:
\begin{equation}
	 \cos \left(
\nabla_x \Ls\big(f_\theta(x'), y\big), 
\nabla_{x} \Ls\big(f_\theta(x' + \delta), y\big) \right), 
\label{eq:alignment}
\end{equation}
where $x' = x + \eta$ and $\eta \sim \unif(-\epsilon, +\epsilon)^d$ is the random start, and $\delta = \alpha v$ is a perturbation of scalar step-size $\alpha$ in direction $v$.  
The perturbation direction $v$ is typically chosen as
\begin{equation*}
    g = \nabla_x \Ls\big(f_\theta(x'), y\big) 
    \quad \text{or} \quad 
    p = \sign\left(\nabla_x \Ls \big(f_\theta(x'), y\big)\right).
\end{equation*}

It can be shown that PertAlign captures the local nonlinearity of the loss surface 
(see \Appref{sec:theory}).
\begin{lemma}\label{lem:alignment_hessian}
    \AlignmentHessianLemma
\end{lemma}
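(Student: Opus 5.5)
We Taylor-expand the gradient at the perturbed point and then expand the cosine similarity to second order in $\alpha$. Write $x'$ for the random-start point, $g = \grad_x \Ls(f_\theta(x'),y)$ and $H = \grad_x^2 \Ls(f_\theta(x'),y)$. The loss is treated as locally quadratic, or at least $C^3$ with the cubic remainder neglected for $\alpha \ll 1$. Then
\begin{equation*}
    g' := \grad_x \Ls\big(f_\theta(x'+\alpha v),y\big) = g + \alpha H v + O(\alpha^2),
\end{equation*}
and $\textnormal{PertAlign} = \cos(g, g')$. Since cosine similarity is invariant to positive rescaling of either argument, divide $g'$ by $\norm{g}$. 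Writing $\hat g = g/\norm{g}$ and $h = Hv/\norm{g}$ as in the statement, we get $\cos(g,g') = \cos(\hat g, \hat g + \alpha h)$.

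Next, decompose $h = h_{\parallel} \hat g + h_{\perp g}$ with $h_{\parallel} = \hat g^T h$ and $h_{\perp g} \perp \hat g$. Then
\begin{equation*}
    \cos(\hat g, \hat g + \alpha h) = \frac{1 + \alpha h_{\parallel}}{\sqrt{(1+\alpha h_{\parallel})^2 + \alpha^2 \norm{h_{\perp g}}^2}}.
\end{equation*}
This is exactly $\cos\phi$, where $\tan\phi = \alpha\norm{h_{\perp g}}/(1+\alpha h_{\parallel})$, provided $1+\alpha h_\parallel>0$. That condition holds for $\alpha$ small relative to $1/|h_\parallel|$. The parallel component therefore only rescales and does not rotate, which is the key structural observation.

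Expanding $1-\cos\phi = \tfrac12\tan^2\phi + O(\tan^4\phi)$ gives
\begin{equation*}
    1 - \textnormal{PertAlign} = \frac{\alpha^2 \norm{h_{\perp g}}^2}{2(1+\alpha h_\parallel)^2} + O(\alpha^4) = \frac{\alpha^2}{2}\norm{h_{\perp g}}^2 + O(\alpha^3),
\end{equation*}
which is the claimed approximation.

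The main obstacle is controlling the error terms so that "$\approx$" is meaningful, since $h$ itself scales like $1/\norm{g}$. The $O(\alpha^2)$ remainder from the gradient Taylor expansion, normalized by $\norm{g}$, contributes a perpendicular component of order $\alpha^2$. In the cosine this enters only through a cross term, of order $\alpha^3\norm{h_{\perp g}}\cdot\norm{\text{third-order term}}/\norm{g}$, so it is higher order.

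To make this rigorous, I would state the lemma under one of two assumptions. The first is that the loss is quadratic in the neighborhood, in which case the Taylor step is exact. The second is a bounded third derivative together with $\alpha\norm{h}\ll 1$. I would also remark that the approximation degrades when $\norm{g}$ is tiny. This is precisely the regime where PertAlign signals strong curvature.
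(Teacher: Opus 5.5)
Your proof is correct, but it takes a different route from the paper's. The paper works with unnormalized quantities. It expands $g^Tg'$ and $\norm{g'}^2$ separately and sets $k = g^THv/\norm{g}^2$ and $\gamma = g^TR/\norm{g}^2$. It then pushes the quotient through the series for $\sqrt{1+\epsilon}$ and for $(1+\epsilon_1)/(1+\epsilon_2)$, obtaining $1-\textnormal{PertAlign} = \frac{\alpha^2}{2}\bigl(\norm{Hv}^2/\norm{g}^2 - k^2\bigr) + \mathcal{O}(\alpha^3)$. Only at the end does it complete the square to recognize this as $\frac{\alpha^2}{2}\norm{h - (h^T\hat g)\hat g}^2$.

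You instead decompose $h$ orthogonally against $\hat g$ from the outset, so the Pythagorean identity the paper arrives at is built in (note $h_\parallel = k$). The cosine then collapses to the one-variable function $(1+t^2)^{-1/2}$ with $t = \alpha\norm{h_{\perp g}}/(1+\alpha h_\parallel)$. This buys you several things:
\begin{itemize}
\item It shows directly why the parallel component cannot contribute at second order: it only rescales $g'$.
\item In the exactly quadratic case with $1+\alpha h_\parallel>0$, it gives a closed form and even the non-asymptotic bound $1-\textnormal{PertAlign}\le t^2/2$, since $1-(1+t^2)^{-1/2}\le t^2/2$ for all $t$.
\item It handles the Taylor remainder in one stroke. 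Its parallel part only changes the denominator of $t$, and its perpendicular part enters $t^2$ through an $\mathcal{O}(\alpha^3)$ cross term. This is cleaner than the paper's bookkeeping through $\gamma$.
\item It exposes that the true small parameter is $\alpha\norm{Hv}/\norm{g}$ rather than $\alpha$ alone, which the paper's ``$\alpha \ll 1$'' leaves implicit.
\end{itemize}
The paper's route, in exchange, produces the form $\norm{Hv}^2/\norm{g}^2 - k^2$ that it reuses for its misalignment upper bound. You recover that form immediately from $\norm{h_{\perp g}}^2 = \norm{h}^2 - h_\parallel^2$.

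One point the paper makes explicit and you use silently: $g'$ must be read as the loss gradient evaluated at $x'+\alpha v$ with $v$ held fixed (detached). It is not the total derivative of $x\mapsto L(x'+\alpha v(x))$. For $v=g$, the total derivative would add a further $\alpha Hg$ and change the constant in the result.
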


More specifically, the misalignment measured by PertAlign correlates with the component of $Hv$ (the Hessian $H$ applied to $v$) that is orthogonal to the gradient.  
This misalignment is minimized when the gradient is aligned with $Hv$, i.e.,
\begin{equation}
    v \parallel H^\dagger g ,
\end{equation}
where $H^\dagger$ is the Moore-Penrose pseudo-inverse of $H$.  
This condition closely resembles Newton’s method and implies that the perturbation direction should align with the second-order optimal update step in the loss landscape.

\begin{algorithm*}[t]
	\caption{\textbf{S}econd-\textbf{Or}der \textbf{A}daptive Method (SORA)}
	\label{alg:ours}
	\begin{algorithmic}[1]
	    \INPUT \# epochs $T$, \# batches $M$, radius $\epsilon$, step-size $\alpha$, exponential average coefficient $\beta$. 
	    \STATE $v \gets 0.99$ \COMMENT{Initialize moving linearity coefficient}
	    \FOR{Epoch $t=1,\dots, T$}
	        \FOR{Batch $i=1, \dots, M$}
	            \STATE $\bm{\eta} \sim \unif(-\eps, \eps)^d$ \COMMENT{Random start}
	            \STATE $\bm{g} \gets \nabla_{\bm{x}_i}\mathcal{L}\left(f_{\bm{\theta}}(\bm{x}_i + \bm{\eta}), y_i\right)$
	            \STATE $\bm{\alpha}_i \sim \unif\left(0, \alpha^*\right)^d$
	            \COMMENT{Element-wise step sampling}
    	        \STATE $\bm{x}_i'\gets \bm{x}_i + \bm{\eta} + \bm{\alpha}_i \odot \sign\left(\bm{g}\right)$  
    	        \STATE $\bm{x}_i' = \Pi_{[0,1]}(\bm{x}_i')$
    	        \COMMENT{Project onto the valid pixel range}
    	        \STATE $\bm{g}', \bm{g}_{\bm{\theta}} \gets \nabla_{[\bm{x}_i',\theta]}\mathcal{L}\left(f_{\bm{\theta}}(\bm{x}'_i), y_i\right) $  \COMMENT{Backpropagation}
    		    \STATE $\bm{\theta} \gets \textrm{optimizer}(\bm{\theta}, \bm{g}_{\bm{\theta}})$  \COMMENT{Standard parameters update, (e.g.  SGD)}
    		    \STATE Calculate optimal step-size for next batch:
    		    \[
    		    \alpha^* \gets
                \begin{cases}
                    \min\left(\alpha_{\max}, \frac{\alpha_0}{1 - v}\right), & v < 1,\\[3pt]
                    \alpha_{\max}, & \text{otherwise}.
                \end{cases}
                \]
                \STATE $v \gets (1 - \beta) \cdot v + \beta \cdot  \frac{\bm{p}^T \bm{g}'}{ \norm{\bm{g}}_1}$ 
                \COMMENT{Update moving linearity coefficient}
            \ENDFOR
	    \ENDFOR
	\end{algorithmic} 
\end{algorithm*}

In practice, setting $v = p$ yields consistent and reliable behavior, as shown in \Figref{fig:alignments}.  
When CO occurs, PertAlign drops sharply toward zero, indicating that $p$ is no longer aligned with the optimal second-order direction, signaling a failure of the inner maximization step.  
Conversely, when the loss surface remains approximately linear, PertAlign remains stable rather than collapsing.

As further discussed in \Secref{sec:eps-overfitting}, since at the onset of CO, \textbf{before} PGD accuracy declines, the local loss surface becomes highly nonlinear and unstable~\citep{andriushchenko2020understanding, kim2021understanding, rocamora2024efficient},
by PertAlign directly measuring this nonlinearity, it can predict CO earlier than the divergence of single-step and PGD accuracies (\Figref{fig:alignments}).  
Moreover, PertAlign consistently foreshadows CO earlier than other proposed metrics.

Note that PertAlign relies only on two quantities: 
$\nabla_x \mathcal{L}(f_\theta(x), y)$, computed during all single-step methods, 
and $\nabla_x \mathcal{L}(f_\theta(x + \delta), y)$, the backpropagation gradient extended by one layer to include the input layer, which is already obtained during the model weight update. 
As these gradients are already part of standard AT, PertAlign incurs \textbf{no additional computational overhead}, making it a fast and reliable metric for CO prediction and \textbf{can therefore be applied to any other single-step method}.
\citet{shafahi2019adversarial, zhang2019propagate} use a similar trick to speed up AT.
The information it provides can also be used to mitigate CO, which our proposed AT method leverages in \Secref{sec:methodology}.

We further examine related approaches, including detailed PertAlign versus GradAlign~\citep{andriushchenko2020understanding} comparison in \Appref{subsec:gradalign_vs_pertalign}, and ZeroGrad and MultiGrad~\citep{golgooni2021zerograd} from a second-order prespective in \Appref{subsec:so-zerogard}. Additional properties of PertAlign, alongside an ablation study, are presented in \Appref{sec:theory}.

    \section{Methodology}\label{sec:methodology}

\begin{figure*}[t]
    \centering
    \includegraphics[width=\textwidth]{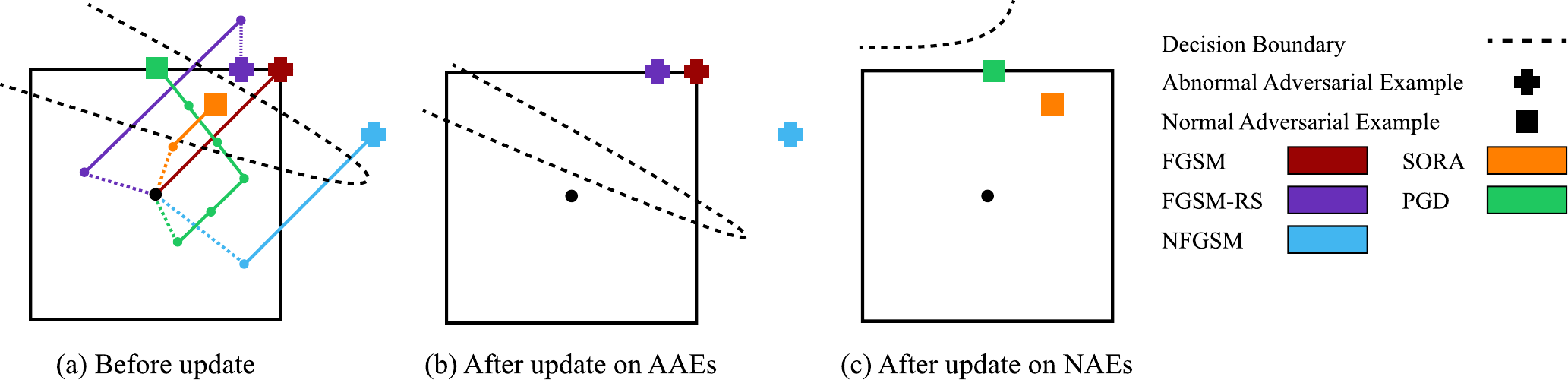}
    \caption{When the decision boundary becomes distorted, single-step attacks may produce AAEs, whereas multi-step attacks such as PGD can still reliably generate NAEs. By adapting the attack step-size to the local linearity of the loss surface, SORA can also produce NAEs in such scenarios. Training on AAEs tends to exacerbate distortion in the loss surface, while training on NAEs can guide the model toward recovery.}
    \label{fig:mma-sketch}
\end{figure*}

\begin{figure*}[!t]
    \centering
    \subfigure[\textsc{CIFAR-10}]{\label{fig:cifar10-stepsize}\includegraphics[width=0.48\textwidth]{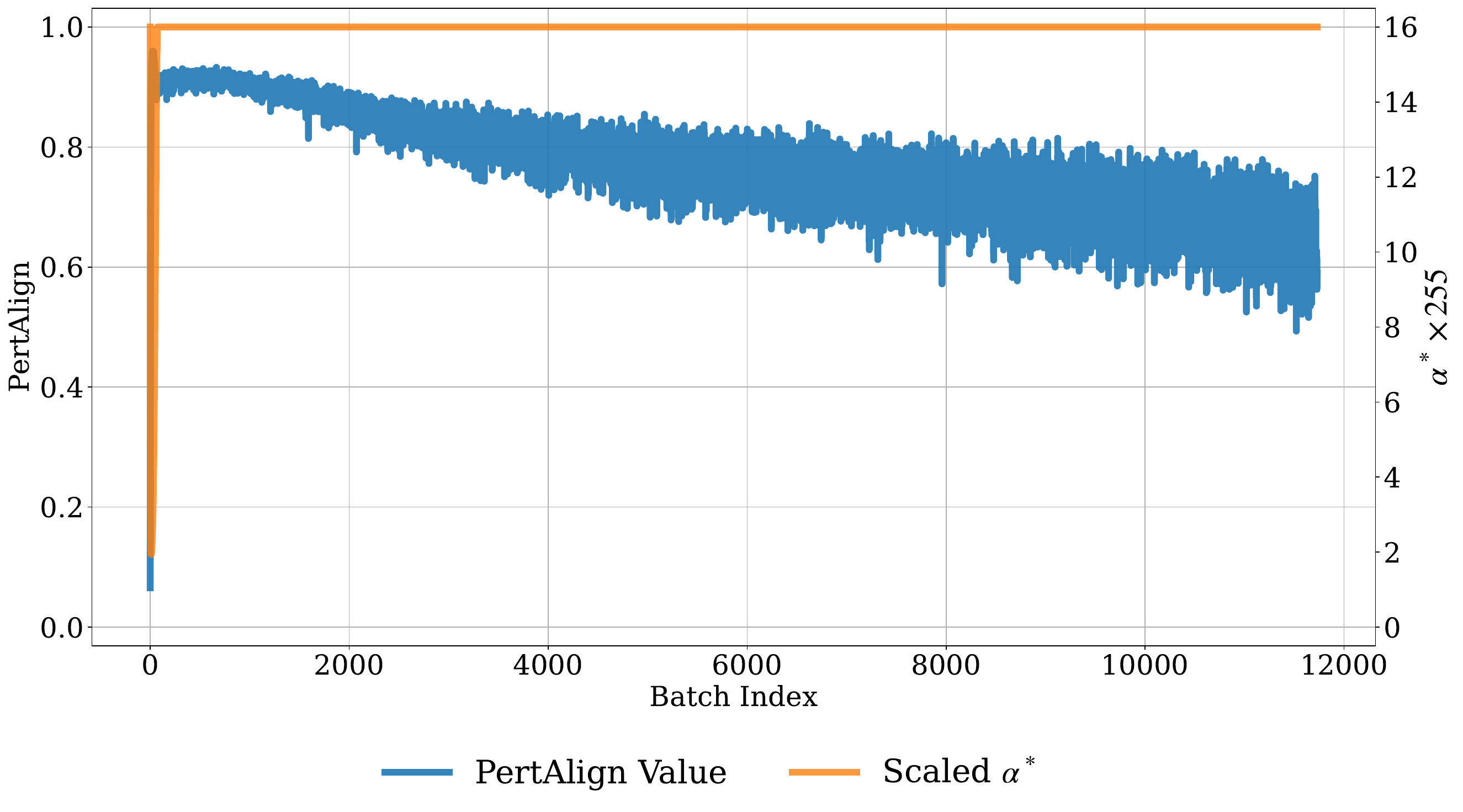}}
    \subfigure[\textsc{PathMNIST}]{\label{fig:pathmnist-stepsize}\includegraphics[width=0.48\textwidth]{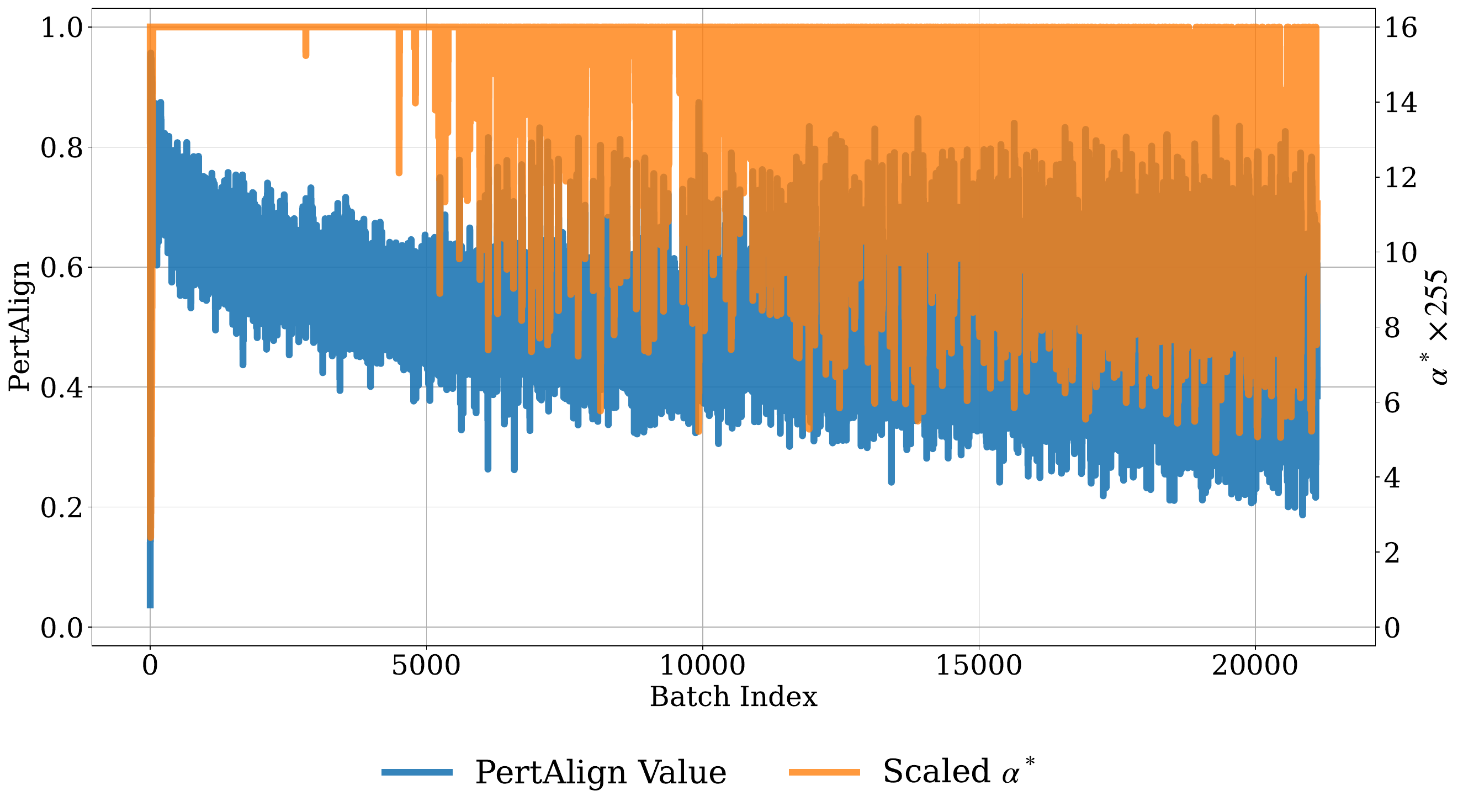}}
    \caption{
    Theoretically derived and self-adaptive $\alpha^*$ during SORA AT with $\epsilon = \sfrac{8}{255}$ on \textsc{CIFAR-10} \subref{fig:cifar10-stepsize} and \textsc{PathMNIST} \subref{fig:pathmnist-stepsize}.
    }
    \label{fig:step-sizes}
\end{figure*}

In \Secref{sec:eps-overfitting}, we demonstrated the importance of using an adaptive step-size in mitigating EO and thus CO.  
This observation motivates us to explore adaptive step-size strategies more systematically.  
However, this exploration naturally raises a key question:
\begin{center}
    \textit{What is the optimal perturbation step-size, given the current state of the model?}
\end{center}

When loss maximization is viewed purely as a first-order optimization problem, our options for determining a suitable step-size are limited.  
To address this limitation, we instead analyze single-step loss maximization from a second-order optimization perspective.

It can be shown that, by locally approximating the loss landscape with a quadratic function, the optimal step-size for maximizing the loss is given by (see \Appref{sec:theory} for the full derivation):
\begin{lemma}\label{lem:optimal_step_sign}
    \OptimalStepSizeSign
\end{lemma}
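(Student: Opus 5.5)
The plan is to work directly with the second-order (quadratic) model of the loss along the ray $x + \alpha p$, reduce the maximization to a one-dimensional concave or convex problem in $\alpha$, and then eliminate the unknown curvature term $p^T H p$ using the gradient $g'$ observed at the perturbed point. Concretely, for a second-order loss we have exactly
\[
\phi(\alpha) := \Ls(x + \alpha p) = \Ls(x) + \alpha\, g^T p + \tfrac{\alpha^2}{2}\, p^T H p ,
\]
and since $p = \sign(g)$ we get $g^T p = \norm{g}_1 \ge 0$. I will treat the step-size budget as the constraint $\alpha \in [0, \alpha_{\max}]$ and maximize $\phi$ on this interval.

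The first step is the case split on the curvature $\kappa := p^T H p$. If $\kappa < 0$, then $\phi$ is strictly concave, with stationary point $\phi'(\alpha) = \norm{g}_1 + \alpha \kappa = 0$, i.e.\ $\hat\alpha = \norm{g}_1 / (-\kappa) \ge 0$. The constrained maximizer is therefore $\min(\alpha_{\max}, \hat\alpha)$. If $\kappa \ge 0$, then $\phi'(\alpha) = \norm{g}_1 + \alpha\kappa \ge 0$ on $[0,\alpha_{\max}]$, so $\phi$ is nondecreasing there and the maximizer is the boundary $\alpha_{\max}$. This covers both the convex case and the degenerate linear case $\kappa = 0$.

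The second step expresses $\kappa$ through quantities available in training. For the quadratic model, the input gradient at the perturbed point reached with a probe step $\alpha_0$ is exactly $g' = g + \alpha_0 H p$. This is the statement's $g' = g + \alpha H p$, where I identify the $\alpha$ used to generate $g'$ with the fixed numerator $\alpha_0$. Taking the inner product with $p$ gives $p^T g' = \norm{g}_1 + \alpha_0 \kappa$, hence
\[
\kappa = \frac{p^T g' - \norm{g}_1}{\alpha_0}.
\]
Consequently $\kappa < 0 \iff \frac{p^T g'}{\norm{g}_1} < 1$ (assuming $g \neq 0$ and $\alpha_0 > 0$). Substituting into $\hat\alpha$ gives
\[
\hat\alpha = \frac{\alpha_0 \norm{g}_1}{\norm{g}_1 - p^T g'} = \frac{\alpha_0}{1 - \frac{p^T g'}{\norm{g}_1}}.
\]
Combined with the case split, this yields exactly the two branches of the statement.

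The main obstacle is interpretive rather than computational. The statement writes $g' = g + \alpha H p$ with the same symbol $\alpha$ as the unknown optimal step, so I must make explicit that $g'$ is measured at a known probe step $\alpha_0$. This makes the ratio $p^T g' / \norm{g}_1$ a linearity coefficient, and it is also how Algorithm~\ref{alg:ours} uses $\alpha_0$. I also need that the identity $g' = g + \alpha_0 H p$ is exact only under the second-order assumption; otherwise it is a Taylor approximation. I would handle $g = 0$ separately: there $p = 0$ and any step is trivially optimal.
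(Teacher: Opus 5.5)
Your core computation matches the paper's: the same quadratic model along $p$, the same stationary point $-g^Tp/p^THp$, and the same elimination of the curvature through $p^Tg' = \norm{g}_1 + \alpha\, p^THp$. The difference is in how the remaining parts of the formula are justified.

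The paper treats $\alpha$ in $g' = g + \alpha Hp$ as the actual probe step and arrives at $\alpha/(1 - p^Tg'/\norm{g}_1)$. It then replaces the numerator by a fixed $\alpha_0$ as a practical stabilization. It adds the $\min$ with $\alpha_{\max}$ because the quadratic model is unreliable for large steps, and the $\alpha_{\max}$ fallback because a negative stationary point signals upward curvature.

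You instead maximize over $[0,\alpha_{\max}]$, so the cap and the fallback branch come out as exact consequences. This includes the degenerate case $p^THp = 0$, where the paper's division is undefined. You also take the probe step to be $\alpha_0$, so the numerator is exact as well. What this buys is a statement that is literally true for a quadratic loss. The paper's version, with $\alpha_0$ decoupled from the probe step, is not: for a probe step $\alpha \neq \alpha_0$ the stated expression equals $(\alpha_0/\alpha)\hat\alpha$ rather than $\hat\alpha$. The branch condition $p^Tg'/\norm{g}_1 < 1 \iff p^THp < 0$ holds for any probe step $\alpha > 0$.

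You should drop the claim that this is how Algorithm~\ref{alg:ours} uses $\alpha_0$. There $g'$ is evaluated at $\bm{x}_i + \bm{\eta} + \bm{\alpha}_i \odot \sign(\bm{g})$ with $\bm{\alpha}_i \sim \unif(0,\alpha^*)^d$, so the probe neither has size $\alpha_0$ nor is a scalar multiple of $p$. The paper presents the fixed numerator as a heuristic, not as something the quadratic model forces.
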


Using this formula as a basis for our attack step-size provides a strong initial estimate of an effective perturbation magnitude.  
To improve stability in cases where the second-order approximation of the loss landscape is inaccurate, and to further mitigate EO, we introduce per-pixel channel diversification: for each channel, we uniformly sample its attack step-size from the range $[0, \alpha^*]$.  
This stochasticity significantly increases the diversity of perturbations, improving both the robustness and universality of the attack, as demonstrated in \Secref{sec:experiments}.
A summary of our method is provided in \Algref{alg:ours}. 

Intuitively, SORA adapts to the evolving state of the model by leveraging the backpropagation gradient from the previous batch. 
By combining this gradient with the general perturbation direction, it estimates the local nonlinearity of the loss surface and adjusts the step-size accordingly.

This dynamic adjustment, illustrated in \Figref{fig:mma-sketch}, mitigates overshooting, avoids generating AAEs, and prevents the emergence of highly nonlinear regions in the loss landscape. 
The theoretical derivation of the optimal step-size along the gradient direction is provided in Appendix~\ref{sec:theory}.

To match the time complexity of truly fast methods~\citep{wong2020fast}, we apply our theoretically optimal step size with a \emph{temporal gap}. 
Similar to the approach of \citet{shafahi2019adversarial}, which generates adversarial examples with a temporal gap, we use the optimal step-size with a lag, enabling our method to exploit second-order information essentially for \emph{free}. 
Since the weights change only slightly after each update, we expect the validity of our theoretical foundation to be preserved. 
We also apply the optimal step-size to a different batch; such extrapolation across batches is not unprecedented in AT~\citep{shafahi2019adversarial,shafahi2019universal}. 
We believe our theoretical results remain applicable because all batches are drawn from the same training distribution.

The success of our method supports the observation by \citet{wong2020fast} that CO is sudden, drastic, and \emph{universal}, compromising all data points, not isolated outliers. 
Because the problem is global, we adopt a similarly global corrective mechanism. 
To handle temporal shifts in weights and batches, we stabilize via exponential moving averages as shown in \Algref{alg:ours}.

We also tracked the theoretically optimal step-size $\alpha^*$ (\Lemref{lem:optimal_step_sign}) during training on \textsc{CIFAR-10} and \textsc{PathMNIST}. 
\Figref{fig:cifar10-stepsize} shows that, except for a few early batches, SORA selects the maximum possible step-size throughout training on \textsc{CIFAR-10}, suggesting that the loss landscape is sufficiently linear and does not require step-size reduction to avoid CO. 
In contrast, on \textsc{PathMNIST}, SORA consistently readjusts its step-size to achieve robustness (\Figref{fig:pathmnist-stepsize}). 
Together with PertAlign's lower average performance on this dataset, this suggests that achieving robustness while avoiding CO is more challenging on \textsc{PathMNIST} than on \textsc{CIFAR-10}. 
Thus, failing to incorporate a curvature-aware mechanism in FAT methods may lead to CO on such datasets, as discussed in \Secref{subsec:universality_discussion} and \Figref{fig:pathmnist}. 
This self-correcting mechanism also enables SORA to perform well with fixed hyperparameters, whereas less flexible methods often require costly setting-specific tuning to avoid CO.

    \section{Experiments}\label{sec:experiments}

\begin{figure*}[t]
    \centering
    \includegraphics[width=\textwidth]{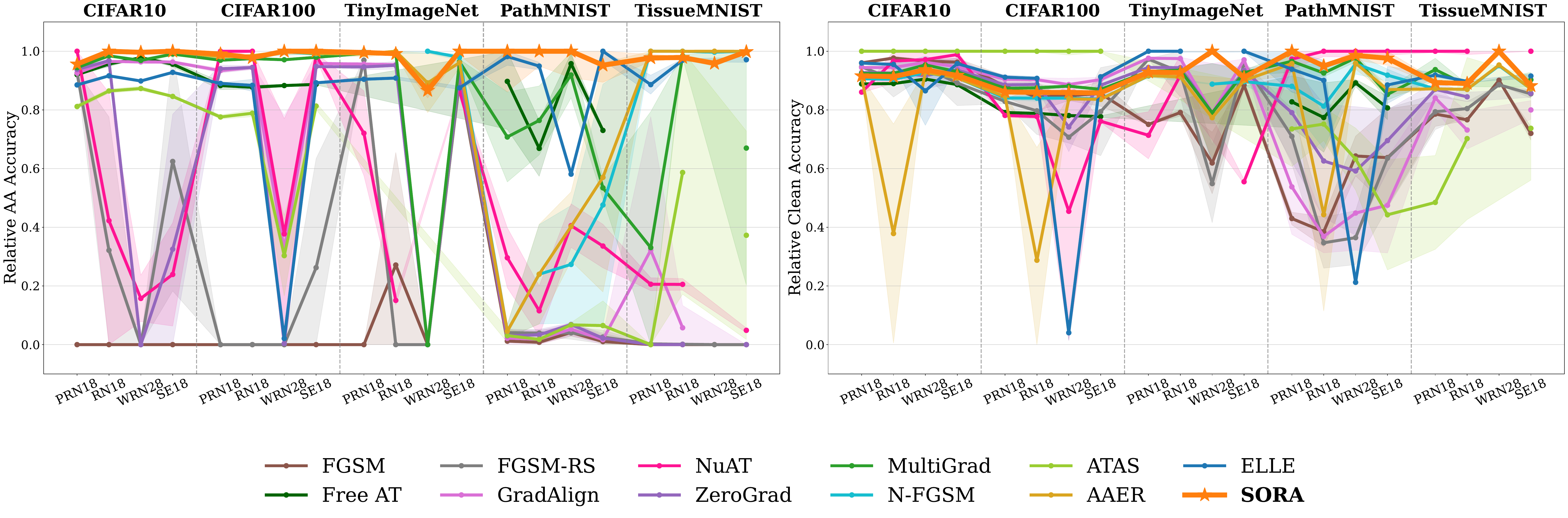}
    \caption{
        Evaluation of different methods across datasets and architectures.
        \textbf{Left:} SORA attains the highest robust accuracy among single-step methods. 
        \textbf{Right:} Corresponding clean accuracy.
    }
    \label{fig:pathmnist}
\end{figure*}

\begin{figure*}
    \centering
    \subfigure[Time]{\label{fig:acc_vs_time}\includegraphics[width=0.48\textwidth]{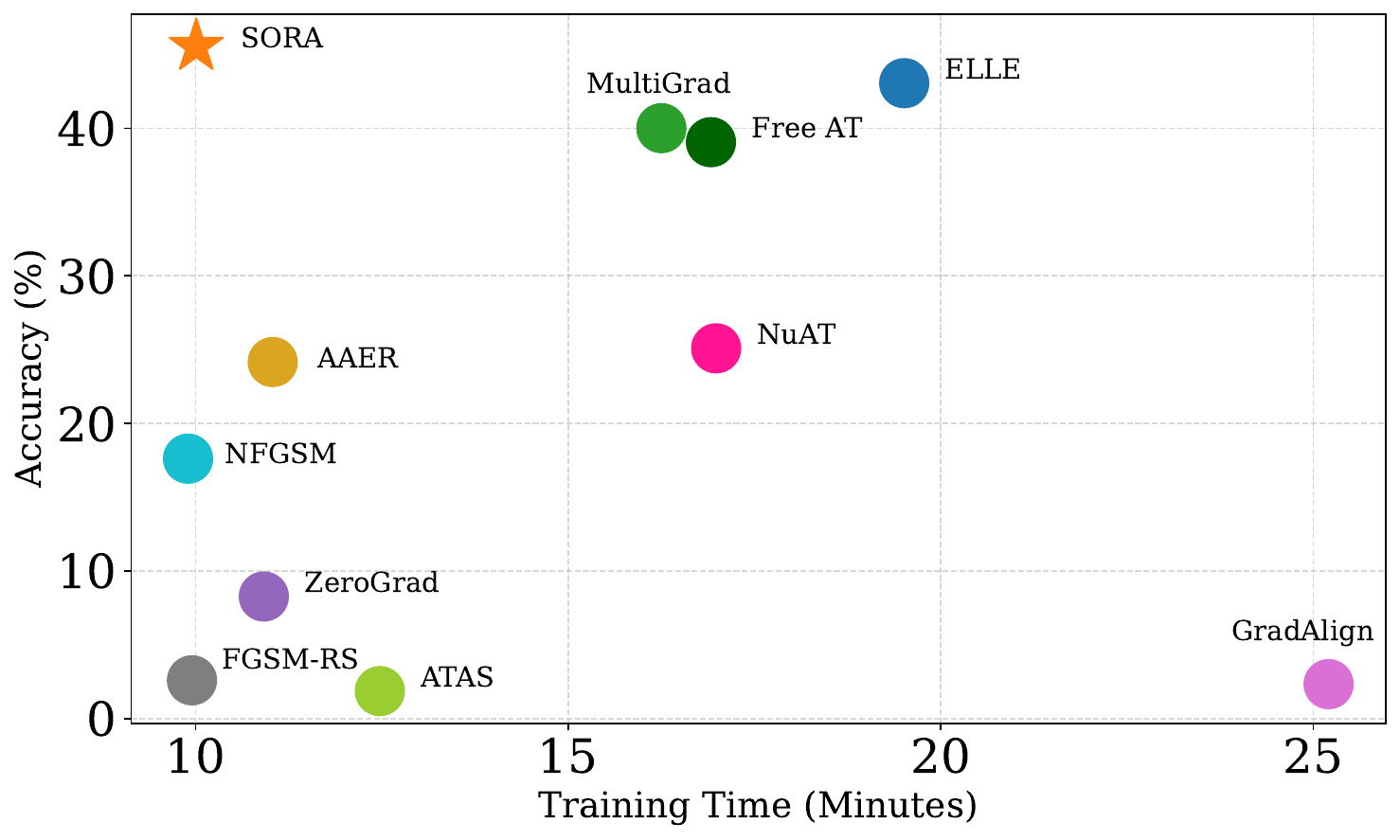}}
    \subfigure[Memory]{\label{fig:acc_vs_memory}\includegraphics[width=0.48\textwidth]{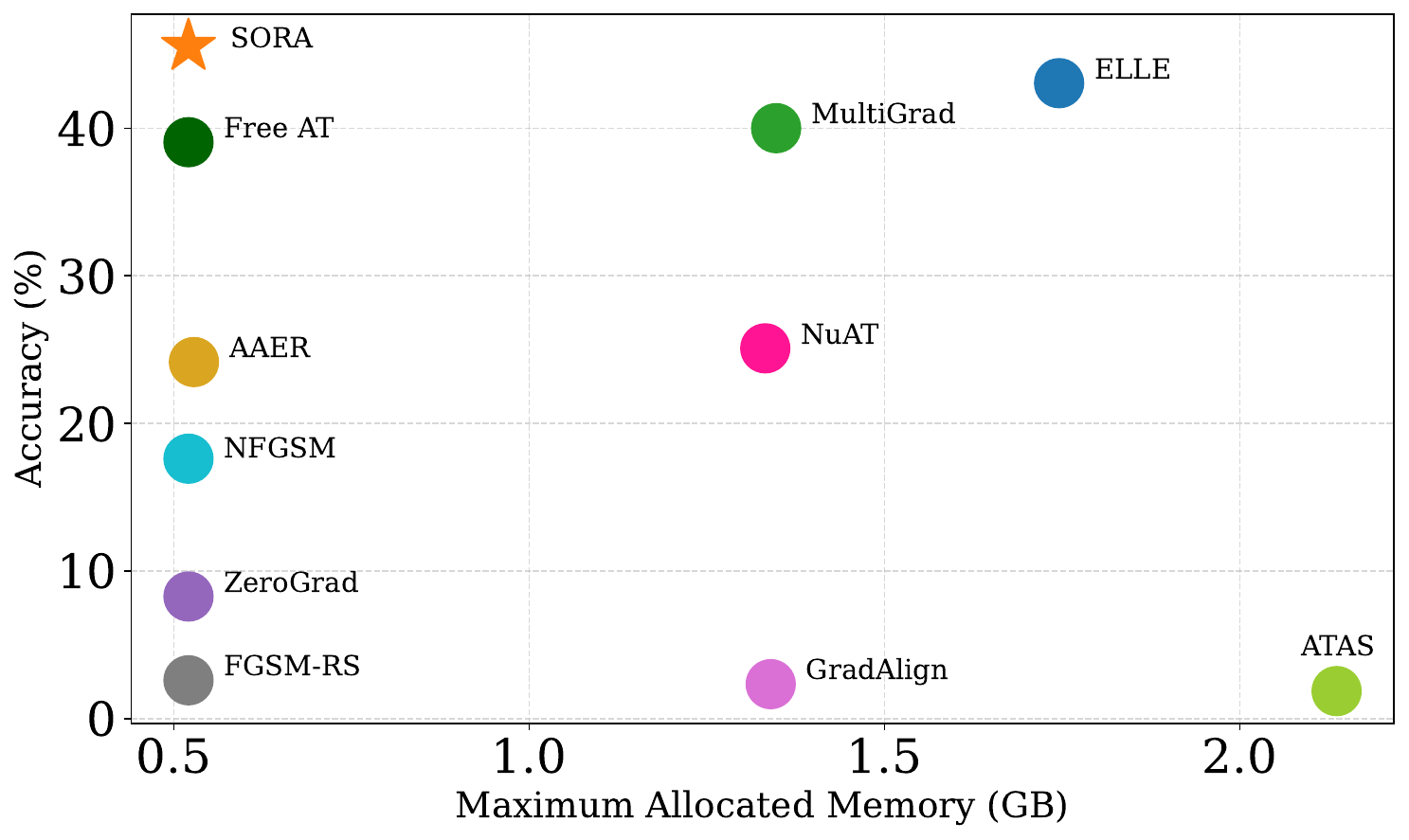}}
    \caption{
        Training time vs. memory usage on \textsc{PathMNIST} with PreActResNet-18, trained for 30 epochs, measured on an NVIDIA GeForce RTX 4090 GPU. The $\color{PertAlign} \bigstar$ marks SORA. The vertical axis in both figures represents PGD-10 accuracy.
    }
    \label{fig:cost}
\end{figure*}

\subsection{Settings}\label{subsec:settings}

\paragraph{Baselines.} 
We compare our method against a wide range of single-step AT methods, including 
Free AT~\citep{shafahi2019adversarial},
FGSM-RS~\citep{wong2020fast},
GradAlign~\citep{andriushchenko2020understanding},
NuAT~\citep{sriramanan2021towards},
ZeroGrad and MultiGrad~\citep{golgooni2021zerograd},
N-FGSM~\citep{de2022make},
ATAS~\citep{huang2023fast},
AAER~\citep{lin2023eliminating},
and ELLE~\citep{rocamora2024efficient}.
We also evaluate multi-step AT methods, including PGD-2 and PGD-10, as well as 
TRADES~\citep{zhang2019theoretically},
which serve as upper-bound baselines representing idealized performance.
We provide more information in \Appref{sec:baseline}.

\paragraph{Datasets and Model Architectures.}
The training datasets include \textsc{CIFAR-10/100}~\citep{krizhevsky2009learning}, and \textsc{TinyImageNet}~\citep{le2015tiny}, as well as \textsc{ImageNet-100}~\citep{deng2009imagenet}.
To further evaluate the generalization capability of single-step AT methods, we additionally assess our method and the baselines on 
\textsc{PathMNIST} and \textsc{TissueMNIST} from the \textsc{MedMNIST} collection~\citep{yang2023medmnist}.
More information about these datasets is available in \Appref{sec:dataset}.
We evaluate on multiple architectures, including ResNet~\citep{he2016deep}, PreActResNet~\citep{he2016identity}, and WideResNet~\citep{zagoruyko2016wide} from the ResNet family, as well as SENet~\citep{hu2018squeeze} and ViT~\citep{dosovitskiy2020image}.
These architectures are discussed in \Appref{sec:architecture}.

\paragraph{Hyperparameters.} 
Unless otherwise stated, we set the perturbation budget to $\epsilon = \sfrac{8}{255}$ and follow the training setup of~\citet{wong2020fast}, using the SGD optimizer with momentum $0.9$ and weight decay $5\times 10^{-4}$. For our method we set $\alpha_0 = 0.02$, $\beta = 0.05$, and $\alpha_{\max} = 2\eps$.
Exact hyperparameter values and training details are included in \ThreeAppref{sec:baseline}{sec:dataset}{sec:architecture}.

\subsection{Results}\label{subsec:results}

\begin{table*}[ht!]
    \caption{Training results for {PreActResNet-18} on \textsc{PathMNIST} and \textsc{ImageNet-100} for $\eps = \sfrac{8}{255}$. 
    The performance gap of each method with respect to SORA is also shown in the parentheses. 
  }
    \label{tab:results}
    \begin{center}
    \resizebox{\textwidth}{!}{
        \begin{tabular}{l l ccc cc ccc}
            \toprule
            \multirow{2}{*}{\bf Method} && \multicolumn{2}{c}{\bf \textsc{PathMNIST}} && \multicolumn{2}{c}{\bf \textsc{ImageNet-100}} \\
            && Clean & AutoAttack && Clean & AutoAttack \\
        \midrule
        FGSM~\citep{goodfellow2014explaining} 
        && 36.54 (\textcolor{red}{-48.34\%}) & 0.42 (\textcolor{red}{-35.12\%}) && 15.98 (\textcolor{red}{-41.28\%}) & 0.00 (\textcolor{red}{-18.56\%})\\
        N-FGSM~\citep{de2022make} 
        && 74.86 (\textcolor{red}{-10.02\%}) & 1.90 (\textcolor{red}{-33.64\%}) && 49.38 (\textcolor{red}{-7.88\%}) & 15.52 (\textcolor{red}{-3.04\%}) \\
        AAER~\citep{lin2023eliminating} 
        && 81.43 (\textcolor{red}{-3.45\%}) & 1.89 (\textcolor{red}{-33.65\%}) && 48.26 (\textcolor{red}{-9.00\%}) & 17.18 (\textcolor{red}{-1.38\%}) \\
        \textbf{SORA}~(Ours) && \textbf{84.88\%} & \textbf{35.54\%} && \textbf{57.26\%} & \textbf{18.56\%}  \\
        \bottomrule
    \end{tabular}
  }
  \end{center}
\end{table*}

To evaluate the generalizability of single-step AT methods without model- or dataset-specific hyperparameter tuning, we compute each method’s AutoAttack (AA)~\citep{croce2020reliable} and clean accuracy relative to the best-performing method in each setting (\Figref{fig:pathmnist}). 
A relative accuracy closer to $1$ indicates performance closer to the state-of-the-art (SOTA) in that setting. 
For comprehensive results, see \Appref{sec:extended_results} where we also reported AA for all baselines.

SORA is the only fast method that consistently matches or surpasses the SOTA across all settings, while maintaining higher clean accuracy than competing methods with comparable robust accuracy, and minimal computational overhead (\Figref{fig:cost}). 
This demonstrates a superior trade-off between robustness, clean accuracy and efficiency.

SORA’s effectiveness becomes even more pronounced on more challenging datasets with finer image textures and higher resolution, where it consistently outperforms strong baselines such as N-FGSM and AAER in terms of both clean and robust accuracy, as shown in \Tabref{tab:results}. 

\subsection{Discussion on Universality} \label{subsec:universality_discussion}

\Figref{fig:pathmnist} reveals a pronounced performance gap between prior methods on both standard datasets (\textsc{CIFAR-10/100}, \textsc{TinyImageNet}) and more challenging datasets (\textsc{PathMNIST}, \textsc{TissueMNIST}). 
This highlights a \textbf{lack of generalization} in previous approaches (see \Appref{sec:brittleness}).

Robustness on \textsc{PathMNIST} and \textsc{TissueMNIST} is particularly challenging due to finer image textures, shorter inter-class distances compared to datasets such as \textsc{CIFAR-10} (\Figref{fig:dataset_visualization}), and class imbalance (\Figref{fig:class_distributions}). Many methods that avoid CO and perform well on standard benchmarks suffer noticeable drops in robust accuracy on these harder datasets. Furthermore, approaches that do maintain robustness, such as ELLE and MultiGrad, incur substantially higher computational or memory costs (\Figref{fig:cost}) or sacrifice clean accuracy such as N-FGSM (\Tabref{tab:results}). 

Even when we conduct hyperparameter search for these baselines, we are unable to find any configuration which performs well on \textsc{PathMNIST} or even when such a configuration exists, it results in a significant performance drop on other datasets such as \textsc{CIFAR-10}, indicating the lack of a universal set of hyperparameters necessary for a FAT method. For more details see \TwoAppref{sec:brittleness}{subsec:fair}.

\subsection{Contribution of SORA's Components}\label{subsec:contribution}

We perform an ablation study to evaluate the contribution of each component of SORA to its overall performance.  
Experiments are conducted using PreActResNet-18 on the \textsc{PathMNIST} dataset, which is more susceptible to CO than \textsc{CIFAR-10/100}, and thus better exposes the effect of each component.  
As shown in \Tabref{tab:ablation-components}, increasing the variability of the step‑size magnitude significantly improves robust accuracy; most notably, removing the optimal step‑size selection results in the lowest PGD-10 accuracy.  

\begin{table}[h]
    \caption{Ablation study on the components of SORA.}
    \label{tab:ablation-components}
    \centering
    \small
    \begin{tabular}{lccc}
    \toprule
    \textbf{Configuration} & \textbf{Clean} & \textbf{FGSM} & \textbf{PGD-10} \\
    \midrule
    \textbf{SORA (baseline)} & 84.69 & 57.51 & 45.56 \\
    -- Without Random Sampling & 85.67 & 58.89 & 45.35 \\
    -- Clamping Step-Size      & 86.82 & 52.02 & 34.08 \\
    -- Without Optimal Step-Size & 89.93 & 28.30 & 17.23 \\
    \bottomrule
    \end{tabular}
\end{table}

These results highlight the importance of both attack strength and directional variability, as discussed in \Secref{sec:eps-overfitting}, and demonstrate the necessity of adaptive step‑size selection for challenging datasets.

    \section{Conclusion}\label{sec:conclusion}

We analyzed loss surface distortion in CO, revealing the critical role of the attack perturbation distribution in Epsilon Overfitting (EO) and its close connection to CO.
We further introduced PertAlign, a novel, cost‑free metric for reliably predicting CO based on loss surface curvature.
Building on insights from EO and PertAlign, we developed SORA, an adaptive step‑size method that dynamically adjusts its perturbation distribution according to local linearity.

Extensive experiments across 15 baselines, 6 datasets, and 4 architectures demonstrate that SORA consistently prevents CO, achieves SOTA performance, and generalizes robustly across datasets and architectures, settings in which many existing methods fail.
We believe that PertAlign, together with the criteria highlighted in this work, provides a principled foundation for developing improved fast adversarial training methods in future research, enabling better generalization and more reliable evaluation through broader and more diverse experimental validation.

    \section*{Acknowledgments}
\label{sec:ack}
The authors thank the reviewers, program chairs, and area chairs of ICML 2026 for their constructive feedback and efforts.

We also thank Zeinab Golgooni for helpful discussions and comments.

We are especially grateful to Sadra Teymourian for his valuable support, for providing GPU resources that enabled several of our experiments, and for his essential assistance with the submission process.
    \section*{Impact Statement}
\label{sec:impact}

This paper presents work whose goal is to advance the field of trustworthy machine learning by solving a key practicality problem in adversarial robustness. 
The high computational cost of robust training is a major barrier to its widespread adoption, leaving many real-world systems vulnerable. 
We show that many existing \emph{fast} methods hide large tuning costs, which are frequently a necessary response to instabilities like \textbf{Catastrophic Overfitting}. 
To address this issue, we propose \textbf{SORA}, a method that offers true training efficiency and stability. 
By significantly lowering the barrier to obtaining robust models, this work could help secure vital ML applications. 
We consciously acknowledge that efficient training still consumes energy, and that improved robustness, while critical for security, is one component of a broader effort needed to build safe and responsible AI.
    
    \bibliography{example_paper}
    \label{sec:ref}
    \bibliographystyle{icml2026}

    \newpage 
    \appendix
    \onecolumn
    
    \section*{Appendix Guide}
\addcontentsline{toc}{section}{Appendix Guide} 

In \Appref{sec:theory}, we present the \textbf{theoretical foundations} of our work.

Details about the baselines, datasets, and architectures (including their selection criteria) are provided in Appendices~\ref{sec:baseline},~\ref{sec:dataset}, and~\ref{sec:architecture} respectively.

\Appref{sec:SO_related_work} discusses related work in greater depth and \textbf{expands the theoretical justifications for prior approaches}.
In \Appref{subsec:randomization} we examine the success of randomization-based approaches through the lens of our new theoretical and empirical insights. 
In \Appref{subsec:gradalign_vs_pertalign} we explain the key differences between GradAlign and our proposed PertAlign metric.
In \Appref{subsec:so-zerogard} we provide a \textbf{new perspective} on ZeroGrad and MultiGrad, and elaborate how they manage to mitigate CO.
\Appref{subsec:pgd2_sora} examines the relation between SORA and multi-step adversarial training in a simplified setting and in
\Appref{subsec:fgsm_ckpt_sora} we discuss FGSM-CKPT as another adaptive step-size method and compare it to SORA.

Our ablation studies are presented in \Appref{sec:ablations}.

We have visualized the loss surface surrounding origin for different datasets and training methods in \Appref{sec:loss_visualization} to demonstrate the structure of the loss distortion on the onset of CO.

In \Appref{sec:brittleness} we perform a grid search on hyperparameters for some competitive baselines to illustrate how some of these methods either \textbf{cannot yield competitive results} on some datasets, or \textbf{require extensive tuning to be competitive}, or both.

In \Appref{sec:Euclidean} we provide an $\ell_2$-norm variant of our SORA method, building on our theoretical work.

In \Appref{sec:reliability_PertAlign} we offer further insights into how and why PertAlign is effective, and explain why \textbf{PertAlign reliably detects Catastrophic Overfitting}.

Comprehensive results are available in \Appref{sec:extended_results}.  
For brevity, a summary is also provided in \Appref{subsec:summary_results}.
Additional details about the computational complexity of our method and other baselines is also available in \Appref{subsection:computational_costs}.
Our results on the ViT architecture are available in \Appref{sec:fat-on-vit}.

We outline our \textbf{Research Statement} in \Appref{sec:statement}.  
The limitations of our work are addressed in \Appref{subsec:limitations}.
Guidelines for reproducing our main results, figures, and ablations are in \Appref{subsec:reproducibility}.  
Our approach to \textbf{tuning SORA and baselines while ensuring generalizability} is described in \Appref{subsec:fair}.  
Finally, our use of LLMs and the scope of their involvement are documented in \Appref{subsec:llm}.

\ManualMiniTOC{
    \TOCline{1. Introduction}{sec:intro}
    \TOCline{2. Background and Related Work}{sec:related}
        \TOCsubline{2.1. Adversarial Training}{subsec:adv-train}
        \TOCsubline{2.2. Catastrophic Overfitting}{subsec:co}
        \TOCsubline{2.3. Second-Order Optimization}{subsec:soo}
    \TOCline{3. Epsilon Overfitting}{sec:eps-overfitting}
        \TOCsubline{3.1. Evolution of Loss Landscape Geometry}{subsec:evolution}
        \TOCsubline{3.2. Overcoming Epsilon Overfitting}{subsec:overcoming}
    \TOCline{4. Perturbation Alignment}{sec:pertalign}
    \TOCline{5. Methodology}{sec:methodology}
    \TOCline{6. Experiments}{sec:experiments}
        \TOCsubline{6.1. Settings}{subsec:settings}
        \TOCsubline{6.2. Results}{subsec:results}
        \TOCsubline{6.3. Discussion on Universality}{subsec:universality_discussion}
        \TOCsubline{6.4. Contribution of SORA's Components}{subsec:contribution}
    \TOCline{7. Conclusion}{sec:conclusion}
    
    \TOCline{Acknowledgements}{sec:ack}
    \TOCline{Impact Statement}{sec:impact}
    
    \TOCline{A. Theoretical Insights on PertAlign and Optimal Step-Size}{sec:theory}
        \TOCsubline{A.1. Overview}{subsec:overview}
        \TOCsubline{A.2. Notation}{subsec:notation}
        \TOCsubline{A.3. PertAlign--Hessian Correlation}{subsec:pertalign-hessian}
        \TOCsubline{A.4. Step-Size Selection from Alignment Approximation}{subsec:step-size-selection}
        \TOCsubline{A.5. Alternative PertAlign Variant (Ablation)}{subsec:pertalign-ablation}
    \TOCline{B. Baselines}{sec:baseline}
        \TOCsubline{Baseline Selection Criteria}{subsec:baseline-selection}
        \TOCsubline{B.1. FGSM}{subsec:fgsm}
        \TOCsubline{B.2. Free AT}{subsec:free}
        \TOCsubline{B.3. FGSM-RS}{subsec:fgsmrs}
        \TOCsubline{B.4. GradAlign}{subsec:gradalign}
        \TOCsubline{B.5. NuAT}{subsec:nuat}
        \TOCsubline{B.6. ZeroGrad \& MultiGrad}{subsec:zerogradmultigrad}
        \TOCsubline{B.7. N-FGSM}{subsec:nfgsm}
        \TOCsubline{B.8. ATAS}{subsec:atas}
        \TOCsubline{B.9. AAER}{subsec:aaer}
        \TOCsubline{B.10. ELLE}{subsec:elle}
        \TOCsubline{B.11. PGD}{subsec:pgd}
        \TOCsubline{B.12. TRADES}{subsec:trades}
    \TOCline{C. Reinterpreting Previous Work via Our Framework}{sec:SO_related_work}
        \TOCsubline{C.1. Randomization-Based Methods}{subsec:randomization}
        \TOCsubline{C.2. GradAlign versus PertAlign}{subsec:gradalign_vs_pertalign}
        \TOCsubline{C.3. ZeroGrad \& MultiGrad from a Second-Order Perspective}{subsec:so-zerogard}
        \TOCsubline{C.4. PGD-2 and SORA}{subsec:pgd2_sora}
        \TOCsubline{C.5. FGSM-CKPT}{subsec:fgsm_ckpt_sora}
    \TOCline{D. Datasets}{sec:dataset}
        \TOCsubline{Dataset Selection Criteria}{subsec:dataset-selection}
        \TOCsubline{Selecting the Perturbation Budget}{subsec:perturbation-selection}
        \TOCsubline{D.1. CIFAR-10}{subsec:cifar10}
        \TOCsubline{D.2. CIFAR-100}{subsec:cifar100}
        \TOCsubline{D.3. TinyImageNet}{subsec:tiny}
        \TOCsubline{D.4. ImageNet-100}{subsec:imagenet}
        \TOCsubline{D.5. PathMNIST}{subsec:pathmnist}
        \TOCsubline{D.5. TissueMNIST}{subsec:tissuemnist}
    \TOCline{E. Architectures}{sec:architecture}
        \TOCsubline{Architecture Selection Criteria}{subsec:arch-selection}
        \TOCsubline{E.1. Residual Networks}{subsec:resnet}
        \TOCsubline{E.2. Pre-Activation Residual Networks}{subsec:preactresnet}
        \TOCsubline{E.3. Wide Residual Networks}{subsec:wideresnet}
        \TOCsubline{E.4. Squeeze-and-Excitation Networks}{subsec:senet}
        \TOCsubline{E.5. Vision Transformers}{subsec:vit}
    \TOCline{F. Ablations}{sec:ablations}
        \TOCsubline{F.1. Batch Size}{subsec:batch-size}
        \TOCsubline{F.2. SORA Hyperparameter Search}{subsec:hyperparameter-search}
        \TOCsubline{F.3. Different Epsilons at Training}{subsec:different-epsilons-train}
        \TOCsubline{F.4. Different Epsilons at Test Time}{subsec:different-epsilons-test}
        \TOCsubline{F.5. Epsilon Overfitting}{subsec:eo-ablation}
        \TOCsubline{F.6. Monitoring $\alpha^*$ in Training}{subsec:monitoring}
    \TOCline{G. Loss Landscape Visualization}{sec:loss_visualization}
    \TOCline{H. Baseline Brittleness and Hyperparameter Sensitivity}{sec:brittleness}
    \TOCline{I. Euclidean Norm Attacks}{sec:Euclidean}
    \TOCline{J. Reliability of PertAlign}{sec:reliability_PertAlign}
        \TOCsubline{J.1. Predictability}{subsec:predictability}
        \TOCsubline{J.2. Generalizability}{subsec:generalizability}
    \TOCline{K. Extended Results}{sec:extended_results}
        \TOCsubline{K.1. Summary of Results}{subsec:summary_results}
        \TOCsubline{K.2. Computational Costs}{subsection:computational_costs}
        \TOCsubline{K.3. Comprehensive Results}{subsection:comprehensive}
    \TOCline{L. Fast Adversarial Training on Vision Transformers}{sec:fat-on-vit}
    \TOCline{M. Research Statement}{sec:statement}
        \TOCsubline{M.1. Limitations}{subsec:limitations}
        \TOCsubline{M.2. Reproducibility}{subsec:reproducibility}
        \TOCsubline{M.3. Fair Baseline Comparisons}{subsec:fair}
        \TOCsubline{M.4. LLM Usage}{subsec:llm}
}
    \clearpage
    \section{Theoretical Insights on PertAlign and Optimal Step-Size}
\label{sec:theory}
\subsection{Overview}
\label{subsec:overview}
In this appendix we begin with the general relation in Lemma~\ref{lem:alignment_hessian}, showing how misalignment scales quadratically with step-size and depends on the distance between the Hessian-transformed step direction and its projection onto the gradient. 
Remark~\ref{rem:noise_augment} notes that this result is robust to small random noise, while Corollary~\ref{cor:pertalign_gradversian} specializes it to the gradient direction and Remark~\ref{rem:pertalign_gradversion_eigenvector} provides an eigenvector-based interpretation. 
Lemma~\ref{lem:misalignment_upperbound} then establishes an upper bound on possible misalignment. 
Proposition~\ref{prop:optimal_stepg} and Proposition~\ref{prop:optimal_stepp} use these findings to derive optimal step-size formulas for gradient and sign-gradient directions, respectively. 
Finally, Lemma~\ref{lem:sign_alignment} introduces an alternative PertAlign variant.

\subsection{Notation}
\label{subsec:notation}
We summarize the notations used throughout this paper:
\begin{itemize}
    \item $(x, y) \in \R^d \times \{0,1\}^\mathcal{C}$: Input vector and one-hot label, $\mathcal{C}$ classes.
    \item $f_\theta(\cdot) : \R^d \to \R^\mathcal{C}$: Output logits of a neural network classifier parameterized by $\theta$.
    \item $\Ls(\cdot , \cdot)$: Cross-entropy loss.
    \item $\KL (\cdot \| \cdot)$: Kullback–Leibler (KL) divergence.
    \item $\norm{\cdot}$: Euclidean norm $\normltwo$ unless otherwise stated.
    \item $\cos(a, b)$: Cosine similarity between vectors $a$ and $b$ defined as $\frac{a^Tb}{\norm{a}\cdot \norm{b}}$.
    \item $g = \grad_x \Ls(f_\theta(x), y)$: Gradient of the loss w.r.t.\ input $x$.
    \item $p = \textnormal{sign}\left(\grad_x \Ls(f_\theta(x), y)\right)$: Sign of the gradient of the loss w.r.t.\ input $x$.
    \item $I$: Identity matrix.
    \item $H = \nabla_x^2 \Ls(f_\theta(x), y)$: Hessian matrix.
    \item $H^\dagger$: Moore-Penrose pseudo-inverse of matrix $H$.
    \item $\unif(a, b)$: A uniform distribution on $[a, b]$.
    \item $a \odot b$: The Hadamard product between vectors $a$ and $b$.
    \item $\eta \sim \unif(-k\epsilon, +k\epsilon)$: Denotes random noise.
    \item $\delta$: Input perturbation.
    \item $\Pi(\cdot)$: Projection operator.
    \item $\mathcal{O}(\cdot)$: Asymptotic upper bound.
\end{itemize}

\subsection{PertAlign--Hessian Correlation}
\label{subsec:pertalign-hessian}

\begin{lemma}[PertAlign--Hessian relationship for arbitrary step direction $v$]
\AlignmentHessianLemma
\end{lemma}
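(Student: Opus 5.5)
The plan is to linearize the gradient around $x$ and then expand the cosine similarity to second order in $\alpha$. We work at the base point $x$ (the random start is absorbed by renaming $x' \to x$). A first-order Taylor expansion of the input gradient gives
\begin{equation*}
\grad_x \Ls(f_\theta(x+\alpha v), y) = g + \alpha H v + \mathcal{O}(\alpha^2),
\end{equation*}
so $g' \approx g + \alpha Hv$. Dividing numerator and denominator of the cosine by $\norm{g}$ (cosine is scale invariant) and writing $\hat g = g/\norm{g}$, $h = Hv/\norm{g}$, we get
\begin{equation*}
\textnormal{PertAlign} = \cos(g, g') \approx \cos(\hat g, \hat g + \alpha h) = \frac{1 + \alpha\, \hat g^T h}{\norm{\hat g + \alpha h}}.
\end{equation*}

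Next, decompose $h = h_{\parallel} \hat g + h_{\perp g}$ with $h_{\parallel} = \hat g^T h$ and $h_{\perp g} \perp \hat g$. Then
\begin{equation*}
\norm{\hat g + \alpha h}^2 = (1+\alpha h_\parallel)^2 + \alpha^2 \norm{h_{\perp g}}^2,
\end{equation*}
so with $a = 1 + \alpha h_\parallel$ and $b = \alpha \norm{h_{\perp g}}$ the cosine is exactly $a/\sqrt{a^2+b^2} = (1 + b^2/a^2)^{-1/2}$. This is the key simplification: the parallel component only rescales the gradient and does not affect the angle. Expanding $(1+t)^{-1/2} = 1 - t/2 + \mathcal{O}(t^2)$ with $t = b^2/a^2 = \alpha^2\norm{h_{\perp g}}^2/(1+\alpha h_\parallel)^2$ gives
\begin{equation*}
1 - \textnormal{PertAlign} = \frac{\alpha^2}{2}\norm{h_{\perp g}}^2\bigl(1 + \mathcal{O}(\alpha)\bigr),
\end{equation*}
which is the claimed approximation to leading order in $\alpha$.

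The main obstacle is the error bookkeeping in the first step. The Taylor remainder of the gradient is $\mathcal{O}(\alpha^2)$ times third derivatives, and after projection its orthogonal part enters the angle at order $\alpha^2$. So $\norm{\alpha h_{\perp g} + \mathcal{O}(\alpha^2)}^2 = \alpha^2\norm{h_{\perp g}}^2 + \mathcal{O}(\alpha^3)$, and the remainder only affects the result at order $\alpha^3$. I would state this explicitly and assume $\Ls$ is $C^3$ near $x$, with $\norm{g}$ bounded away from zero so that $h$ is well-defined. The $\approx$ is then interpreted as equality up to $\mathcal{O}(\alpha^3)$ terms, valid when $\alpha\norm{h}\ll 1$. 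The case $h_{\perp g} = 0$ is consistent with the claim: the misalignment is then higher order, matching the condition $v \parallel H^\dagger g$ discussed after the lemma.
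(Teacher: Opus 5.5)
Your proof is correct, and it takes a different and cleaner route than the paper's.

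\textbf{The paper's route.} It expands $g^T g'$ and $\norm{g'}^2$ directly, in terms of $k = g^T H v/\norm{g}^2$ and $\gamma = g^T R/\norm{g}^2$. It then applies $\sqrt{1+\epsilon}\approx 1+\epsilon/2-\epsilon^2/8$, followed by a second series for $(1+\epsilon_1)/(1+\epsilon_2)$. Through that bookkeeping the first-order terms $\alpha k$ and the remainder $\gamma$ cancel, giving $1-\frac{\alpha^2}{2}\bigl(\norm{Hv}^2/\norm{g}^2-k^2\bigr)+\mathcal{O}(\alpha^3)$. Only after that does it complete the square to recognize $\norm{Hv-kg}^2/\norm{g}^2=\norm{h_{\perp g}}^2$.

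\textbf{Your route.} You split $h$ into its components along and orthogonal to $\hat g$ at the outset, which makes the cosine exactly $a/\sqrt{a^2+b^2}$. The vanishing of first-order terms is then structural: the parallel part only rescales $g'$ and cannot change the angle. A single expansion of $(1+t)^{-1/2}$ then suffices.

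\textbf{Remainder handling.} Your treatment is also more transparent than the paper's. The parallel part of the remainder only perturbs $a = 1+\mathcal{O}(\alpha)$, so it changes $b^2/a^2$ by a relative $\mathcal{O}(\alpha)$. The orthogonal part enters $b^2$ only through $2\alpha\, h_{\perp g}^T r_\perp + \norm{r_\perp}^2 = \mathcal{O}(\alpha^3)$.

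\textbf{What each buys.} The paper's version produces the intermediate expression in $\norm{Hv}^2$ and $k$ explicitly, and reuses it for the upper bound in Lemma~\ref{lem:misalignment_upperbound}. In your version that expression is one line away via Pythagoras, $\norm{h_{\perp g}}^2=\norm{h}^2-(\hat g^T h)^2$, so nothing is lost. You should state the paper's convention that $v$ is detached, so that $g'$ denotes $\grad_x L$ evaluated at $x+\alpha v$ with $v$ held fixed; your Taylor step already assumes this implicitly.
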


\begin{proof}
	\textbf{Note that in practice, in cases where $v$ is computed from $x$ (e.g. $v = g$ or $v = p$), we detach $v$ from the PyTorch computation graph after calculating its value. As a result, its functionality as a function of $x$ is removed, and in the Taylor expansion shown below we can assume $v$ is fixed with respect to $x$.}\\
For the sake of brevity we let $L(x) := \Ls(f_\theta(x), y)$. Using a second-order Taylor expansion,
\[ \grad_x L(x + \delta) = \grad_x L(x) + H \delta + \mathcal{O}(\norm{\delta}^2) = g + \alpha H v + R, \quad R = \mathcal{O}\left(\alpha^2\right). \]
Let $g' := \grad_x L(x + \delta)$, then:
\begin{align}
    g^T g' &= \norm{g}^2 + \alpha g^T H v + g^T R, \label{eq:numerator} \\\norm{g'}^2 &= \norm{g}^2 + 2\alpha g^T H v + \alpha^2 \norm{H v}^2 + 2 g^T R + \mathcal{O}\left(\alpha^3\right). \label{eq:denominator}
\end{align}
Using the cosine formula from ~\Eqref{eq:alignment}:
\[ \textnormal{PertAlign} = \frac{g^T g'}{\norm{g} \cdot \norm{g'}}. \]
Substituting~\twoeqrefs{eq:numerator}{eq:denominator} and defining
\[ k := \frac{g^T H v}{\norm{g}^2}, \quad \gamma := \frac{g^T R}{\norm{g}^2}, \]
we obtain
\[ \textnormal{PertAlign} = \frac{1 + \alpha k + \gamma}{\sqrt{1 + 2\alpha k + 2\gamma + \alpha^2 \frac{\norm{Hv}^2}{\norm{g}^2} + \mathcal{O}(\alpha^3)}}. \]
Applying the expansion $\sqrt{1 + \epsilon} \approx 1 + \frac{\epsilon}{2} - \frac{\epsilon^2}{8}$ and setting
$\epsilon = 2\alpha k + 2\gamma + \alpha^2 \frac{\norm{Hv}^2}{\norm{g}^2} + \mathcal{O}(\alpha^3)$, we get

\[ \textnormal{PertAlign} = \frac{1 + \alpha k + \gamma}{1 + \alpha k + \gamma + \alpha^2 \frac{\norm{Hv}^2}{2\norm{g}^2} - \frac{\alpha^2 k^2}{2} + \mathcal{O}(\alpha^3)} \]
Let
\[ \epsilon_1 = \alpha k + \gamma, \qquad \epsilon_2 = \alpha k + \gamma + \alpha^2 \frac{\norm{Hv}^2}{2\norm{g}^2} - \frac{\alpha^2 k^2}{2} + \mathcal{O}(\alpha^3) \]
Then,
\begin{align*}
\frac{1 + \epsilon_1}{1 + \epsilon_2} &= (1 + \epsilon_1)\left(1 - \epsilon_2 + \epsilon_2^2 + \mathcal{O}(\epsilon_2^3)\right) \\
&= 1 + \epsilon_1 - \epsilon_2 - \epsilon_1 \epsilon_2 + \epsilon_2^2 + \mathcal{O}(\alpha^3) \\
&= 1 + \alpha k + \gamma - \alpha k - \gamma - \alpha^2 \frac{\norm{Hv}^2}{2\norm{g}^2} + \frac{\alpha^2 k^2}{2} - \alpha^2 k^2 + \alpha^2 k^2 + \mathcal{O}(\alpha^3) \\
&= 1 - \alpha^2 \frac{\norm{Hv}^2}{2\norm{g}^2} + \frac{\alpha^2 k^2}{2} + \mathcal{O}(\alpha^3)
\end{align*}
Thus,
\begin{equation}
	\textnormal{PertAlign} = 1 - \frac{\alpha^2}{2} \left( \frac{\norm{Hv}^2}{\norm{g}^2} - k^2 \right) + \mathcal{O}(\alpha^3)
\label{eq:alignment_hessian}
\end{equation}

We can rewrite \Eqref{eq:alignment_hessian} in another form:
\begin{align*}
\frac{\norm{Hv}^2}{\norm{g}^2} - k^2 
&= \frac{(Hv)^T(Hv) - (kg)^T(kg)}{\norm{g}^2} \\
&= \frac{(Hv)^T(Hv) - 2k(kg^Tg) + (kg)^T(kg)}{\norm{g}^2} \\
&= \frac{(Hv)^T(Hv) - 2kg^T H^Tv + (kg)^T(kg)}{\norm{g}^2}
\end{align*}
In the last equality, we used the symmetry of the Hessian matrix.
\begin{align*}
\frac{(Hv)^T(Hv) - 2(Hv)^T(kg) + (kg)^T(kg)}{\norm{g}^2}
= \frac{(Hv-kg)^T(Hv-kg)}{\norm{g}^2}
= \frac{\norm{Hv - kg}^2}{\norm{g}^2}
\end{align*}
Which leads us to
\begin{align*}
	\textnormal{PertAlign}
&= 1 - \frac{\alpha^2}{2}\left\Vert \frac{Hv}{\norm{g}} - \frac{kg}{\norm{g}}\right\Vert^2 + \mathcal{O}(\alpha^3) \\
&= 1 - \frac{\alpha^2}{2}\left\Vert \frac{Hv}{\norm{g}} - \frac{g^THvg}{\norm{g}^3}\right\Vert^2 + \mathcal{O}(\alpha^3)
\end{align*}
Defining $h = \frac{Hv}{\norm{g}}$ we can simplify
\begin{align*}
    \textnormal{PertAlign} 
    &= 1 - \frac{\alpha^2}{2} \left\Vert h - \frac{g^Thg}{\norm{g}^2}\right\Vert^2 \\
    &= 1 - \frac{\alpha^2}{2} \left\Vert h - (h^T\hat{g})\hat{g}\right\Vert^2
\end{align*}
where $\hat{g} = \frac{g}{\norm{g}}$ is the unit gradient direction. Further, we can write
\begin{equation*}
    1 - \textnormal{PertAlign} \approx \frac{\alpha^2}{2} \norm{h_{\perp g}}^2
\end{equation*}
\end{proof}

\begin{remark}
The conclusion of \Lemref{lem:alignment_hessian} also holds when the input is augmented with small random noise, i.e., $x' = x + \eta$, since such perturbations do not alter the logic of the proof.
\label{rem:noise_augment}
\end{remark}

\begin{corollary}[Special case $v = g$]
If $v$ is aligned with the gradient, Lemma~\ref{lem:alignment_hessian} reduces to
\begin{equation}
    \textnormal{PertAlign} 
    = 1 - \frac{\alpha^2}{2} \norm{H \hat{g} - k \hat{g}}^2 + \mathcal{O}(\alpha^3),
    \label{eq:alignment_hessian_gradient}
\end{equation}
where $\hat{g} = g / \norm{g}$ and $k = \frac{g^T H g}{\norm{g}^2}$.
\label{cor:pertalign_gradversian}
\end{corollary}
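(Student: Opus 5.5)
The plan is to specialize \Eqref{eq:alignment_hessian} from the proof of Lemma~\ref{lem:alignment_hessian} to the direction $v = g$. The form we want is the one before the orthogonal-projection rewriting:
\[ \textnormal{PertAlign} = 1 - \frac{\alpha^2}{2}\left\Vert \frac{Hv}{\norm{g}} - \frac{kg}{\norm{g}}\right\Vert^2 + \mathcal{O}(\alpha^3), \qquad k = \frac{g^T H v}{\norm{g}^2}. \]
As in the lemma, $v$ is treated as detached from the computation graph, so the Taylor expansion with $v$ fixed remains valid when $v = g$.

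\textbf{Step 1: substitute $v = g$.} With this choice, $k$ becomes $g^T H g / \norm{g}^2$, which is exactly the Rayleigh quotient appearing in the corollary. Next, rewrite $Hv/\norm{g} = Hg/\norm{g} = H\hat g$, using linearity of $H$ and $\hat g = g/\norm{g}$. Likewise, $kg/\norm{g} = k\hat g$. Substituting both into the displayed expansion gives \Eqref{eq:alignment_hessian_gradient} directly.

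\textbf{Step 2: check the remainder and the scaling convention.} The only point needing care is that the step is $\delta = \alpha g$, not $\alpha \hat g$. The remainder $R$ is therefore $\mathcal{O}(\alpha^2\norm{g}^2)$ rather than $\mathcal{O}(\alpha^2)$. I would note that for fixed $x$ the factor $\norm{g}$ is a constant, so it is absorbed into the $\mathcal{O}(\alpha^3)$ term. The factor $\norm{g}$ in $\norm{\delta} = \alpha\norm{g}$ has already been accounted for, since the lemma writes $h = Hv/\norm{g}$. Because of that normalization, the quadratic coefficient comes out in terms of the unit vector, $\norm{H\hat g - k\hat g}^2$, with no extra powers of $\norm{g}$.

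\textbf{Step 3: optional consistency check.} One can confirm that $H\hat g - k\hat g = (I - \hat g\hat g^T)H\hat g$, which is the component of $H\hat g$ orthogonal to $g$. This agrees with the $h_{\perp g}$ form in the lemma.

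There is no real obstacle here. The only subtlety is bookkeeping the $\norm{g}$ factors in the remainder term, which is routine.
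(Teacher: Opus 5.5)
Your proposal is correct and takes the same approach as the paper. The paper states the corollary without a separate proof, obtaining it by substituting $v = g$ into $\textnormal{PertAlign} = 1 - \frac{\alpha^2}{2}\norm{\frac{Hv}{\norm{g}} - \frac{kg}{\norm{g}}}^2 + \mathcal{O}(\alpha^3)$ from the proof of Lemma~\ref{lem:alignment_hessian}. Your added bookkeeping is sound detail the paper leaves implicit: the remainder becomes $R = \mathcal{O}(\alpha^2\norm{g}^2)$ and is absorbed for fixed $x$, and $H\hat g - k\hat g = (I - \hat g\hat g^T)H\hat g$.
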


\begin{remark}
In this case, if the loss surface is locally linear in the gradient direction, \Eqref{eq:alignment_hessian_gradient} can be rewritten as
\[
1 - \textnormal{PertAlign} \approx \frac{\alpha^2}{2} \norm{(H - kI)\hat{g}}^2.
\]
Thus the gradient satisfies $Hg \approx k g$, i.e., $\hat{g}$ is approximately an eigenvector of $H$, and $\textnormal{PertAlign} \approx 1$. 
If $\hat{g}$ is not aligned with any eigenvector, then $H\hat{g}$ points in a different direction from $\hat{g}$, implying the gradient direction changes after a small step.
\label{rem:pertalign_gradversion_eigenvector}
\end{remark}

\begin{lemma}[Upper bound on misalignment]
For $v = g$,
\[
    1 - \textnormal{PertAlign} \leq \frac{\alpha^2}{2} \left( \norm{H}^2 - k^2 \right) \leq \frac{\alpha^2}{2} \norm{H}^2.
\]
\label{lem:misalignment_upperbound}
\end{lemma}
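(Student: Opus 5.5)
We start from the second-order expansion established in the proof of \Lemref{lem:alignment_hessian}. For $v = g$ it takes the form of Corollary~\ref{cor:pertalign_gradversian}:
\[
1 - \textnormal{PertAlign} = \frac{\alpha^2}{2}\left(\frac{\norm{Hg}^2}{\norm{g}^2} - k^2\right) + \mathcal{O}(\alpha^3), \qquad k = \frac{g^T H g}{\norm{g}^2}.
\]
Here the normalization $h = Hv/\norm{g}$ reduces to $H\hat g$, so the bracket equals $\norm{H\hat g}^2 - (\hat g^T H \hat g)^2 = \norm{(H\hat g)_{\perp g}}^2$. We will work at the leading-order level throughout, i.e.\ we read the statement as an inequality on the $\alpha^2$ coefficient, with the $\mathcal{O}(\alpha^3)$ remainder dropped exactly as in the ``$\approx$'' of \Lemref{lem:alignment_hessian}.

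\textbf{First inequality.} We bound $\norm{H\hat g}^2 \le \norm{H}^2\norm{\hat g}^2 = \norm{H}^2$, where $\norm{H}$ is the spectral (operator) norm. This holds by the definition of the induced norm, since $\hat g$ is a unit vector. Subtracting $k^2$ from both sides then gives
\[
\frac{\alpha^2}{2}\left(\norm{H\hat g}^2 - k^2\right) \le \frac{\alpha^2}{2}\left(\norm{H}^2 - k^2\right).
\]

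\textbf{Second inequality.} This is immediate because $k^2 \ge 0$. We will also check that the middle quantity is itself nonnegative, which confirms the bound chain is consistent. By Cauchy--Schwarz, $|k| = |\hat g^T H \hat g| \le \norm{H\hat g} \le \norm{H}$. Equivalently, because $H$ is symmetric, $k$ is a Rayleigh quotient and therefore lies between $\lambda_{\min}$ and $\lambda_{\max}$.

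\textbf{Main obstacle.} The only delicate point is rigor about the remainder. Strictly, $1-\textnormal{PertAlign}$ equals the quadratic term plus $\mathcal{O}(\alpha^3)$, so the inequality holds exactly only up to that term. We would state explicitly that the bound is understood to leading order in $\alpha$, consistent with the approximation in \Lemref{lem:alignment_hessian}.

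If an exact statement is wanted instead, we would use an alternative route. Write $g' = g + \alpha \bar H g$ with $\bar H = \int_0^1 H(x + t\alpha g)\,dt$, the averaged Hessian along the segment from $x$ to $x + \alpha g$. The sine of the angle between $g$ and $g'$ is then controlled by $\alpha\norm{(\bar H \hat g)_{\perp}}/\norm{\hat g + \alpha \bar H\hat g}$. However, this yields a bound with extra denominator factors, so we will present the leading-order version as the proof.
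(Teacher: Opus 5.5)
Your proposal is correct and follows essentially the same argument as the paper. Like the paper, you start from the $v=g$ case of the second-order expansion, bound $\norm{H\hat g}^2 \le \norm{H}^2$ via the spectral norm (the paper writes this as $\max_g \norm{Hg}^2/\norm{g}^2 = \norm{H}^2$), and use $k^2 \ge 0$ for the second inequality; your explicit note that the bound holds only at leading order in $\alpha$, with the $\mathcal{O}(\alpha^3)$ remainder dropped, makes precise a caveat the paper leaves implicit.
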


\begin{proof}
This follows directly from \Eqref{eq:alignment_hessian}, using
\[ \max_g \frac{\norm{H g}^2}{\norm{g}^2} = \norm{H}^2, \]
and the fact that $k^2 \geq 0$.
\end{proof}

\subsection{Step-Size Selection from Alignment Approximation}
\label{subsec:step-size-selection}

\begin{proposition}[Optimal $\alpha$ for gradient ascent $v = \alpha g$]
Approximating the loss landscape locally as quadratic,
\[
\alpha^* =
\begin{cases}
    \min\left(\alpha_{\max}, \frac{\alpha_0}{1 - \frac{\norm{g'}}{\norm{g}} \cdot \textnormal{PertAlign}}\right), & \frac{\|g'\|}{\|g\|} \cdot \textnormal{PertAlign} < 1,\\[5pt]
    \alpha_{\max}, & \text{otherwise}.
\end{cases}
\]
where $g' = g + \alpha H g$ is the perturbed gradient, $\alpha_{\max}$ is the maximum step-size budget, and $\alpha_0$ is a fixed numerator.
\label{prop:optimal_stepg}
\end{proposition}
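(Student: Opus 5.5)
Since $g'=g+\alpha Hg$ is exactly the gradient of the quadratic model at $x+\alpha g$, I will work with the quadratic model
\[
L(x+\delta)\approx L(x)+g^T\delta+\tfrac12\delta^T H\delta
\]
along the ray $\delta=\alpha g$. This gives a one-dimensional function
\[
\phi(\alpha)=L(x)+\alpha\norm{g}^2+\tfrac{\alpha^2}{2}\,g^THg,
\]
with $\phi'(\alpha)=\norm{g}^2+\alpha\, g^THg$. The curvature along the ray is $c:=g^THg$. The plan is to express the stationary point of $\phi$ entirely through quantities that SORA already computes, namely $g$ and the backpropagated gradient $g'$.

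The key identity comes from projecting $g'$ onto $g$. We have $g^Tg'=\norm{g}^2+\alpha c$. By the cosine definition in \Eqref{eq:alignment}, $g^Tg'=\norm{g}\,\norm{g'}\cdot\textnormal{PertAlign}$. Dividing by $\norm{g}^2$ gives
\[
\rho:=\frac{\norm{g'}}{\norm{g}}\,\textnormal{PertAlign}=1+\alpha\frac{c}{\norm{g}^2},
\qquad\text{so}\qquad
1-\rho=-\alpha\frac{c}{\norm{g}^2}.
\]
This is exact under the quadratic model, with no $\mathcal{O}(\alpha^3)$ remainder, so I do not need \Lemref{lem:alignment_hessian} here. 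That lemma only serves to interpret $\rho$ as a linearity coefficient.

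Next I split into cases by the sign of $c$.

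\emph{Case $\rho<1$.} Then $c<0$ and the model is concave along $g$. Setting $\phi'(\alpha^\star)=0$ gives
\[
\alpha^\star=-\frac{\norm{g}^2}{c}=\frac{\alpha}{1-\rho},
\]
where $\alpha$ is the probing step used to form $g'$. Renaming this fixed probe step $\alpha_0$ gives $\alpha_0/(1-\rho)$. This is the unconstrained maximizer along the ray, and the step must also respect the budget. Since $\phi$ is concave and increasing on $[0,\alpha^\star]$, the constrained maximizer over $[0,\alpha_{\max}]$ is $\min(\alpha_{\max},\alpha^\star)$.

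\emph{Case $\rho\ge 1$.} Then $c\ge 0$ and $\phi'(\alpha)>0$ for all $\alpha\ge 0$. So $\phi$ is monotonically increasing on the feasible interval and the maximum is at $\alpha_{\max}$.

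The main obstacle is interpretive rather than computational: the numerator $\alpha_0$. Strictly, the derivation gives the probe step that produced $g'$. In practice, because of the temporal gap, the probe step changes from batch to batch, whereas the statement treats $\alpha_0$ as a fixed constant. I would state explicitly that the identity holds for the probe step at which $g'$ was evaluated, and that fixing $\alpha_0$ is a modelling choice that keeps the estimate stable.

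I would also check the degenerate boundary $\rho=1$ ($c=0$). It falls into the ``otherwise'' branch, which is consistent with the fact that $\phi$ is linear there and the budget-limited step is optimal.
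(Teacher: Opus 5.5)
Your proposal is correct and follows essentially the same route as the paper. Both restrict the quadratic model to the ray $\delta=\alpha g$ and solve $\phi'(\alpha)=0$ to get $-\norm{g}^2/(g^THg)$. Both eliminate $g^THg$ through $g^Tg'=\norm{g}^2+\alpha\, g^THg$ and rewrite $g^Tg'/\norm{g}^2$ as $\frac{\norm{g'}}{\norm{g}}\cdot\textnormal{PertAlign}$. Both send the upward-curvature case to $\alpha_{\max}$ and replace the probe step by the fixed constant $\alpha_0$.

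Two of your choices are slightly better than the paper's. Your justification of the $\min$ and the ``otherwise'' branch as a constrained maximization of $\phi$ over $[0,\alpha_{\max}]$ is cleaner than the paper's appeal to the quadratic approximation being unreliable at large steps. You also correctly flag that substituting $\alpha_0$ for the probe step is a modelling choice, not a derived consequence. The paper likewise introduces it as a stabilizing choice, though its claim that this does not affect the proof's validity for small $\alpha_0$ is not demonstrated.
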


\begin{proof}
By approximating the loss landscape as a quadratic function, we can use alignment to find better candidates for step-size ($\alpha$):
\[ L(x+\delta) = L(x) + g^T \delta + \frac{1}{2} \delta^T H \delta \]
Using $\delta = \alpha g$, substitute in the Equation above:
\[ L(x + \alpha g) = L(x) + \alpha \|g\|^2 + \frac{\alpha^2}{2} g^T H g \]
We can find the optimal $\alpha$ as
\[ \frac{\partial L(x+\alpha g)}{\partial \alpha} = 0 = \|g\|^2 + \alpha g^T H g \implies \alpha^* = -\frac{\|g\|^2}{g^T H g} \]
Using Taylor's approximation of the gradient:
\[ \nabla_x L(x+\alpha g) = \nabla_x L(x) + \nabla^2_x L(x) \delta = g' = g + \alpha H g \]
\[ g^T g' = \|g\|^2 + \alpha g^T H g \implies g^T H g = \frac{g^T g' - \|g\|^2}{\alpha} \]
Substituting this into the step-size equation:
\[ \alpha^* = - \frac{\alpha \|g\|^2}{g^T g' - \|g\|^2} = \frac{\alpha}{1 - \frac{g^T g'}{\|g\|^2}} \]
Note that
\[ \frac{g^T g'}{\|g\|^2} = \frac{\|g'\|}{\|g'\|} \frac{g^T g'}{\|g\| \|g\|} = \frac{\|g'\|}{\|g\|} \cdot \textnormal{PertAlign} \]

Which gives us,
\begin{equation*}
\alpha^* = \frac{\alpha}{1 - \frac{\norm{g'}}{\norm{g}} \cdot \textnormal{PertAlign}}.
\end{equation*}

Note that, since our objective is to \emph{maximize} the loss, the optimal step-size must be strictly positive (i.e., $\alpha^* > 0$).  
If the $\alpha^*$ obtained from $\frac{dL}{d\alpha} = 0$ is negative, this implies $g^T H g > 0$, corresponding to an \emph{upward curvature} in the perturbation direction. In such cases, the computed $\alpha^*$ would lead toward a local minimum of the loss rather than a maximum.  
Therefore, to ensure loss maximization, we set the step-size to the largest permissible value, namely, the maximum perturbation budget.

Because the quadratic approximation underlying this derivation becomes unreliable for large step-sizes, we explicitly cap $\alpha^*$ by the perturbation budget.  
Moreover, to further stabilize the approximation in practice, we fix the numerator of the step-size formula to a constant value $\alpha_0$.  
This choice does not affect the validity of the proof as long as $\alpha_0$ remains sufficiently small.

Putting these considerations together, the final practical form of $\alpha^*$ is:
\[
\alpha^* =
\begin{cases}
    \min\left(\alpha_{\max}, \dfrac{\alpha_0}{1 - \dfrac{\|g'\|}{\|g\|} \cdot \textnormal{PertAlign}}\right), & \dfrac{\|g'\|}{\|g\|} \cdot \textnormal{PertAlign} < 1,\\[5pt]
    \alpha_{\max}, & \text{otherwise},
\end{cases}
\]
where $\alpha_{\max}$ is the perturbation budget, $\alpha_0$ is the fixed numerator, $g'$ is the backpropagation gradient on the adversarial example, and \textnormal{PertAlign} denotes the alignment metric defined in \Eqref{eq:alignment}.

\end{proof}

To demonstrate the effectiveness of SORA in $\ell_2$ norm, we have provided some empirical results in \Appref{sec:Euclidean}.

\begin{proposition}[Optimal $\alpha$ for sign-gradient $v = \alpha p$]
\label{prop:alpha_p}
For perturbation $v = \alpha p = \alpha \sign(g)$,
\[
\alpha^* =
\begin{cases}
    \min\left(\alpha_{\max}, \frac{\alpha_0}{1 - \frac{p^T g'}{\lVert g \rVert_1}}\right), & \frac{p^T g'}{\lVert g \rVert_1} < 1,\\[3pt]
    \alpha_{\max}, & \text{otherwise}.
\end{cases}
\]
where $g' = g + \alpha H p$, $\norm{\cdot}_1$ is the $\ell_1$ norm, $\alpha_{\max}$ is the maximum step-size budget, and $\alpha_0$ is a fixed numerator.
\label{prop:optimal_stepp}
\end{proposition}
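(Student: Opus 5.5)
I will mirror the proof of Proposition~\ref{prop:optimal_stepg}, replacing the gradient direction by $p=\sign(g)$.

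\textbf{Step 1: the quadratic model along $p$.} With $L(x):=\Ls(f_\theta(x),y)$, the local quadratic approximation is $L(x+\delta)=L(x)+g^T\delta+\tfrac12\delta^TH\delta$. Substituting $\delta=\alpha p$ and using the identity $g^Tp=\sum_i g_i\sign(g_i)=\norm{g}_1$ gives
\[
L(x+\alpha p)=L(x)+\alpha\norm{g}_1+\tfrac{\alpha^2}{2}p^THp .
\]
Setting the derivative in $\alpha$ to zero yields the stationary step
\[
\alpha^*=-\frac{\norm{g}_1}{p^THp}.
\]

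\textbf{Step 2: remove the Hessian.} The Hessian is unavailable, so I eliminate it using the first-order Taylor expansion of the gradient at the probing step $\alpha$. As in the proof of Lemma~\ref{lem:alignment_hessian}, I treat $p$ as detached, so it is held fixed. Then
\[
g'=\grad_x L(x+\alpha p)\approx g+\alpha Hp,
\]
and taking the inner product with $p$ gives $p^Tg'=\norm{g}_1+\alpha\,p^THp$, hence
\[
p^THp=\frac{p^Tg'-\norm{g}_1}{\alpha}.
\]
Substituting into Step 1 gives
\[
\alpha^*=\frac{\alpha}{1-\frac{p^Tg'}{\norm{g}_1}}.
\]
This is exactly the quantity SORA tracks: $v$ is the moving average of $p^Tg'/\norm{g}_1$ in \Algref{alg:ours}.

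\textbf{Step 3: the cases and the practical modifications.} The stationary point is a maximizer only when the curvature along $p$ is negative, i.e.\ $p^THp<0$. This is equivalent to $p^Tg'/\norm{g}_1<1$, which also makes $\alpha^*$ positive. Otherwise the quadratic is convex or flat along $p$, so the loss keeps increasing on $\alpha>0$ and the maximizer within the budget is $\alpha_{\max}$. Because the quadratic model is only locally valid, I cap the step by $\alpha_{\max}$. Following the argument after Proposition~\ref{prop:optimal_stepg}, I replace the numerator by the fixed small constant $\alpha_0$. The two cases together give the stated formula.

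\textbf{Main obstacle.} This is the justification of the Step 2 substitution, not the algebra. The Taylor remainder is $\mathcal{O}(\alpha^2\norm{p}^2)$, and $\norm{p}^2$ can be as large as $d$. Also, $p$ is a discontinuous function of $g$, so it must be treated as fixed. I would handle both the same way as in Lemma~\ref{lem:alignment_hessian}: state the result at the level of the quadratic model, where $g'=g+\alpha Hp$ holds exactly by the statement's own definition of $g'$, and note that the sign step commutes with nothing else in the argument.
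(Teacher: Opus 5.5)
Your proposal is correct and follows the paper's proof essentially step for step. Both use the quadratic model along $p$ with $g^Tp=\norm{g}_1$, take the stationary point $-\norm{g}_1/(p^THp)$, and eliminate $p^THp$ via $p^Tg'=\norm{g}_1+\alpha\,p^THp$. Both then apply the curvature-sign case split, the $\alpha_{\max}$ cap, and the replacement of the numerator by $\alpha_0$, which in both versions is a heuristic rather than a derived step. Your phrasing of the case split as $p^THp<0 \iff p^Tg'/\norm{g}_1<1$ (for $\alpha>0$) justifies the two branches slightly more cleanly than the paper's argument via the sign of $\alpha^*$.
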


\begin{proof}
The loss after perturbation becomes:
\[ L(x + \alpha p) = L(x) + \alpha g^T p + \frac{\alpha^2}{2} p^T H p. \]
Solving $\frac{dL}{d\alpha} = 0$ yields:
\[ \alpha^* = -\frac{g^T p}{p^T H p}. \]
Using $g' = g + \alpha H p$, we have:
\[ p^T g' = p^T g + \alpha p^T H p \quad \implies \quad p^T H p = \frac{p^T g' - p^T g}{\alpha}. \]
Substituting yields:
\[ \alpha^* = \frac{\alpha g^T p}{p^T g - p^T g'}. \]
Since $p^T g = \norm{g}_1$:
\begin{equation*}
\alpha^* = \frac{\alpha}{1 - \frac{p^T g'}{\norm{g}_1}}.
\end{equation*}

Since our objective is to \emph{maximize} the loss, the optimal step-size must satisfy $\alpha^* > 0$.  
If the $\alpha^*$ obtained from the stationary condition $\frac{dL}{d\alpha} = 0$ is negative, this implies $p^T H p > 0$, corresponding to an \emph{upward curvature} along the perturbation direction. In such cases, the computed $\alpha^*$ would lead toward a local minimum of the loss rather than its maximum.  
To avoid this, we set the step-size to the largest permissible value, namely the perturbation budget.

Because the quadratic approximation underlying this derivation becomes unreliable for large step-sizes, we explicitly cap $\alpha^*$ by the perturbation budget.  
Moreover, to further stabilize the approximation in practice, we fix the numerator of the step-size formula to a constant value $\alpha_0$.  
This choice does not affect the validity of the proof as long as $\alpha_0$ remains sufficiently small.

Putting these considerations together, the final practical form of $\alpha^*$ is:
\[
\alpha^* =
\begin{cases}
    \min\left(\alpha_{\max}, \dfrac{\alpha_0}{1 - \dfrac{p^T g'}{\lVert g \rVert_1}}\right), & \dfrac{p^T g'}{\lVert g \rVert_1} < 1,\\[5pt]
    \alpha_{\max}, & \text{otherwise},
\end{cases}
\]
where $\alpha_{\max}$ denotes the perturbation budget, $\alpha_0$ is the fixed numerator, $g'$ is the backpropagation gradient computed on the adversarial example, and $p$ is the perturbation direction.

\end{proof}

\subsection{Alternative PertAlign Variant (Ablation)}
\label{subsec:pertalign-ablation}

\begin{lemma}[First-order Hessian correlation for sign alignment]
\label{lem:sign_alignment}
Let $p = \sign(g)$. Define
\[
    \textnormal{AltPertAlign} = \beta \cos(p, g'), \quad g' = g + \alpha H p.
\]
For $\alpha \ll 1$,
\[
    1 - \textnormal{AltPertAlign} \approx \alpha \left( \frac{p^T H p}{\norm{g}_1} - \frac{g^T H p}{\norm{g}^2} \right),
\]
with $\beta = \frac{\sqrt{d} \norm{g}}{\norm{g}_1}$.
\end{lemma}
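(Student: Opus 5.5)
The plan is a direct first-order expansion of $\textnormal{AltPertAlign}$ in $\alpha$. It reuses the Taylor setup from the proof of \Lemref{lem:alignment_hessian}, with $v = p$ detached from the graph, and keeps only $\mathcal{O}(\alpha)$ terms rather than the second-order terms needed there. I will assume $g$ has no zero entries, so that $\norm{p} = \sqrt{d}$; this is exactly why the normalising factor $\beta = \sqrt{d}\norm{g}/\norm{g}_1$ makes $\textnormal{AltPertAlign} = 1$ at $\alpha = 0$.

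First, I will compute the numerator. Since $p^T g = \norm{g}_1$,
\[
p^T g' = \norm{g}_1 + \alpha\, p^T H p .
\]
Second, I will expand the norm of the perturbed gradient:
\[
\norm{g'}^2 = \norm{g}^2 + 2\alpha\, g^T H p + \mathcal{O}(\alpha^2),
\]
so that $\norm{g'} = \norm{g}\bigl(1 + \alpha\, g^T H p/\norm{g}^2 + \mathcal{O}(\alpha^2)\bigr)$, using $\sqrt{1+\epsilon} \approx 1 + \epsilon/2$. Third, I will substitute into $\beta\, p^T g'/(\sqrt{d}\,\norm{g'})$. The factors $\sqrt{d}\norm{g}$ cancel and $\norm{g}_1$ normalises the numerator, which leaves
\[
\textnormal{AltPertAlign} = \Bigl(1 + \alpha\frac{p^T H p}{\norm{g}_1}\Bigr)\Bigl(1 - \alpha\frac{g^T H p}{\norm{g}^2}\Bigr) + \mathcal{O}(\alpha^2).
\]
Multiplying out then gives the two first-order terms in the bracket of the statement, with the $\alpha^2$ cross term absorbed into the remainder.

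The main obstacle is the sign bookkeeping in this last step. With $g' = g + \alpha H p$ exactly as written, the expansion gives
\[
\textnormal{AltPertAlign} - 1 \approx \alpha\Bigl(\frac{p^T H p}{\norm{g}_1} - \frac{g^T H p}{\norm{g}^2}\Bigr),
\]
which is the negative of the stated relation. I would therefore fix the convention before claiming the stated form. The stated $1 - \textnormal{AltPertAlign}$ follows verbatim if the perturbed gradient is taken as $g' = g - \alpha H p$. That convention amounts to measuring curvature along the descent direction, or equivalently writing $H$ for $-\nabla_x^2$ of the loss being minimised. I would state this explicitly and then rerun the three steps above with that sign. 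Under the lemma's literal definition $g' = g + \alpha H p$, however, the computation does not yield the stated sign; only the magnitude of the first-order deviation matches.

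This sign question is the step I would verify most carefully, since the rest of the argument is routine first-order Taylor algebra. All discarded terms are $\mathcal{O}(\alpha^2)$, so the relation holds to first order in $\alpha$, in the same sense as "$\approx$" in \Lemref{lem:alignment_hessian}.
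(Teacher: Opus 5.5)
Your argument is the same as the paper's proof. You expand $p^T g' = \norm{g}_1 + \alpha\, p^T H p$, expand $\norm{g'}$ to first order via $1/\sqrt{1+\epsilon} \approx 1 - \epsilon/2$, and let $\beta$ cancel the zeroth-order factor $\norm{g}_1/(\norm{p}\norm{g})$.

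Your sign check is correct, and the paper's own derivation agrees with you. The last line of its proof reads $1 - \textnormal{AltPertAlign} \approx \alpha\bigl(\frac{g^T H p}{\norm{g}^2} - \frac{p^T H p}{\norm{g}_1}\bigr)$. That is the negative of the bracket in the lemma as stated, so the statement carries a sign typo.

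The better repair is to keep $g' = g + \alpha H p$ and flip the sign of the conclusion. You should not switch to $g' = g - \alpha H p$ or reinterpret $H$ as $-\nabla_x^2$. Throughout the paper, $g'$ is the gradient at the perturbed point $x + \alpha p$ and $H = \nabla_x^2 \Ls$, as in \Lemref{lem:alignment_hessian} and the PertAlign definition. Changing either convention would make the lemma inconsistent with how the quantity is actually computed.

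Your explicit assumption that $g$ has no zero entries is needed, and the paper skips it. The paper passes from $\beta = \norm{p}\norm{g}/\norm{g}_1$ to $\sqrt{d}\norm{g}/\norm{g}_1$ using only $p \in \{-1,0,1\}^d$. That fact gives only $\norm{p} \le \sqrt{d}$, with equality exactly when every $g_i \neq 0$.
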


\begin{proof}
Let $L(x) := \Ls(f_\theta(x), y)$. Using a second-order Taylor expansion,
\[ \grad_x L(x + \delta) = \grad_x L(x) + H \delta + \mathcal{O}(\norm{\delta}^2) = g + \alpha H p + R, \quad R = \mathcal{O}(\alpha^2). \]
Let $g' := \grad_x L(x + \delta)$. Then,
\begin{align}
    p^T g' &= p^T g + \alpha p^T H p+ p^T R, \label{eq:new_numerator} \\
    \norm{g'}^2 &= \norm{g}_2^2 + 2\alpha g^T H p + \alpha^2 \norm{H p}^2 + 2 g^T R + \mathcal{O}(\alpha^3). \label{eq:new_denominator}
\end{align}
Using \twoeqrefs{eq:new_numerator}{eq:new_denominator},
\begin{align*}
    	\text{AltPertAlign} & = \beta \frac{p^Tg + \alpha p^T H p + p^T R}{\norm{p}\sqrt{\norm{g}^2 + 2\alpha g^T H p + \alpha^2 \norm{H p}^2 + 2 g^T R + \mathcal{O}(\alpha^3)}} \\
 & = \beta \frac{\norm{g}_1}{\norm{p}\norm{g}} \frac{1 + \alpha \frac{p^T H p}{\norm{g}_1} + \frac{p^T R}{\norm{g}_1}}{\sqrt{1 + 2\alpha \frac{g^T H p}{\norm{g}^2} + \alpha^2 \frac{\norm{H p}^2}{\norm{g}^2} + 2 \frac{g^T R}{\norm{g}^2} + \mathcal{O}(\alpha^3)}}
\end{align*}
Using the expansion $\frac{1}{\sqrt{1 + \epsilon}} = 1 - \frac{\epsilon}{2} + \mathcal{O}(\epsilon^2)$
and setting $\epsilon = 2\alpha \frac{g^T H p}{\norm{g}^2} + \alpha^2 \frac{\norm{H p}^2}{\norm{g}^2} + 2 \frac{g^T R}{\norm{g}^2} + \mathcal{O}(\alpha^3)$,
\begin{equation*}
    	\text{AltPertAlign} \approx \beta \frac{\norm{g}_1}{\norm{p}\norm{g}} \left(1 + \alpha \frac{p^T H p}{\norm{g}_1} + \frac{p^T R}{\norm{g}_1}\right)
    \left(1 - \alpha \frac{g^T H p}{\norm{g}^2} - \frac{\alpha^2}{2} \frac{\norm{H p}^2}{\norm{g}^2} - \frac{g^T R}{\norm{g}^2} \right)
\end{equation*}
Neglecting second-order terms,
\begin{equation*}
    	\text{AltPertAlign} \approx \beta \frac{\norm{g}_1}{\norm{p}\norm{g}} \left(1 + \alpha \left(\frac{p^T H p}{\norm{g}_1} - \frac{g^T H p}{\norm{g}^2}\right) \right)
\end{equation*}
Choosing $\beta = \frac{\norm{p}\norm{g}}{\norm{g}_1}$, we obtain
\begin{equation*}
    1 - \text{AltPertAlign} \approx \alpha \left(\frac{g^T H p}{\norm{g}^2} - \frac{p^T H p}{\norm{g}_1}\right)
\end{equation*}
Further, since $p \in \{-1, 0, 1\}^d$,
\begin{equation}
    \beta = \frac{\sqrt{d}\norm{g}_2}{\norm{g}_1}
\end{equation}
\end{proof}
    \clearpage
    \section{Baselines}\label{sec:baseline}
We provide details about the hyperparameters of the baseline models here. 

\subsection*{Baseline Selection Criteria}
\label{subsec:baseline-selection}
\addcontentsline{toc}{subsection}{Baseline Selection Criteria}
Excluding SORA, we included 11 single-step methods, 3 multi-step methods, and a benign training baseline, all published in top-tier venues. 
Our goal was to include both competitive recent methods such as N-FGSM (NeurIPS 2022), AAER (NeurIPS 2024), and ELLE (ICLR 2024), as well as established methods such as Free AT (NeurIPS 2019), FGSM-RS (ICLR 2020), GradAlign (NeurIPS 2020), and ZeroGrad / MultiGrad (Intelligent Systems with Applications 2023) for single-step training, and TRADES (ICML 2019) for multi-step training, in addition to the standard FGSM and PGD methods.

We also included ATAS (IEEE TIP 2022), which is another adaptive step-size method but based on a different mechanism. 
Given the large number of methods proposed in this area and our goal of evaluating across many dataset–architecture combinations, to be as diverse and extensive as possible.

We included ATAS instead of ATTA~\citep{zheng2020efficient} since ATAS is a more advanced and competitive extension of ATTA. 
In addition, the ATAS paper analyzes the limitations of ATTA in detail.

\subsection{FGSM}\label{subsec:fgsm}
The Fast Gradient Sign Method (FGSM) introduced by \citet{goodfellow2014explaining} can be used directly in \Eqref{eq:adversarial_training}.
FGSM consists of a single-step update using the sign of the gradient:
\[
    x_\text{adv} = x + \eps \cdot \sign \left( \nabla_x \Ls(f_\theta(x), y) \right) 
\]
To ensure that the image pixels reside within a legitimate range, an additional clipping step is added to discard values less than 0 or greater than 1 (or 255 if the images are not normalized).
\citet{madry2019towards} and \citet{wong2020fast} have observed that FGSM adversarial training suffers from Catastrophic Overfitting.
We include FGSM adversarial training despite this limitation as baseline for other models ability to overcome CO.

\subsection{Free AT}\label{subsec:free}
\citet{shafahi2019adversarial} introduce Free Adversarial Training.
Their main idea is to use the same gradients used for generating adversarial examples to update the parameters of the neural network as well.
This way, generating attacks don't really incur much additional cost since they will be incorporated in the main training process.
For this trick to work, instead of seeing each batch only once during each epoch, they use multiple minibatch replays.
Each minibatch replay corresponds to a single iteration of the attack.
Additionally, due to the existence of Universal Adversarial Perturbations~\citep{moosavidezfooli2017universal}, \citet{shafahi2019adversarial} also use the perturbations from one batch as the initialization for the next batch similar to \citet{shafahi2019universal}.

Following the recommendations of \citet{shafahi2019adversarial} and \citet{wong2020fast} we use $m=8$ minibatch replays and train the model for 10 epochs.
Unlike what we do for other baselines, we reduce the maximum learning rate of the optimizer to $0.04$ following the advice of \citet{wong2020fast} once more.

\citet{shafahi2019adversarial} provide their TensorFlow code at \url{https://github.com/ashafahi/free_adv_train} and provide their PyTorch code at \url{https://github.com/mahyarnajibi/FreeAdversarialTraining}.

\citet{wong2020fast} have also implemented Free AT using PyTorch at \url{https://github.com/locuslab/fast_adversarial}.

\subsection{FGSM-RS}\label{subsec:fgsmrs}
\citet{wong2020fast} introduced Fast Adversarial Training, in which each attack starts by adding uniform noise $\eta \sim \unif(-\epsilon, +\epsilon)^d$ to the clean input. 
This perturbed sample then serves as the starting point for a standard FGSM update. 
The resulting perturbation $\delta$ is constrained within the $\ell_{\infty}$-norm ball of radius $\epsilon = \sfrac{8}{255}$. 
The method also requires a step-size hyperparameter, for which the authors recommend $\alpha = \sfrac{10}{255}$.

\citet{wong2020fast} provide their code at \url{https://github.com/locuslab/fast_adversarial}.

\subsection{GradAlign}\label{subsec:gradalign}
\citet{andriushchenko2020understanding} proposed a regularization method that maximizes gradient alignment within the perturbation set. 
They define the following local linearity metric of the loss function:
\begin{equation}
    \texttt{GradAlign}(x, y, \theta) := \cos \left( \nabla_x \Ls\big(f_\theta(x), y\big), \nabla_x \Ls\big(f_\theta(x + \eta), y\big) \right),
    \label{eq:gradalign}
\end{equation}
where $\eta \sim \unif(-\epsilon, +\epsilon)^d$. 
They introduce a regularizer $\Omega(x, y, \theta) = 1 - \texttt{GradAlign}(x, y, \theta)$ and optimize the objective $\Ls + \lambda \, \Omega$, with $\lambda = 0.2$ fixed across all architectures and datasets, following the authors' recommendations for \textsc{CIFAR-10} and \textsc{CIFAR-100}. 
Here, $\Ls$ denotes the cross-entropy loss computed on adversarial examples generated by FGSM-RS with $\alpha = 1.25 \times \epsilon$.

\citet{andriushchenko2020understanding} have made their code publicly available at \url{https://github.com/tml-epfl/understanding-fast-adv-training}.

\subsection{NuAT}\label{subsec:nuat}
Inspired by \citet{sriramanan2020guided}, \citet{sriramanan2021towards} introduced Nuclear-Norm Adversarial Training or NuAT.
Their proposed defense incorporates a regularizer based on the nuclear norm in both attack generation and adversarial training:
\[
    \mathcal{L}(f_\theta(x), y) + \lambda \cdot \| f_\theta(x_\text{adv}) - f_\theta(x) \|_*
\]
where $\| \cdot \|_{*}$ denotes the nuclear norm, defined as the sum of the singular values, and $x_\text{adv}$ is generated via the FGSM-RS attack.
\citet{sriramanan2021towards} generally use $\lambda = 4$, as this works for datasets that have around 10 classes.
However, since for datasets with more classes the training collapses, we use $\lambda = 1$ so that the model can learn.

\citet{sriramanan2021towards} provide their code at \url{https://github.com/val-iisc/NuAT}.

\subsection{ZeroGrad \& MultiGrad}\label{subsec:zerogradmultigrad}
\citet{golgooni2021zerograd} proposed ZeroGrad, a method that sets a threshold $q$ and zeros out the components of the input gradient whose absolute value falls below this threshold, thereby producing a more robust single-step perturbation. 
The method uses a random initialization $\eta \sim \unif(-\epsilon, +\epsilon)^d$ and a step-size $\alpha$.

ZeroGrad is highly sensitive to the choice of $q$ across datasets, architectures, and training settings. For a fair comparison of generalizability among baselines, we follow the authors’ recommendations and set $\alpha = 2 \times \epsilon$ and $q = 0.35$ for \textsc{CIFAR-10}, and $q = 0.45$ for \textsc{CIFAR-100} and all other datasets.

\citet{golgooni2021zerograd} also introduced MultiGrad, in which an identical batch is concatenated $N$ times, each copy initialized with $\eta \sim \unif(-\epsilon, +\epsilon)^d$. 
Perturbations are then retained only in directions where all samples agree on the gradient sign. 
MultiGrad is less sensitive to hyperparameter choices, so we set $N = 3$ and $\alpha = 2 \times \epsilon$ in all experiments as per recommended by authors.

\citet{golgooni2021zerograd} have made their implementation publicly available at \url{https://github.com/rohban-lab/catastrophic_overfitting}.

In \Appref{subsec:so-zerogard} we analyze these methods through the lens of second-order optimization, providing a fresh new look explaining how ZeroGrad and MultiGrad mitigate CO.

\subsection{N-FGSM}\label{subsec:nfgsm}
\citet{de2022make} proposed an FGSM variant that initializes with larger random noise and omits clipping of the final perturbation $\delta$ to the $\ell_{\infty}$-norm ball of radius $\epsilon$. 
Following the authors’ recommendation, we set $k = 2 \times \epsilon$ for the initialization $\eta \sim \unif(-k, +k)^d$ and use a step-size of $\alpha = \epsilon$.

Please note that \citet{schwinn2020towards} also have a method with the same name, which also doesn't clip the perturbation. 

\citet{de2022make} provide their code at \url{https://github.com/pdejorge/N-FGSM}.

\subsection{ATAS}\label{subsec:atas}
Inspired by \citet{zheng2020efficient}, \citet{huang2023fast} introduced Adversarial Training with Adaptive Step-Sizes (ATAS), which maintains a moving average of the squared $\ell_2$-norm of the gradient:
\[
    v^j_i = \beta v^{j-1}_i + (1 - \beta) \left\| \nabla_{\tilde{x}_i} \Ls\big(f_\theta(\tilde{x}_i), y_i\big) \right\|_2^2,
\]
where $\tilde{x}_i$ is the initialization of $x_i$, and $\beta$ is a momentum factor that stabilizes the step-size.  

The per-example step-size $\alpha^j_i$ at epoch $j$ is then adjusted inversely to $v^j_i$:  
\[
    \alpha^j_i = \frac{\gamma}{c + \sqrt{v^j_i}},
\]
where $\gamma$ is a predefined learning rate and $c$ is a constant that prevents $\alpha^j_i$ from becoming excessively large.  
Following the authors’ recommendations for \textsc{CIFAR-10} and \textsc{CIFAR-100}, we set $\beta = 0.5$, $c = 0.01$, and $\gamma = 2c\epsilon$.

\citet{huang2023fast} provide their code at \url{https://github.com/HuangZhiChao95/ATAS}.

\subsection{AAER}\label{subsec:aaer}
\citet{lin2023eliminating} identified \emph{Abnormal Adversarial Examples} (AAEs) as a primary cause of CO. Unlike \emph{Normal Adversarial Examples} (NAEs), AAEs exhibit lower loss than their corresponding clean samples:  
\begin{align*}
    x^{\text{AAE}} & := \Ls\big(f_\theta(x+\eta), y\big) > \Ls\big(f_\theta(x+\eta+\delta), y\big), \\
    x^{\text{NAE}} & := \Ls\big(f_\theta(x+\eta), y\big) \leq \Ls\big(f_\theta(x+\eta+\delta), y\big).
\end{align*}

To mitigate the adverse effect of AAEs, they introduce a regularizer.  
For a batch of size $m$, let $n$ be the number of AAEs. The following terms penalize anomalous variation in AAEs and disparities in logits:  
\begin{align*}
    \text{AAE-CE} & = \frac{1}{n} \sum_{i=1}^n\left[ \Ls \left(f_\theta(x_i^{\text{AAE}} + \eta), y_i\right) - \Ls \left(f_\theta(x_i^{\text{AAE}} + \eta + \delta), y_i\right) \right], \\
    \text{AAE-L2} & = \frac{1}{n} \sum_{i=1}^n \left\| f_\theta(x_i^{\text{AAE}} + \eta + \delta) - f_\theta(x_i^{\text{AAE}} + \eta) \right\|_2^2, \\
    \text{NAE-L2} & = \frac{1}{m-n} \sum_{j=1}^{m-n} \left\| f_\theta(x_j^{\text{NAE}} + \eta + \delta) - f_\theta(x_j^{\text{NAE}} + \eta) \right\|_2^2.
\end{align*}

The \emph{Abnormal Adversarial Examples Regularization} (AAER) term is defined as:
\[
    \text{AAER} = \left(\lambda_1 \cdot \frac{n}{m} \right) \cdot \left( \lambda_2 \cdot \text{AAE-CE} + \lambda_3 \cdot \max\big(\text{AAE-L2} - \text{NAE-L2}, 0\big) \right).
\]

Two variants were proposed: \textsc{RS-AAER}, based on FGSM-RS \citep{wong2020fast}, and \textsc{N-AAER}, based on N-FGSM \citep{de2022make}, both augmented with the AAER regularization term.  
In our experiments, we adopt \textsc{N-AAER} as the best-performing variant. Following the authors’ recommendations for a fair generalizability comparison, we fix $\lambda_1 = 1$, $\lambda_2 = 1.5$, and $\lambda_3 = 0.15$ across all datasets and architectures.

\citet{lin2023eliminating} provide their code at \url{https://github.com/tmllab/2023_NeurIPS_AAER}.

\subsection{ELLE}\label{subsec:elle}
\citet{rocamora2024efficient} proposed \emph{Efficient Local Linearity Enforcement} (ELLE), a regularization term designed to encourage local linearity. 
ELLE mitigates CO not only in standard AT evaluations, but also in more challenging scenarios such as large adversarial perturbations and extended training schedules. 
The regularization term is theoretically linked to the curvature of the loss function and requires three forward passes to compute, as follows:
\[
    x_a,\, x_b \sim x + \unif\big( -\epsilon,\, +\epsilon \big)^d,
    \qquad
    \alpha \sim \unif\big(0,\, 1\big),
\]
where $d$ is the input dimensionality. 
A convex combination of $x_a$ and $x_b$ is then formed:
\[
    x_c = (1-\alpha) \cdot x_a + \alpha \cdot x_b.
\]
The ELLE penalty is given by:
\[
    E_{\mathrm{lin}} = 
    \left|\, 
    \Ls\big(f_\theta(x_c),\, y^i\big) 
    - (1-\alpha) \cdot \Ls\big(f_\theta(x_a),\, y^i\big) 
    - \alpha \cdot \Ls\big(f_\theta(x_b),\, y^i\big) 
    \,\right|^2.
\]

Because the ELLE term can take on large values, an excessively high coefficient may cause numerical overflow in the model weights. 
Following the authors’ recommendations for \textsc{CIFAR-10} and \textsc{CIFAR-100}, we set the regularization coefficient to $\lambda = 1000$ in all experiments.

\citet{rocamora2024efficient} provide their code at \url{https://github.com/LIONS-EPFL/ELLE}.

\subsection{PGD}\label{subsec:pgd}
\citet{madry2019towards} proposed Projected Gradient Descent (PGD) as a multi-step adversarial attack, widely regarded as a strong first-order adversary for both evaluation and training. 
PGD iteratively applies the gradient-sign method to maximize the loss with respect to the input, projecting intermediate updates back onto the perturbation set to ensure the adversarial example remains within the allowed $\ell_p$-norm constraint.  

Given a clean example $x$, PGD generates an adversarial example by initializing from a random perturbation within $\Delta$ and performing $K$ steps:
\[
    x^{t+1} = \Pi_{x+\Delta}\left( x^t + \alpha \cdot \sign \big( \nabla_x \Ls(f_\theta(x^t), y) \big) \right),
\]
where $\Pi_{x+\Delta}(\cdot)$ denotes the projection onto $x+\Delta$, and $\alpha$ is the step-size.  

In our experiments, we adopt PGD-2 and PGD-10 for generating adversarial examples during training with random starts.
For PGD-2 we use $\alpha = \sfrac{4}{255}$ and for PGD-10 we use $\alpha = \sfrac{2}{255}$.

\citet{madry2019towards} provide their code and pre-trained models at \url{https://github.com/MadryLab/mnist_challenge} and \url{https://github.com/MadryLab/cifar10_challenge}.

\subsection{TRADES}\label{subsec:trades}
\citet{zhang2019theoretically} proposed TRadeoff-inspired Adversarial DEfense via Surrogate-loss minimization (TRADES), a regularized adversarial training framework that explicitly balances clean accuracy and robust accuracy. 
The method augments the standard adversarial training objective with a penalty that minimizes the Kullback–Leibler (KL) divergence between the model’s output distributions on clean and adversarial examples, thereby encouraging prediction consistency under perturbations.  

Formally, TRADES solves:
\begin{equation}
    \min_\theta \E_{(x,y) \sim \mathcal{D}} 
    \left[ 
        \mathcal{L}\big(f_\theta(x), y\big) 
        + \frac{1}{\lambda} 
        \max_{\delta \in \Delta} 
        \KL \left(f_\theta(x) \,\|\, f_\theta(x+\delta)\right) 
    \right],
\end{equation}
where $\mathcal{L}$ denotes the standard cross-entropy loss on clean inputs, $\Delta$ is typically an $\ell_{\infty}$-norm ball of radius $\epsilon$, and $\lambda$ is a trade-off hyperparameter controlling the balance between natural and robust accuracy.  

In practice, the inner maximization is performed using \textsc{PGD-10}, but with the KL divergence term replacing the cross-entropy loss used in standard PGD-based adversarial training. 
Following the authors’ recommendations for \textsc{CIFAR-10}, we set $\beta = \sfrac{1}{\lambda} = 6$ in all experiments.

\citet{zhang2019theoretically} release their code and trained models at \url{https://github.com/yaodongyu/TRADES}.
    \clearpage
    \section{Reinterpreting Previous Work via Our Framework} \label{sec:SO_related_work}
Here we dive deeper into previous works and and focus on some closely related ideas.

\subsection{Randomization-Based Methods}\label{subsec:randomization}
A key strength of randomization (noise-based) methods is their \emph{step-size variability}.
\citet{andriushchenko2020understanding} justify the relative success of FGSM-RS~\citep{wong2020fast} compared to FGSM~\citep{goodfellow2014explaining} by noting that the random noise and the subsequent clipping result in a smaller effective magnitude of the adversarial perturbation.
This is further supported by the fact that adversarial perturbations with higher magnitudes often exacerbate CO.
This, however, does not explain the success of N-FGSM~\citep{de2022make}, which only clips pixel values to the $[0, 1]$ range and ignores the general clipping step used to limit the total perturbation budget.

Evaluating these methods through the lens of Epsilon Overfitting (EO), alongside \Tabref{tab:ablation-components}, clarifies the importance of step-size variability.
Because randomization-based methods use a larger uniform noise magnitude, scaling this magnitude by a factor of $k > 1$ makes the resulting space exponentially larger, starting from a space that is $k^d$ times larger.
Since $d$ is often quite large (e.g. for CIFAR-10 we have $d = 3 \times 32 \times 32 = 3072$), the effects of increasing $k$ even by a small amount are substantial~\citep{vershynin2026high}.
Omitting the clipping step combined with this expanded space significantly increases the diversity of perturbations, explaining the remarkable success of N-FGSM.
These results indicate that the \textbf{variability} of the step-size is crucial.
Specifically, the variation introduced through the sampling step makes the adversarial perturbation magnitudes more diverse, further limiting the possibility of EO occurring.

Moreover, the Epsilon Overfitting perspective argues that the model can memorize the exact perturbation budget and overfit to perturbations from that specific distribution, while exhibiting low accuracy on adversarial examples that share the same perturbation direction but have lower magnitudes (see \Figref{fig:fgsm-eps-acc}).
This observation suggests that for a given perturbation direction, there is a range of epsilon values where the model performs well, and another where it fails.
Methods such as FGSM-RS~\citep{wong2020fast} and N-FGSM~\citep{de2022make} use randomization in the initial step to avoid deceptively good regions in favor of realistically bad regions, resulting in more robust models.
Our method, however, does not address this problem randomly; instead, it relies on a systematic second-order approximation to find the optimal step-size and introduce variability.
SORA utilizes second-order information about the loss landscape to determine whether a deceptively good region has emerged. 
When such a region is detected, SORA adapts by adjusting the step-size to target the vulnerable regions, thereby restoring the model's robustness.

\subsection{GradAlign versus PertAlign}\label{subsec:gradalign_vs_pertalign}
\citet{andriushchenko2020understanding} introduced the GradAlign regularizer, which measures the cosine similarity between the gradient of the loss with respect to the clean input and the gradient with respect to the clean input perturbed by random noise:
\[
    \texttt{GradAlign} := \cos \left(
        \nabla_x \mathcal{L}(f_\theta(x), y),
        \nabla_x \mathcal{L}(f_\theta(x + \eta), y)
    \right), 
    \qquad \eta \sim \mathcal{U}\!\left(- \epsilon, + \epsilon\right)^d.
\]

\Lemref{lem:alignment_hessian}, which forms the theoretical basis for PertAlign, can also be applied to GradAlign. Without loss of generality, we can omit the explicit $\eta$ in \Eqref{eq:alignment}, viewing the noise $\eta$ in GradAlign as the arbitrary perturbation $v$ in \Eqref{eq:alignment}. This allows us to interpret GradAlign through a second‑order approximation lens. We believe this perspective explains why PertAlign and GradAlign exhibit similar trends in \Figref{fig:alignments}.  

However, despite the similarity of their underlying metrics, our method differs from GradAlign in three key aspects:

\begin{enumerate}
    \item \textbf{Motivation.}  
    The primary motivation behind GradAlign is to increase gradient alignment \emph{within} the perturbation budget. In contrast, our focus is on promoting the \emph{linearity} of the loss surface along gradient directions. This imposes fewer constraints on the model and facilitates robustness in critical directions while better preserving clean accuracy.
    
    \item \textbf{Usage.}  
    GradAlign is designed as a \emph{training regularizer}. In our case, PertAlign is used solely as a monitoring metric, enabling low‑cost (practically zero) per‑batch prediction of the CO status without influencing the training dynamics directly.
    
    \item \textbf{Computation.}  
    Since GradAlign serves as a regularizer, it must be computed before backpropagation to update the weights. In contrast, PertAlign leverages gradients already computed for attack generation and backpropagation (extended one layer to include the inputs), adding virtually no extra computational or memory overhead (see Figures~\ref{fig:cost} and~\ref{fig:cost-cifar10}).
    In essence, we are using a similar trick to that of \citet{shafahi2019adversarial} but avoid the hurdles of minibatch replaying since we don't use the free gradients directly in attack generation.
\end{enumerate}

\citet{fawzi2017classification, moosavidezfooli2018robustness} note that adversarial directions correspond to high-curvature directions.
This explains why GradAlign often has higher values in \Figref{fig:alignments} compared to PertAlign as the former measures the average curvature by sampling and random vector and the latter measures the curvature in the high-curvature adversarial directions.
This also highlights the efficiency of PertAlign compared to GradAlign, even though from \Figref{fig:alignments} it could be inferred that during CO, the model suffers from high curvature in almost all directions as the gap between random and adversarial directions (GradAlign and PertAlign respectively) disappear.
It should be noted that GradAlign acts as a regularizer, which overconstrains the model and reduces clean accuracy, especially when the direction is not adversarial. 
In contrast PertAlign is used only to identify nonlinearity, by indicating whether the chosen step-size has exceeded the locally smooth region, and adaptively modify the attack, without restricting the model.

\subsection{ZeroGrad \& MultiGrad from a Second-Order Perspective}\label{subsec:so-zerogard}
\citet{golgooni2021zerograd} proposed ZeroGrad as an adversarial training method to prevent Catastrophic Overfitting (CO). It works by zeroing the perturbation components in directions where the corresponding gradient elements have very small magnitude. In this section, we examine ZeroGrad from a second-order theoretical perspective.

The ZeroGrad perturbation can be modeled as:
\[
v = \alpha \odot p, 
\quad 
\alpha = \begin{pmatrix} \alpha_1 \\ \vdots \\ \alpha_d \end{pmatrix}, 
\quad 
p = \begin{pmatrix} \mathrm{sign}(g_1) \\ \vdots \\ \mathrm{sign}(g_d) \end{pmatrix},
\]
where $\odot$ denotes the element-wise product. Each component $\alpha_i$ is determined by:
\[
\forall i \in \{1,\cdots, d\}: \quad
\alpha_i = 
\begin{cases}
1 & \text{if } \frac{|g_i|}{\lVert g \rVert} > \tau, \\
0 & \text{otherwise}.
\end{cases}
\]

Since $\lVert v \rVert \ll \lVert x \rVert$, we can approximate the loss at the adversarial example $x+v$ using a second-order Taylor expansion:
\[
L(x + v) = L(x + \alpha \odot p) \approx L(x) + g^T (\alpha \odot p) + \frac{1}{2} (\alpha \odot p)^T H (\alpha \odot p),
\]
where $g = \nabla_x L(x)$ and $H = \nabla_x^2 L(x)$.

For the attack to succeed, we require:
\[
L(x + \alpha \odot p) > L(x),
\]
which is equivalent to:
\[
g^T (\alpha \odot p) + \frac{1}{2} (\alpha \odot p)^T H (\alpha \odot p) > 0.
\]

Since $H$ is symmetric, its eigendecomposition is:
\[
H = Q^T \Lambda Q,
\]
where $Q$ is orthogonal and $\Lambda$ is diagonal with eigenvalues $\lambda_i$. Substituting this into the inequality yields:
\[
g^T (\alpha \odot p) + \frac{1}{2} \left( Q (\alpha \odot p) \right)^T \Lambda \, Q (\alpha \odot p) 
= g^T (\alpha \odot p) + \frac{1}{2} h^T \Lambda h,
\]
where $h = Q (\alpha \odot p)$. Component-wise, this becomes:
\[
\sum_{i=1}^{d} \alpha_i p_i g_i + \frac{1}{2} \lambda_i h_i^2
= \sum_{i=1}^{d} \alpha_i |g_i| + \frac{1}{2} \lambda_i h_i^2.
\]

For each term $i$ we have:
\begin{equation}
\alpha_i |g_i| + \frac{1}{2} \lambda_i h_i^2.
\label{eq:zero_grad_second_order}
\end{equation}

Because $\alpha_i |g_i| \ge 0$ and $h_i^2 \ge 0$, the sign of \Eqref{eq:zero_grad_second_order} depends on the eigenvalue $\lambda_i$. If $\lambda_i < 0$ and
\[
\lambda_i < -\frac{2 \alpha_i |g_i|}{h_i^2},
\]
then the $i$-th term is negative. If enough such terms are negative, the total sum becomes negative, meaning the adversarial example has lower loss than the clean sample, an \emph{Abnormal Adversarial Example} (AAE).  
As noted by \citet{lin2023eliminating}, AAEs are closely linked to the onset of CO.

For components where $\sfrac{|g_i|}{\lVert g \rVert} \ll 1$, this condition is more easily satisfied, increasing the likelihood of generating AAEs.  
ZeroGrad mitigates this by setting $\alpha_i = 0$ whenever $\sfrac{|g_i|}{\lVert g \rVert} < \tau$, thereby avoiding perturbations in directions that could reduce the loss.

However, the magnitudes of $\lambda_i$ and $g_i$ vary substantially across datasets, models, and training stages. This variability directly affects the chosen $\tau$ hyperparameter (which closely relates to $q_{\mathrm{val}}$ in the original paper by \citet{golgooni2021zerograd}), which can lead to limited generalizability in diverse scenarios.

\citet{golgooni2021zerograd} also proposed MultiGrad which tries to overcome shortcomings of ZeroGrad by only perturbing in directions where different random starts all show same gradient sign, but this is at the cost of more computational cost and memory consumption.

\subsection{PGD-2 and SORA}\label{subsec:pgd2_sora}
PertAlign computes the cosine similarity between gradients from the first two iterations of a PGD attack. 
Significant divergence between these gradients indicates a highly distorted and nonlinear loss surface. 
High alignment between consecutive PGD gradients suggests that the resulting perturbation could be approximated by a single-step attack.
Conversely, when the loss surface becomes distorted, the alignment between PGD iterations decreases, and the optimization path can no longer be inferred from the initial gradient direction.

PGD variants, including PGD-2, avoid catastrophic failure by refining their results over multiple iterations. 
As shown in \Figref{fig:pertalign_generalization}, during catastrophic overfitting (CO), PertAlign collapses to zero, indicating that the second attack iteration moves in a direction nearly orthogonal to the first. 
This explains the substantial accuracy gap between single-step and multi-step methods after CO.

However, the small fixed step-size used in PGD, while beneficial in some contexts, introduces limitations. 
Fixed step-sizes can significantly reduce convergence speed compared to adaptive approaches. 
Auto-PGD (APGD)~\citep{croce2020reliable} addresses this by adapting step-sizes based on optimization progress and restarting from the best-found point when step-sizes are reduced.
SORA builds upon these insights by using curvature information to estimate optimal step-sizes adaptively. 

Another limitation of standard PGD is its use of a fixed $\epsilon$ budget. 
As noted by \citet{ding2018mma}, adaptive selection of $\epsilon$ for individual data points, treating it as a margin, can improve performance. 
SORA avoids the pitfalls of fixed-$\epsilon$ training identified by \citet{ding2018mma}, including the potential reduction of margins between clean samples and decision boundaries.

\subsection{FGSM-CKPT}\label{subsec:fgsm_ckpt_sora}
\citet{kim2021understanding} observed that, upon the onset of CO, the loss landscape becomes distorted in the direction of single-step perturbations. They proposed FGSM-CKPT, which partitions the attack step-size $\alpha$ into $c$ discrete fractions. For each scaled step-size $\sfrac{i\alpha}{c}$, the perturbed input $x + \sfrac{i\alpha}{c}$ is fed to the model, and the index $i$ yielding the lowest accuracy is selected; the model is then updated using that perturbation.

Although effective in some cases, FGSM-CKPT requires multiple forward passes per training step, resulting in high computational cost. Moreover, restricting the step-size to a discrete set limits flexibility in selecting the optimal value. In this work, we analyze the loss landscape in greater detail and, leveraging observed distortion patterns, propose an adaptive mechanism that selects a more accurate step-size with effectively zero additional cost.

    \clearpage
    \section{Datasets}\label{sec:dataset}

In this section we explore the efficacy of different methods on a number of different datasets.
We provide a brief overview of the key aspects of each dataset in \Tabref{tab:datasets}.
The distribution of labels in each dataset can be seen in \Figref{fig:class_distributions}. 

\begin{table}[ht!]
  \caption{Datasets overview.
  }
  \label{tab:datasets}
  \begin{center}
  \begin{tabular}{c cccc}
    \toprule
    \textbf{Dataset} & \textbf{Dimensions} & \textbf{\# Classes} & \textbf{\# Training Samples} & \textbf{\# Test Samples} \\
    \midrule
    \textsc{CIFAR-10} & $3 \times 32 \times 32$ & 10 & 50,000 & 10,000 \\
    \textsc{CIFAR-100} & $3 \times 32 \times 32$ & 100 & 50,000 & 10,000 \\
    \textsc{TinyImageNet} & $3 \times 64 \times 64$ & 200 & 100,000 & 10,000 \\
    \textsc{ImageNet-100} & $3 \times 224 \times 224$ & 100 & 117,000 & 5,000 \\
    \textsc{PathMNIST} & $3 \times 28 \times 28$ & 9 & 89,996 & 7,180 \\
    \textsc{TissueMNIST} & $1 \times 28 \times 28$ & 8 & 165,466 & 47,280 \\
    \bottomrule
  \end{tabular}
  \end{center}
\end{table}

\begin{figure}[ht]
    \centering
    \includegraphics[width=\textwidth]{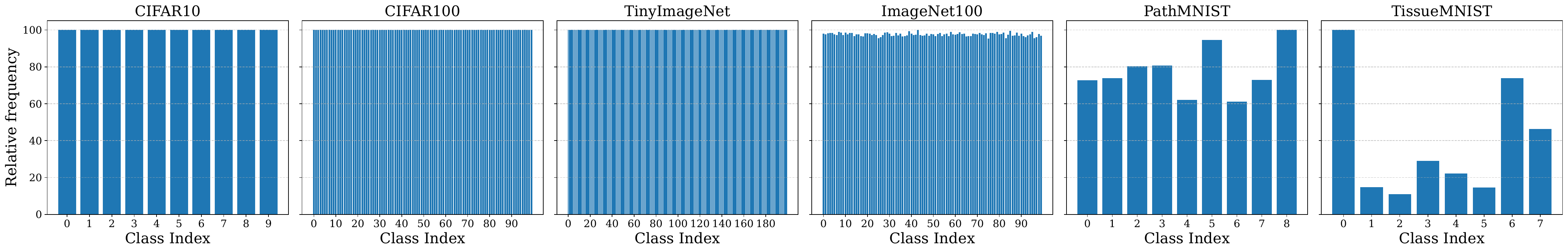}
    \caption{Class distributions across datasets. \textsc{CIFAR‑10},  \textsc{CIFAR‑100}, \textsc{TinyImageNet}, and \textsc{ImageNet-100} exhibit balanced class distributions, whereas \textsc{PathMNIST} and \textsc{TissueMNIST} are imbalanced, with \textsc{TissueMNIST} showing the most pronounced imbalance.}
    \label{fig:class_distributions}
\end{figure}

\begin{figure}[!t]
    \centering
    \includegraphics[width=\textwidth]{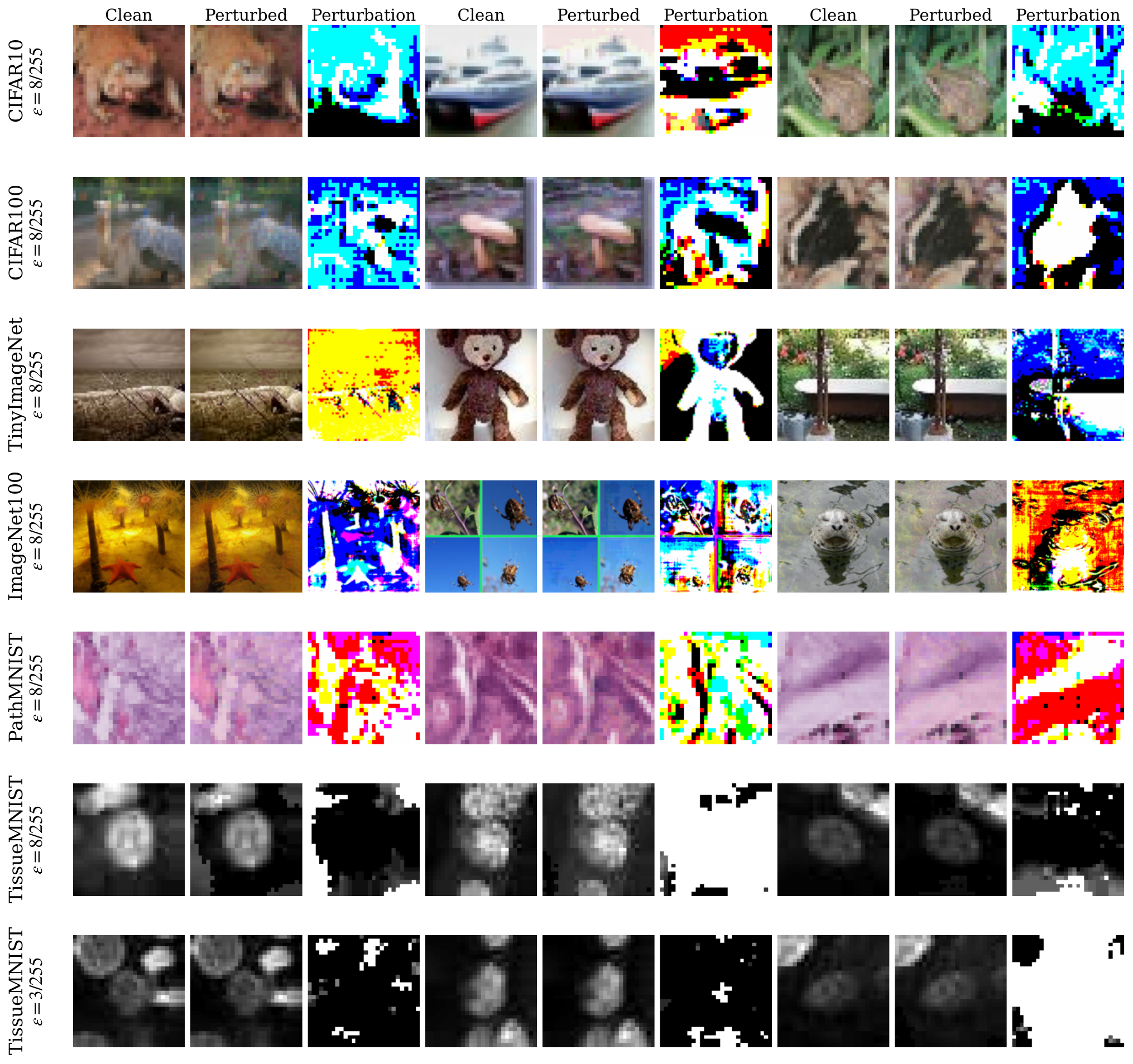}
    \caption{Dataset samples and their corresponding FGSM adversarial examples for a model adversarially trained with the SORA method. Each row corresponds to a dataset (\textsc{CIFAR-10}, \textsc{CIFAR-100}, \textsc{TinyImageNet}, \textsc{ImageNet-100}, \textsc{PathMNIST}, and \textsc{TissueMNIST}, respectively). The ``Clean'' columns show the original images, the ``Perturbed'' columns show the FGSM adversarial examples, and the ``Perturbation'' columns visualize the added perturbations, amplified for clarity.}
    \label{fig:dataset_visualization}
\end{figure}

\subsection*{Dataset Selection Criteria}
\label{subsec:dataset-selection}
\addcontentsline{toc}{subsection}{Dataset Selection Criteria} 
In line with the previous work done on FAT, we included the CIFAR-10 and CIFAR-100 datasets which are the two most commonly used datasets in the community.
Similarly, we included the TinyImageNet and ImageNet-100 datasets which are used less frequently, but they allow us to study larger datasets.
Unlike previous works, we extend our datasets to the medical domain via the \textsc{MedMNIST} suite~\citep{yang2023medmnist}.

The selection of \textsc{PathMNIST} and \textsc{TissueMNIST} for our evaluation was guided by several principled considerations. 
First and foremost, we sought datasets with comparable scale to established benchmarks like \textsc{CIFAR-10} and \textsc{CIFAR-100}, both in terms of sample size and image dimensions (\Tabref{tab:datasets}). 
This ensures that computational requirements remain manageable while maintaining relevance to real-world applications.

Medical imaging datasets were prioritized for two key reasons: 
\begin{enumerate}
    \item Robustness in medical applications carries significant practical importance, where model reliability can directly impact diagnostic outcomes, and
    \item the distribution shift between natural images (e.g. \textsc{CIFAR}, \textsc{ImageNet}) and medical images provides a rigorous test of generalization capabilities. 
\end{enumerate}
The \textsc{MedMNIST} suite~\citep{yang2023medmnist} emerged as a natural candidate due to its standardized formatting and accessibility.

Our selection process was systematic rather than exhaustive. 
We reviewed available datasets in the \textsc{MedMNIST} collection and selected \textsc{PathMNIST} and \textsc{TissueMNIST} based on their \emph{number of training samples}, \emph{test samples}, and their \emph{number of classes}. 
We explicitly did not perform extensive dataset screening or cherry-picking to favor our method; these datasets were chosen from a limited candidate pool based on the above criteria within our available time constraints. 

Another dataset which has been used by previous work is the Street View House Numbers (SVHN)~\citep{netzer2011reading}.
We excluded this dataset due to the label noise present in it.
Since SVHN also has imbalanced classes, to preserve the diversity of our experiments, we relied on \textsc{TissueMNIST} which also has imbalanced classes, but to the best of our knowledge doesn't suffer from label noise.
Therefore we preserved the diversity of our benchmarks without compromising their integrity. 

\subsection*{Selecting the Perturbation Budget}
\label{subsec:perturbation-selection}
\addcontentsline{toc}{subsection}{Selecting the Perturbation Budget} 
\citet{szegedy2013intriguing} define adversarial examples as \emph{imperceptible} non-random perturbations added to test images, capable of arbitrarily changing the network’s prediction.
In general, imperceptibly tiny perturbations of a given image do not normally change the
underlying class for most computer vision
problems~\citep{szegedy2013intriguing}.
In order to abide by this convention of \emph{imperceptibly}, accompanied by the desire to not accidentally change the true label of images by distorting them too much, we reduce the value of $\eps$ to $\sfrac{3}{255}$ for grayscale images of the \textsc{TissueMNIST} dataset.
For the remaining datasets we use an $\eps$ value of $\sfrac{8}{255}$ similar to all previous works.
Samples from each dataset alongside their corresponding adversarial perturbations can be viewed in \Figref{fig:dataset_visualization}.

\subsection{CIFAR-10}\label{subsec:cifar10}
The \textsc{CIFAR-10} dataset~\citep{krizhevsky2009learning} consists of 60,000 $32 \times 32$ color images across 10 classes, with 6,000 images per class. The dataset is divided into 50,000 training images and 10,000 test images. The test set contains exactly 1,000 randomly-selected images from each class, while the training batches contain the remaining images with some variation in class distribution per batch, though each class has exactly 5,000 training images in total.

Following the instructions of \citet{wong2020fast}, we use padding of size 4 and perform random crops, in addition to random horizontal flips on the training data.
Both the training and test data are normalized with respect to the mean and standard deviation of the training data.
For all models and architectures we use an SGD optimizer with momentum $0.9$ and weight decay $5\times 10^{-4}$.
A simple cyclic learning rate schedules the learning rate linearly from $0.01$, to a maximum learning rate of $0.2$, and back down to $0.01$.

\subsection{CIFAR-100}\label{subsec:cifar100}
The \textsc{CIFAR-10}0 dataset~\citep{krizhevsky2009learning} contains 60,000 $32 \times 32$ color images across 100 fine-grained classes, which are grouped into 20 superclasses. Each class contains 600 images, with 500 training and 100 testing images per class. Each image is annotated with both a fine label (specific class) and a coarse label (superclass).

Following the instructions of \citet{wong2020fast}, we use padding of size 4 and perform random crops, in addition to random horizontal flips on the training data.
Both the training and test data are normalized with respect to the mean and standard deviation of the training data.
For all models and architectures we use an SGD optimizer with momentum $0.9$ and weight decay $5\times 10^{-4}$.
A simple cyclic learning rate schedules the learning rate linearly from $0.01$, to a maximum learning rate of $0.2$, and back down to $0.01$.

\subsection{TinyImageNet}\label{subsec:tiny}
The \textsc{TinyImageNet} dataset~\citep{le2015tiny} is a downsampled version of \textsc{ImageNet}~\citep{deng2009imagenetold,deng2009imagenet} containing 100,000 $64 \times 64$ color images across 200 classes. Each class has 500 training images, 50 validation images, and 50 test images, providing a more computationally manageable alternative to the full ImageNet dataset while maintaining the multi-class classification challenge.

Following the instructions of \citet{wong2020fast} and \citet{lin2023eliminating}, we use padding of size 8 and perform random crops, in addition to random horizontal flips on the training data.
Both the training and test data are normalized with respect to the mean and standard deviation of the training data.
For all models and architectures we use an SGD optimizer with momentum $0.9$ and weight decay $5\times 10^{-4}$.
A simple cyclic learning rate schedules the learning rate linearly from $0.01$, to a maximum learning rate of $0.2$, and back down to $0.01$.

\subsection{ImageNet-100}\label{subsec:imagenet}
Our results on the \textsc{ImageNet} introduced by \citet{deng2009imagenetold,deng2009imagenet} are as follows.
The ImageNet dataset contains 14,197,122 annotated images according to the WordNet hierarchy. 
Since 2010 the dataset is used in the \textsc{ImageNet} Large Scale Visual Recognition Challenge (ILSVRC), a benchmark in image classification and object detection. 
The publicly released dataset contains a set of manually annotated training images. 
A set of test images is also released, with the manual annotations withheld. 
ILSVRC annotations fall into one of two categories: (1) image-level annotation of a binary label for the presence or absence of an object class in the image, e.g., “there are cars in this image” but “there are no tigers,” and (2) object-level annotation of a tight bounding box and class label around an object instance in the image, e.g., “there is a screwdriver centered at position (20,25) with width of 50 pixels and height of 30 pixels”. 
The \textsc{ImageNet} project does not own the copyright of the images, therefore only thumbnails and URLs of images are provided. 

For specifying the 100 classes used in our training we used this repository from huggingface \url{https://huggingface.co/datasets/ilee0022/ImageNet100/}.
Due to the higher image quality in the \textsc{ImageNet-100} dataset, we reduced the batch size to 32.
This reduced our memory usage, allowing us to proceed with our experiments.
We also used random crops, in addition to random horizontal flips on the training data.
Both the training and test data are normalized with respect to the mean and standard deviation of the training data.
For all models and architectures we use an SGD optimizer with momentum $0.9$ and weight decay $5\times 10^{-4}$.
A simple cyclic learning rate schedules the learning rate linearly from $0.01$, to a maximum learning rate of $0.2$, and back down to $0.01$.

\subsection{PathMNIST}\label{subsec:pathmnist}
The \textsc{PathMNIST} dataset~\citep{yang2023medmnist} is a medical image classification benchmark 
for colorectal cancer detection. 
It contains 107,180 $28 \times 28$ color histological images across 9 tissue classes. 
The dataset is split into 89,996 training images, 10,004 validation images, and 7,180 test images. 
Unlike the balanced natural image datasets, \textsc{PathMNIST} exhibits natural class imbalance reflective of real-world medical data distributions.

We perform random rotations with 10 degrees, in addition to random horizontal flips on the training data.
Both the training and test data are normalized with respect to the mean and standard deviation of the training data.
For all models and architectures we use an SGD optimizer with momentum $0.9$ and weight decay $5\times 10^{-4}$.
To further stabilize training, we use a cosine annealing learning rate scheduler that starts from the initial learning rate of $0.05$ and the minimum learning rate of $10^{-3}$.
This scheduler improves the results across all settings.

\subsection{TissueMNIST}\label{subsec:tissuemnist}
The \textsc{TissueMNIST} dataset~\citep{yang2023medmnist} is based on the broadly annotated tissue atlas of human gene expression, containing 236,386 $28 \times 28$ grayscale images of human kidney cortex cells across 8 tissue classes. 
The dataset is split into 165,466 training images, 23,640 validation images, and 47,280 test images. 
\textsc{TissueMNIST} exhibits significant class imbalance, making it particularly challenging for evaluation and representative of real-world medical imaging scenarios.

We perform random rotations with 10 degrees, in addition to random horizontal flips on the training data.
Both the training and test data are normalized with respect to the mean and standard deviation of the training data.
For all models and architectures we use an SGD optimizer with momentum $0.9$ and weight decay $5\times 10^{-4}$.
To further stabilize training, we use a cosine annealing learning rate scheduler that starts from the initial learning rate of $0.05$ and the minimum learning rate of $10^{-3}$.
This scheduler improves the results for all results.

    \clearpage
    \section{Architectures}\label{sec:architecture}
In this section, we describe the architectures employed in our experiments. We include classical convolutional backbones such as {ResNet}s~\citep{he2016deep}, {PreActResNet}s~\citep{he2016identity}, and {WideResNet}s~\citep{zagoruyko2016wide}, as well as more recent variants like {SENet}s~\citep{hu2018squeeze} and Transformer-based models such as the Vision Transformer~\citep{dosovitskiy2020image}. 
This diversity enables us to assess performance across architectures with distinct inductive biases.

\subsection*{Architecture Selection Criteria}
\label{subsec:arch-selection}
\addcontentsline{toc}{subsection}{Architecture Selection Criteria}
In line with the previous work done on FAT, we included the PreActResNet and WideResNet architectures.
These two form the basis for the majority of the work done in this field.
We also include the ResNet architecture which is also used by the literature, although its usage is less frequent.
The SENet architecture was added to increase architectural diversity whilst sharing similarities to the ResNet family.
We also include the Vision Transformer architecture to evaluate methods on more modern attention-based architectures.

\subsection{Residual Networks}\label{subsec:resnet}
Residual Networks (ResNets) introduced by \citet{he2016deep} remain a cornerstone of deep learning due to their ability to train very deep models effectively. The key innovation is the residual connection, which mitigates the vanishing gradient problem by providing identity shortcuts for direct gradient propagation. This enables deeper networks to converge more reliably, enhancing representational capacity without significant optimization challenges. In our experiments, we use {ResNet-18}, which offers a balance between computational efficiency and representational power.

\subsection{Pre-Activation Residual Networks}\label{subsec:preactresnet}
Pre-Activation Residual Networks (PreActResNets)~\citep{he2016identity} refine the original ResNet architecture by placing batch normalization and ReLU activations before the convolutional layers. 
This modification improves gradient flow and facilitates optimization, particularly in deeper architectures. 
We evaluate {PreActResNet-18}, which maintains the overall structure of standard ResNets while benefiting from more stable training dynamics.

\subsection{Wide Residual Networks}\label{subsec:wideresnet}
Wide Residual Networks (WideResNets)~\citep{zagoruyko2016wide} challenge the trend of increasing depth by instead widening residual blocks through additional channels. 
This approach enhances model capacity while reducing training time compared to deeper, narrower networks. We employ {WideResNet-28-10}, where 28 indicates the number of convolutional layers and 10 is the widening factor applied to channels in each residual block relative to a standard ResNet.

\subsection{Squeeze-and-Excitation Networks}\label{subsec:senet}
Squeeze-and-Excitation Networks (SENets)~\citep{hu2018squeeze} enhance convolutional architectures with channel-wise attention mechanisms.
By adaptively recalibrating feature responses, SENets emphasize informative channels and suppress less useful ones, improving performance across various recognition tasks. 
We use {SENet-18}, which integrates squeeze-and-excitation blocks into a standard {ResNet-18}, providing an effective attention-enhanced backbone with minimal computational overhead.

\subsection{Vision Transformers}\label{subsec:vit}
The Vision Transformer (ViT)~\citep{dosovitskiy2020image} represents a paradigm shift from convolutional architectures by directly applying a Transformer encoder to sequences of image patches. 
This design eliminates the locality bias of convolutions and instead leverages global self-attention, enabling the model to capture long-range dependencies across the entire image.  
We adopt the standard ViT architecture as introduced by \citet{dosovitskiy2020image}, which has proven effective across diverse vision benchmarks and provides a complementary inductive bias compared to convolutional networks.

Following \citet{lin2024layer}, in our experiments we employ the ViT-Small variant, which differ primarily in model size, hidden dimension, and number of attention heads. 
{ViT-Small} offers a more lightweight configuration suitable for efficiency-focused settings.
Our results are available in \Appref{sec:fat-on-vit}.
\citet{wu2022efficient,salmani2023blacksmith} explore FAT on ViTs with pre-training.

    \clearpage
    \section{Ablations}\label{sec:ablations}

\subsection{Batch Size}
\label{subsec:batch-size}
We run our SORA algorithm with different batch sizes on the \textsc{CIFAR-10} dataset.
You can see the results for our method and N-FGSM in \Tabref{tab:batchsize-ablation}.

\begin{table}[ht!]
    \fontsize{8.2}{12.0}\selectfont
    \caption{\textbf{Ablation on batch size.} The values in each cell correspond to clean and PGD-10 accuracies.}
    \label{tab:batchsize-ablation}
\begin{center}
  \begin{tabular}{c c c c c c}
    \toprule
    \textbf{Method} & \textbf{Batch Size 32} & \textbf{Batch Size 64} & \textbf{Batch Size 128} & \textbf{Batch Size 256} & \textbf{Batch Size 512}\\
    \midrule
    \multirow{2}{*}{N-FGSM} 
        & $73.40 \pm 0.08$ & $77.15 \pm 0.29$ & $79.05 \pm 0.21$ & $79.95 \pm 0.15$ & $78.69 \pm 0.21$ \\
        & $46.02 \pm 0.16$ & $48.15 \pm 0.42$ & $48.73 \pm 0.44$ & $47.84 \pm 0.37$ & $46.91 \pm 0.09$ \\
    \midrule 
    \multirow{2}{*}{SORA (Ours)} 
        & $74.74 \pm 0.36$ & $78.60 \pm 0.44$ & $79.90 \pm 0.05$ & $80.80 \pm 0.16$ & $79.86 \pm 0.17$ \\
        & $46.38 \pm 0.18$ & $48.28 \pm 0.81$ & $48.64 \pm 0.44$ & $47.85 \pm 0.16$ & $47.00 \pm 0.08$ \\
        \bottomrule
  \end{tabular}
\end{center}
\end{table}

\subsection{SORA Hyperparameter Search}
\label{subsec:hyperparameter-search}
SORA has three hyperparameters (i.e. $\beta$, $\alpha_{0}$, and $\alpha_{\max}$). 
Since SORA uses per-channel randomization, the average maximum step-size is approximately half of $\alpha_{\max}$. Thus we set $\alpha_{\max} = 2\eps$ so that the expected maximum size is around $\eps$ which is our attack budget.

$\beta$ and $\alpha_{0}$ parameters control the second-order adaptive component of SORA's step-size calculation. 
$\beta$ specifically, controls the speed at which we modify our curvature estimation.
During our experiments on \textsc{CIFAR-10/100}, SORA consistently used maximum attack step-size, $\alpha_{\max}$, indicating that these datasets were not challenging enough to lower the step-size. 
Thus for calibrating these two parameters we used \textsc{PathMNIST}.

\begin{table}[H]
    \centering
    \caption{SORA: Hyperparameter sweep on CIFAR-10 (clean accuracy)}
    \label{tab:sora_sweep_cifar10_clean}
    \resizebox{\textwidth}{!}{
        \begin{tabular}{cccccccccc}
            \toprule
            \multicolumn{1}{c}{} & \multicolumn{3}{c}{$\alpha_{\max} = 1.5 \eps$} & \multicolumn{3}{c}{$\alpha_{\max} = 2.0 \eps$} & \multicolumn{3}{c}{$\alpha_{\max} = 2.5 \eps$} \\
            \cmidrule(lr){2-4} \cmidrule(lr){5-7} \cmidrule(lr){8-10}
            $\beta$ & $\alpha_0 = 0.01$ & $\alpha_0 = 0.02$ & $\alpha_0 = 0.05$ & $\alpha_0 = 0.01$ & $\alpha_0 = 0.02$ & $\alpha_0 = 0.05$ & $\alpha_0 = 0.01$ & $\alpha_0 = 0.02$ & $\alpha_0 = 0.05$ \\
            \midrule
            0.02 & 83.54 & 83.91 & 83.75 & 80.20 & 79.78 & 80.22 & 75.76 & 75.36 & 75.44 \\
            0.05 & 83.92 & 83.97 & 83.73 & 79.84 & 79.84 & 79.85 & 76.00 & 75.87 & 75.90 \\
            0.10 & 84.13 & 83.68 & 83.83 & 80.17 & 79.79 & 80.08 & 75.99 & 75.81 & 75.91 \\
            \bottomrule
        \end{tabular}
    }
\end{table}

\begin{table}[H]
    \centering
    \caption{SORA: Hyperparameter sweep on CIFAR-10 (PGD-10 accuracy)}
    \label{tab:sora_sweep_cifar10}
    \resizebox{\textwidth}{!}{
        \begin{tabular}{cccccccccc}
            \toprule
            \multicolumn{1}{c}{} & \multicolumn{3}{c}{$\alpha_{\max} = 1.5 \eps$} & \multicolumn{3}{c}{$\alpha_{\max} = 2.0 \eps$} & \multicolumn{3}{c}{$\alpha_{\max} = 2.5 \eps$} \\
            \cmidrule(lr){2-4} \cmidrule(lr){5-7} \cmidrule(lr){8-10}
            $\beta$ & $\alpha_0 = 0.01$ & $\alpha_0 = 0.02$ & $\alpha_0 = 0.05$ & $\alpha_0 = 0.01$ & $\alpha_0 = 0.02$ & $\alpha_0 = 0.05$ & $\alpha_0 = 0.01$ & $\alpha_0 = 0.02$ & $\alpha_0 = 0.05$ \\
            \midrule
            0.02 & 45.60 & 44.87 & 45.54 & 48.48 & 48.30 & 48.36 & 49.49 & 49.31 & 49.15 \\
            0.05 & 45.17 & 45.70 & 45.15 & 48.27 & 48.70 & 48.51 & 49.30 & 49.36 & 49.58 \\
            0.10 & 45.18 & 45.50 & 45.50 & 48.22 & 48.85 & 48.62 & 49.51 & 49.59 & 49.55 \\
            \bottomrule
        \end{tabular}
    }
\end{table}

\begin{table}[H]
    \centering
    \caption{SORA: Hyperparameter sweep on \textsc{CIFAR-100} (clean accuracy)}
    \label{tab:sora_sweep_cifar100_clean}
    \resizebox{\textwidth}{!}{
        \begin{tabular}{cccccccccc}
            \toprule
            \multicolumn{1}{c}{} & \multicolumn{3}{c}{$\alpha_{\max} = 1.5 \eps$} & \multicolumn{3}{c}{$\alpha_{\max} = 2.0 \eps$} & \multicolumn{3}{c}{$\alpha_{\max} = 2.5 \eps$} \\
            \cmidrule(lr){2-4} \cmidrule(lr){5-7} \cmidrule(lr){8-10}
            $\beta$ & $\alpha_0 = 0.01$ & $\alpha_0 = 0.02$ & $\alpha_0 = 0.05$ & $\alpha_0 = 0.01$ & $\alpha_0 = 0.02$ & $\alpha_0 = 0.05$ & $\alpha_0 = 0.01$ & $\alpha_0 = 0.02$ & $\alpha_0 = 0.05$ \\
            \midrule
            0.02 & 58.54 & 58.53 & 57.85 & 54.26 & 54.46 & 53.80 & 51.69 & 50.82 & 50.85 \\
            0.05 & 58.03 & 58.41 & 58.06 & 53.93 & 54.06 & 53.98 & 52.56 & 50.98 & 50.33 \\
            0.10 & 58.47 & 57.84 & 58.21 & 54.10 & 53.94 & 53.94 & 52.65 & 51.26 & 50.87 \\
            \bottomrule
        \end{tabular}
    }
\end{table}

\begin{table}[H]
    \centering
    \caption{SORA: Hyperparameter sweep on \textsc{CIFAR-100} (PGD-10 accuracy)}
    \label{tab:sora_sweep_cifar100}
    \resizebox{\textwidth}{!}{
        \begin{tabular}{cccccccccc}
            \toprule
            \multicolumn{1}{c}{} & \multicolumn{3}{c}{$\alpha_{\max} = 1.5 \eps$} & \multicolumn{3}{c}{$\alpha_{\max} = 2.0 \eps$} & \multicolumn{3}{c}{$\alpha_{\max} = 2.5 \eps$} \\
            \cmidrule(lr){2-4} \cmidrule(lr){5-7} \cmidrule(lr){8-10}
            $\beta$ & $\alpha_0 = 0.01$ & $\alpha_0 = 0.02$ & $\alpha_0 = 0.05$ & $\alpha_0 = 0.01$ & $\alpha_0 = 0.02$ & $\alpha_0 = 0.05$ & $\alpha_0 = 0.01$ & $\alpha_0 = 0.02$ & $\alpha_0 = 0.05$ \\
            \midrule
            0.02 & 24.58 & 24.35 & 24.33 & 26.34 & 26.21 & 26.26 & 26.89 & 27.05 & 27.40 \\
            0.05 & 24.53 & 24.63 & 24.50 & 26.51 & 26.29 & 26.20 & 27.14 & 27.05 & 27.47 \\
            0.10 & 24.48 & 24.81 & 24.30 & 26.36 & 26.41 & 26.14 & 26.76 & 27.00 & 27.38 \\
            \bottomrule
        \end{tabular}
    }
\end{table}

\begin{table}[H]
    \centering
    \caption{SORA: Hyperparameter sweep on \textsc{PathMNIST} (clean accuracy)}
    \label{tab:sora_sweep_pathmnist-clean}
    \resizebox{\textwidth}{!}{
        \begin{tabular}{cccccccccc}
            \toprule
            \multicolumn{1}{c}{} & \multicolumn{3}{c}{$\alpha_{\max} = 1.5 \eps$} & \multicolumn{3}{c}{$\alpha_{\max} = 2.0 \eps$} & \multicolumn{3}{c}{$\alpha_{\max} = 2.5 \eps$} \\
            \cmidrule(lr){2-4} \cmidrule(lr){5-7} \cmidrule(lr){8-10}
            $\beta$ & $\alpha_0 = 0.01$ & $\alpha_0 = 0.02$ & $\alpha_0 = 0.05$ & $\alpha_0 = 0.01$ & $\alpha_0 = 0.02$ & $\alpha_0 = 0.05$ & $\alpha_0 = 0.01$ & $\alpha_0 = 0.02$ & $\alpha_0 = 0.05$ \\
            \midrule
            0.02 & 84.99 & 85.68 & 86.70 & 84.68 & 87.26 & 87.19 & 85.04 & 85.50 & 86.04 \\
            0.05 & 85.15 & 85.58 & 84.83 & 85.06 & 84.69 & 90.42 & 85.03 & 84.87 & 75.07 \\
            0.10 & 85.33 & 85.36 & 84.33 & 86.48 & 84.94 & 89.42 & 86.16 & 86.48 & 84.35 \\
            \bottomrule
        \end{tabular}
    }
\end{table}

\begin{table}[H]
    \centering
    \caption{SORA: Hyperparameter sweep on \textsc{PathMNIST} (PGD-10 accuracy)}
    \label{tab:sora_sweep_pathmnist-pgd10}
    \resizebox{\textwidth}{!}{
        \begin{tabular}{cccccccccc}
            \toprule
            \multicolumn{1}{c}{} & \multicolumn{3}{c}{$\alpha_{\max} = 1.5 \eps$} & \multicolumn{3}{c}{$\alpha_{\max} = 2.0 \eps$} & \multicolumn{3}{c}{$\alpha_{\max} = 2.5 \eps$} \\
            \cmidrule(lr){2-4} \cmidrule(lr){5-7} \cmidrule(lr){8-10}
            $\beta$ & $\alpha_0 = 0.01$ & $\alpha_0 = 0.02$ & $\alpha_0 = 0.05$ & $\alpha_0 = 0.01$ & $\alpha_0 = 0.02$ & $\alpha_0 = 0.05$ & $\alpha_0 = 0.01$ & $\alpha_0 = 0.02$ & $\alpha_0 = 0.05$ \\
            \midrule
            0.02 & 44.21 & 45.06 & 40.38 & 42.98 & 35.84 & 23.72 & 44.65 & 39.04 & 36.49 \\
            0.05 & 42.98 & 43.59 & 46.85 & 44.01 & 45.56 & 19.90 & 44.00 & 44.50 & 25.57 \\
            0.10 & 44.22 & 43.04 & 45.22 & 41.87 & 44.47 & 21.78 & 37.72 & 43.48 & 45.35 \\
            \bottomrule
        \end{tabular}
    }
\end{table}

Since $\alpha_{\max}$ acts as an upper bound for $\alpha^*$, setting $\alpha_{\max} \approx 2 \eps$ will yield on average step-sizes equal to the perturbation budget $\eps$ used by other baselines when the PertAlign metric is high enough.
While we use $\alpha_{\max}$ to clip step-sizes, when the PertAlign metric is low enough to trigger the adaptive step-size mechanism, a scaling factor $\alpha_0$ is also utilized.
Tables~\ref{tab:sora_sweep_cifar10_clean},~\ref{tab:sora_sweep_cifar10},~\ref{tab:sora_sweep_cifar100_clean}, and~\ref{tab:sora_sweep_cifar100} therefore exhibit near identical performance as we change the value of $\alpha_0$ for \textsc{CIFAR-10/100} datasets, but for the challenging \textsc{PathMNIST} we can clearly see the effects of using different $\alpha_0$ values, especially for large $\alpha_{\max}$.

\subsection{Different Epsilons at Training}
\label{subsec:different-epsilons-train}
We trained our SORA algorithm with different values of $\epsilon$ during training on the \textsc{CIFAR-10} dataset.
The results for this experiment are reported in \Tabref{tab:training-epsilon-ablation}.
Since SORA uses adaptive step-sizes, for smaller perturbation budgets it can fully exploit the allotted space.
For larger values of epsilon it manages to detect the most effective step-size to avoid CO and improve robustness.

\begin{table}[ht!]
  \caption{\textbf{Ablation on training with different epsilons.} The values in each cell correspond to clean and PGD-10 accuracies.}
  \label{tab:training-epsilon-ablation}
\begin{center}
  \begin{tabular}{c l cl cl cl c}
    \toprule
    \textbf{Method} & & \textbf{$\eps = \sfrac{4}{255}$} & & \textbf{$\eps = \sfrac{8}{255}$} & & \textbf{$\eps = \sfrac{12}{255}$} & & \textbf{$\eps = \sfrac{16}{255}$}\\
    \midrule
    \multirow{2}{*}{ATAS} 
        && $90.63 \pm 0.26$ && $87.25 \pm 0.13$ && $86.89 \pm 1.83$ && $83.89 \pm 1.45$ \\
        && $61.53 \pm 0.17$ && $41.91 \pm 0.38$ && $12.08 \pm 9.60$ && $8.86 \pm 6.29$ \\
    \midrule 
    \multirow{2}{*}{MultiGrad} 
        && $87.56 \pm 0.07$ && $80.62 \pm 0.22$ && $73.50 \pm 0.36$ && $74.87 \pm 3.70$ \\
        && $66.49 \pm 0.22$ && $43.33 \pm 0.25$ && $36.65 \pm 0.63$ && $0.00 \pm 0.00$ \\
    \midrule 
    \multirow{2}{*}{N-FGSM} 
        && $87.40 \pm 0.18$ && $79.05 \pm 0.21$ && $69.36 \pm 0.35$ && $61.20 \pm 0.57$  \\
        && $66.44 \pm 0.20$ && $48.73 \pm 0.44$ && $37.54 \pm 0.63$ && $29.23 \pm 0.59$  \\
    \midrule 
    \multirow{2}{*}{SORA (Ours)} 
        && $87.85 \pm 0.04$ && $79.90 \pm 0.05$ && $72.18 \pm 0.46$ && $67.66 \pm 0.21$ \\
        && $66.86 \pm 0.30$ && $48.64 \pm 0.44$ && $37.32 \pm 0.42$ && $27.12 \pm 0.36$ \\
    \bottomrule
  \end{tabular}
\end{center}
\end{table}

\subsection{Different Epsilons at Test Time}
\label{subsec:different-epsilons-test}
We report the accuracy of different methods against different values of $\epsilon$ for FGSM and PGD attacks in \Tabref{tab:eps-results}.
Since FGSM AT suffers from CO it dominates the other methdos in terms of FGSM accuracy but is not actually robust as indicated by its PGD accuracy.
Other methods also struggle to match the performance of SORA across all values of $\eps$.
Methods like ELLE and GradAlign perform better than our SORA for smaller $\eps$ values but their performance deteriorates rapidly when we move to larger values.
SORA retains its accuracy and adapts better to attacks with different perturbation budgets.

\begin{table}[ht]
  \caption{Training results for PreActResNet-18 on the \textsc{CIFAR-10}.}
  \label{tab:eps-results}
  \begin{center}
\begin{tabular}{c cc cc cc cc cc}
    \toprule
    \multirow{2}{*}{{Method}}
        & \multicolumn{2}{c}{\(\epsilon = \sfrac{3}{255}\)}
        & \multicolumn{2}{c}{\(\epsilon = \sfrac{5}{255}\)}
        & \multicolumn{2}{c}{\(\epsilon = \sfrac{9}{255}\)}
        & \multicolumn{2}{c}{\(\epsilon = \sfrac{13}{255}\)}
        & \multicolumn{2}{c}{\(\epsilon = \sfrac{17}{255}\)} \\
     & FGSM & PGD & FGSM & PGD & FGSM & PGD & FGSM & PGD & FGSM & PGD \\
    \midrule 
FGSM & {38.17} & {0.19} & {{91.33}} & {0.04} & {{58.07}} & {0.04} & {{49.33}} & {0.07} & {{41.18}} & {0.04} \\
FGSM-RS & {{72.47}} & {{71.76}} & {64.77} & {{62.72}} & {49.29} & {41.89} & {38.17} & {24.40} & {30.17} & {10.64} \\
GradAlign & {72.02} & {71.61} & {63.84} & {61.64} & {49.74} & {42.04} & {37.91} & {23.44} & {29.65} & {10.49} \\
ZeroGrad & {71.35} & {70.72} & {63.88} & {{61.98}} & {50.37} & {43.75} & {{40.40}} & {25.82} & {31.73} & {12.50} \\
MultiGrad & {70.50} & {70.05} & {63.24} & {61.53} & {50.63} & {43.64} & {39.96} & {26.56} & {31.36} & {12.72} \\
N-FGSM & {69.31} & {69.08} & {62.83} & {61.35} & {50.26} & {44.31} & {39.73} & {{27.94}} & {31.36} & {{14.21}} \\
ATAS & {{75.37}} & {{74.26}} & {{65.40}} & {60.83} & {47.92} & {36.61} & {35.16} & {16.59} & {26.19} & {6.21} \\
AAER & {69.49} & {69.35} & {63.02} & {61.42} & {50.60} & {{44.64}} & {40.10} & {{28.31}} & {31.70} & {14.06} \\
ELLE & {72.14} & {71.35} & {64.69} & {61.90} & {48.51} & {39.69} & {36.05} & {20.09} & {26.08} & {7.96} \\
{SORA (Ours)} & 69.41 & 68.97 & 63.95 & 62.72 & {51.78} & {43.97} & 40.40 & 28.45 & {32.92} & {14.17} \\
    \midrule
    PGD-10 & {{70.87}} & {{70.50}} & {{64.14}} & {{63.10}} & {{50.86}} & {{45.94}} & {{40.66}} & {30.32} & {31.47} & {16.96} \\
TRADES & {69.46} & {69.05} & {62.50} & {61.16} & {50.60} & {45.87} & {40.36} & {{32.59}} & {{32.51}} & {{20.05}} \\
    \bottomrule
\end{tabular}

  \end{center}
\end{table}

\subsection{Epsilon Overfitting}
\label{subsec:eo-ablation}
We demonstrate the occurrence of Epsilon Overfitting (EO) under varying training perturbation magnitudes and across different datasets.

\begin{figure}[ht]
    \centering
    \includegraphics[width=\textwidth]{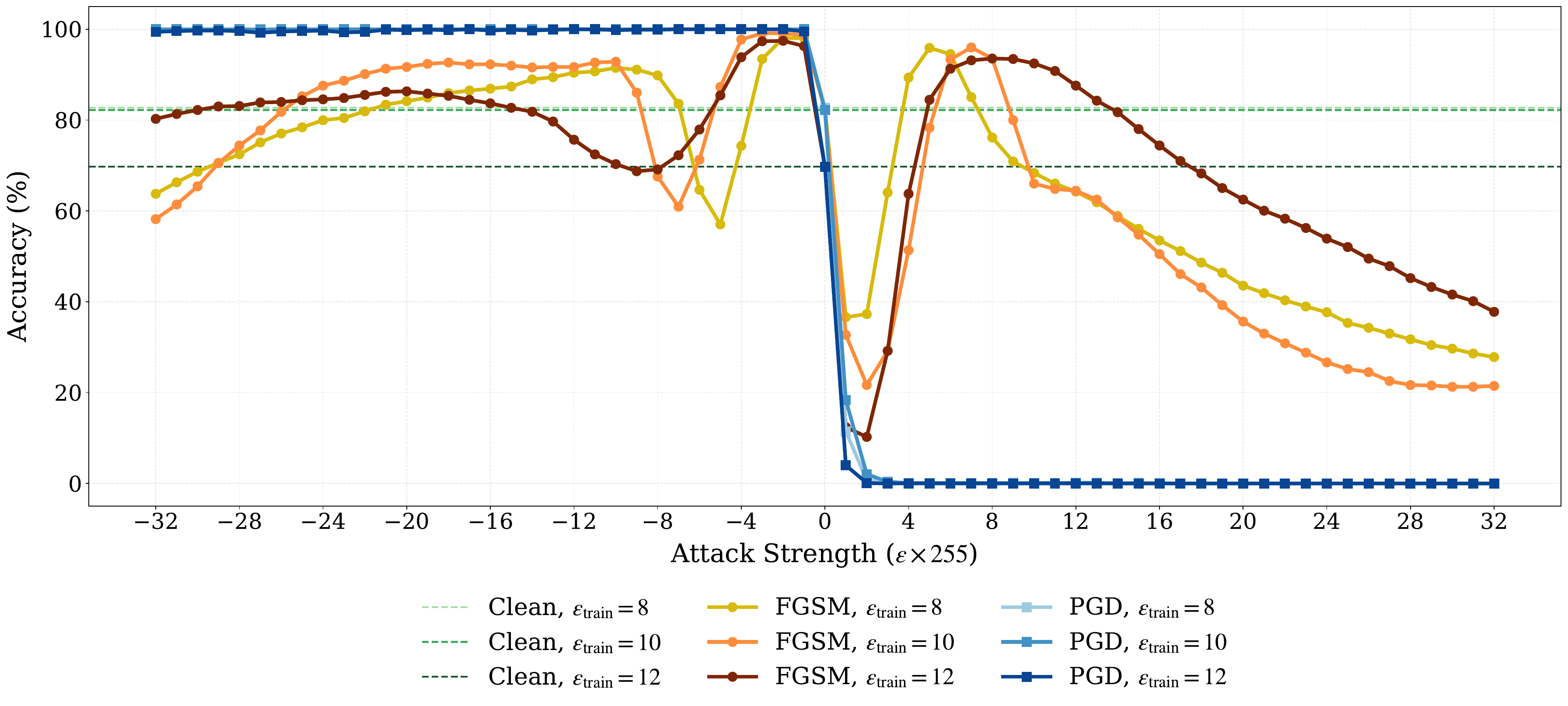}
    \caption{
        Overlay of FGSM accuracies showing the effect of different training $\epsilon$ values on the EO peak location and sharpness.
    }
    \label{fig:acc_vs_eps_overlay}
\end{figure}

Varying the training value of $\epsilon$ influences the peak of FGSM accuracy following CO \Figref{fig:acc_vs_eps_overlay}.  
Larger $\epsilon$ values shift these peaks toward higher perturbation magnitudes, whereas smaller values produce sharper peaks.  

\begin{figure}[t]
    \centering
    \subfigure[Epsilon Overfitting on \textsc{CIFAR-100}]{
        \label{fig:acc_vs_eps_cifar100}
        \includegraphics[width=0.48\textwidth]{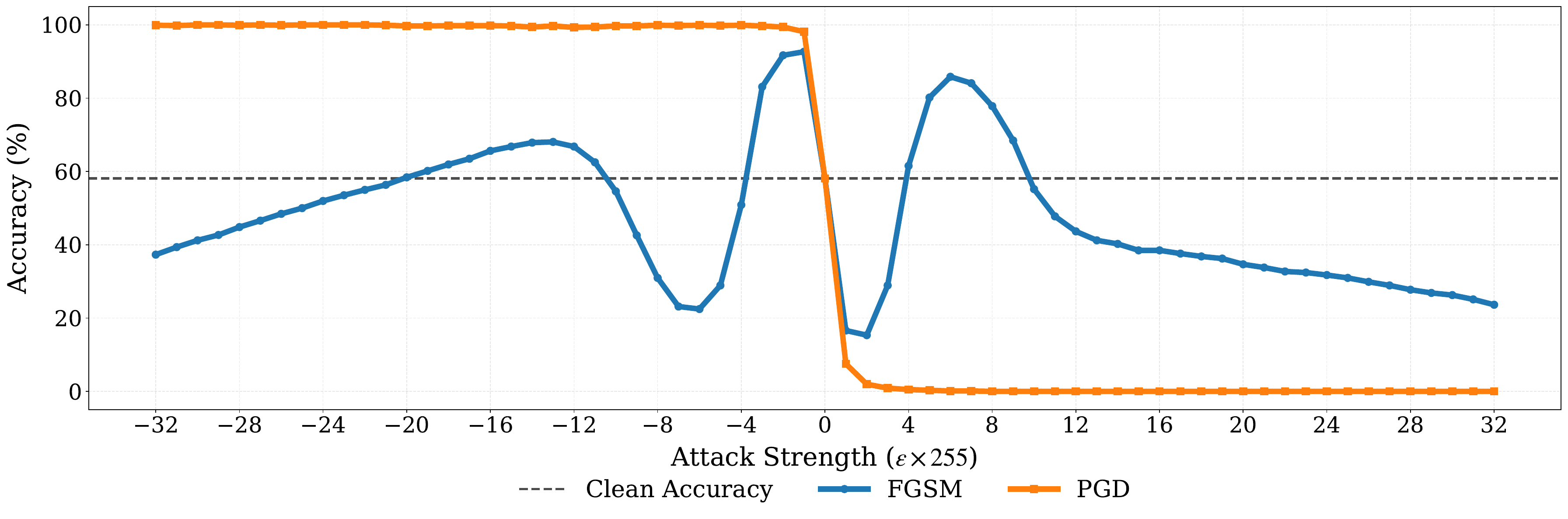}
    } 
    \subfigure[Epsilon Overfitting on \textsc{TinyImageNet}]{
        \label{fig:acc_vs_eps_tinyimagenet}
        \includegraphics[width=0.48\textwidth]{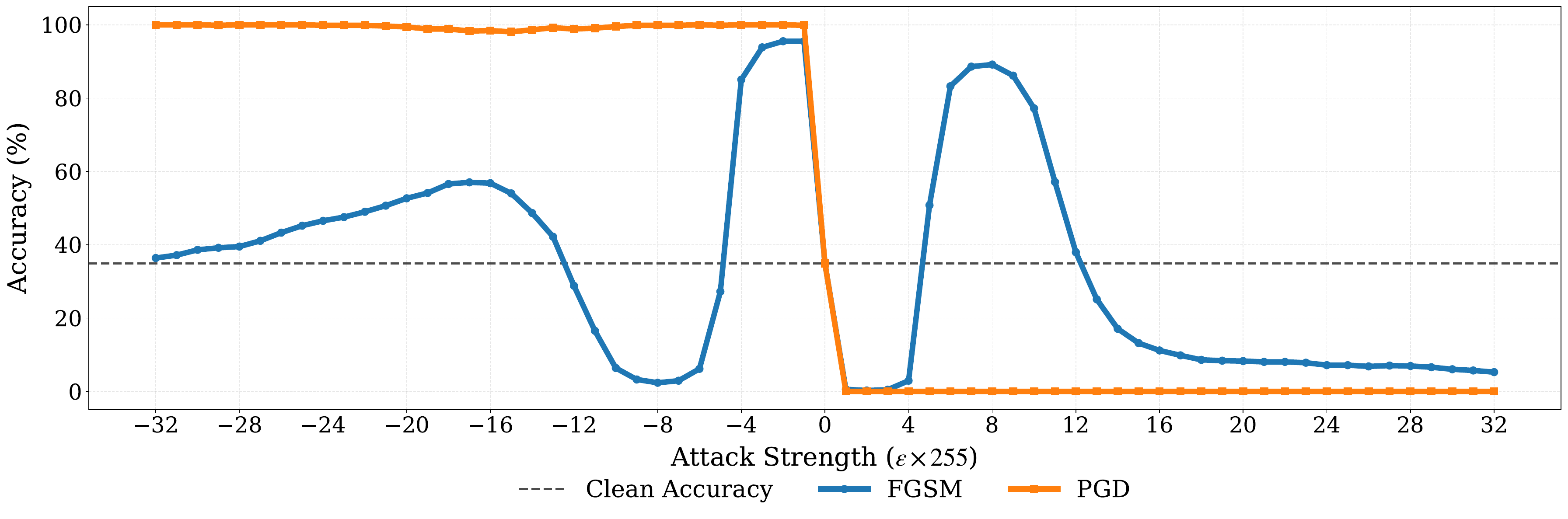}
    }
    \subfigure[Epsilon Overfitting on \textsc{ImageNet100}]{
        \label{fig:acc_vs_eps_imagenet100}
        \includegraphics[width=0.48\textwidth]{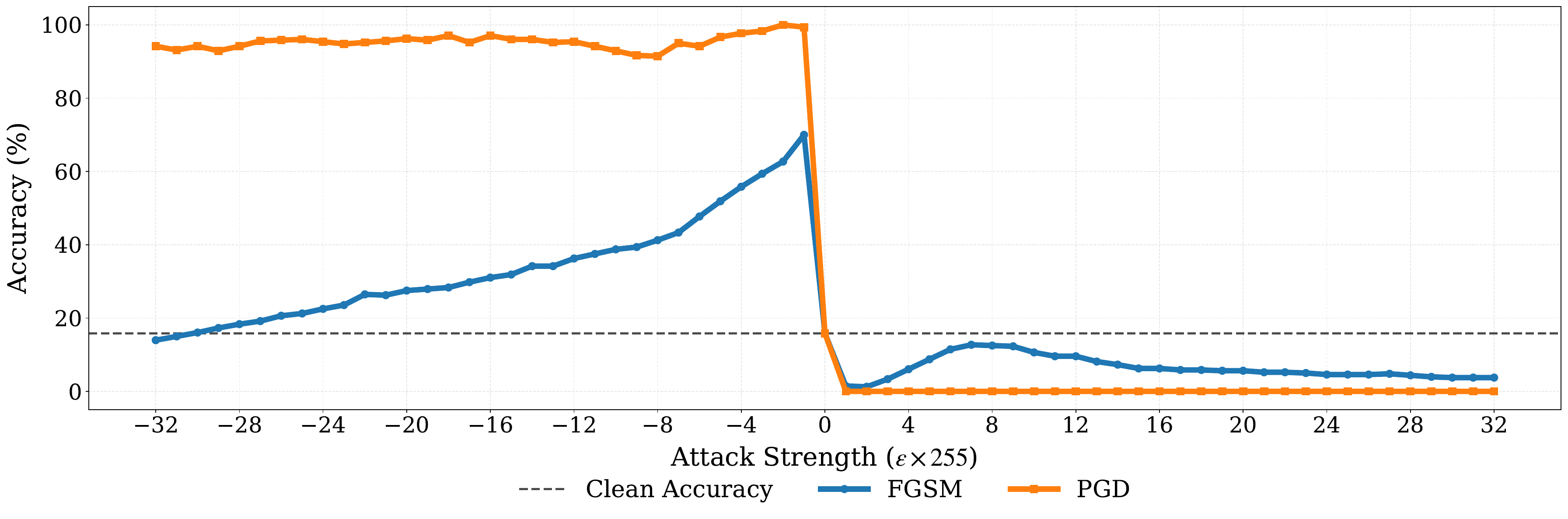}
    } 
    \subfigure[Epsilon Overfitting on \textsc{PathMNIST}]{
        \label{fig:acc_vs_eps_pathmnist}
        \includegraphics[width=0.48\textwidth]{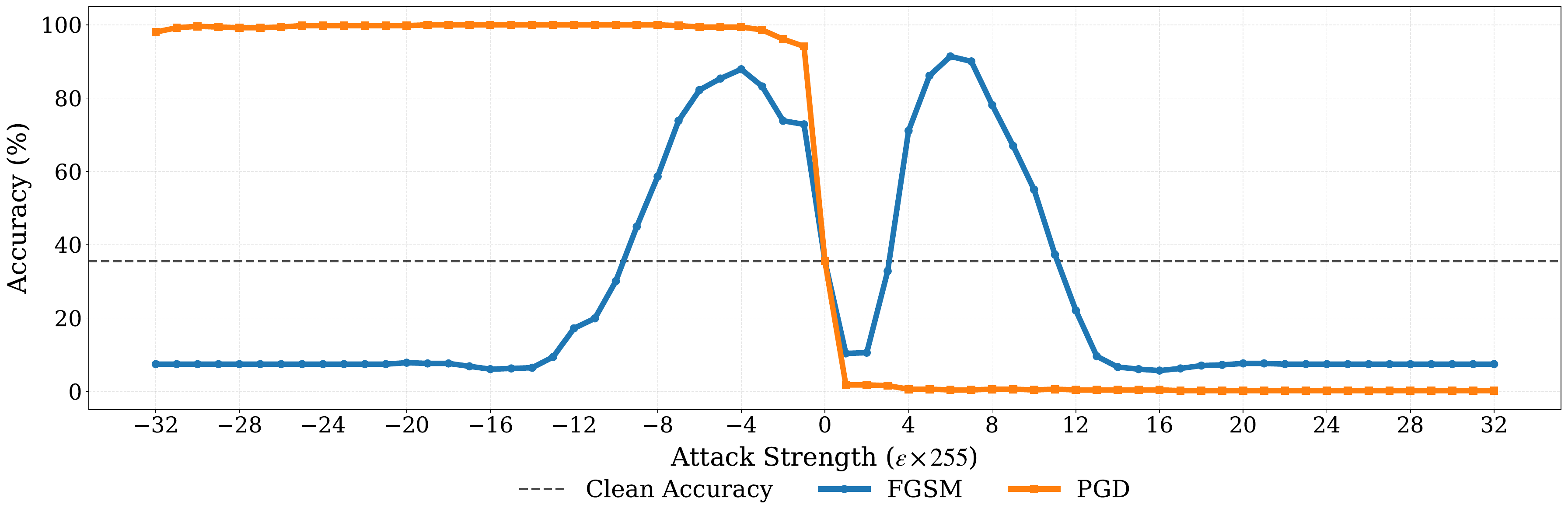}
    }
    
    \subfigure[Epsilon Overfitting on \textsc{TissueMNIST}]{
        \label{fig:acc_vs_eps_tissuemnist}
        \includegraphics[width=0.48\textwidth]{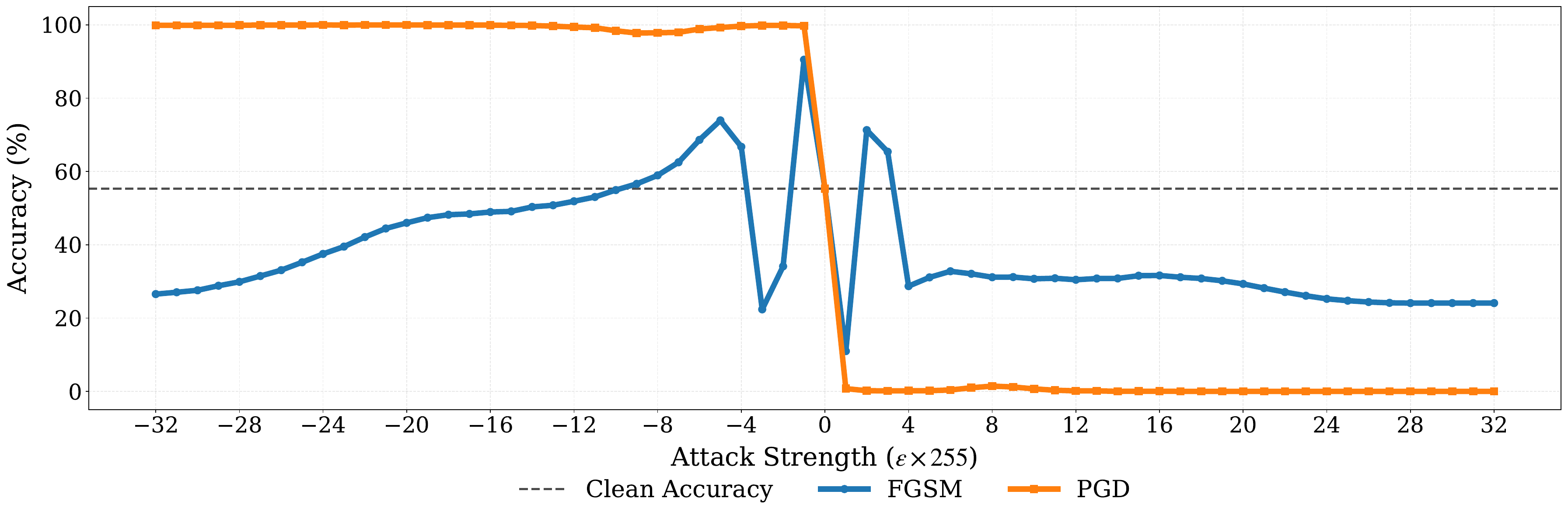}
    } 
    
    \caption{
        Examples of EO occurrence across datasets with models exhibiting CO.
    }
    \label{fig:EO_occurrence}
\end{figure}

EO is also evident across multiple datasets, as shown in \Figref{fig:EO_occurrence}.

\subsection{Monitoring $\alpha^*$ in Training}
\label{subsec:monitoring}

Similar to \Figref{fig:step-sizes}, we monitor the theoretically optimal step-size $\alpha^*$ from \Lemref{lem:optimal_step_sign} across different datasets and architectures, as shown in \Figref{fig:sora_step_sizes_datasets}.

In each setting, SORA adaptively adjusts its step-size distribution using $\alpha^*$ to generate stronger adversarial examples while avoiding EO and CO. For datasets such as \textsc{CIFAR-10} and \textsc{CIFAR-100}, SORA can select the maximum allowable step-size more consistently and confidently without the occurrence of CO (\Figref{fig:sora_stepsize_cifar10_wideresnet28}, \Figref{fig:sora_stepsize_cifar100_resnet18}). In contrast, for datasets such as \textsc{ImageNet-100} and \textsc{PathMNIST}, it must readjust the step-size more frequently and rapidly (\Figref{fig:sora_stepsize_imagenet100_preactnet18}, \Figref{fig:sora_stepsize_pathmnist_senet18}). The behavior of $\alpha^*$ is also influenced by the choice of architecture, even when the dataset is fixed, as illustrated by the difference between \Figref{fig:sora_stepsize_tinyimagenet_wideresnet28} and \Figref{fig:sora_stepsize_tinyimagenet_resnet18}.

\begin{figure}[ht]
    \centering
    \subfigure[\textsc{CIFAR-100} ResNet-18]{\label{fig:sora_stepsize_cifar100_resnet18}\includegraphics[width=0.48\textwidth]{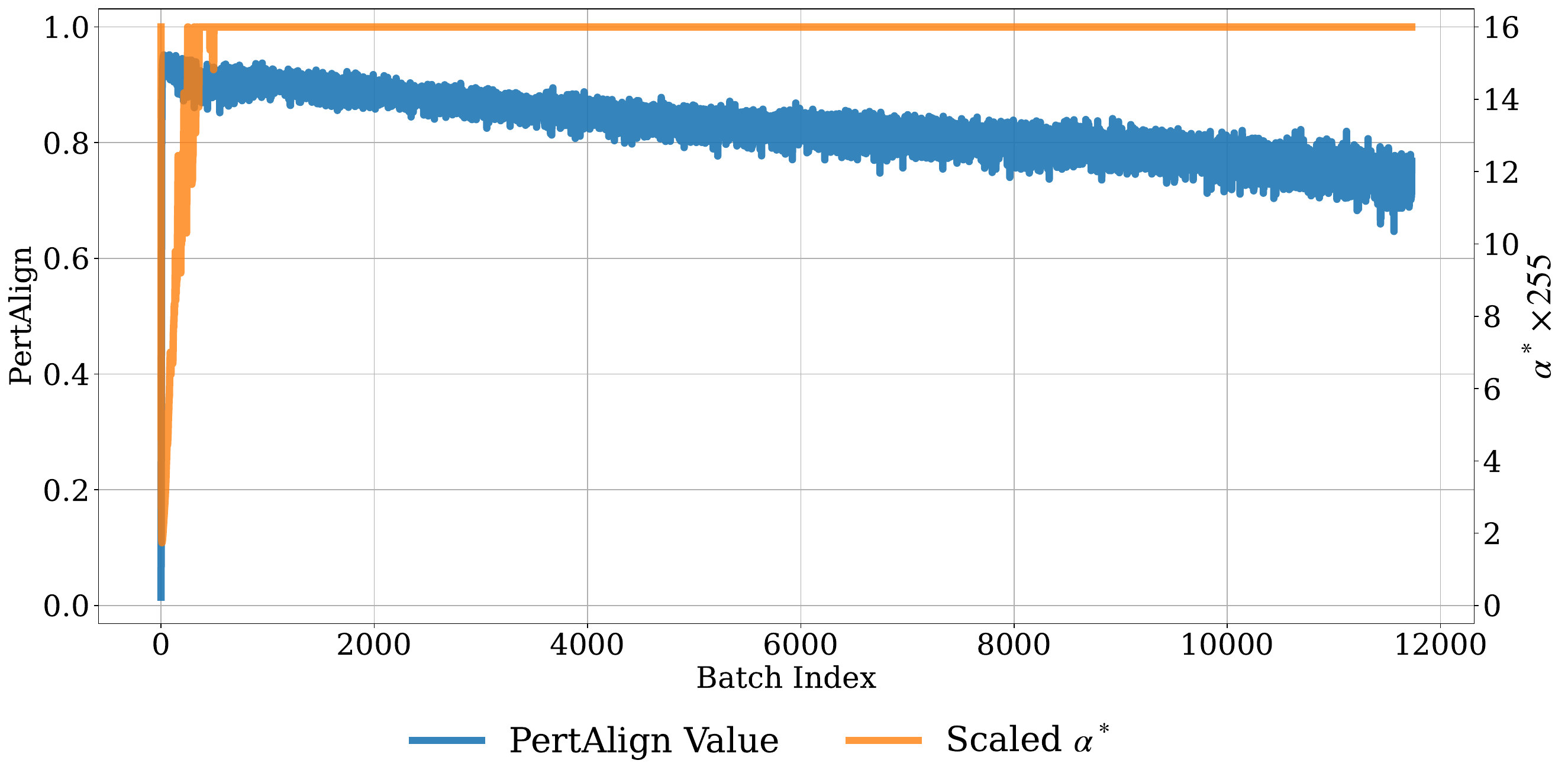}}
    \subfigure[\textsc{TinyImageNet} WideResNet-28]{\label{fig:sora_stepsize_tinyimagenet_wideresnet28}\includegraphics[width=0.48\textwidth]{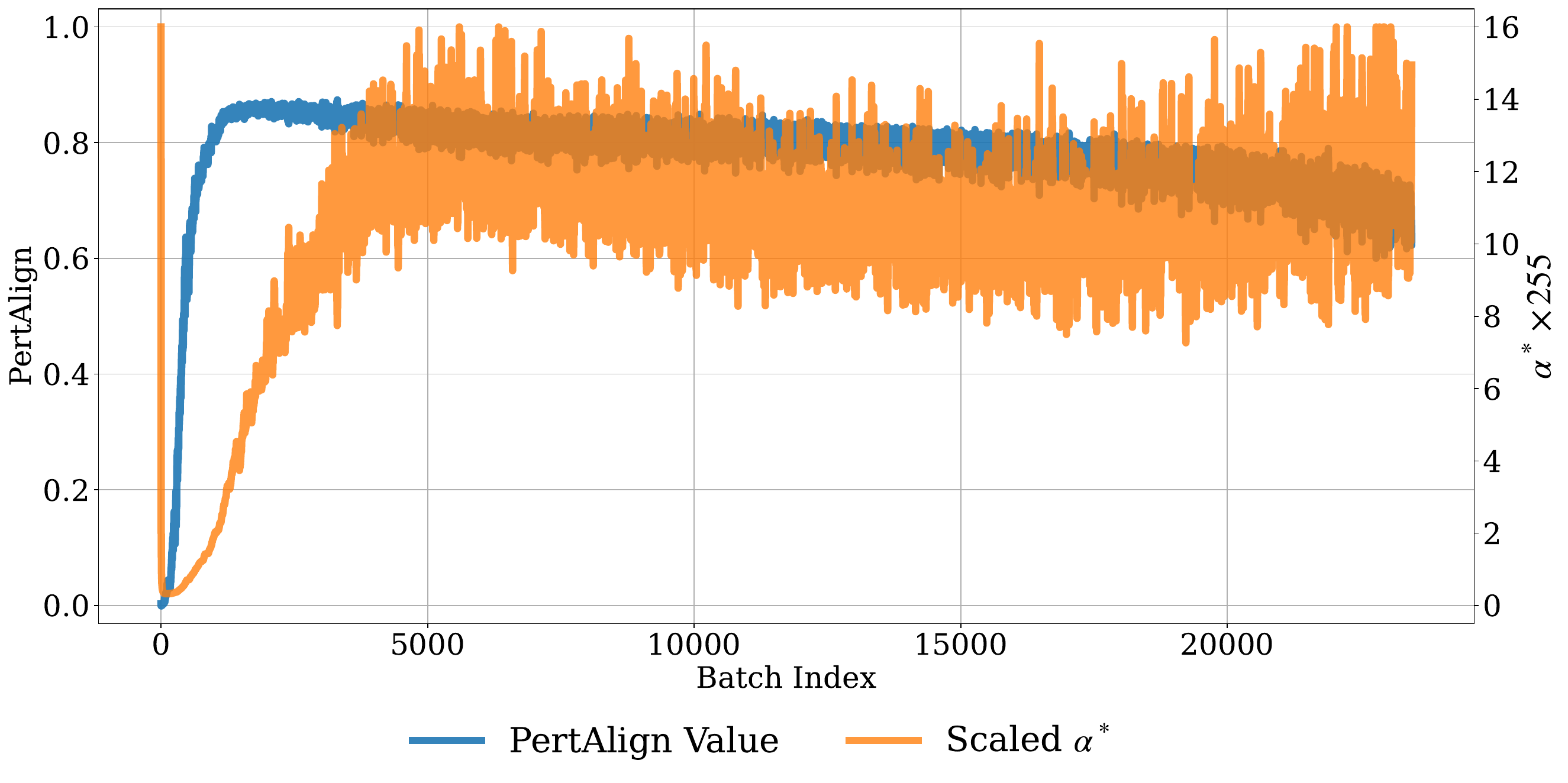}}
    
    \par\vspace{0.5em}
    
    \subfigure[\textsc{ImageNet-100} PreActResNet-18]{\label{fig:sora_stepsize_imagenet100_preactnet18}\includegraphics[width=0.48\textwidth]{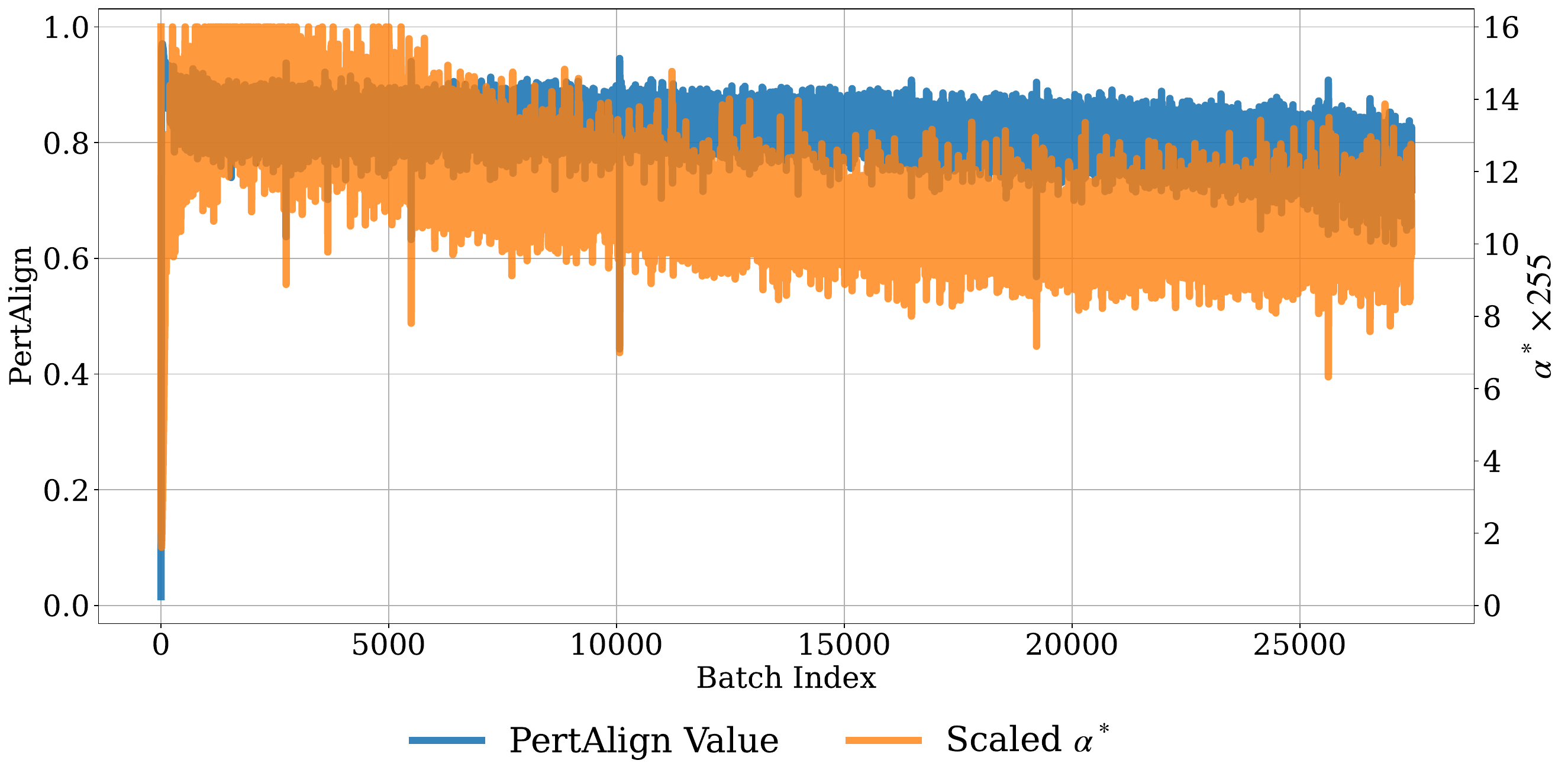}}
    \subfigure[\textsc{TissueMNIST} SENet-18]{\label{fig:sora_stepsize_tissuemnist_senet18}\includegraphics[width=0.48\textwidth]{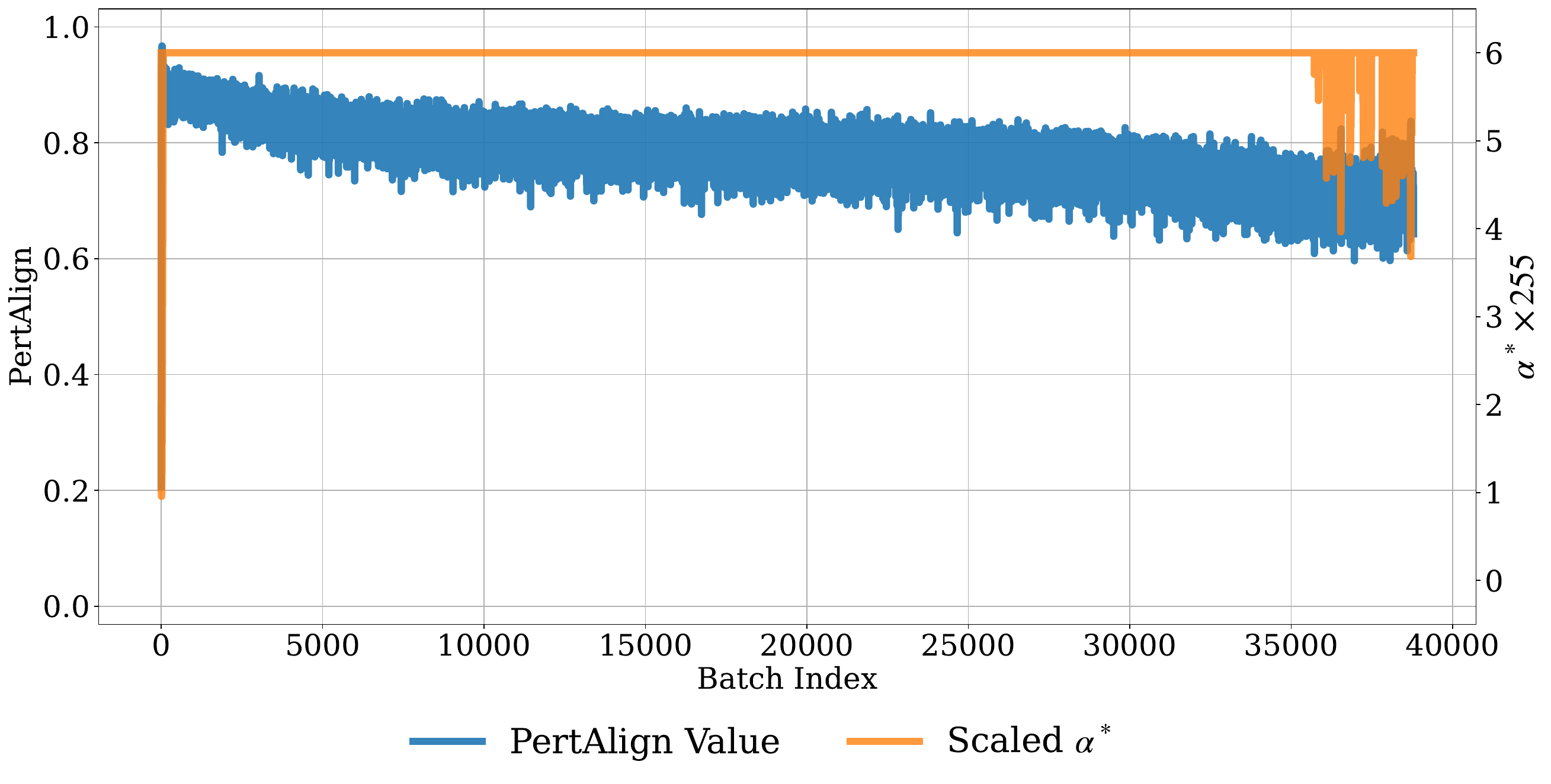}}
    
    \par\vspace{0.5em}
    
    \subfigure[\textsc{TinyImageNet} ResNet-18]{\label{fig:sora_stepsize_tinyimagenet_resnet18}\includegraphics[width=0.48\textwidth]{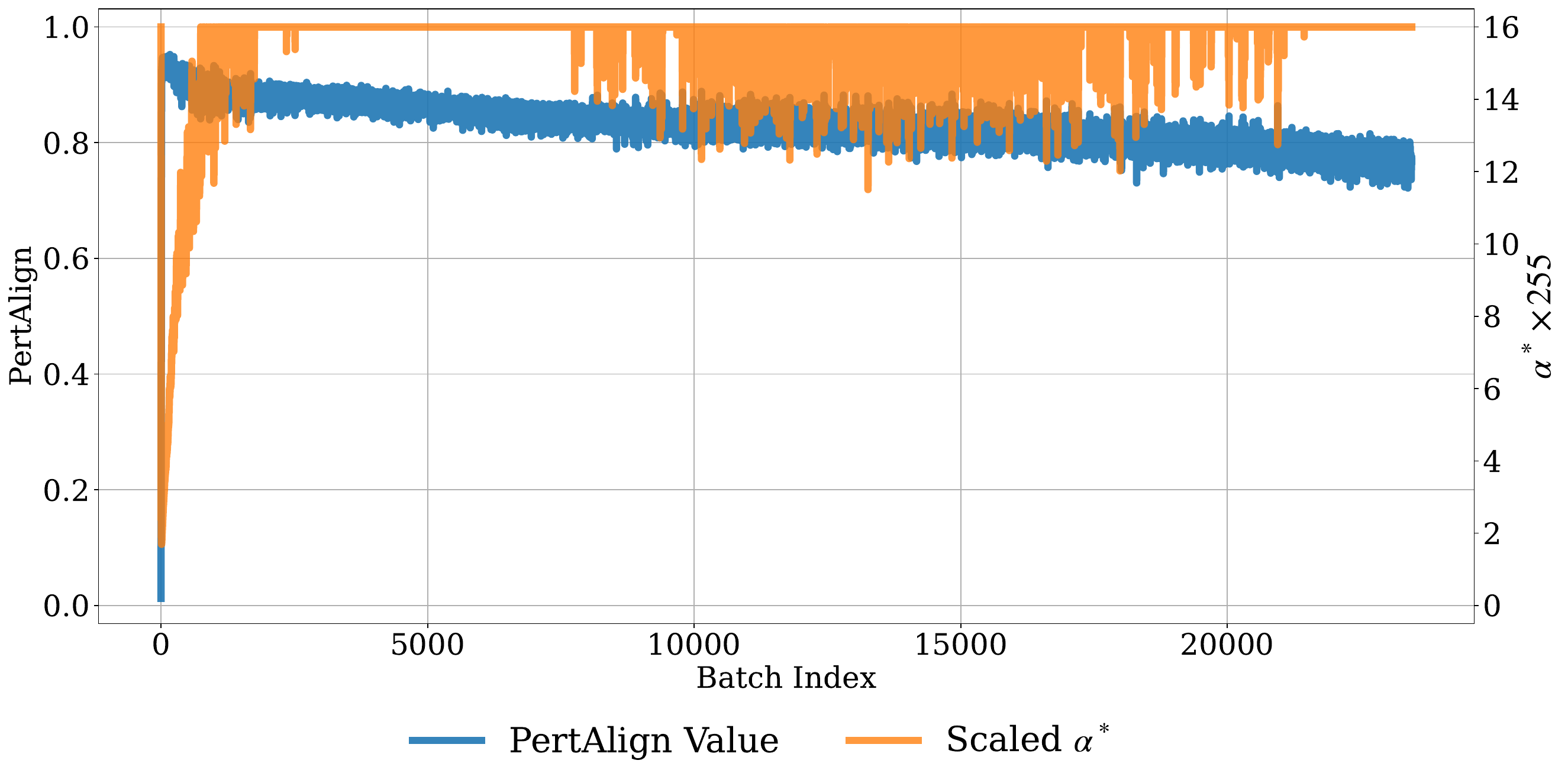}}
    \subfigure[\textsc{TissueMNIST} PreActResNet-18]{\label{fig:sora_stepsize_tissuemnist_preactnet18}\includegraphics[width=0.48\textwidth]{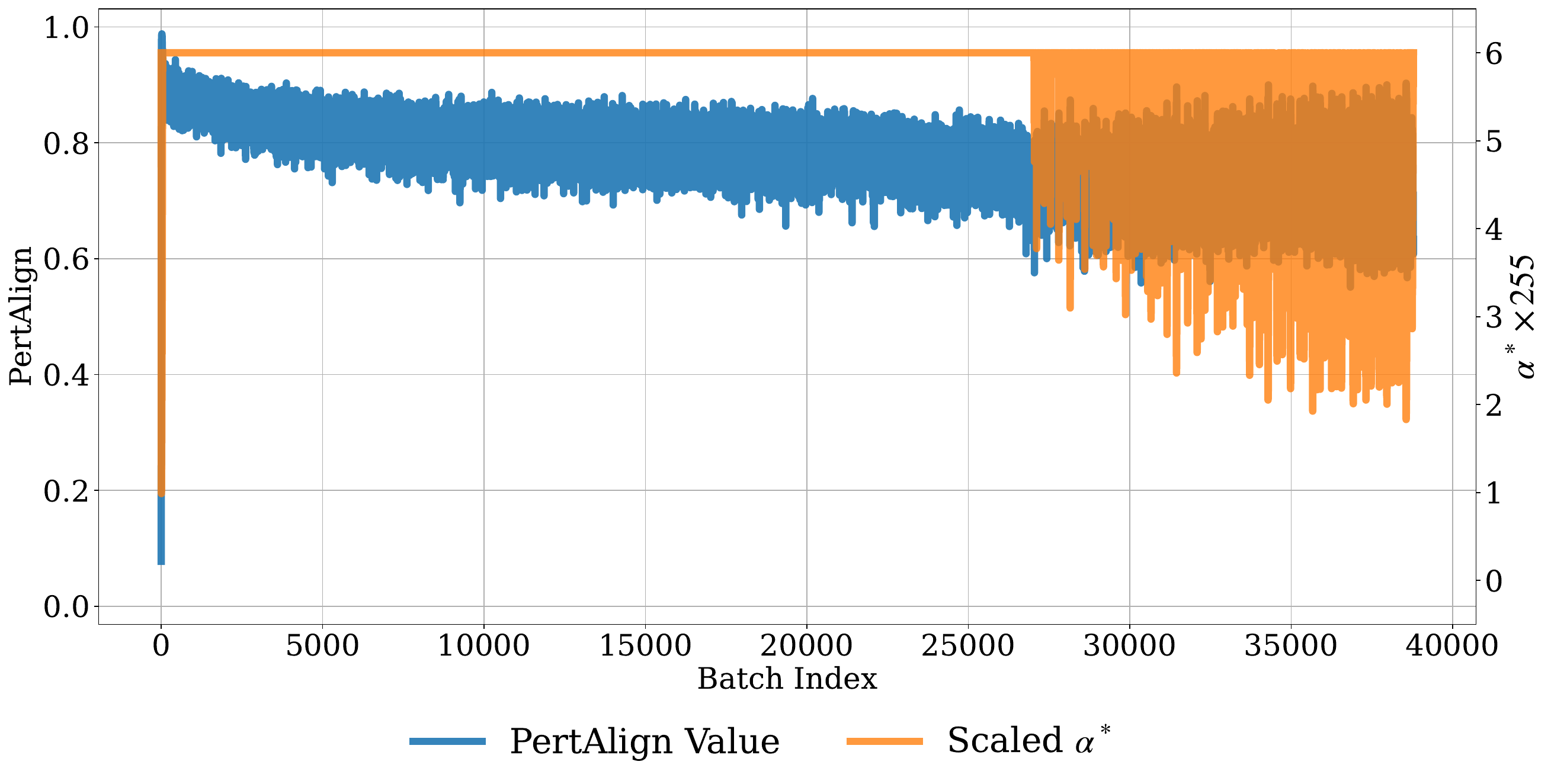}}
    
    \par\vspace{0.5em}
    
    \subfigure[\textsc{CIFAR-10} WideResNet-28]{\label{fig:sora_stepsize_cifar10_wideresnet28}\includegraphics[width=0.48\textwidth]{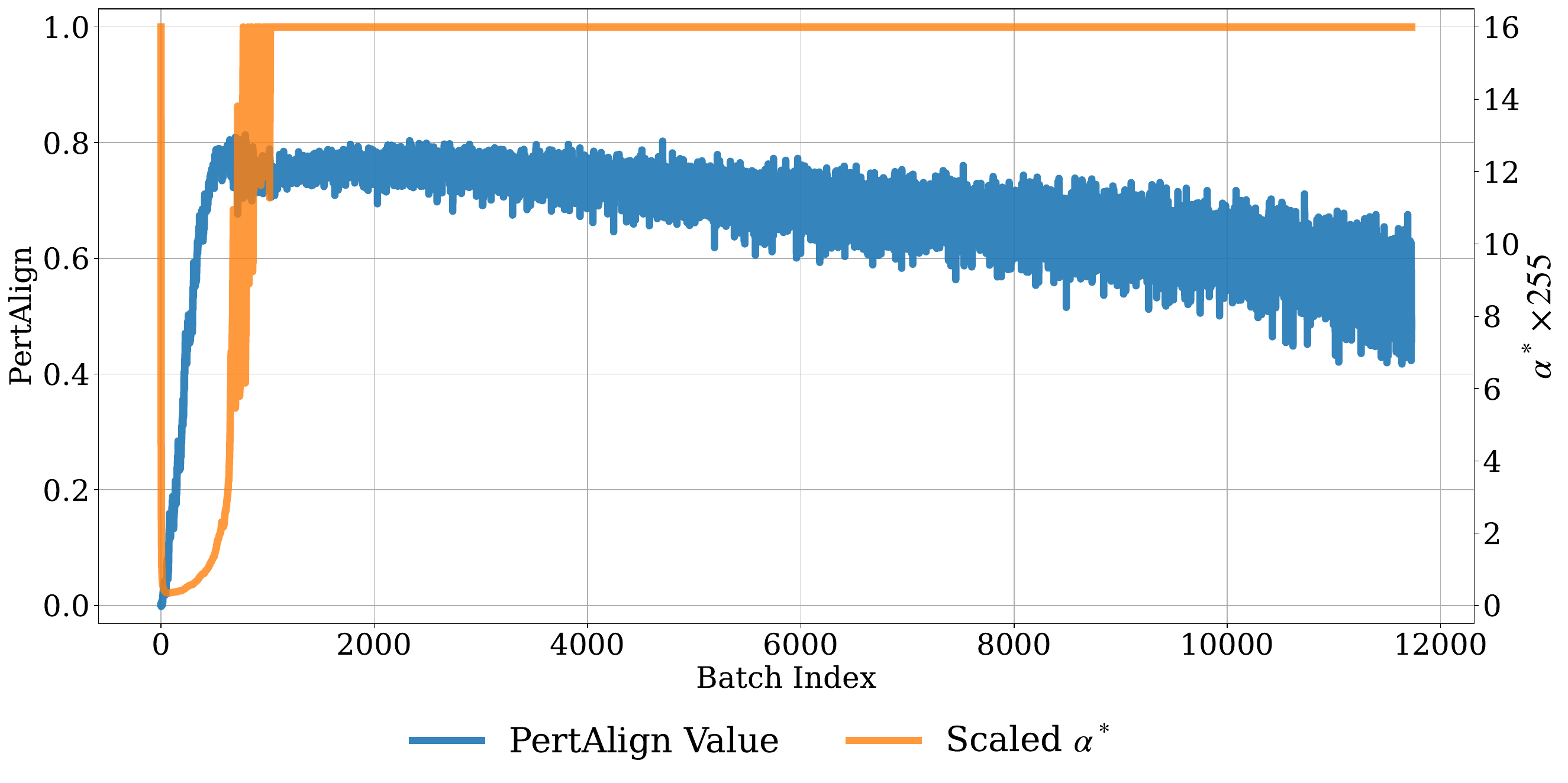}}
    \subfigure[\textsc{PathMNIST} SENet-28]{\label{fig:sora_stepsize_pathmnist_senet18}\includegraphics[width=0.48\textwidth]{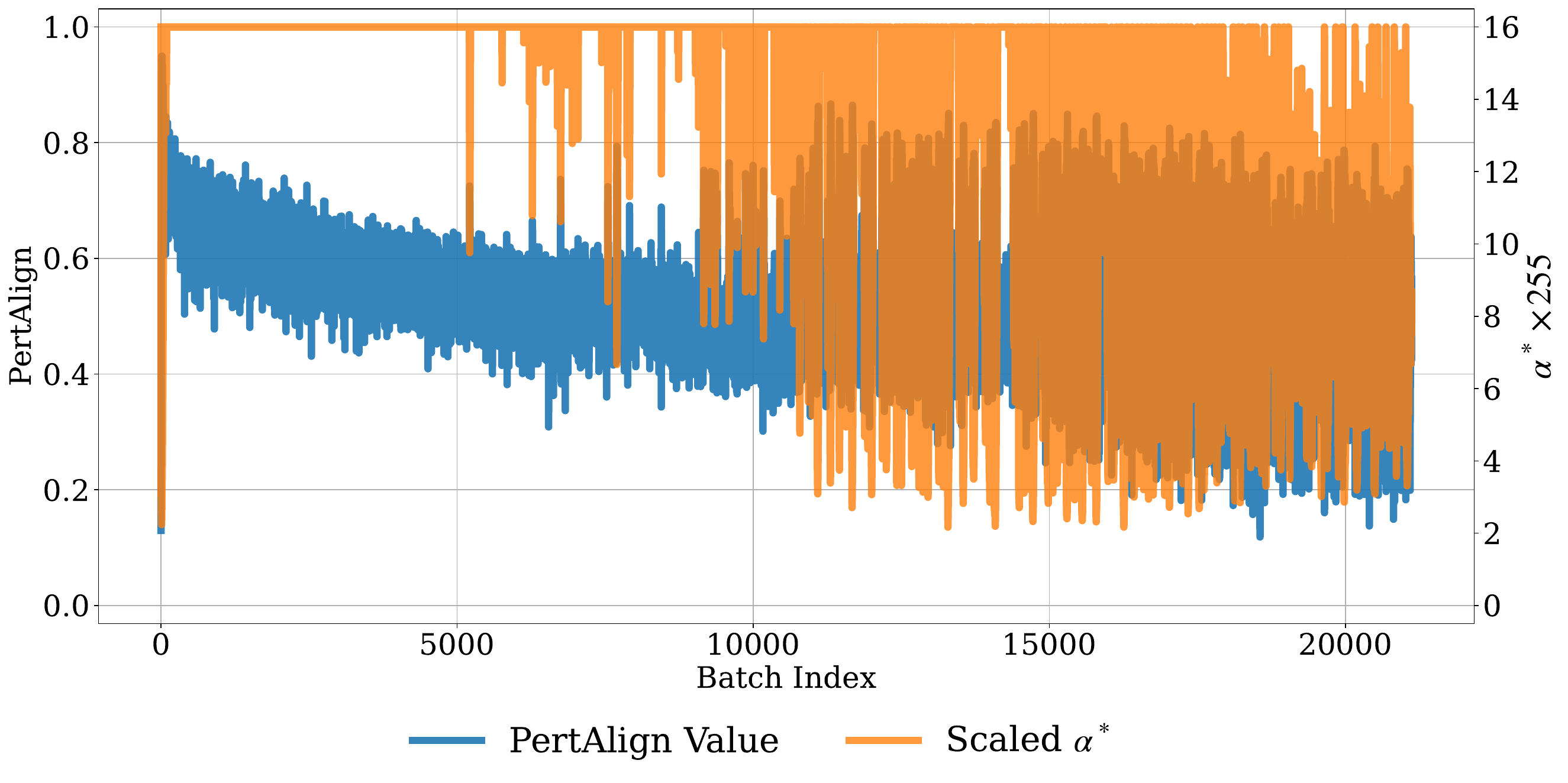}}
    
    \caption{
    Tracking SORA's $\alpha^*$ across datasets and models.
    }
    \label{fig:sora_step_sizes_datasets}
\end{figure}
    \clearpage
    \section{Loss Landscape Visualization}\label{sec:loss_visualization}

In this section, we visualize the loss landscapes for the \textsc{CIFAR-10}, \textsc{PathMNIST}, and \textsc{TissueMNIST} datasets under three scenarios: benign training, CO, and robust training, as shown in \Figref{fig:loss_landscape_visualization}. Each unit in the $xy$ plane corresponds to one FGSM perturbation ($\sfrac{8}{255} \times \textnormal{sign}(g)$ for \textsc{CIFAR-10} and \textsc{PathMNIST} and $\sfrac{3}{255} \times \textnormal{sign}(g)$ for \textsc{TissueMNIST}).

\begin{figure}[ht]
    \centering
    \subfigure[\textsc{CIFAR-10} Benign]{\label{fig:loss_cifar10_benign}\includegraphics[width=0.3\textwidth]{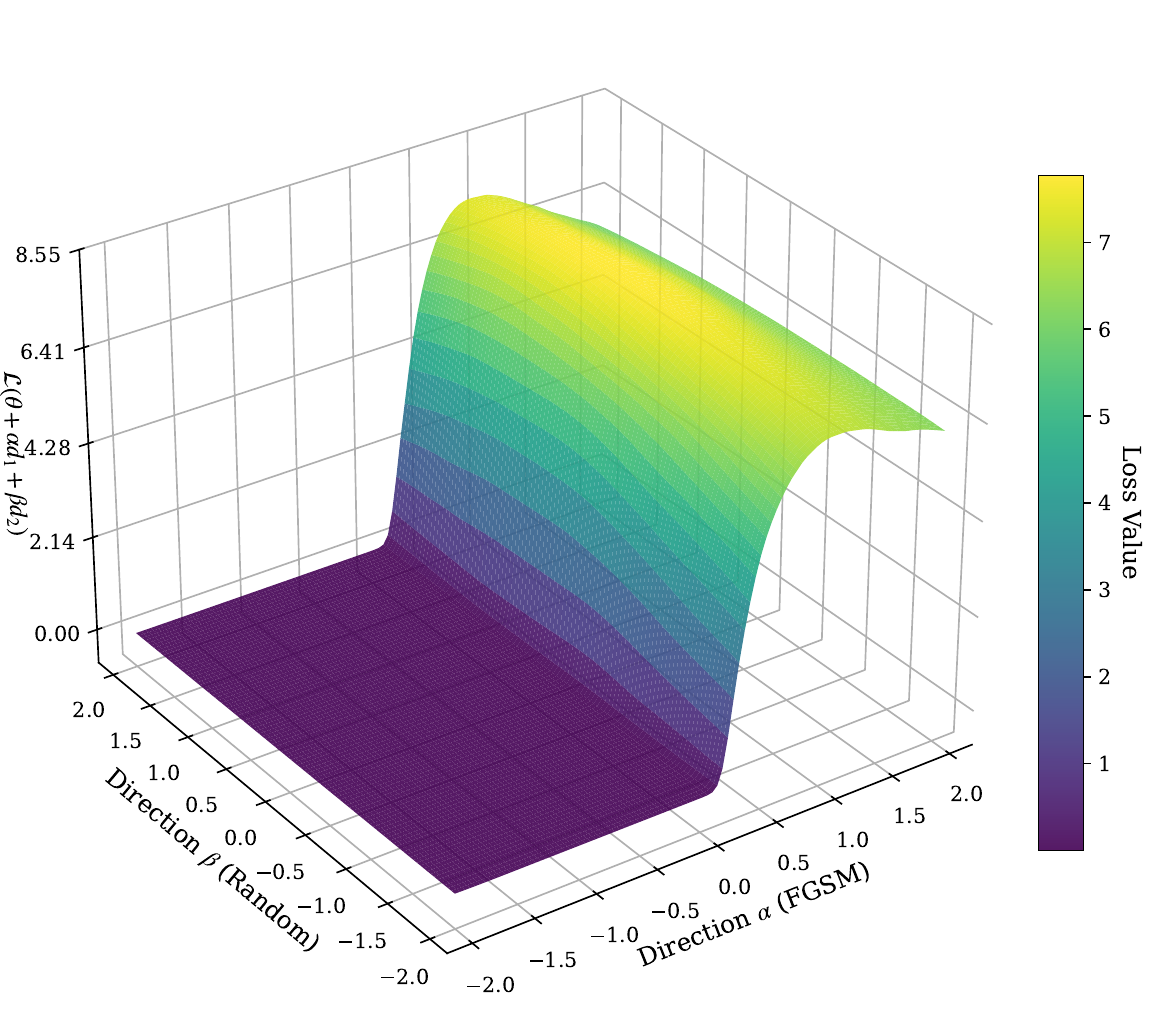}}
    \subfigure[\textsc{CIFAR-10} FGSM]{\label{fig:loss_cifar10_fgsm}\includegraphics[width=0.3\textwidth]{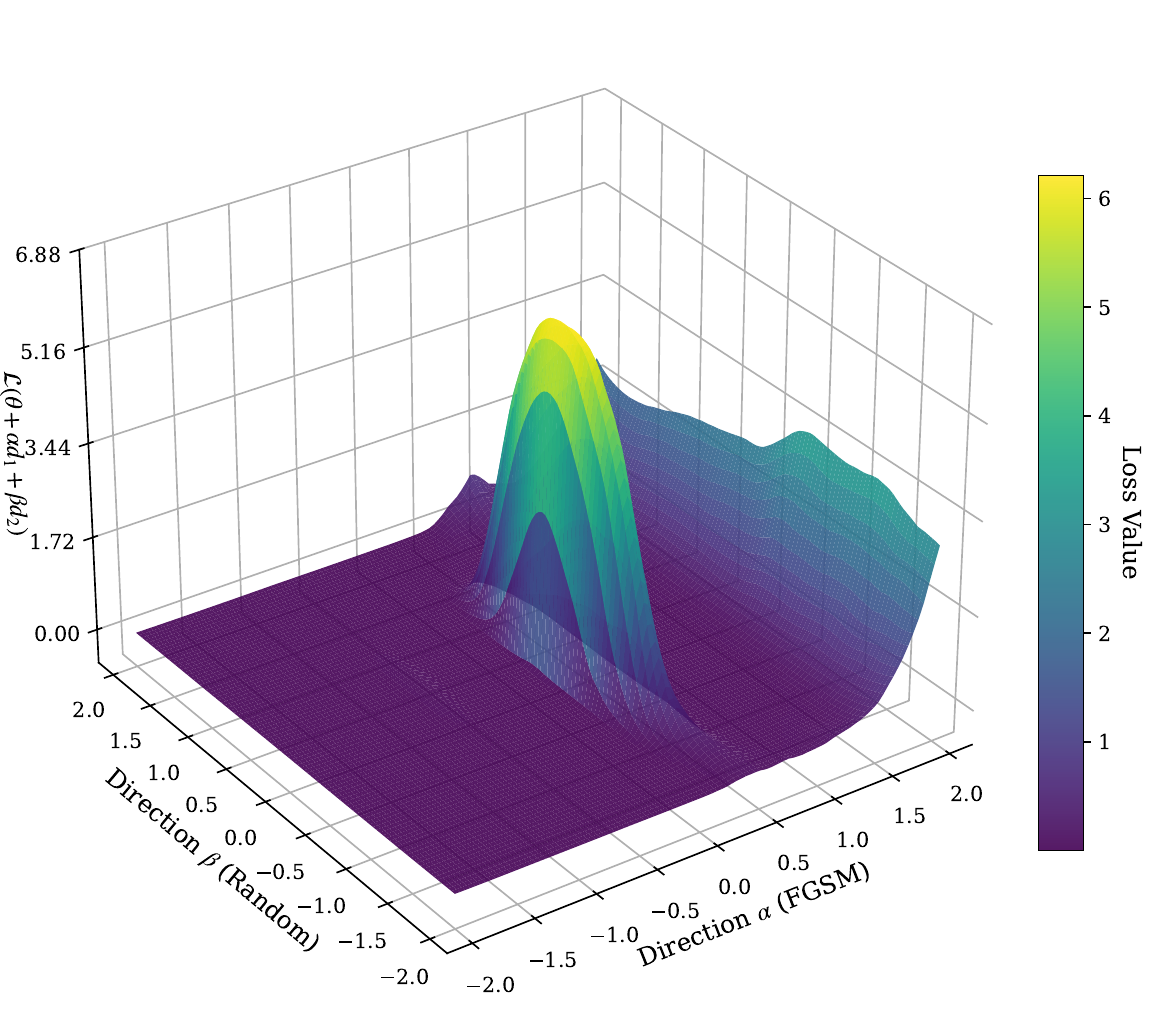}}
    \subfigure[\textsc{CIFAR-10} SORA]{\label{fig:loss_cifar10_sora}\includegraphics[width=0.3\textwidth]{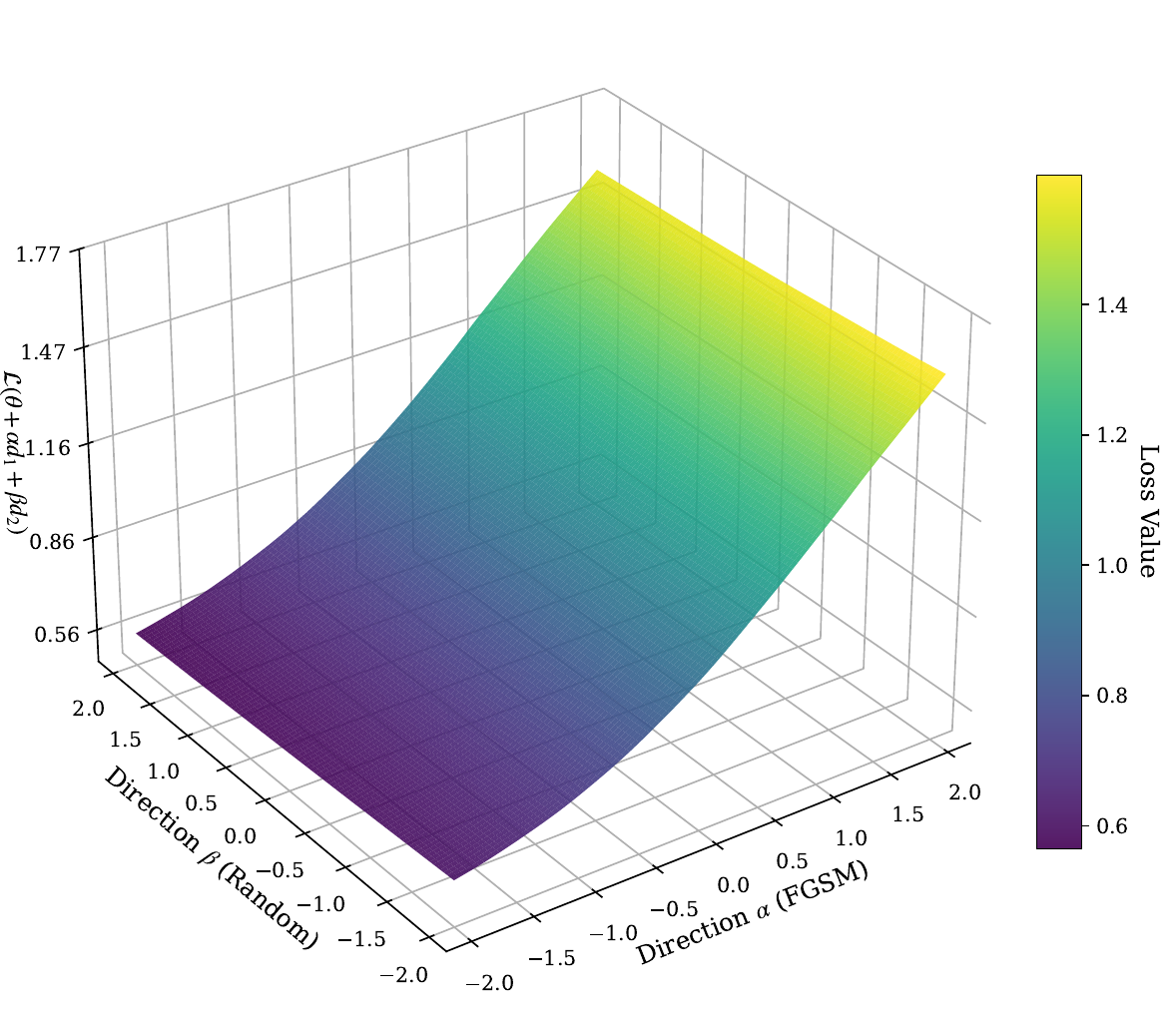}}
    
    \par\vspace{0.5em}
    
    \subfigure[\textsc{PathMNIST} Benign]{\label{fig:loss_pathmnist_benign}\includegraphics[width=0.3\textwidth]{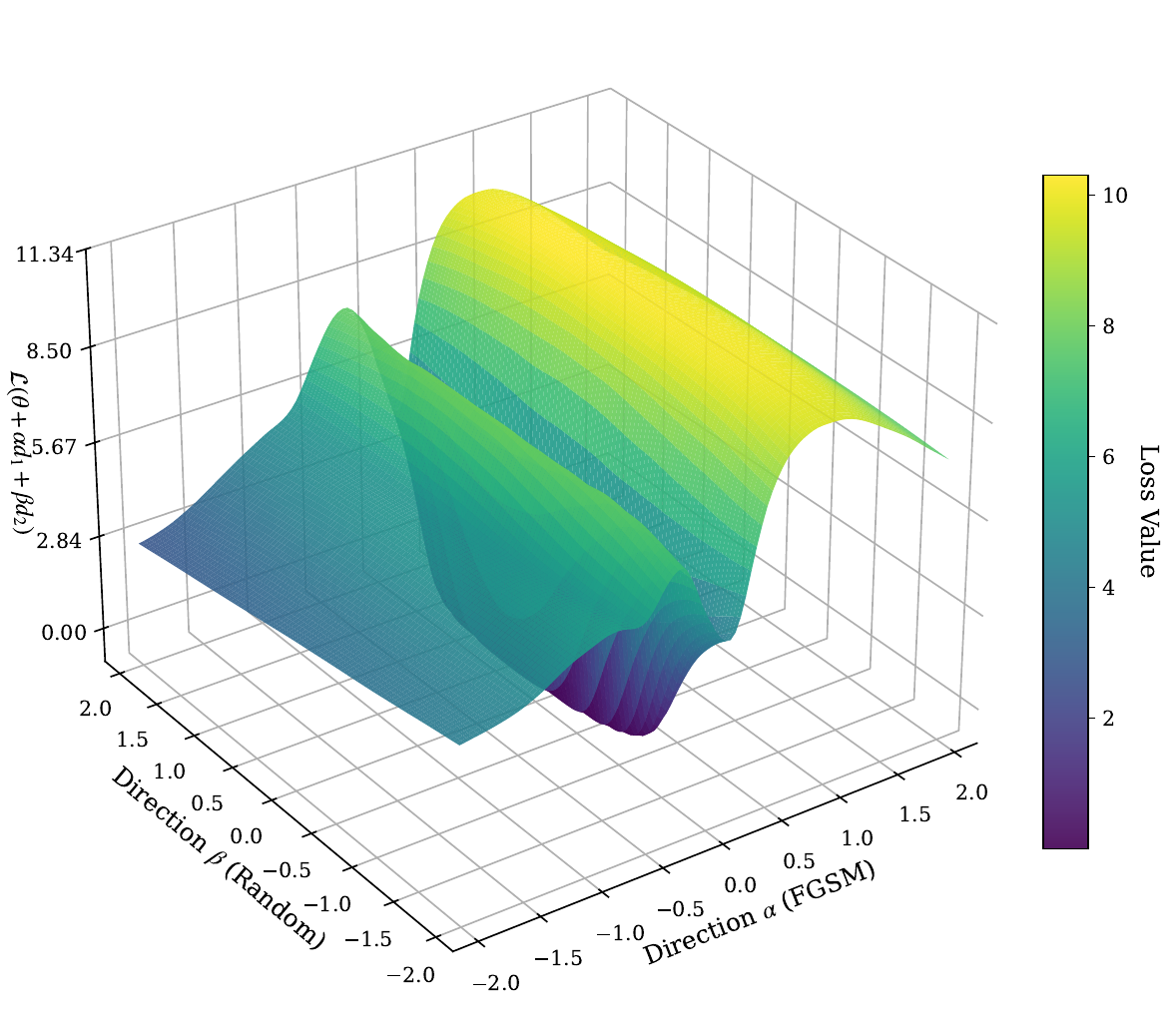}}
    \subfigure[\textsc{PathMNIST} FGSM]{\label{fig:loss_pathmnist_fgsm}\includegraphics[width=0.3\textwidth]{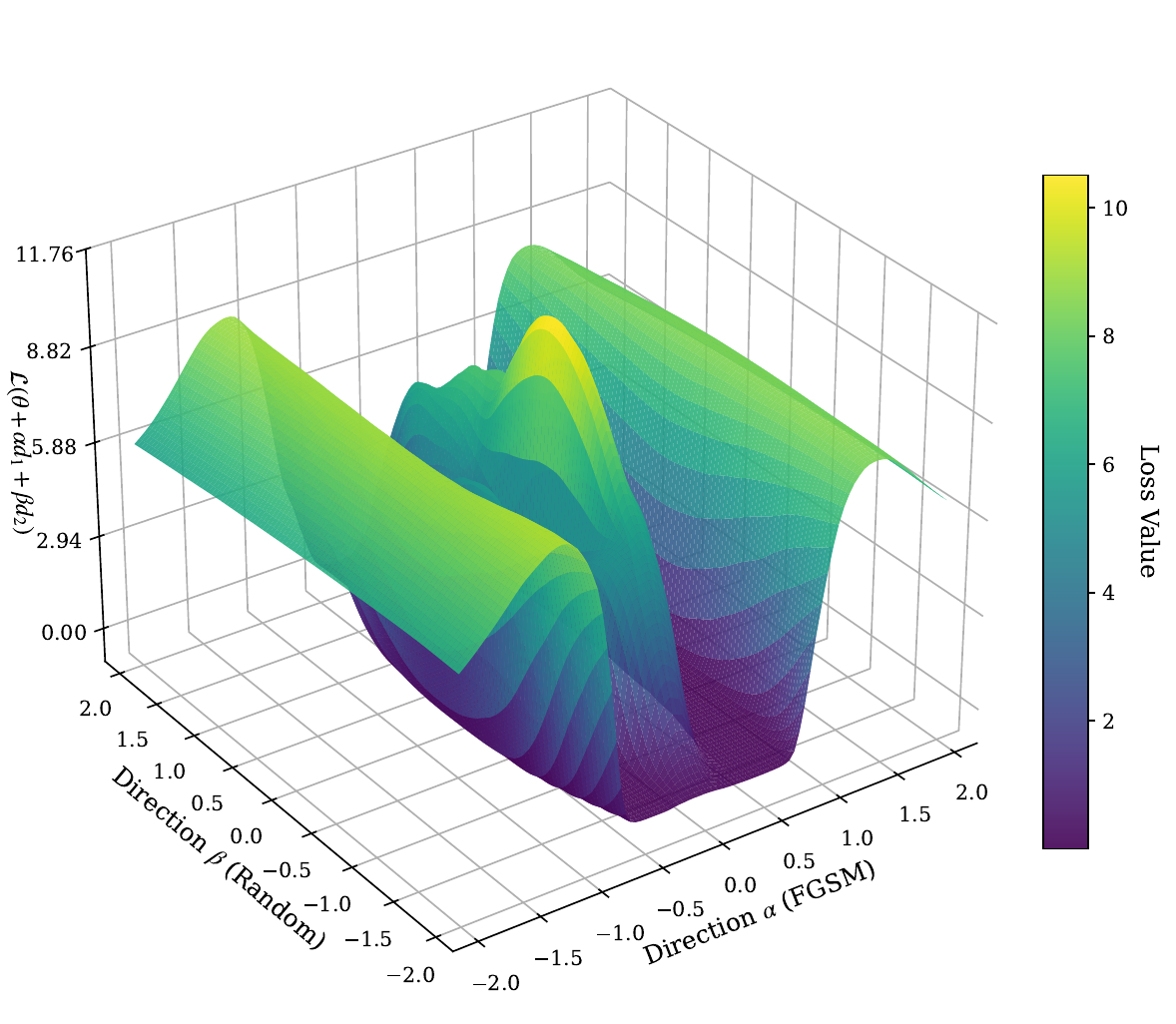}}
    \subfigure[\textsc{PathMNIST} SORA]{\label{fig:loss_pathmnist_sora}\includegraphics[width=0.3\textwidth]{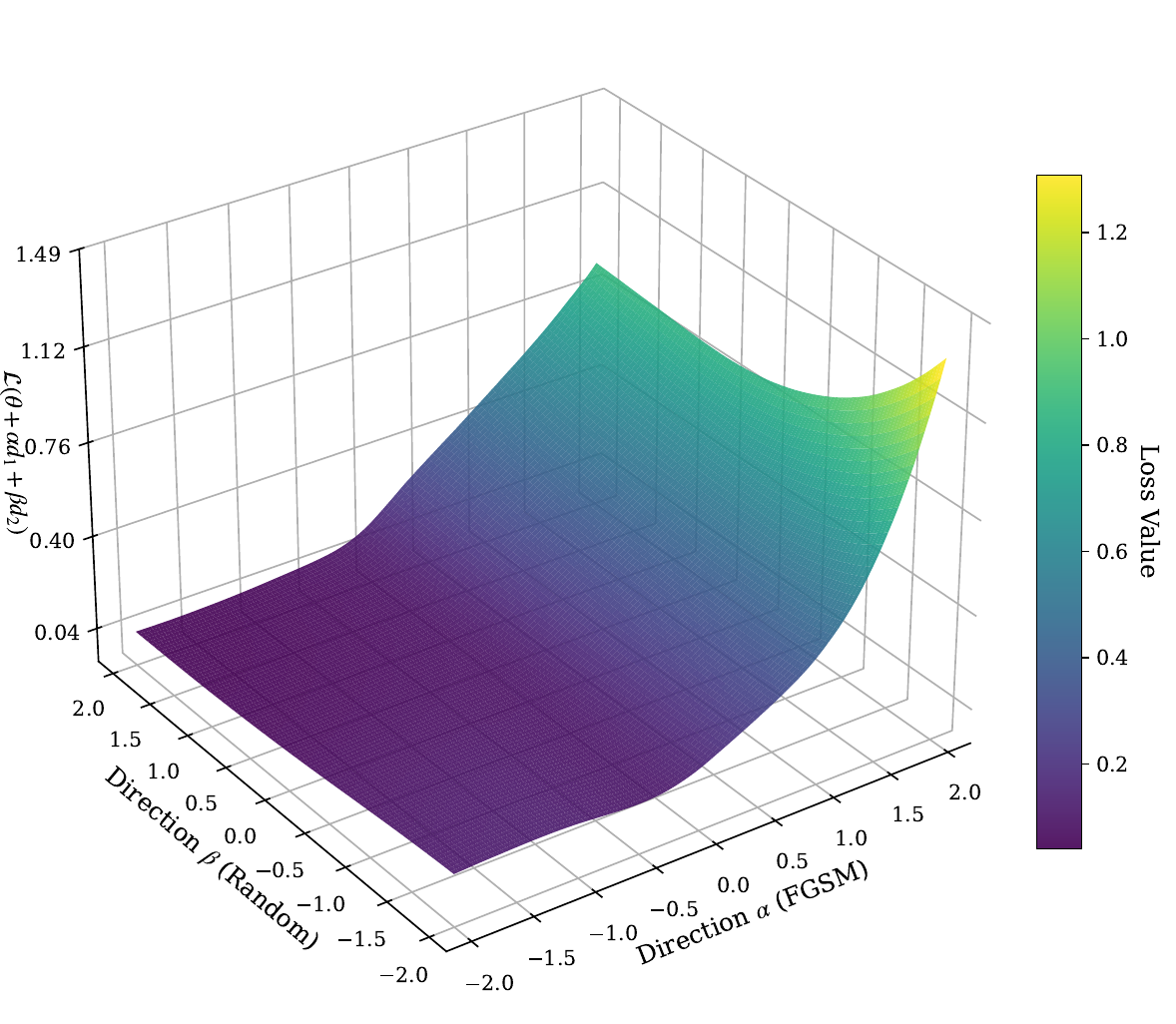}}    

    \par\vspace{0.5em}
    
    \subfigure[\textsc{TissueMNIST} Benign]{\label{fig:loss_tissuemnist_benign}\includegraphics[width=0.3\textwidth]{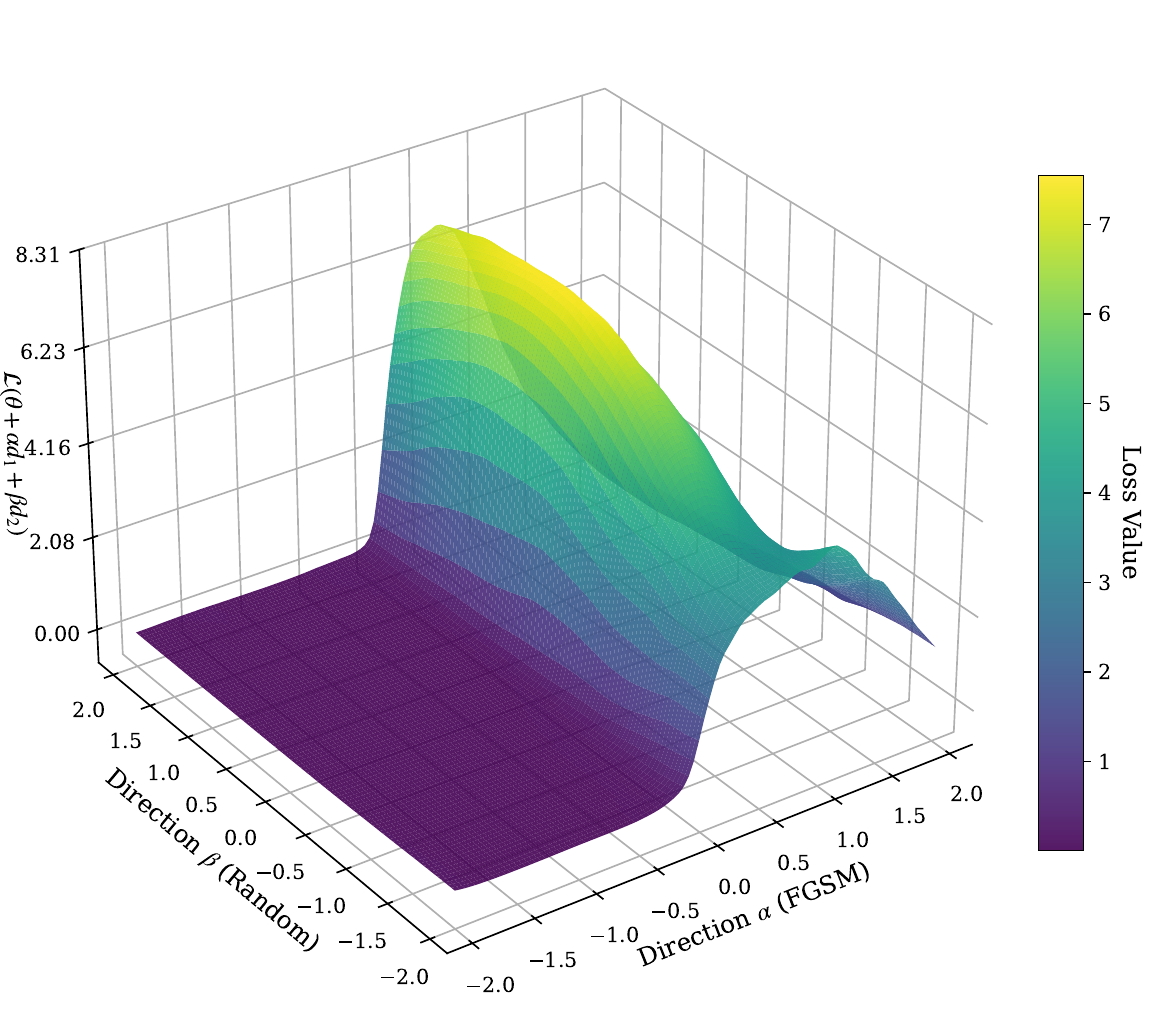}}
    \subfigure[\textsc{TissueMNIST} FGSM]{\label{fig:loss_tissuemnist_fgsm}\includegraphics[width=0.3\textwidth]{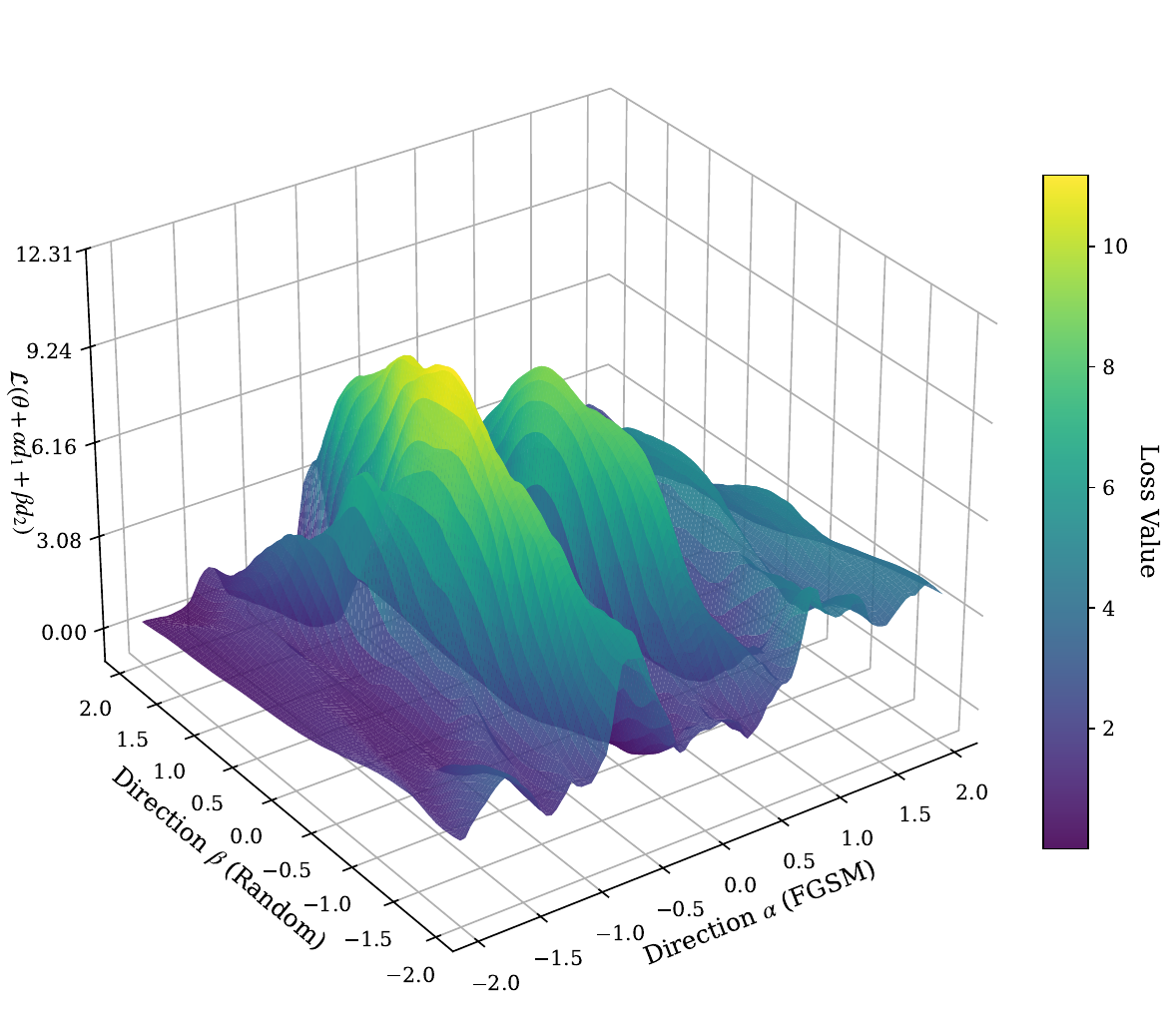}}
    \subfigure[\textsc{TissueMNIST} SORA]{\label{fig:loss_tissuemnist_sora}\includegraphics[width=0.3\textwidth]{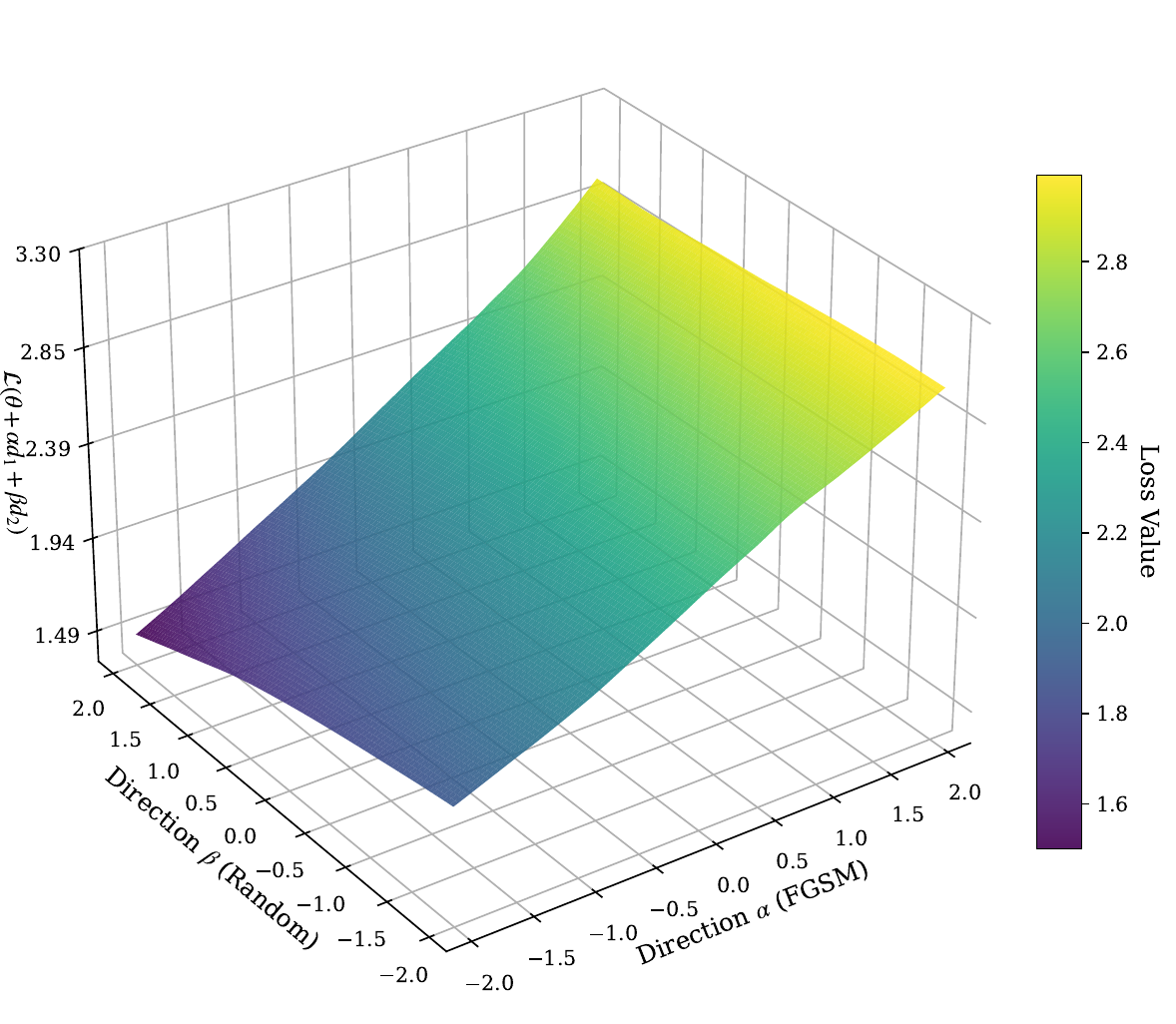}}

    \caption{
    Loss landscape visualization for different datasets and training methods, trained using PreActResNet-18.
    }
    \label{fig:loss_landscape_visualization}
\end{figure}

As shown in Figures \ref{fig:loss_cifar10_benign}, \ref{fig:loss_pathmnist_benign}, and \ref{fig:loss_tissuemnist_benign}, the distortion of the loss around the origin ($\alpha = 0$ and $\beta = 0$), which arises naturally from standard training, does not exhibit a consistent or distinctive pattern. In contrast, the distortion caused by CO shows clear similarities across different datasets (Figures \ref{fig:loss_cifar10_fgsm}, \ref{fig:loss_pathmnist_fgsm}, and \ref{fig:loss_tissuemnist_fgsm}). In all cases, we observe a bump in the loss surface near the origin, which then flattens as we continue moving in the FGSM direction, demonstrating EO.

This characteristic distortion motivates the use of a second-order approximation to measure the distortion more efficiently (\Secref{sec:pertalign}) and to find stronger adversarial examples (\Secref{sec:methodology}).
    \clearpage
    \section{Baseline Brittleness and Hyperparameter Sensitivity}\label{sec:brittleness}
We strongly believe that an effective FAT method should satisfy the qualities mentioned in Section~\ref{sec:intro}.
This means that a good solution to CO must have dataset and architecture agnostic hyperparameters in the sense that keeping the same hyperparameters across all settings performs reasonably well.
If a solution requires multiple runs to find hyperparameters that work, one can no longer consider it fast. 
In fact, instead of training a single-step method for a number of different times to find decent hyperparameters, one could use multi-step methods to avoid the challenges of single-step adversarial training altogether.
Therefore we insist that effective solutions must retain their performance on a fixed set of hyperparameters across a wide range of datasets and architectures to be classified as \textbf{fast}.

Several previous work however, have overlooked this principle. 
Nevertheless our method can still outperform these baselines \textbf{with a fixed set of hyperparameters} even if the baselines are allowed to \textbf{perform an exhaustive grid search of hyperparameters}.
In this Appendix we illustrate the shortcomings and failures of several baselines in the extreme case of them being allowed to perform a grid search of hyperparameters.

For example Tables~\ref{tab:brittle-gradalign} and~\ref{tab:brittle-zerograd} illustrate how baselines like ZeroGrad and GradAlign fail across all hyperparameters on \textsc{PathMNIST}, indicating limitations much deeper than a lack of proper hyperparameters.
Specifically, in the case of ZeroGrad where $q_\text{value} \in [0, 1]$ we can infer that no viable hyperparameter can result in a robust model. Note that the highlighted row in each table is the default hyperparameter reported by the corresponding authors.

\begin{table}[htbp]
    \centering
    \caption{GradAlign sweep: Effect of $\lambda$ on \textsc{PathMNIST} performance}
    \label{tab:brittle-gradalign}
    \begin{tabular}{clcccc}
        \toprule
        $\lambda$ && \textbf{Clean (\%)} && \textbf{PGD-10 (\%)} \\
        \midrule
        \rowcolor{brown!10}
        0.2 && 45.71 && 0.78 \\
        0.4 && 11.39 && 0.19 \\
        0.6 && 9.87 && 0.54 \\
        0.8 && 73.26 && 8.65 \\
        1.0 && 35.58 && 1.81 \\
        \bottomrule
    \end{tabular}
\end{table}

\begin{table}[htbp]
    \centering
    \caption{ZeroGrad sweep: effect of $q_{\text{val}}$ on \textsc{PathMNIST} performance}
    \label{tab:brittle-zerograd}
    \begin{tabular}{clcccc}
        \toprule
        $q_{\text{val}}$ && \textbf{Clean (\%)} && \textbf{PGD-10 (\%)} \\
        \midrule
        0.35 && 49.47 && 0.86 \\
        \rowcolor{brown!10}
        0.45 && 67.22 && 5.69 \\
        0.60 && 78.69 && 8.18 \\
        0.70 && 88.13 && 10.43 \\
        0.90 && 88.32 && 16.42 \\
        \bottomrule
    \end{tabular}
\end{table}

Similarly AAER also fails to create a robust model even if we extend the hyperparameter search over multiple values as can be seen from Table~\ref{tab:brittle-aaer}.
In many cases, the models simply manage to identify the class with the highest frequency in the training set (and by extension, the test set) and simply return that class in all cases exhibiting a form of shortcut learning.
In other cases, AAER still fails to produce competitive results.
The high number of hyperparameters further restricts the possibility of arriving at a reasonable solution.
\begin{table}[htbp]
    \centering
    \caption{AAER sweep: Effect of $\lambda_1$, $\lambda_2$, and $\lambda_3$ on \textsc{PathMNIST} performance}
    \label{tab:brittle-aaer}
    \begin{tabular}{clccccclcccccc}
        \toprule
        $\lambda_1$ && $\lambda_2$ && $\lambda_3$ && \textbf{Clean (\%)} && \textbf{PGD-10 (\%)} \\
        \midrule
        \rowcolor{brown!10}
        1 && 1.5 && 0.15 && 74.86 && 18.84 \\
        1 && 5 && 0.55 && 82.26 && 32.58 \\
        1 && 8.5 && 0.55 && 82.26 && 32.58 \\
        1 && 2.75 && 0.55 && 82.26 && 32.58 \\
        1 && 5 && 1.5 && 18.64 && 18.64 \\
        1 && 8.5 && 1.5 && 18.64 && 18.64 \\
        1 && 2.75 && 1.5 && 18.64 && 18.64 \\
        1 && 5 && 0.75 && 18.64 && 18.64 \\
        1 && 8.5 && 0.75 && 18.64 && 18.64 \\
        1 && 2.75 && 0.75 && 18.64 && 18.64 \\
        \bottomrule
    \end{tabular}
\end{table}

Much like AAER, N-FGSM fails to avoid CO in many cases as well as depicted in Table~\ref{tab:brittle-nfgsm}.
In addition to the fact that N-FGSM also doesn't produce competitive results, in the cases that it does manage to perform relatively better, we see a sharp decline over the original \textsc{CIFAR-10} dataset.
This means that not only N-FGSM fails to be competitive, it only manages to avoid CO at the cost of significant deterioration of other datasets.
Unlike SORA, where the tune-able hyperparameters are stable across different settings, N-FGSM's hyperparameters which are its noise magnitude $k$ and step-size $\alpha$ don't generalize as well.
\begin{table}[htbp]
    \centering
    \caption{NFGSM sweep: Effect of $\alpha$ and $k$ on \textsc{PathMNIST} and \textsc{CIFAR-10} performance}
    \label{tab:brittle-nfgsm}
    \begin{tabular}{cl cl cl cl cc c}
        \toprule
        $k$ && $\alpha$ && \textbf{Dataset} && \textbf{Clean (\%)} && \textbf{PGD-10 (\%)} \\
        \midrule
        $\epsilon$ && $\epsilon$ && \textsc{PathMNIST} && 86.04 && 0.82 \\
        $3\epsilon$ && $\epsilon$ && \textsc{PathMNIST} && 62.71 && 20.59 \\
        $3\epsilon$ && $2\epsilon$ && \textsc{PathMNIST} && 66.78 && 4.45 \\
        $2\epsilon$ && $2\epsilon$ && \textsc{PathMNIST} && 35.94 && 1.15 \\
        \rowcolor{brown!10}
        $2\epsilon$ && $\epsilon$ && \textsc{CIFAR-10} && 79.05 && 48.73 \\
        \rowcolor{brown!10}
        $2\epsilon$ && $\epsilon$ && \textsc{PathMNIST} && 74.86 && 18.84 \\
        $2\epsilon$ && $0.75\epsilon$ && \textsc{PathMNIST} && 59.42 && 27.61 \\
        $2\epsilon$ && $0.5\epsilon$ && \textsc{PathMNIST} && 85.16 && 37.08 \\
        $2\epsilon$ && $0.5\epsilon$ && \textsc{CIFAR-10} && 86.57 && 39.84 \\
        \bottomrule
    \end{tabular}
\end{table}

\begin{table}[ht!]
    \centering
    \caption{ELLE sweep: Effect of regularization strength ($\lambda$) on \textsc{PathMNIST} and \textsc{CIFAR-10}}
    \label{tab:brittle-elle}
    \begin{tabular}{clcccccc}
        \toprule
        $\lambda$ && \textbf{Dataset} && \textbf{Clean (\%)} && \textbf{PGD-10 (\%)} \\
        \midrule
        \rowcolor{brown!10}
        \multirow{2}{*}{\cellcolor{brown!10} 1000}  && \textsc{PathMNIST} && 79.80 && 40.52 \\
        & \cellcolor{brown!10} & \cellcolor{brown!10} \textsc{CIFAR-10} & \cellcolor{brown!10} & \cellcolor{brown!10} 83.74 & \cellcolor{brown!10} & \cellcolor{brown!10} 45.10 \\
        \midrule
        \multirow{2}{*}{4000} && \textsc{PathMNIST} && 83.52 && 43.08 \\
         && \textsc{CIFAR-10} && 81.12 && 44.27 \\
        \midrule
        \multirow{2}{*}{8000} && \textsc{PathMNIST} && 79.55 && 42.77 \\
          && \textsc{CIFAR-10} && 79.10 && 42.63 \\
        \midrule
        \multirow{2}{*}{12000} && \textsc{PathMNIST} && 78.82 && 42.87 \\
          && \textsc{CIFAR-10} && 76.55 && 41.35 \\
        \midrule
        \multirow{2}{*}{16000} && \textsc{PathMNIST} && 77.70 && 44.21 \\
          && \textsc{CIFAR-10} && 74.90 && 40.70 \\
        \midrule
        \multirow{2}{*}{20000} && \textsc{PathMNIST} && 70.04 && 38.82 \\
          && \textsc{CIFAR-10} && 73.52 && 39.52 \\
        \bottomrule
    \end{tabular}
\end{table}

Finally, in the case of ELLE, although the resulting robust accuracies are competitive with respect to SORA, the competitive regions in terms of \textsc{CIFAR-10} and \textsc{PathMNIST} are mutually exclusive.
From Table~\ref{tab:brittle-elle} it can be seen that the best results on \textsc{CIFAR-10} are obtained on $\lambda = 1000$ and as we increase the strength of the regularizer $\lambda$ by increasing its value, both the clean and robust accuracies drop on this dataset.
On the \textsc{PathMNIST} dataset we observe a different pattern where by increasing $\lambda$ we can find the best models at $\lambda=4000$.
This discrepancy in terms of where we see the best results, coupled with the fact that ELLE itself is computationally expensive, further discourages its usage in favor of methods that are faster and attain higher accuracies such as SORA, or multi-step methods such as standard adversarial training if computational resources are not limited.
For example, ELLE takes twice longer than SORA to train and consumes 3 times the memory (see Figures~\ref{fig:cost} and~\ref{fig:cost-cifar10}).

    \clearpage
    \section{Euclidean Norm Attacks}\label{sec:Euclidean}
In \Appref{sec:theory} we investigated our proposed PertAlign metric to develop theoretical justifications for extracting the optimal step-sizes.
Our theoretical analysis made no assumptions on the direction of the optimization step $v$, allowing for a wide variety of choices.
Since $\ell_\infty$-norm perturbations are a great area of interest in adversarial robustness~\citep{goodfellow2014explaining}, we dedicated \Algref{alg:ours} to this norm, with the optimal step-size in the direction of the sign of the gradient $v=\sign(g)$ derived in Proposition~\ref{prop:optimal_stepp}.
Another natural choice however, would be the the direction of the steepest ascent (i.e. gradient of the input loss $v=g$).
Unlike many previous works such as ZeroGrad and MultiGrad~\citep{golgooni2021zerograd} where the proposed method only works for $\ell_\infty$-norm perturbations, our method could be adjusted to work with any direction from the general class of $\ell_p$-norm perturbations.

While our theoretical results allow for any direction or any $\ell_p$-norm, in this appendix, we focus on the specific case of $v = g$ and show how the theory allows us to modify our method to the case of $\ell_2$-norm perturbations.
The $\ell_2$ version of our method is provided in \Algref{alg:sora2}. 
The theoretical derivation of the optimal step-size for $\ell_2$ is presented in Proposition~\ref{prop:optimal_stepg} which are depicted in lines 10 and 11. Also following previous works which employ random starts for $\ell_2$-norms, we use Gaussian noise instead of uniform noise~\citep{madry2019towards}.
These changes are depicted in lines 3 and 4.

\begin{algorithm}[H]
	\caption{\textbf{S}econd-\textbf{Or}der \textbf{A}daptive Method for $\ell_2$ (SORA)}
	\label{alg:sora2}
	\begin{algorithmic}[1]
	   \REQUIRE \# epochs $T$, \# batches $M$, radius $\epsilon$, step-size $\alpha$, exponential average coefficient $\beta$.
	   \STATE $v \gets 0.99$ \COMMENT{Initialize moving linearity coefficient}
	   \FOR{Epoch $t=1,\dots, T$}
	        \FOR{Batch $i=1, \dots, M$}
	            \STATE $\bm{\eta} \sim \mathcal{N}(\textbf{0}, I_d)$ \label{sora2rs} 
	            \COMMENT{Random start}
	            \STATE $\bm{\eta} = \frac{\epsilon}{\|\bm{\eta}\|_2} \bm{\eta}$ \label{sora2nr} \COMMENT{Noise rescaling}
	            \STATE $\bm{g} \gets \nabla_{\bm{x}_i}\mathcal{L}\left(f_{\bm{\theta}}(\bm{x}_i + \bm{\eta}), y_i\right)$
	            \STATE $\bm{\alpha}_i \sim \unif\left(0, \alpha^*\right)^d$
	            \COMMENT{Element-wise step sampling}
    	        \STATE $\bm{x}_i'\gets \bm{x}_i + \bm{\eta} + \bm{\alpha}_i \odot \bm{g}$  
    	        \STATE $\bm{x}_i' = \Pi_{[0,1]}(\bm{x}_i')$
    	        \COMMENT{Project onto the valid pixel range}
    	        \STATE $\bm{g}', \bm{g}_{\bm{\theta}} \gets \nabla_{[\bm{x}_i',\theta]}\mathcal{L}\left(f_{\bm{\theta}}(\bm{x}'_i), y_i\right) $  \COMMENT{Backpropagation}
    		    \STATE $\bm{\theta} \gets \textrm{optimizer}(\bm{\theta}, \bm{g}_{\bm{\theta}})$  \COMMENT{Standard parameters update, (e.g.  SGD)}
    		    \STATE Calculate optimal step-size for next batch:
    		    \[
    		    \alpha^* =
\begin{cases}
    \min\left(\alpha_{\max}, \dfrac{\alpha_0}{1 - v}\right), & v < 1,\\
    \alpha_{\max}, & \text{otherwise}.
\end{cases}
                \]
                \STATE $v \gets (1 - \beta) \cdot v + \beta \cdot  \dfrac{\|g'\|}{\|g\|} \cdot \textnormal{PertAlign}$ 
                \COMMENT{Update moving linearity coefficient}
            \ENDFOR
	    \ENDFOR
	\end{algorithmic} 
\end{algorithm}

\Tabref{tab:l2results} shows our results on the $\ell_2$ variant of our method, trained for 30 epochs and with $\eps = 0.5$.

\begin{table}[htbp]
    \centering
    \caption{SORA $\ell_2$ performance on \textsc{CIFAR-10} and \textsc{CIFAR-100}}
    \label{tab:l2results}
    \begin{tabular}{llclclc}
        \toprule
        \textbf{Dataset} && \textbf{Architecture} && \textbf{Clean (\%)} && \textbf{PGD-10 (\%)} \\
        \midrule
        \multirow{3}{*}{CIFAR-10} && PreActResNet-18 && 81.94 && 53.72 \\
        && ResNet-18 && 81.50 && 53.06 \\
        && SENet-18 && 82.53 && 53.52 \\
        \midrule
        \multirow{3}{*}{CIFAR-100} && PreActResNet-18 && 56.75 && 30.19 \\
        && ResNet-18 && 56.62 && 30.18 \\
        && SENet-18 && 56.12 && 29.86 \\
        \bottomrule
    \end{tabular}
\end{table}

    \clearpage
    \section{Reliability of PertAlign}\label{sec:reliability_PertAlign}
To demonstrate the reliability of PertAlign, we first compare how quickly each metric signals the occurrence of CO in \Secref{subsec:predictability}. We then trace PertAlign during the training of various single-step methods across different datasets and architectures in \Secref{subsec:generalizability} to establish its applicability.

\subsection{Predictability}\label{subsec:predictability}
We tracked PertAlign, GradAlign, the AAE rate, scaled ELLE, and the scaled KL divergence (used by TRADES) during SORA (\Figref{fig:sora_training_metrics}) and FGSM (\Figref{fig:fgsm_training_metrics}) training. 

\begin{figure}[ht]
    \centering
    \subfigure[SORA Training Metrics]{\label{fig:sora_training_metrics}\includegraphics[width=0.48\textwidth]{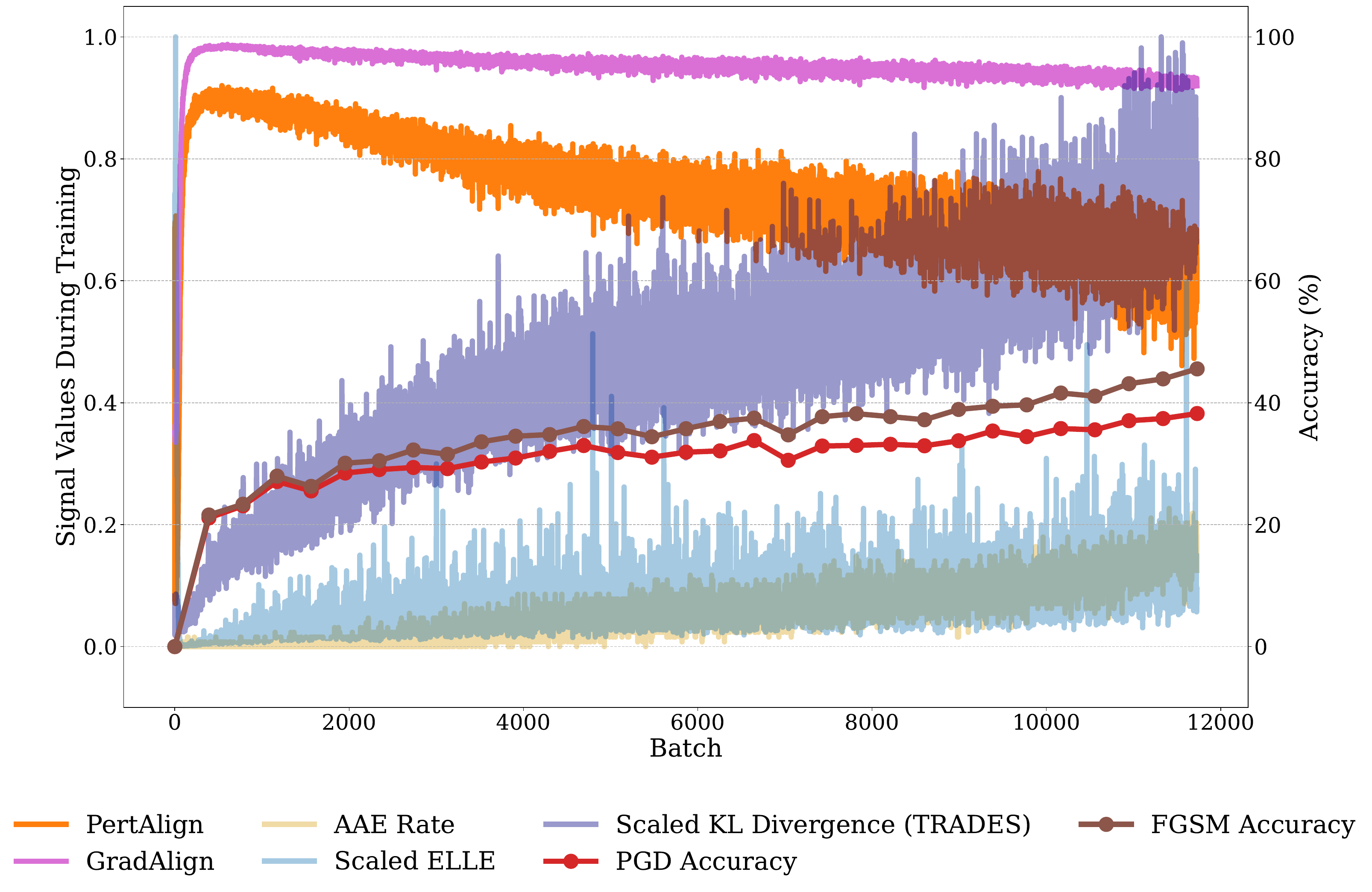}}
    \subfigure[FGSM Training Metrics]{\label{fig:fgsm_training_metrics}\includegraphics[width=0.48\textwidth]{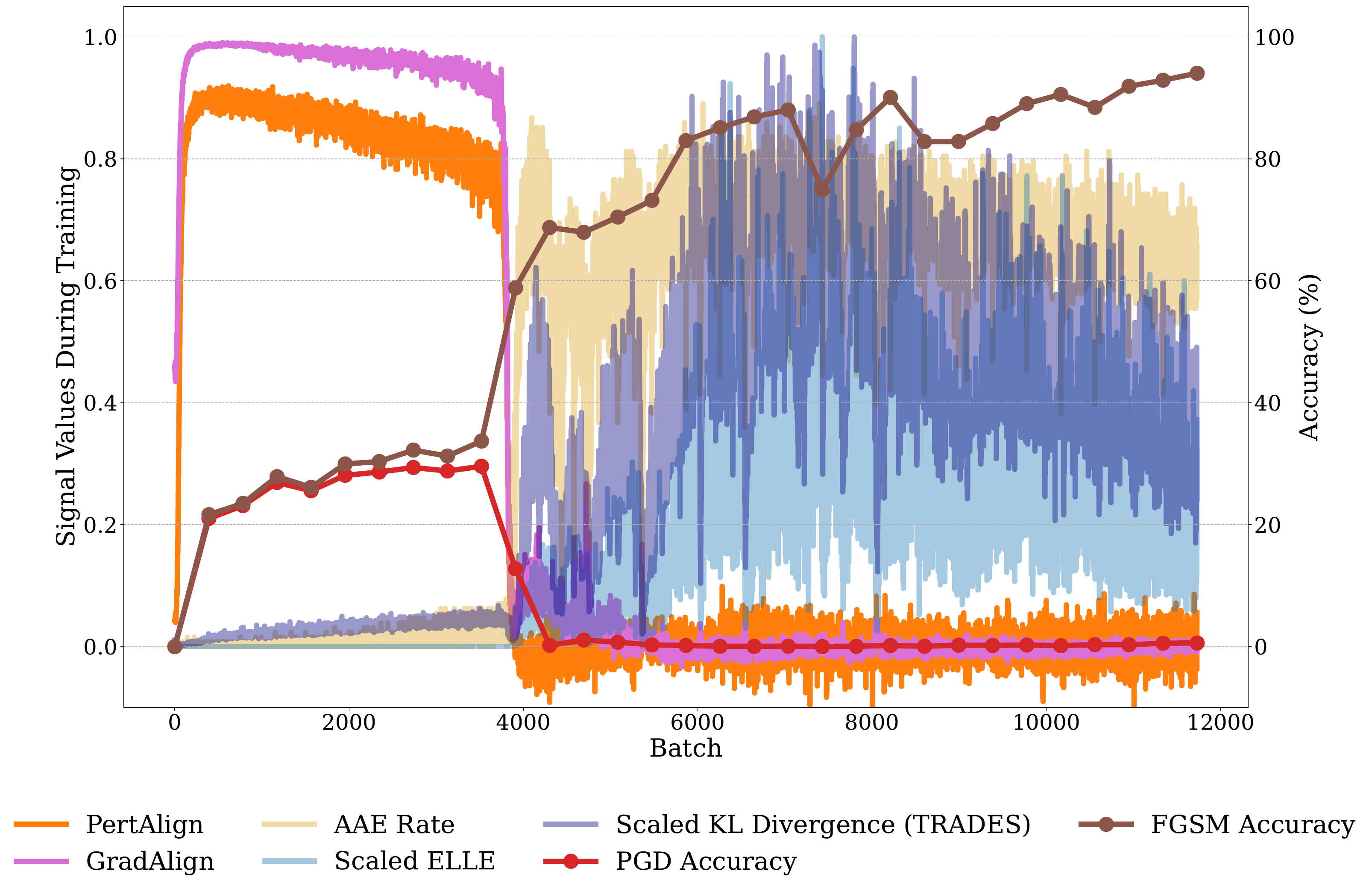}}
    
    \par\vspace{0.5em}
    
    \subfigure[FGSM Training Metrics Zoomed]{\label{fig:fgsm_training_metrics_zoomed}\includegraphics[width=\textwidth]{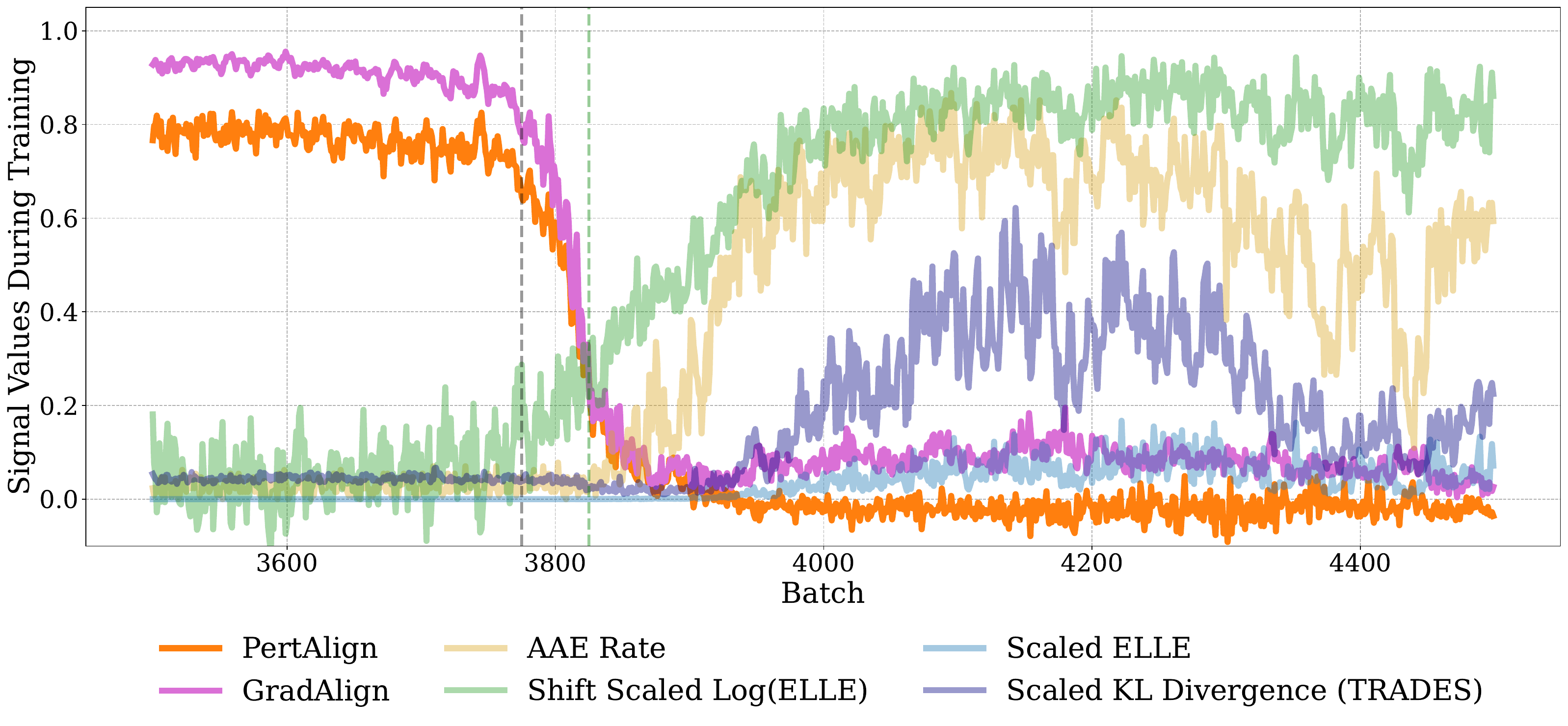}}
    
    \caption{
    Tracking different metrics during FGSM and SORA AT.
    }
    \label{fig:metrics_during_training}
\end{figure}

During robust training where CO does not occur (\Figref{fig:sora_training_metrics}), we observe that the metrics behave smoothly. Conversely, in FGSM training where CO does occur (\Figref{fig:fgsm_training_metrics}), PertAlign and GradAlign suddenly drop to zero, while the number of Abnormal Adversarial Examples (AAEs), ELLE, and the KL divergence between the model's output distributions on clean and adversarial examples rise.

\Figref{fig:fgsm_training_metrics_zoomed} details the onset of CO from \Figref{fig:fgsm_training_metrics}, illustrating how each metric transitions as CO occurs. Notably, PertAlign and GradAlign begin reacting at around batch 3775, whereas FGSM and PGD accuracies diverge visibly at around batch 3825, thus effectively predicting CO. We can see that ELLE, which measures the convexity of the loss landscape in the direction of the perturbation using sampling, also starts increasing at the same time. However, because ELLE does not have a bounded value range, we must manually shift and scale the logarithm of the ELLE metric to clearly observe its transition and it is important to note that ELLE achieves its peak value much later (\Figref{fig:fgsm_training_metrics}).  

It is important to note that while GradAlign and ELLE also react before the occurrence of CO, they require additional forward or backward passes. This significantly increases their computational overhead, whereas PertAlign is essentially free to compute. On the other hand, the number of AAEs and the KL divergence begin to react later, only after the FGSM and PGD accuracies have already diverged.

\subsection{Generalizability}\label{subsec:generalizability}
To demonstrate the predictive power of the PertAlign metric, we present training traces from various architectures, methods, and datasets.
The correlation between Catastrophic Overfitting (CO) and concurrent drops in PGD robustness and PertAlign is clearly evident.
For instance, \Figref{fig:pertalign_atas_pathmnist_senet18} shows PertAlign dropping, recovering, and dropping again as the model undergoes CO, temporarily recovers, and then succumbs to CO once more.

\begin{figure}[ht]
    \centering
    \subfigure[\textsc{CIFAR-10} SENet-18  FGSM]{\label{fig:pertalign_fgsm_cifar10_senet18}\includegraphics[width=0.48\textwidth]{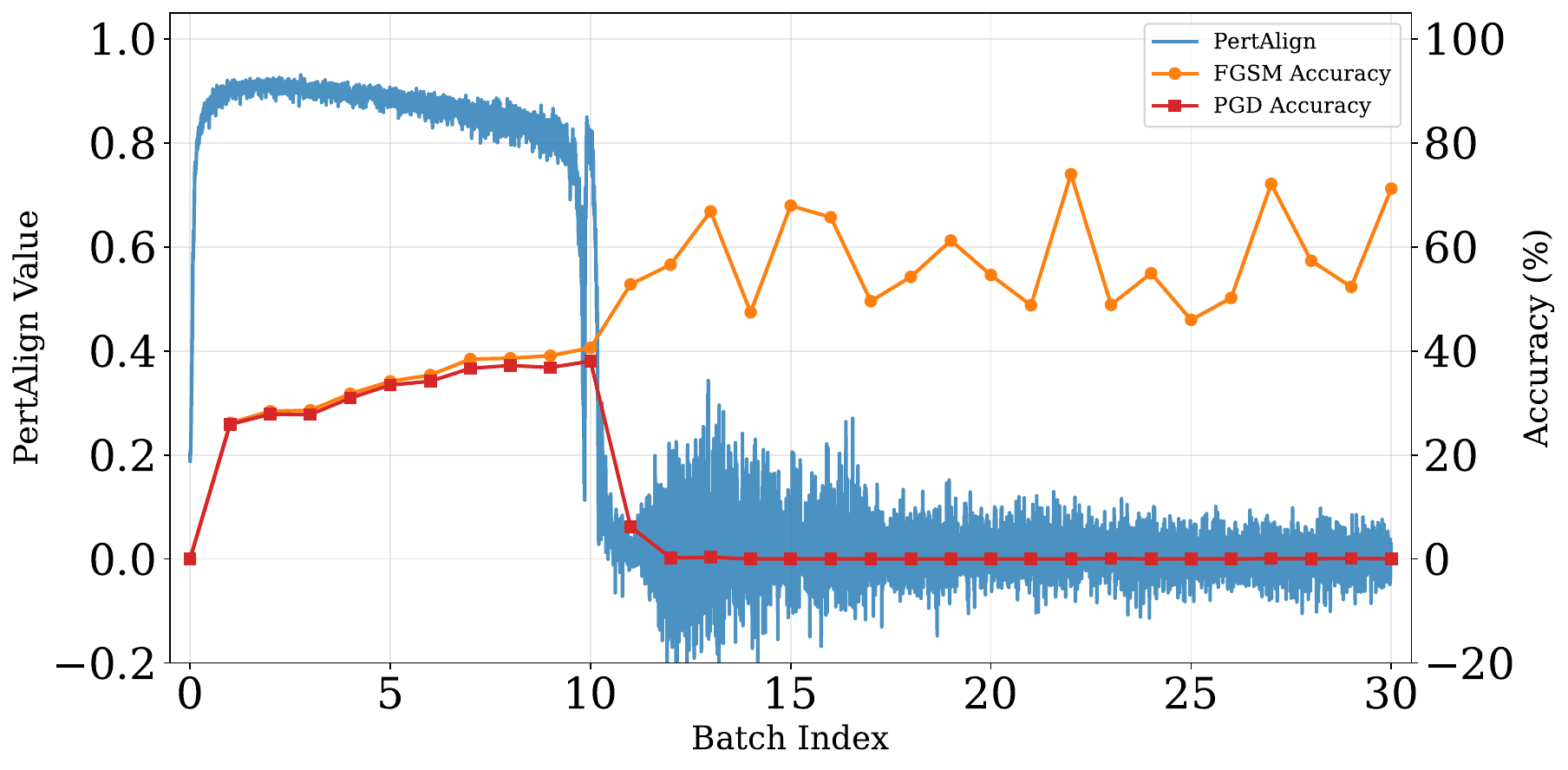}}
    \subfigure[\textsc{CIFAR-100} WideResNet-28  FGSM-RS]{\label{fig:pertalign_fgsmrs_cifar100_wideresnet28}\includegraphics[width=0.48\textwidth]{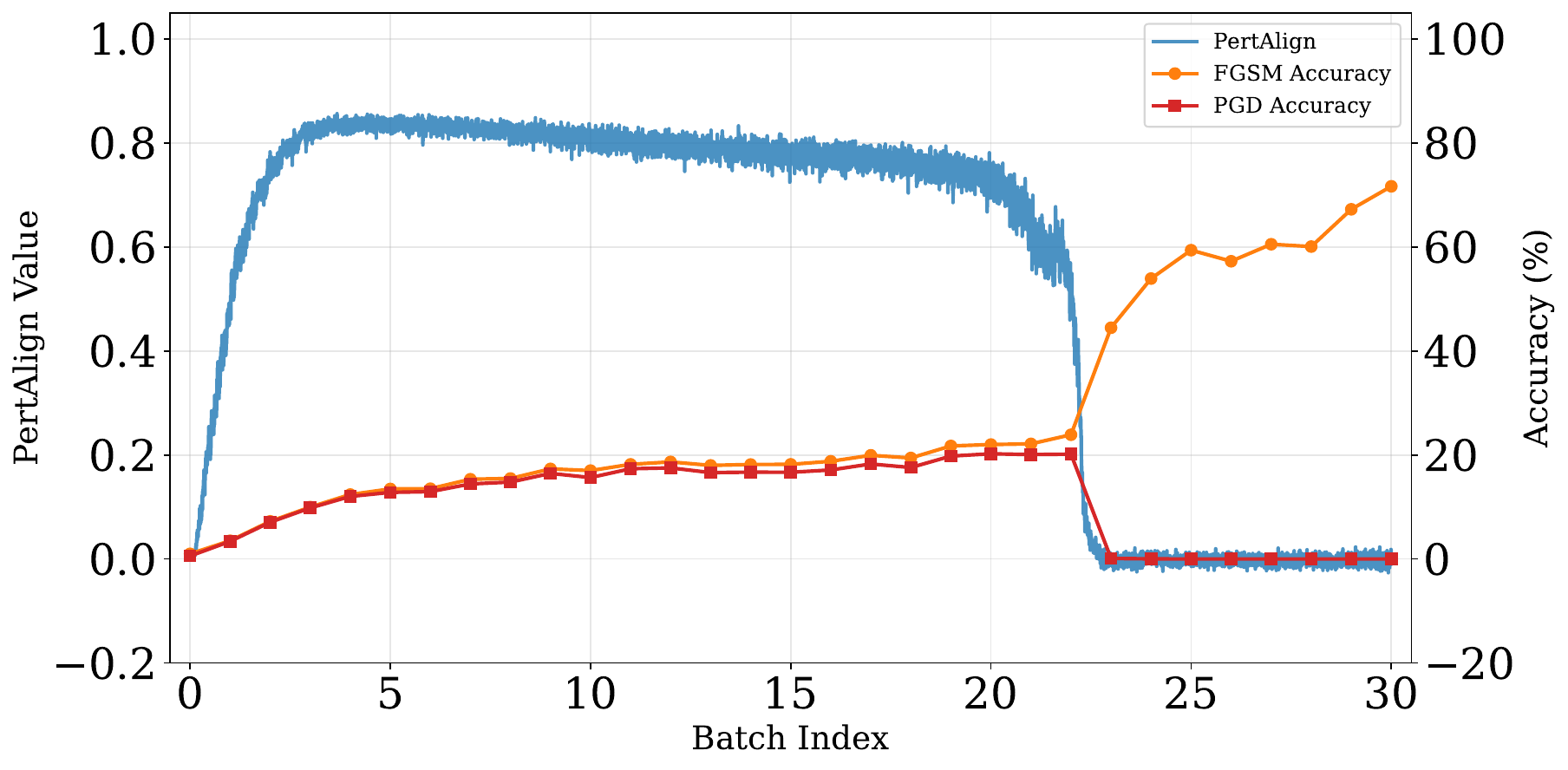}}
    
    \par\vspace{0.5em}
    
    \subfigure[\textsc{PathMNIST} PreActResNet-18 NFGSM]{\label{fig:pertalign_nfgsm_pathmnist_preactnet18}\includegraphics[width=0.48\textwidth]{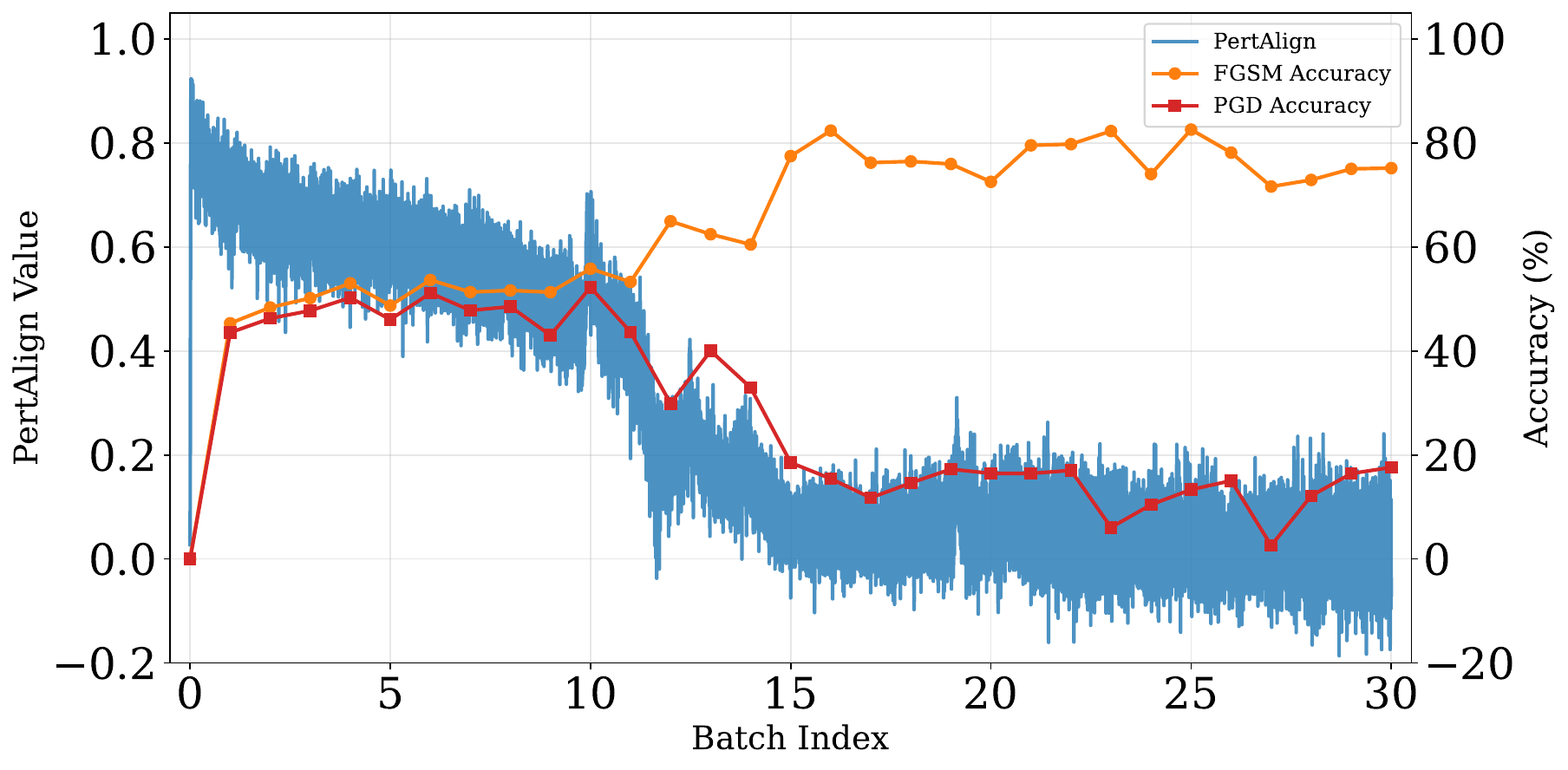}}
    \subfigure[\textsc{PathMNIST} PreActResNet-18 SORA]{\label{fig:pertalign_sora_pathmnist_preactresnet18}\includegraphics[width=0.48\textwidth]{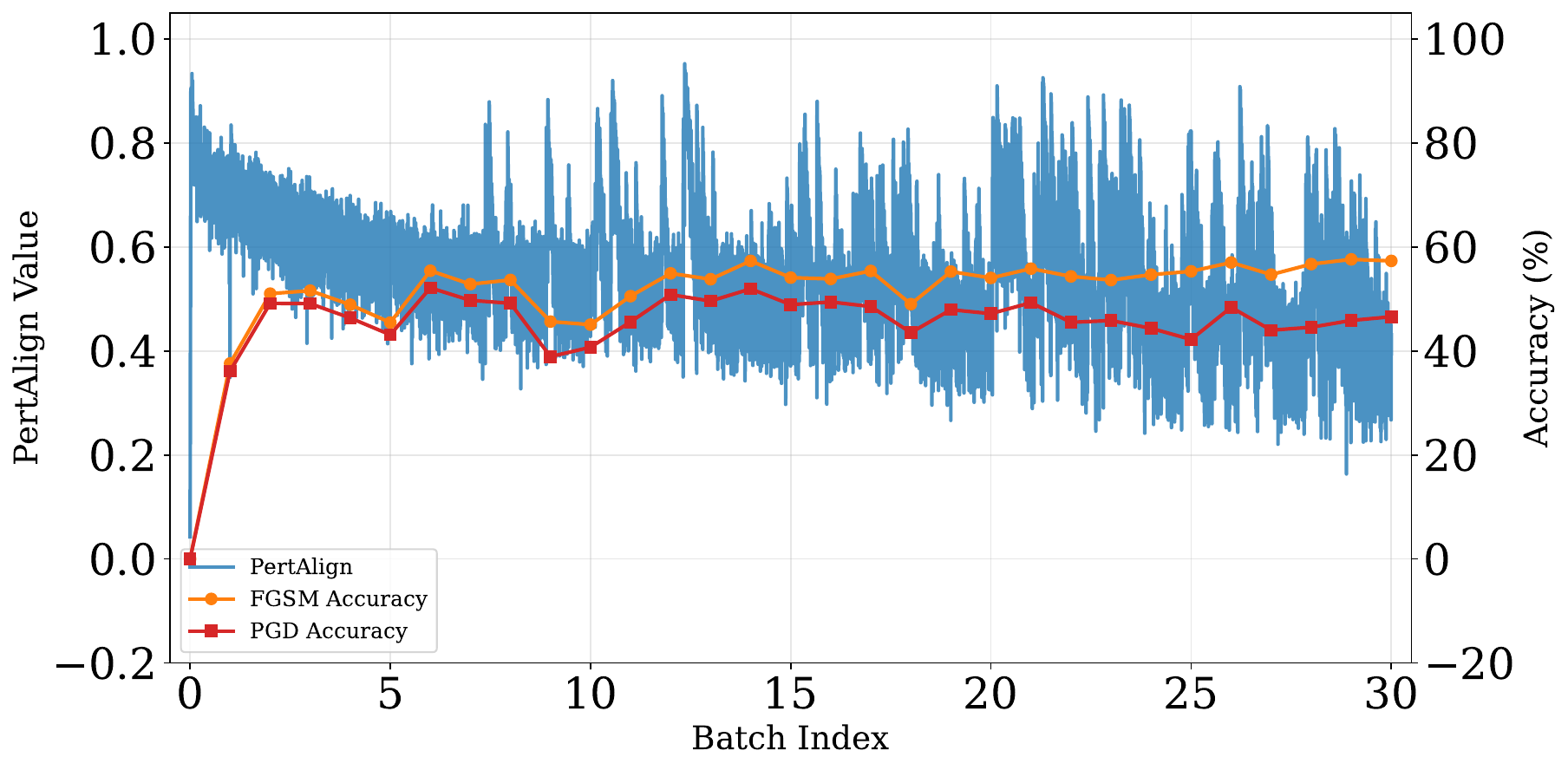}}
    
    \par\vspace{0.5em}
    
    \subfigure[\textsc{CIFAR-10} ResNet-18 GradAlign]{\label{fig:pertalign_gradalign_cifar10_reset18}\includegraphics[width=0.48\textwidth]{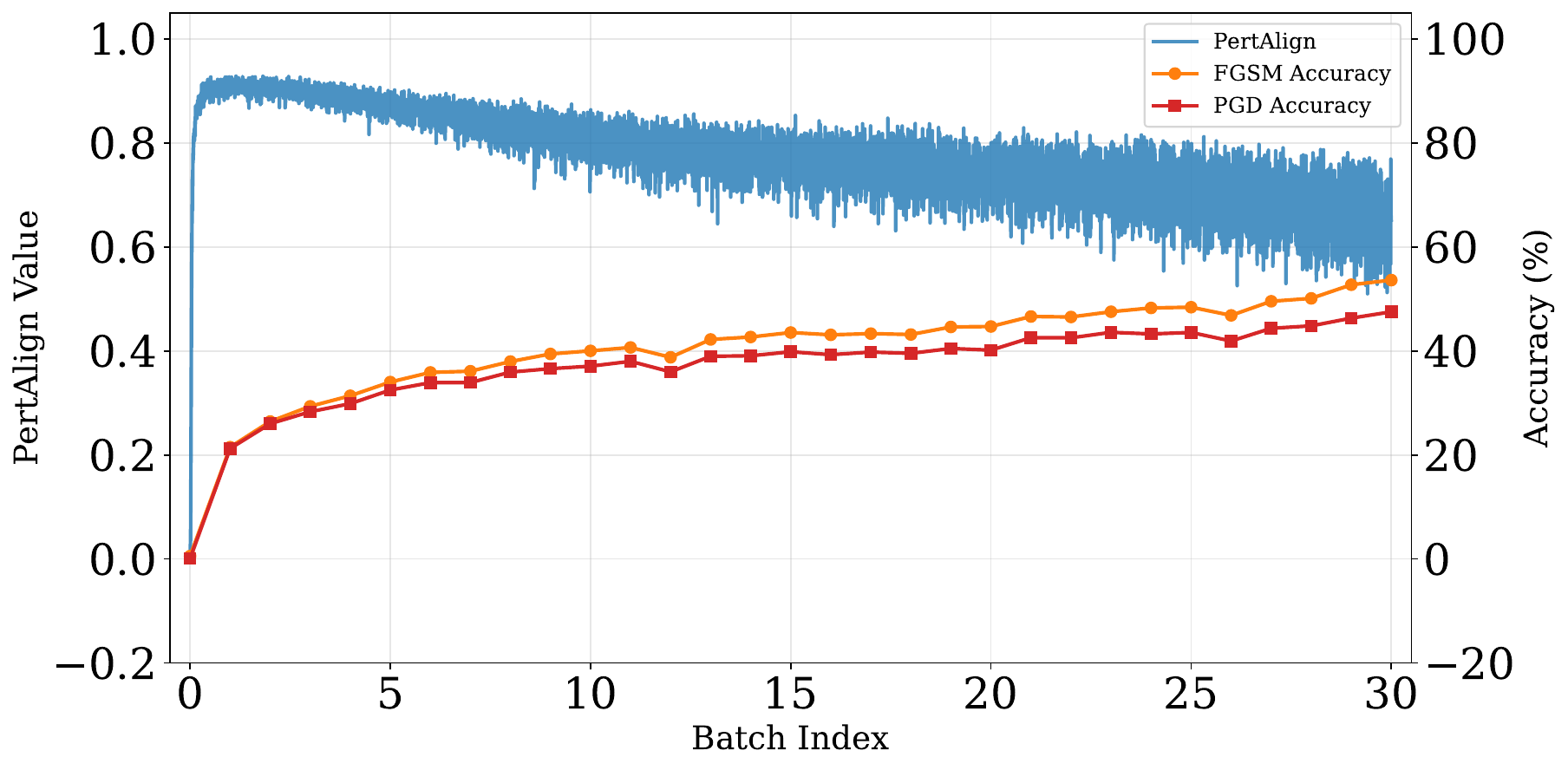}}
    \subfigure[\textsc{CIFAR-100} WideResNet-28 SORA]{\label{fig:pertalign_sora_cifar100_wideresnet28}\includegraphics[width=0.48\textwidth]{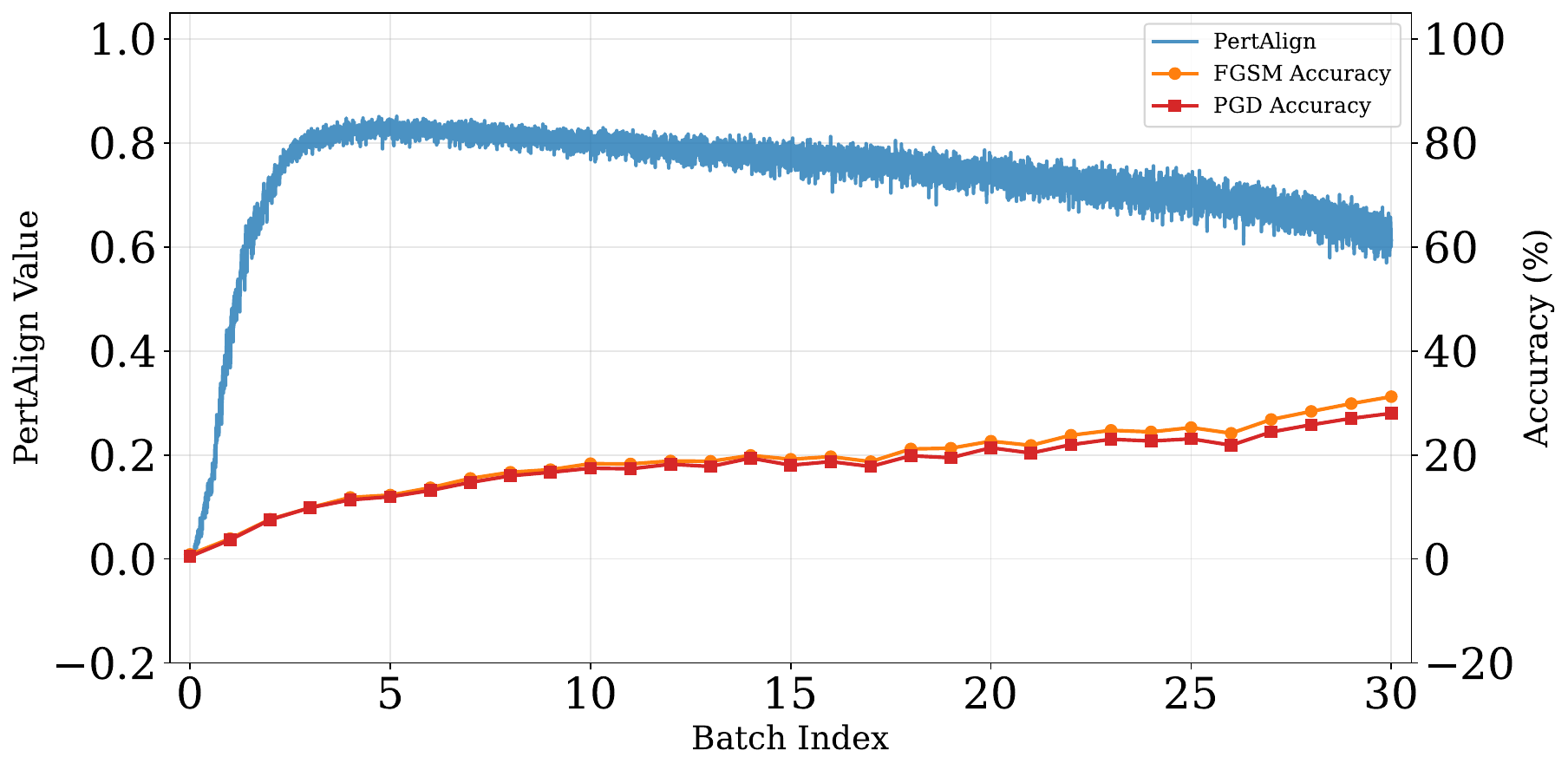}}
    
    \par\vspace{0.5em}
    
    \subfigure[\textsc{PathMNIST} SENet-18 ATAS]{\label{fig:pertalign_atas_pathmnist_senet18}\includegraphics[width=0.48\textwidth]{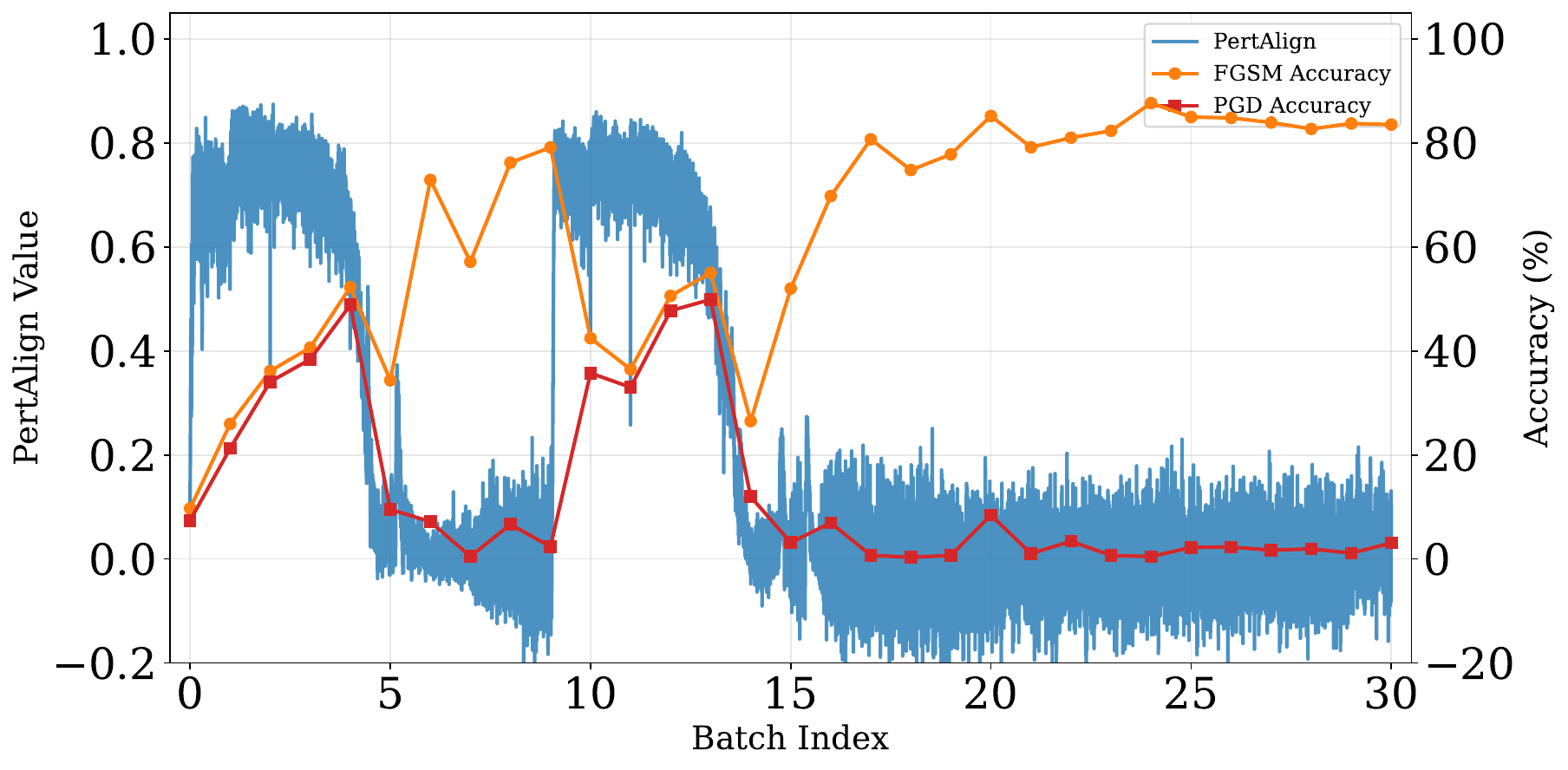}}
    \subfigure[\textsc{TissueMNIST} ResNet-18 SORA]{\label{fig:pertalign_sora_tissuemnist_resnet18}\includegraphics[width=0.48\textwidth]{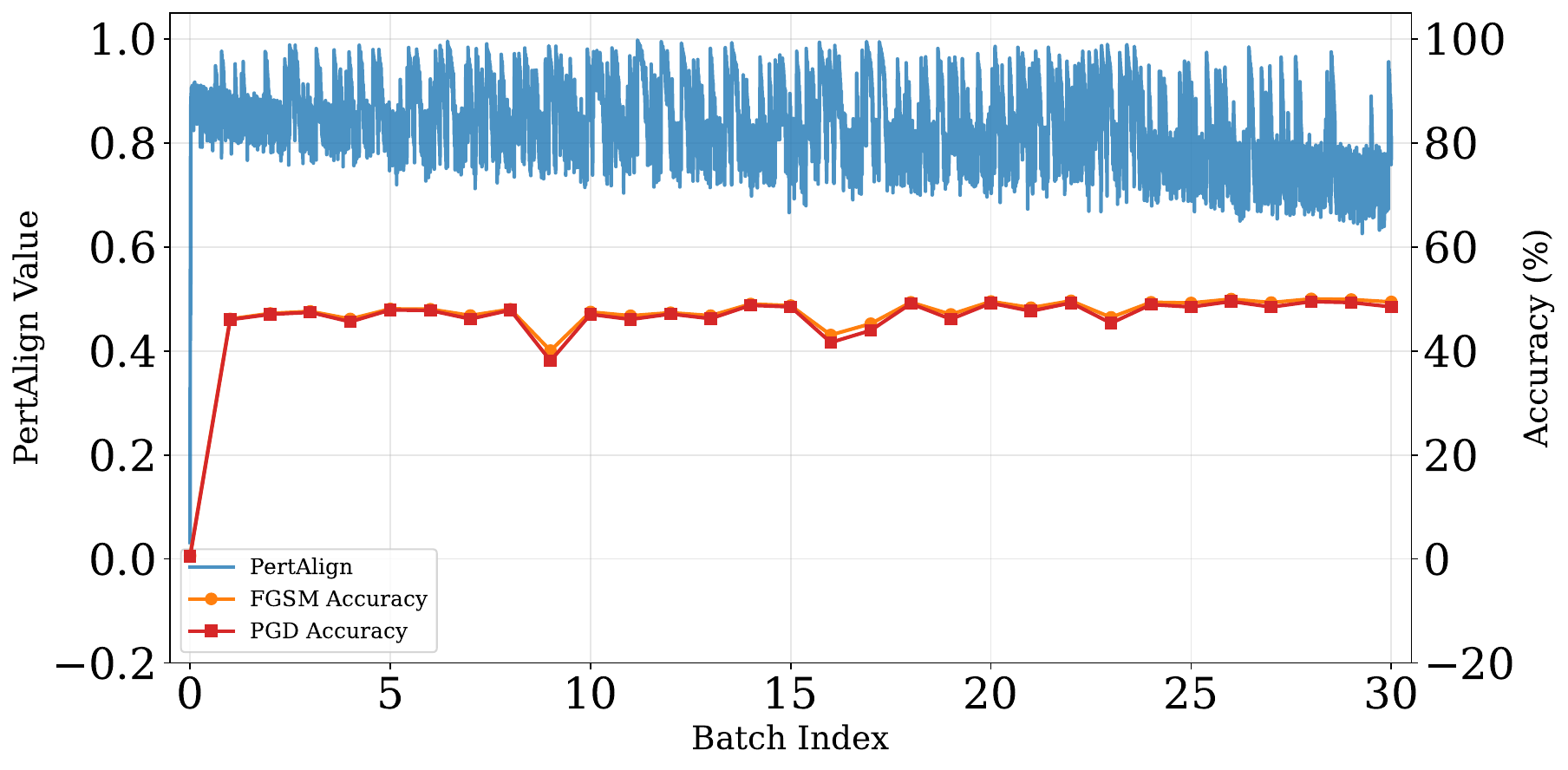}}
    \caption{
    Tracking PertAlign across datasets, models, and FAT methods.
    }
    \label{fig:pertalign_generalization}
\end{figure}

More formally, the relationship between Catastrophic Overfitting and PertAlign can be explained by considering the loss landscape.
During normal adversarial training, moving along the gradient ascent direction should increase the loss.
However, during CO, we observe a decline in loss values after a certain perturbation threshold, indicating that the loss begins to decrease beyond this point.
This behavior \emph{requires} high curvature in the gradient direction; otherwise, the loss would increase monotonically along the gradient path.

From an optimization perspective, multi-step gradient ascent with appropriate step-sizes could mitigate this issue by carefully navigating the loss landscape.
However, in single-step adversarial attacks, as in fast adversarial training, the step-size becomes the critical parameter.
SORA addresses this by using the PertAlign metric to estimate the local curvature and adaptively adjust the step-size, avoiding overshooting that leads to CO.

Overshooting along the gradient direction is problematic not only because it generates weaker or even abnormal adversarial examples, but also because training on these examples can distort the loss surface further, as illustrated in \Figref{fig:mma-sketch}.
This distortion may reduce margins, defined as the distance from inputs to the decision boundary, contrary to the goal of improving robust accuracy~\citep{ding2018mma, sriramanan2020guided}.

Methods like that of \citet{li2020understanding} rely on PGD accuracy to change the attack in AT.
As the PGD accuracy drops, we can conclude that CO has occurred.
This in turn suggests that by going back to using the PGD attack we can make our method robust once more.
While the iterations where PGD is the attack take longer, the most time consuming aspect of this method is the evaluation step using PGD.
Since PertAlign can reliably predict CO without incurring additional costs, the approach by \citet{li2020understanding} can be implemented much more efficiently.

    \clearpage
    \section{Extended Results}\label{sec:extended_results} 

We provide detailed results of our experiments on methods, datasets and architectures from Appendices~\ref{sec:baseline},~\ref{sec:dataset} and~\ref{sec:architecture} in \Appref{subsection:comprehensive}.
The key results have summarized in \Appref{subsec:summary_results}.
The computational costs of our method and baselines have been discussed in \Appref{subsection:computational_costs}.

\subsection{Summary of Results}\label{subsec:summary_results}

\Figref{fig:pathmnist} presents an overview of these results. Each column in \Figref{fig:pathmnist} is presented as a table in \Appref{subsection:comprehensive}.
The relative accuracy reported in \Figref{fig:pathmnist} is obtained via dividing accuracies for each setting by the highest accuracy in that setting.
This allows us to analyze the results across a wide range of settings in a single figure.

Our SORA, manages to attain excellent performance across all dataset–architecture pairs whereas other baselines fail to be competitive on some datasets.
Baselines such as FGSM and FGSM-RS suffer from CO in most settings.
Other baselines such as GradAlign, ZeroGrad, and ATAS also suffer from CO, especially on previously unseen datasets, albeit less frequently.
Methods such as N-FGSM and AAER which are competitive on \textsc{CIFAR-10/100} and well known datasets, struggle on more challenging datasets such as \textsc{PathMNIST}.
Computationally expensive methods like MultiGrad and ELLE fair better in terms of generalization, however they also fail to match the accuracy of our SORA.
Additionally, some of the more computationally demanding methods like ATAS and ELLE face more serious problems which are discussed in \Appref{subsection:computational_costs}.

Therefore, we conclude that our SORA is the \textbf{only} method which can provide optimal performance across all settings, even though we impose additional constrains on it in terms of hyperparameters (see \Appref{subsec:fair}).
This makes SORA the only method that consistently achieves high robust accuracies, at a low cost and across datasets.

Moreover, as can be seen from \Tabref{tab:results}, SORA not only outperforms the efficient baselines (see \Appref{subsection:computational_costs}) in terms of robust accuracy, but in terms of clean accuracy as well.
This demonstrates that even on larger datasets such as \textsc{ImageNet-100} our method generalizes better than other methods despite the fact we don't tune our hyperparameters on this dataset (see \Appref{subsec:fair}).

\subsection{Computational Costs}\label{subsection:computational_costs}
We have summarized some of the computational costs associated with different methods in \Tabref{tab:computational_cost}.
This table is concerned with the number of forward and backward passes required in each method.
Other contributing factors, such as batch size and computational graphs have been overlooked.
However, \twofigref{fig:cost}{fig:cost-cifar10} report the time and memory complexity of different methods empirically.

\begin{table}[ht]
\centering
    \caption{
    Computational cost comparison of AT methods.
    Forward/backward passes are reported per batch per training epoch. 
    $\mathcal{O}(\text{batch})$ and $\mathcal{O}(\text{dataset})$ denote linear scaling with batch size and dataset size respectively. 
    PGD assumes a stateless implementation (standard practice). 
    Free AT trains on fewer epochs to reduce the effects of the higher per-epoch costs.
    }
    \label{tab:computational_cost}
        \begin{tabular}{@{} l c c c c @{}}
        \toprule
        \textbf{Method} & \textbf{Type} & \textbf{\# Forward Passes} & \textbf{\# Backward Passes} & \textbf{Memory Scaling} \\
        \midrule
        Benign  & Standard & \cellcolor{green!40} 1 & \cellcolor{green!40} 1 & \cellcolor{green!40} $\mathcal{O}(\text{batch})$ \\
        \midrule
        FGSM & Single-Step & \cellcolor{green!20} 2 & \cellcolor{green!20} 2 & \cellcolor{green!40} $\mathcal{O}(\text{batch})$ \\
        Free AT
        & Single-Step (Replay) & \cellcolor{red!20} $m$ & \cellcolor{red!20} $m$ & \cellcolor{green!40} $\mathcal{O}(\text{batch})$  \\
        FGSM-RS & Single-Step & \cellcolor{green!20} 2 & \cellcolor{green!20} 2 & \cellcolor{green!40} $\mathcal{O}(\text{batch})$  \\
        GradAlign & Single-Step + Reg. & \cellcolor{yellow!50} 3 & \cellcolor{green!20} 2 & \cellcolor{yellow!50} $\approx \mathcal{O}( 3 \times \text{batch})$ \\
        NuAT   & Single-Step + Reg. & \cellcolor{orange!40} 4 & \cellcolor{green!20} 2 & \cellcolor{yellow!50} $\approx \mathcal{O}( 3 \times \text{batch})$ \\
        ZeroGrad   & Single-Step & \cellcolor{green!20} 2 & \cellcolor{green!20} 2 & \cellcolor{green!40} $\mathcal{O}(\text{batch})$ \\
        MultiGrad & Single-Step (Ensemble) & \cellcolor{green!20} 2 & \cellcolor{green!20} 2 & \cellcolor{yellow!50} $\mathcal{O}(N \times \text{batch})$ \\
        N-FGSM  & Single-Step & \cellcolor{green!20} 2 & \cellcolor{green!20} 2 & \cellcolor{green!40} $\mathcal{O}(\text{batch})$ \\
        ATAS   & Single-Step (Adaptive) & \cellcolor{green!20} 2 & \cellcolor{green!20} 2 & \cellcolor{red!40} $\mathcal{O}(\text{batch} + \text{dataset})$ \\
        AAER & Single-Step + Reg. & \cellcolor{green!20} 2 & \cellcolor{green!20} 2 & \cellcolor{green!40} $\mathcal{O}(\text{batch})$ \\
        ELLE   & Single-Step + Reg. (Ensemble) & \cellcolor{yellow!50} 3 & \cellcolor{green!20} 2 & \cellcolor{orange!40} $\mathcal{O}(3 \times \text{batch})$ \\
        SORA & Single-Step (Adaptive) & \cellcolor{green!20} 2 & \cellcolor{green!20} 2 & \cellcolor{green!40} $\mathcal{O}(\text{batch})$ \\
        \midrule
        PGD-2 & Multi-Step & \cellcolor{yellow!50} 3 & \cellcolor{yellow!50} 3 & \cellcolor{green!40} $\mathcal{O}(\text{batch})$ \\
        PGD-10 & Multi-Step & \cellcolor{red!40} 11 & \cellcolor{red!40} 11 & \cellcolor{green!40} $\mathcal{O}(\text{batch})$ \\
        TRADES & Multi-Step + Reg. & \cellcolor{red!45} 12 & \cellcolor{red!40} 11 & \cellcolor{yellow!50} $\approx \mathcal{O}( 3 \times \text{batch})$ \\
        \bottomrule
        \end{tabular}
\end{table}

ATAS exhibits significantly higher memory consumption compared to other methods, as illustrated by its resource usage on \textsc{CIFAR-10} with PreActResNet-18 in \Figref{fig:acc_vs_memory_cifar}. 
This substantial memory requirement prevented us from running ATAS on \textsc{TinyImageNet} using our NVIDIA GeForce RTX 4090 GPU due to hardware limitations.

\begin{figure*}[h]
    \centering
    \subfigure[Time]{\label{fig:acc_vs_time_cifar}\includegraphics[width=0.48\textwidth]{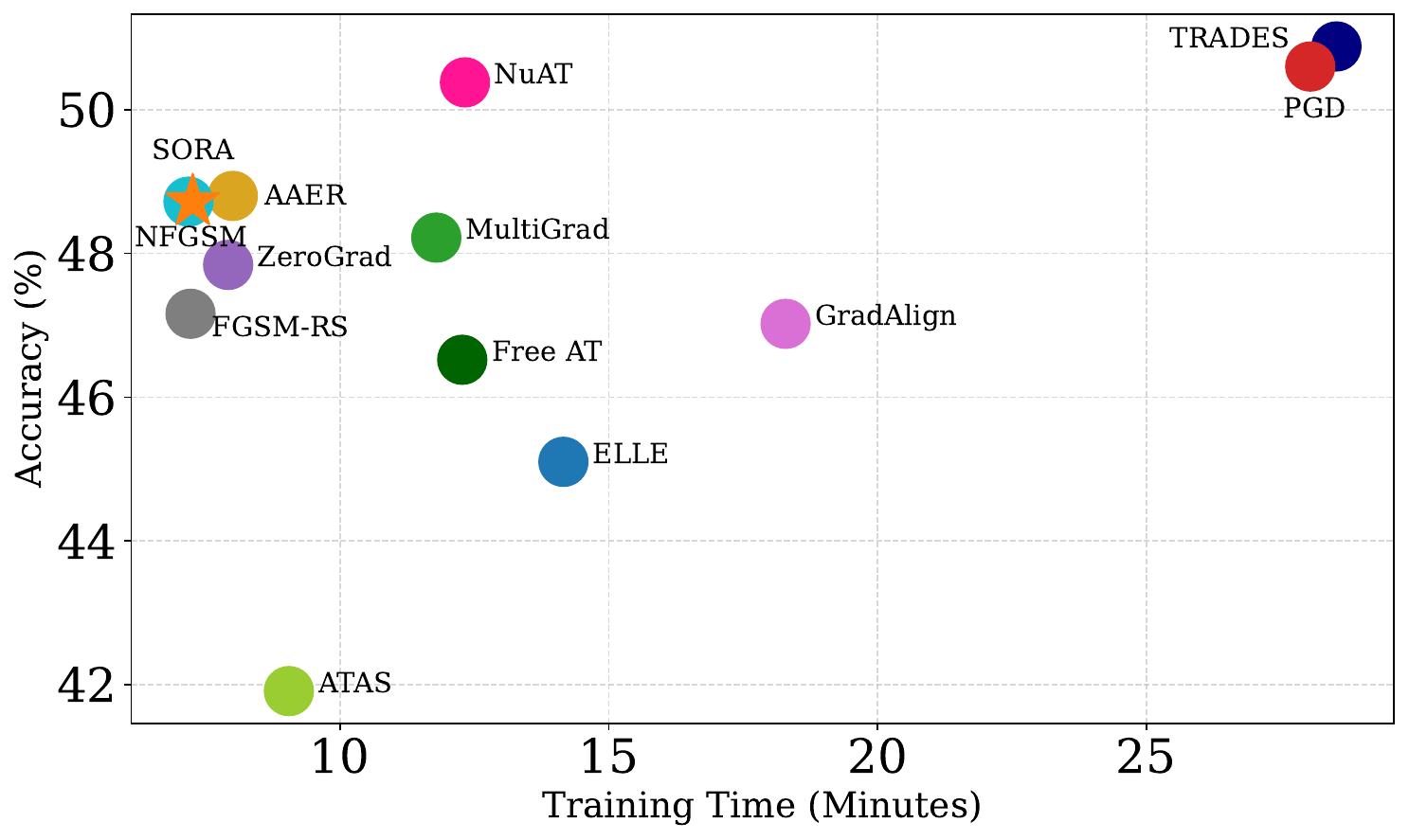}}
    \subfigure[Memory]{\label{fig:acc_vs_memory_cifar}\includegraphics[width=0.48\textwidth]{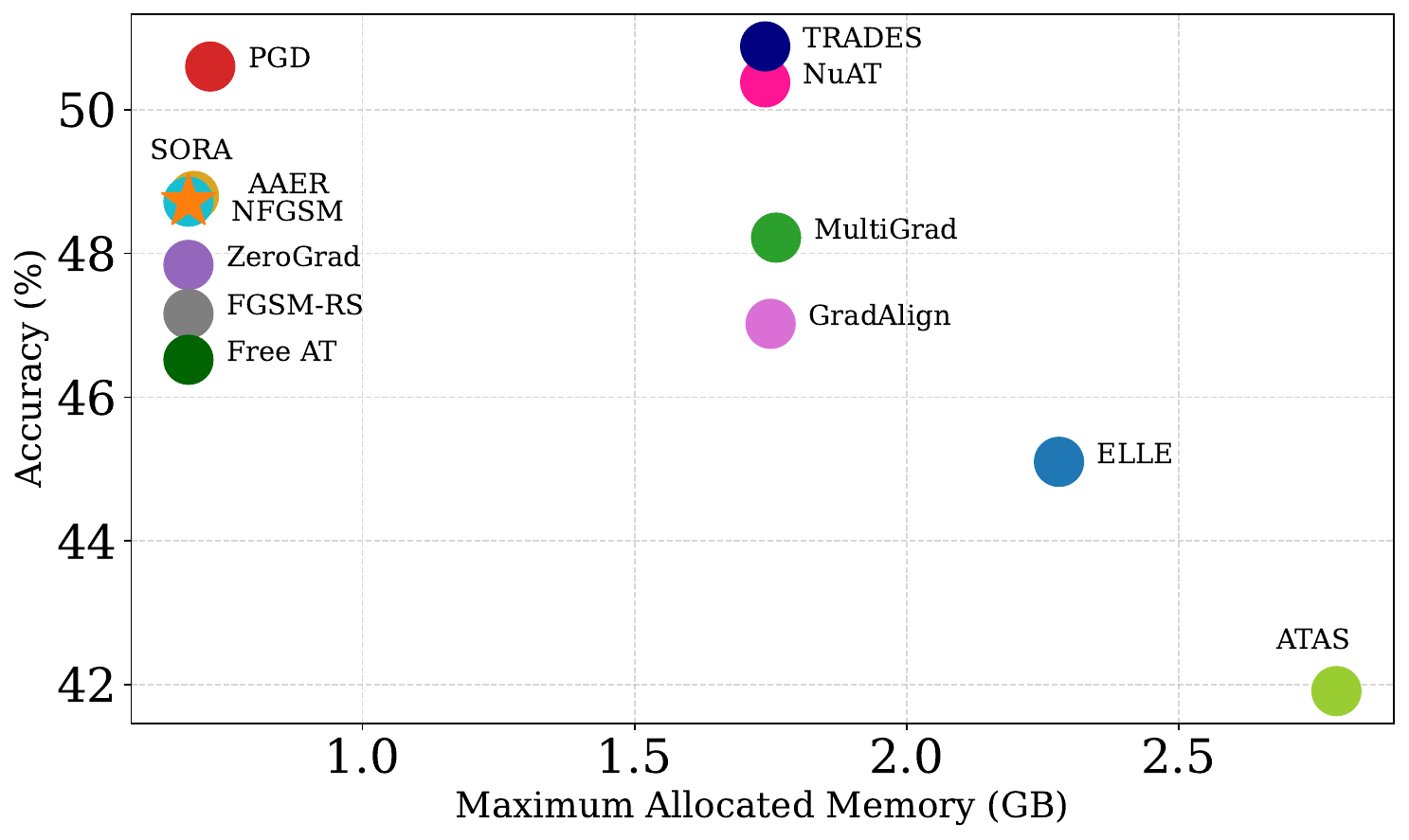}}
    \caption{
        Training time vs. memory usage on \textsc{CIFAR-10} with PreActResNet-18, trained for 30 epochs, measured on an NVIDIA GeForce RTX 4090 GPU. The $\color{PertAlign} \bigstar$ marks SORA.
    }
    \label{fig:cost-cifar10}
\end{figure*}

\citet{rocamora2024efficient} recommend $\lambda$ values in the range $[4000,~ 20000]$ to avoid Catastrophic Overfitting (CO) and achieve optimal performance. 
However, we found that for WideResNet architectures, this parameter range, combined with the typically large values of their regularizer (e.g., $10^{20}$), frequently leads to numerical overflow. 
This instability causes the loss regularizer term to become \texttt{NaN}, ultimately disrupting the training process.

\subsection{Comprehensive Results}\label{subsection:comprehensive}

In this section, we report the experimental results for every dataset–architecture pair used in our study.

\begin{table}[ht!]
	\caption{Results for \textsc{CIFAR-10} on the PreActResNet-18 architecture.}
	\label{tab:cifar10-preactresnet18}
	\centering
	\begin{tabular}{c l cl cl cl c}
		\toprule
		\textbf{Method} && \textbf{Clean} && \textbf{FGSM} && \textbf{PGD-10} && \textbf{AA} \\
		\midrule
		Benign && 92.94 ± 0.33 && 6.72 ± 0.43 && 0.00 ± 0.00 && 0.00 ± 0.00 \\
		\midrule
		FGSM && 83.80 ± 0.18 && 69.65 ± 12.57 && 0.01 ± 0.02 && 0.00 ± 0.00 \\
		Free AT && 77.53 ± 0.09 && 50.34 ± 0.12 && 46.52 ± 0.09 && 42.23 ± 0.13 \\
		FGSM-RS && 82.61 ± 0.03 && 53.32 ± 0.36 && 47.16 ± 0.35 && 42.48 ± 0.23 \\
		GradAlign && 82.40 ± 0.17 && 53.11 ± 0.35 && 47.02 ± 0.48 && 42.49 ± 0.39 \\
		NuAT && 75.09 ± 0.19 && 53.10 ± 0.17 && 50.28 ± 0.21 && 45.90 ± 0.09 \\
		ZeroGrad && 80.92 ± 0.04 && 53.52 ± 0.27 && 47.84 ± 0.38 && 43.05 ± 0.23 \\
		MultiGrad && 80.64 ± 0.22 && 53.76 ± 0.24 && 48.22 ± 0.27 && 43.33 ± 0.25 \\
		N-FGSM && 79.05 ± 0.21 && 53.32 ± 0.22 && 48.73 ± 0.44 && 43.95 ± 0.23 \\
		ATAS && 87.25 ± 0.13 && 51.97 ± 0.13 && 41.91 ± 0.38 && 37.25 ± 0.40 \\
		AAER && 79.38 ± 0.12 && 53.38 ± 0.40 && 48.59 ± 0.28 && 43.96 ± 0.26 \\
		ELLE && 83.74 ± 0.51 && 52.59 ± 0.18 && 45.10 ± 0.35 && 40.63 ± 0.25 \\
		SORA (Ours) && 79.90 ± 0.05 && 53.67 ± 0.31 && 48.64 ± 0.44 && 43.83 ± 0.31 \\
		\midrule
		PGD-2 && 83.78 ± 0.10 && 53.33 ± 0.38 && 46.86 ± 0.42 && 42.30 ± 0.32 \\
		PGD-10 && 79.37 ± 0.12 && 54.38 ± 0.13 && 50.60 ± 0.10 && 46.15 ± 0.19 \\
		TRADES && 78.45 ± 0.42 && 54.16 ± 0.13 && 50.88 ± 0.10 && 46.50 ± 0.10 \\
		\bottomrule
	\end{tabular}
\end{table}

\begin{table}[ht]
	\caption{Results for \textsc{CIFAR-10} on the ResNet-18 architecture.}
	\label{tab:cifar10-resnet18}
	\centering
	\begin{tabular}{c l cl cl cl c}
		\toprule
		\textbf{Method} && \textbf{Clean} && \textbf{FGSM} && \textbf{PGD-10} && \textbf{AA} \\
		\midrule
		Benign && 92.99 ± 0.08 && 8.65 ± 0.51 && 0.00 ± 0.00 && 0.00 ± 0.00 \\
		\midrule
		FGSM && 85.21 ± 0.49 && 85.99 ± 4.21 && 0.06 ± 0.06 && 0.00 ± 0.00 \\
		Free AT && 77.61 ± 0.36 && 49.97 ± 0.24 && 46.34 ± 0.53 && 42.10 ± 0.39 \\
		FGSM-RS && 78.90 ± 5.28 && 66.18 ± 13.80 && 15.95 ± 21.98 && 14.15 ± 20.01 \\
		GradAlign && 82.40 ± 0.22 && 53.70 ± 0.15 && 47.29 ± 0.21 && 42.49 ± 0.05 \\
		NuAT && 84.32 ± 6.57 && 53.88 ± 1.26 && 35.53 ± 10.16 && 18.60 ± 19.21 \\
		ZeroGrad && 80.74 ± 0.15 && 54.11 ± 0.12 && 47.94 ± 0.54 && 42.61 ± 0.94 \\
		MultiGrad && 80.66 ± 0.14 && 53.99 ± 0.05 && 48.45 ± 0.29 && 43.33 ± 0.14 \\
		N-FGSM && 78.91 ± 0.30 && 53.52 ± 0.30 && 48.76 ± 0.43 && 43.96 ± 0.31 \\
		ATAS && 87.23 ± 0.24 && 52.44 ± 0.22 && 42.43 ± 0.18 && 38.06 ± 0.31 \\
		AAER && 33.03 ± 32.56 && 24.71 ± 20.80 && 23.13 ± 18.57 && 43.96 ± 0.32 \\
		ELLE && 83.30 ± 0.13 && 52.21 ± 0.23 && 45.02 ± 0.30 && 40.33 ± 0.13 \\
		SORA (Ours) && 79.68 ± 0.20 && 54.02 ± 0.14 && 48.96 ± 0.35 && 43.97 ± 0.29 \\
		\midrule
		PGD-2 && 83.78 ± 0.11 && 53.48 ± 0.40 && 46.70 ± 0.36 && 42.24 ± 0.40 \\
		PGD-10 && 79.17 ± 0.03 && 54.62 ± 0.14 && 50.99 ± 0.23 && 46.46 ± 0.17 \\
		TRADES && 78.19 ± 0.16 && 53.89 ± 0.09 && 50.66 ± 0.11 && 46.42 ± 0.16 \\
		\bottomrule
	\end{tabular}
\end{table}

\begin{table}[ht]
	\caption{Results for \textsc{CIFAR-10} on the WideResNet-28 architecture.}
	\label{tab:cifar10-wideresnet28}
	\centering
	\begin{tabular}{c l cl cl cl c}
		\toprule
		\textbf{Method} && \textbf{Clean} && \textbf{FGSM} && \textbf{PGD-10} && \textbf{AA} \\
		\midrule
		Benign && 93.99 ± 0.16 && 10.13 ± 0.72 && 0.00 ± 0.00 && 0.00 ± 0.00 \\
		\midrule
		FGSM && 86.07 ± 0.55 && 78.82 ± 2.65 && 0.09 ± 0.03 && 0.00 ± 0.00 \\
		Free AT && 80.62 ± 0.37 && 52.52 ± 0.16 && 48.30 ± 0.12 && 44.28 ± 0.03 \\
		FGSM-RS && 84.12 ± 2.40 && 92.10 ± 3.47 && 0.06 ± 0.07 && 0.00 ± 0.00 \\
		GradAlign && 85.72 ± 0.42 && 55.42 ± 0.25 && 47.92 ± 0.10 && 43.63 ± 0.34 \\
		NuAT && 86.53 ± 1.91 && 56.08 ± 5.33 && 29.37 ± 2.82 && 7.13 ± 3.58 \\
		ZeroGrad && 81.66 ± 4.07 && 87.39 ± 6.19 && 0.01 ± 0.02 && 0.00 ± 0.00 \\
		MultiGrad && 84.92 ± 0.30 && 55.98 ± 0.09 && 48.60 ± 0.21 && 43.85 ± 0.25 \\
		N-FGSM && 82.54 ± 0.35 && 55.26 ± 0.13 && 49.48 ± 0.08 && 45.26 ± 0.19 \\
		ATAS && 89.07 ± 0.12 && 54.58 ± 0.43 && 44.11 ± 0.45 && 39.52 ± 0.34 \\
		AAER && 82.67 ± 0.54 && 55.24 ± 0.10 && 49.42 ± 0.15 && 45.27 ± 0.19 \\
		ELLE && 54.71 ± 32.75 && 41.62 ± 22.43 && 18.61 ± 19.53 && 16.89 ± 17.31 \\
		SORA (Ours) && 83.44 ± 0.27 && 55.74 ± 0.19 && 49.46 ± 0.07 && 44.96 ± 0.33 \\
		\midrule
		PGD-2 && 87.27 ± 0.15 && 55.20 ± 0.21 && 46.97 ± 0.24 && 42.69 ± 0.41 \\
		PGD-10 && 83.34 ± 0.67 && 57.04 ± 0.46 && 51.89 ± 0.56 && 47.17 ± 0.61 \\
		TRADES && 80.43 ± 0.44 && 55.80 ± 0.03 && 52.14 ± 0.30 && 47.77 ± 0.16 \\
		\bottomrule
	\end{tabular}
\end{table}

\begin{table}[ht]
	\caption{Results for \textsc{CIFAR-10} on the SENet-18 architecture.}
	\label{tab:cifar10-senet18}
	\centering
	\begin{tabular}{c l cl cl cl c}
		\toprule
		\textbf{Method} && \textbf{Clean} && \textbf{FGSM} && \textbf{PGD-10} && \textbf{AA} \\
		\midrule
		Benign && 93.03 ± 0.09 && 8.90 ± 0.52 && 0.00 ± 0.00 && 0.00 ± 0.00 \\
		\midrule
		FGSM && 84.13 ± 1.12 && 80.13 ± 19.39 && 0.07 ± 0.06 && 0.00 ± 0.00 \\
		Free AT && 77.43 ± 0.09 && 50.10 ± 0.43 && 46.32 ± 0.68 && 42.09 ± 0.41 \\
		FGSM-RS && 77.91 ± 6.72 && 59.19 ± 8.19 && 31.00 ± 21.87 && 27.49 ± 19.44 \\
		GradAlign && 82.75 ± 0.39 && 53.75 ± 0.09 && 47.26 ± 0.24 && 42.41 ± 0.21 \\
		NuAT && 86.39 ± 2.56 && 56.69 ± 5.08 && 30.74 ± 5.14 && 10.53 ± 7.74 \\
		ZeroGrad && 81.83 ± 3.59 && 75.40 ± 17.00 && 15.96 ± 22.57 && 14.32 ± 20.26 \\
		MultiGrad && 81.30 ± 0.51 && 54.20 ± 0.38 && 48.62 ± 0.19 && 43.61 ± 0.25 \\
		N-FGSM && 79.44 ± 0.39 && 53.63 ± 0.22 && 48.91 ± 0.12 && 43.93 ± 0.23 \\
		ATAS && 87.38 ± 0.07 && 52.37 ± 0.39 && 41.80 ± 0.15 && 37.25 ± 0.16 \\
		AAER && 79.42 ± 0.34 && 53.49 ± 0.30 && 48.81 ± 0.10 && 43.93 ± 0.22 \\
		ELLE && 83.68 ± 0.28 && 52.64 ± 0.13 && 45.44 ± 0.19 && 40.86 ± 0.21 \\
		SORA (Ours) && 80.17 ± 0.46 && 53.80 ± 0.10 && 48.95 ± 0.08 && 43.82 ± 0.27 \\
		\midrule
		PGD-2 && 84.28 ± 0.31 && 53.85 ± 0.05 && 47.28 ± 0.12 && 42.72 ± 0.16 \\
		PGD-10 && 79.62 ± 0.09 && 54.93 ± 0.13 && 51.27 ± 0.18 && 46.42 ± 0.47 \\
		TRADES && 78.66 ± 0.41 && 54.01 ± 0.10 && 50.90 ± 0.27 && 46.54 ± 0.34 \\
		\bottomrule
	\end{tabular}
\end{table}


\begin{table}[ht]
	\caption{Results for \textsc{CIFAR-100} on the PreActResNet-18 architecture.}
	\label{tab:cifar100-preactresnet18}
	\centering
	\begin{tabular}{c l cl cl cl c}
		\toprule
		\textbf{Method} && \textbf{Clean} && \textbf{FGSM} && \textbf{PGD-10} && \textbf{AA} \\
		\midrule
		Benign && 73.89 ± 0.15 && 4.81 ± 0.53 && 0.01 ± 0.01 && 0.00 ± 0.00 \\
		\midrule
		FGSM && 55.97 ± 2.25 && 63.93 ± 18.78 && 0.16 ± 0.13 && 0.00 ± 0.00 \\
		Free AT && 50.28 ± 0.40 && 25.86 ± 0.49 && 23.79 ± 0.49 && 19.54 ± 0.31 \\
		FGSM-RS && 52.70 ± 0.52 && 55.02 ± 2.45 && 0.00 ± 0.00 && 0.00 ± 0.00 \\
		GradAlign && 57.50 ± 0.85 && 28.62 ± 0.11 && 25.20 ± 0.30 && 20.67 ± 0.18 \\
		NuAT && 49.66 ± 0.41 && 29.23 ± 0.29 && 27.81 ± 0.33 && 22.13 ± 0.33 \\
		ZeroGrad && 56.28 ± 0.47 && 28.37 ± 0.04 && 25.32 ± 0.13 && 20.81 ± 0.28 \\
		MultiGrad && 55.53 ± 0.55 && 28.98 ± 0.27 && 26.09 ± 0.21 && 21.45 ± 0.05 \\
		N-FGSM && 53.43 ± 0.40 && 28.82 ± 0.12 && 26.25 ± 0.18 && 21.75 ± 0.18 \\
		ATAS && 63.57 ± 0.17 && 27.34 ± 0.30 && 21.10 ± 0.19 && 17.17 ± 0.16 \\
		AAER && 53.72 ± 0.54 && 28.79 ± 0.20 && 26.22 ± 0.26 && 21.72 ± 0.17 \\
		ELLE && 57.94 ± 0.47 && 27.74 ± 0.46 && 23.79 ± 0.28 && 19.70 ± 0.05 \\
		SORA (Ours) && 54.74 ± 0.36 && 28.89 ± 0.21 && 26.12 ± 0.07 && 21.56 ± 0.24 \\
		\midrule
		PGD-2 && 58.65 ± 0.68 && 28.60 ± 0.14 && 24.96 ± 0.19 && 20.72 ± 0.15 \\
		PGD-10 && 53.57 ± 0.69 && 29.66 ± 0.11 && 27.64 ± 0.06 && 22.88 ± 0.06 \\
		TRADES && 55.42 ± 0.58 && 29.86 ± 0.25 && 28.05 ± 0.23 && 22.77 ± 0.08 \\
		\bottomrule
	\end{tabular}
\end{table}

\begin{table}[ht]
	\caption{Results for \textsc{CIFAR-100} on the ResNet-18 architecture.}
	\label{tab:cifar100-resnet18}
	\centering
	\begin{tabular}{c l cl cl cl c}
		\toprule
		\textbf{Method} && \textbf{Clean} && \textbf{FGSM} && \textbf{PGD-10} && \textbf{AA} \\
		\midrule
		Benign && 74.08 ± 0.15 && 5.72 ± 0.62 && 0.01 ± 0.01 && 0.00 ± 0.00 \\
		\midrule
		FGSM && 53.51 ± 3.06 && 47.79 ± 14.19 && 0.01 ± 0.00 && 0.00 ± 0.00 \\
		Free AT && 49.97 ± 0.22 && 25.81 ± 0.16 && 23.71 ± 0.06 && 19.47 ± 0.14 \\
		FGSM-RS && 50.51 ± 5.11 && 66.70 ± 9.54 && 0.00 ± 0.00 && 0.00 ± 0.00 \\
		GradAlign && 57.24 ± 0.41 && 28.85 ± 0.01 && 25.37 ± 0.13 && 20.97 ± 0.17 \\
		NuAT && 49.28 ± 0.43 && 29.28 ± 0.35 && 27.84 ± 0.28 && 22.19 ± 0.08 \\
		ZeroGrad && 56.22 ± 0.51 && 28.60 ± 0.17 && 25.49 ± 0.15 && 20.96 ± 0.14 \\
		MultiGrad && 55.48 ± 0.49 && 29.17 ± 0.13 && 26.17 ± 0.03 && 21.61 ± 0.12 \\
		N-FGSM && 53.26 ± 0.66 && 28.59 ± 0.10 && 26.21 ± 0.05 && 21.81 ± 0.06 \\
		ATAS && 63.39 ± 0.16 && 27.71 ± 0.64 && 21.68 ± 0.31 && 17.50 ± 0.26 \\
		AAER && 18.23 ± 24.37 && 10.16 ± 12.96 && 9.38 ± 11.86 && 14.90 ± 9.83 \\
		ELLE && 57.53 ± 0.41 && 27.68 ± 0.25 && 23.94 ± 0.48 && 19.69 ± 0.39 \\
		SORA (Ours) && 54.50 ± 0.55 && 29.09 ± 0.20 && 26.24 ± 0.15 && 21.70 ± 0.18 \\
		\midrule
		PGD-2 && 58.41 ± 0.45 && 28.39 ± 0.31 && 25.01 ± 0.31 && 20.65 ± 0.29 \\
		PGD-10 && 53.39 ± 0.40 && 29.68 ± 0.29 && 27.61 ± 0.41 && 22.93 ± 0.19 \\
		TRADES && 55.00 ± 0.59 && 30.13 ± 0.19 && 28.16 ± 0.23 && 22.79 ± 0.07 \\
		\bottomrule
	\end{tabular}
\end{table}

\begin{table}[ht]
	\caption{Results for \textsc{CIFAR-100} on the WideResNet-28 architecture.}
	\label{tab:cifar100-wideresnet28}
	\centering
	\begin{tabular}{c l cl cl cl c}
		\toprule
		\textbf{Method} && \textbf{Clean} && \textbf{FGSM} && \textbf{PGD-10} && \textbf{AA} \\
		\midrule
		Benign && 76.78 ± 0.26 && 7.24 ± 0.15 && 0.01 ± 0.01 && 0.00 ± 0.00 \\
		\midrule
		FGSM && 57.83 ± 1.12 && 49.82 ± 6.14 && 0.02 ± 0.01 && 0.00 ± 0.00 \\
		Free AT && 53.23 ± 0.40 && 27.55 ± 0.23 && 25.35 ± 0.29 && 20.99 ± 0.11 \\
		FGSM-RS && 48.19 ± 1.36 && 70.83 ± 0.84 && 0.00 ± 0.00 && 0.00 ± 0.00 \\
		GradAlign && 60.43 ± 1.46 && 41.21 ± 8.98 && 9.96 ± 12.31 && 7.74 ± 10.83 \\
		NuAT && 20.99 ± 28.27 && 15.57 ± 20.60 && 7.71 ± 9.49 && 2.54 ± 2.17 \\
		ZeroGrad && 50.57 ± 5.73 && 69.10 ± 4.99 && 1.07 ± 1.50 && 0.09 ± 0.13 \\
		MultiGrad && 60.07 ± 0.26 && 31.02 ± 0.03 && 27.37 ± 0.08 && 23.08 ± 0.06 \\
		N-FGSM && 57.28 ± 0.19 && 30.50 ± 0.34 && 27.82 ± 0.38 && 23.74 ± 0.29 \\
		ATAS && 68.18 ± 0.88 && 40.85 ± 3.88 && 16.07 ± 2.36 && 7.21 ± 3.27 \\
		AAER && 57.10 ± 0.42 && 30.54 ± 0.32 && 27.92 ± 0.31 && 23.75 ± 0.29 \\
		ELLE && 2.17 ± 1.66 && 2.19 ± 1.69 && 0.67 ± 0.47 && 0.67 ± 0.47 \\
		SORA (Ours) && 58.58 ± 0.30 && 31.23 ± 0.12 && 27.93 ± 0.18 && 23.77 ± 0.14 \\
		\midrule
		PGD-2 && 62.96 ± 0.25 && 30.62 ± 0.15 && 26.23 ± 0.16 && 22.29 ± 0.06 \\
		PGD-10 && 57.97 ± 0.24 && 32.26 ± 0.20 && 29.57 ± 0.32 && 25.20 ± 0.28 \\
		TRADES && 57.76 ± 0.02 && 32.17 ± 0.31 && 30.15 ± 0.35 && 24.85 ± 0.24 \\
		\bottomrule
	\end{tabular}
\end{table}

\begin{table}[ht]
	\caption{Results for \textsc{CIFAR-100} on the SENet-18 architecture.}
	\label{tab:cifar100-senet18}
	\centering
	\begin{tabular}{c l cl cl cl c}
		\toprule
		\textbf{Method} && \textbf{Clean} && \textbf{FGSM} && \textbf{PGD-10} && \textbf{AA} \\
		\midrule
		Benign && 73.66 ± 0.42 && 6.46 ± 0.38 && 0.09 ± 0.03 && 0.00 ± 0.00 \\
		\midrule
		FGSM && 53.47 ± 2.71 && 40.70 ± 26.03 && 0.14 ± 0.08 && 0.00 ± 0.00 \\
		Free AT && 48.56 ± 0.18 && 25.53 ± 0.34 && 23.65 ± 0.24 && 19.30 ± 0.04\\
		FGSM-RS && 49.62 ± 9.38 && 44.65 ± 21.85 && 7.87 ± 11.11 && 5.71 ± 8.08 \\
		GradAlign && 56.44 ± 0.51 && 28.34 ± 0.30 && 25.06 ± 0.14 && 20.71 ± 0.21 \\
		NuAT && 47.58 ± 0.24 && 28.41 ± 0.43 && 26.96 ± 0.25 && 21.44 ± 0.17 \\
		ZeroGrad && 55.24 ± 0.39 && 28.21 ± 0.34 && 25.11 ± 0.11 && 20.64 ± 0.05 \\
		MultiGrad && 54.36 ± 0.24 && 28.79 ± 0.34 && 25.86 ± 0.22 && 21.35 ± 0.07 \\
		N-FGSM && 52.30 ± 0.24 && 28.27 ± 0.39 && 25.85 ± 0.16 && 21.55 ± 0.02 \\
		ATAS && 62.48 ± 0.23 && 27.12 ± 0.13 && 21.31 ± 0.21 && 17.70 ± 0.21 \\
		AAER && 52.23 ± 0.16 && 28.50 ± 0.53 && 26.12 ± 0.38 && 21.59 ± 0.17 \\
		ELLE && 57.04 ± 0.13 && 27.35 ± 0.15 && 23.70 ± 0.18 && 19.41 ± 0.06 \\
		SORA (Ours) && 53.61 ± 0.29 && 28.95 ± 0.40 && 26.44 ± 0.24 && 21.76 ± 0.11 \\
		\midrule
		PGD-2 && 57.69 ± 0.32 && 28.43 ± 0.36 && 24.84 ± 0.28 && 20.55 ± 0.14 \\
		PGD-10 && 52.52 ± 0.39 && 29.11 ± 0.27 && 26.94 ± 0.13 && 22.40 ± 0.04 \\
		TRADES && 54.11 ± 0.37 && 29.41 ± 0.13 && 27.49 ± 0.06 && 22.43 ± 0.08 \\
		\bottomrule
	\end{tabular}
\end{table}


\begin{table}[ht]
	\caption{Results for \textsc{TinyImageNet} on the PreActResNet-18 architecture.}
	\label{tab:tinyimagenet-preactresnet18}
	\centering
	\begin{tabular}{c l cl cl cl c}
		\toprule
		\textbf{Method} && \textbf{Clean} && \textbf{FGSM} && \textbf{PGD-10} && \textbf{AA} \\
		\midrule
		Benign && 61.78 ± 0.23 && 0.56 ± 0.06 && 0.01 ± 0.00 && 0.00 ± 0.00 \\
		\midrule
		FGSM && 36.61 ± 0.10 && 88.21 ± 3.54 && 0.01 ± 0.01 && 0.00 ± 0.00 \\
		FGSM-RS && 47.46 ± 0.11 && 22.31 ± 0.11 && 19.67 ± 0.22 && 14.95 ± 0.39 \\
		GradAlign && 47.61 ± 0.05 && 22.03 ± 0.32 && 19.46 ± 0.21 && 14.79 ± 0.21 \\
		NuAT && 34.80 ± 3.91 && 24.72 ± 6.82 && 16.22 ± 4.09 && 10.32 ± 5.97 \\
		ZeroGrad && 46.08 ± 0.13 && 21.65 ± 0.14 && 19.31 ± 0.07 && 14.86 ± 0.12 \\
		MultiGrad && 45.42 ± 0.11 && 22.27 ± 0.15 && 19.98 ± 0.16 && 15.26 ± 0.07 \\
		N-FGSM && 44.66 ± 0.20 && 22.14 ± 0.28 && 20.18 ± 0.18 && 15.77 ± 0.10 \\
		AAER && 44.72 ± 0.25 && 22.02 ± 0.12 && 19.94 ± 0.25 && 15.71 ± 0.13 \\
		ELLE && 48.77 ± 0.04 && 21.61 ± 0.09 && 18.70 ± 0.04 && 14.29 ± 0.32 \\
		SORA (Ours) && 45.30 ± 0.19 && 22.58 ± 0.16 && 20.07 ± 0.22 && 15.38 ± 0.12 \\
		\midrule
		PGD-2 && 48.64 ± 0.24 && 21.72 ± 0.24 && 19.09 ± 0.38 && 14.79 ± 0.13 \\
		\bottomrule
	\end{tabular}
\end{table}

\begin{table}[ht]
	\caption{Results for \textsc{TinyImageNet} on the ResNet-18 architecture.}
	\label{tab:tinyimagenet-resnet18}
	\centering
	\begin{tabular}{c l cl cl cl c}
		\toprule
		\textbf{Method} && \textbf{Clean} && \textbf{FGSM} && \textbf{PGD-10} && \textbf{AA} \\
		\midrule
		Benign && 63.00 ± 0.56 && 1.05 ± 0.06 && 0.03 ± 0.02 && 0.00 ± 0.00 \\
		\midrule
		FGSM && 39.10 ± 0.60 && 38.16 ± 14.90 && 5.75 ± 8.13 && 4.30 ± 6.08 \\
		FGSM-RS && 45.92 ± 4.16 && 17.59 ± 7.37 && 0.05 ± 0.07 && 0.00 ± 0.00 \\
		GradAlign && 48.19 ± 0.23 && 22.30 ± 0.12 && 19.47 ± 0.19 && 15.12 ± 0.09 \\
		NuAT && 45.35 ± 3.38 && 27.09 ± 3.66 && 11.22 ± 1.08 && 2.39 ± 0.25 \\
		ZeroGrad && 46.66 ± 0.13 && 22.17 ± 0.15 && 19.44 ± 0.13 && 15.09 ± 0.21 \\
		MultiGrad && 46.24 ± 0.23 && 22.69 ± 0.15 && 20.26 ± 0.18 && 15.78 ± 0.08 \\
		N-FGSM && 45.19 ± 0.08 && 22.36 ± 0.20 && 20.16 ± 0.29 && 15.68 ± 0.25 \\
		AAER && 45.11 ± 0.39 && 22.45 ± 0.20 && 20.29 ± 0.18 && 15.85 ± 0.13 \\
		ELLE && 49.43 ± 0.17 && 21.67 ± 0.25 && 18.56 ± 0.18 && 14.40 ± 0.08 \\
		SORA (Ours) && 45.83 ± 0.05 && 22.86 ± 0.14 && 20.45 ± 0.08 && 15.72 ± 0.12 \\
		\midrule
		PGD-2 && 49.40 ± 0.16 && 22.34 ± 0.36 && 19.43 ± 0.11 && 15.13 ± 0.13 \\
		\bottomrule
	\end{tabular}
\end{table}

\begin{table}[ht]
	\caption{Results for \textsc{TinyImageNet} on the WideResNet-28 architecture.}
	\label{tab:tinyimagenet-wideresnet28}
	\centering
	\begin{tabular}{c l cl cl cl c}
		\toprule
		\textbf{Method} && \textbf{Clean} && \textbf{FGSM} && \textbf{PGD-10} && \textbf{AA} \\
		\midrule
		FGSM && 33.80 ± 5.73 && 64.24 ± 14.68 && 0.01 ± 0.01 && 0.00 ± 0.00 \\
		FGSM-RS && 29.96 ± 7.26 && 64.75 ± 5.48 && 0.00 ± 0.00 && 0.00 ± 0.00 \\
		ZeroGrad && 43.13 ± 9.18 && 59.34 ± 11.06 && 0.00 ± 0.00 && 0.00 ± 0.00 \\
		N-FGSM && 48.47 ± 0.09 && 23.83 ± 0.21 && 21.21 ± 0.18 && 16.58 ± 0.18 \\
		AAER && 42.23 ± 7.57 && 21.31 ± 3.20 && 19.43 ± 2.64 && 16.56 ± 0.14 \\
		SORA (Ours) && 54.59 ± 0.38 && 22.84 ± 0.43 && 18.73 ± 0.65 && 14.31 ± 0.58 \\
		\bottomrule
	\end{tabular}
\end{table}

\begin{table}[ht]
	\caption{Results for \textsc{TinyImageNet} on the SENet-18 architecture.}
	\label{tab:tinyimagenet-senet18}
	\centering
	\begin{tabular}{c l cl cl cl c}
		\toprule
		\textbf{Method} && \textbf{Clean} && \textbf{FGSM} && \textbf{PGD-10} && \textbf{AA} \\
		\midrule
		Benign && 61.78 ± 0.32 && 0.92 ± 0.08 && 0.06 ± 0.01 && 0.00 ± 0.00 \\
		\midrule
		FGSM && 42.82 ± 0.69 && 20.81 ± 0.46 && 18.94 ± 0.30 && 14.21 ± 0.40 \\
		FGSM-RS && 46.41 ± 0.22 && 21.57 ± 0.12 && 19.02 ± 0.04 && 14.37 ± 0.09 \\
		GradAlign && 47.00 ± 0.14 && 21.91 ± 0.25 && 19.51 ± 0.34 && 14.55 ± 0.31 \\
		NuAT && 26.88 ± 0.37 && 16.91 ± 0.06 && 16.44 ± 0.14 && 11.95 ± 0.05 \\
		ZeroGrad && 45.24 ± 0.45 && 21.36 ± 0.10 && 19.19 ± 0.01 && 14.55 ± 0.25 \\
		MultiGrad && 44.87 ± 0.39 && 22.04 ± 0.22 && 19.86 ± 0.16 && 15.03 ± 0.17 \\
		N-FGSM && 43.40 ± 0.57 && 21.88 ± 0.12 && 20.17 ± 0.19 && 15.25 ± 0.16 \\
		AAER && 43.59 ± 0.17 && 21.79 ± 0.21 && 20.04 ± 0.36 && 15.26 ± 0.15 \\
		ELLE && 48.42 ± 0.14 && 21.26 ± 0.12 && 18.41 ± 0.13 && 13.87 ± 0.19 \\
		SORA (Ours) && 44.11 ± 0.16 && 22.21 ± 0.35 && 20.39 ± 0.36 && 15.20 ± 0.36 \\
		\midrule
		PGD-2 && 48.07 ± 0.16 && 21.73 ± 0.22 && 19.21 ± 0.23 && 14.66 ± 0.17 \\
		\bottomrule
	\end{tabular}
\end{table}


\begin{table}[ht]
	\caption{Results for \textsc{PathMNIST} on the PreActResNet-18 architecture.}
	\label{tab:pathmnist-preactresnet18}
	\centering
	\begin{tabular}{c l cl cl cl c}
		\toprule
		\textbf{Method} && \textbf{Clean} && \textbf{FGSM} && \textbf{PGD-10} && \textbf{AA} \\
		\midrule
		Benign && 88.20 ± 0.98 && 12.07 ± 2.59 && 0.84 ± 0.43 && 0.47 ± 0.20 \\
		\midrule
		FGSM && 36.54 ± 1.82 && 80.81 ± 2.21 && 1.94 ± 1.73 && 0.42 ± 0.16 \\
		Free AT && 70.35 ± 7.52 && 49.06 ± 8.21 && 39.06 ± 8.21 && 31.62 ± 5.77 \\
		FGSM-RS && 60.34 ± 8.01 && 86.96 ± 4.37 && 1.72 ± 0.95 && 1.03 ± 0.68 \\
		GradAlign && 45.71 ± 13.74 && 38.91 ± 25.41 && 0.78 ± 1.10 && 0.77 ± 1.09 \\
		NuAT && 82.73 ± 2.96 && 66.51 ± 4.54 && 25.69 ± 3.93 && 10.43 ± 3.60 \\
		ZeroGrad && 67.22 ± 14.19 && 73.85 ± 13.12 && 5.69 ± 3.45 && 1.21 ± 0.52 \\
		MultiGrad && 82.11 ± 3.26 && 56.90 ± 6.03 && 37.29 ± 4.88 && 24.94 ± 5.40 \\
		N-FGSM && 74.86 ± 6.14 && 55.39 ± 14.43 && 18.84 ± 4.14 && 1.90 ± 0.77 \\
		ATAS && 62.50 ± 10.88 && 71.48 ± 5.83 && 1.71 ± 0.58 && 0.94 ± 0.71 \\
		AAER && 81.43 ± 3.36 && 63.18 ± 21.36 && 23.75 ± 6.20 && 1.89 ± 0.76 \\
		ELLE && 79.80 ± 6.67 && 50.16 ± 6.19 && 40.52 ± 2.38 && 34.60 ± 1.05 \\
		SORA (Ours) && 84.88 ± 0.99 && 55.59 ± 1.85 && 43.56 ± 1.66 && 35.54 ± 1.55 \\
		\midrule
		PGD-2 && 83.49 ± 0.34 && 55.27 ± 0.96 && 47.50 ± 0.51 && 40.84 ± 0.23 \\
		PGD-10 && 77.62 ± 1.45 && 57.63 ± 1.22 && 54.47 ± 0.89 && 50.70 ± 0.53 \\
		TRADES && 75.25 ± 0.73 && 54.75 ± 0.86 && 51.94 ± 0.88 && 48.28 ± 0.74 \\
		\bottomrule
	\end{tabular}
\end{table}

\begin{table}[ht]
	\caption{Results for \textsc{PathMNIST} on the ResNet-18 architecture.}
	\label{tab:pathmnist-resnet18}
	\centering
	\begin{tabular}{c l cl cl cl c}
		\toprule
		\textbf{Method} && \textbf{Clean} && \textbf{FGSM} && \textbf{PGD-10} && \textbf{AA} \\
		\midrule
		Benign && 87.25 ± 2.33 && 14.12 ± 0.21 && 1.09 ± 0.89 && 0.86 ± 0.82 \\
		\midrule
		FGSM && 34.03 ± 4.50 && 78.02 ± 2.80 && 0.62 ± 0.13 && 0.30 ± 0.20 \\
		Free AT && 68.47 ± 10.46 && 45.38 ± 10.93 && 35.24 ± 8.11 && 25.52 ± 3.63\\
		FGSM-RS && 30.70 ± 7.65 && 87.13 ± 2.48 && 2.05 ± 0.21 && 1.50 ± 0.41 \\
		GradAlign && 32.62 ± 4.94 && 38.07 ± 32.26 && 2.60 ± 2.11 && 0.72 ± 1.02 \\
		NuAT && 88.45 ± 1.62 && 53.96 ± 1.64 && 17.73 ± 2.60 && 4.40 ± 3.24 \\
		ZeroGrad && 55.37 ± 17.08 && 60.24 ± 21.54 && 2.66 ± 0.50 && 1.28 ± 0.72 \\
		MultiGrad && 81.88 ± 1.15 && 57.89 ± 2.96 && 38.19 ± 4.10 && 29.15 ± 4.48 \\
		N-FGSM && 71.90 ± 12.82 && 86.18 ± 5.95 && 14.74 ± 10.11 && 9.15 ± 6.46 \\
		ATAS && 66.49 ± 4.94 && 70.95 ± 8.06 && 1.56 ± 0.75 && 0.68 ± 0.34 \\
		AAER && 39.19 ± 29.07 && 43.60 ± 35.30 && 17.23 ± 1.98 && 9.16 ± 6.48 \\
		ELLE && 79.65 ± 6.09 && 50.49 ± 4.63 && 41.75 ± 1.38 && 36.22 ± 0.49 \\
		SORA (Ours) && 84.02 ± 0.36 && 57.73 ± 1.14 && 45.40 ± 0.90 && 38.94 ± 0.74 \\
		\midrule
		PGD-2 && 82.79 ± 0.31 && 55.18 ± 0.86 && 47.46 ± 0.58 && 42.36 ± 0.89 \\
		PGD-10 && 76.63 ± 1.66 && 55.51 ± 2.21 && 52.13 ± 1.99 && 48.41 ± 1.31 \\
		TRADES && 18.64 ± 0.00 && 18.64 ± 0.00 && 18.64 ± 0.00 && 18.64 ± 0.00 \\
		\bottomrule
	\end{tabular}
\end{table}

\begin{table}[ht]
	\caption{Results for \textsc{PathMNIST} on the WideResNet-28 architecture.}
	\label{tab:pathmnist-wideresnet28}
	\centering
	\begin{tabular}{c l cl cl cl c}
		\toprule
		\textbf{Method} && \textbf{Clean} && \textbf{FGSM} && \textbf{PGD-10} && \textbf{AA} \\
		\midrule
		Benign && 89.45 ± 0.19 && 15.31 ± 0.41 && 2.14 ± 0.69 && 1.55 ± 0.40 \\
		\midrule
		FGSM && 56.44 ± 5.94 && 85.24 ± 4.98 && 2.28 ± 0.24 && 1.33 ± 0.42 \\
		Free AT && 78.32 ± 1.05 && 52.42 ± 2.05 && 37.79 ± 1.07 && 30.72 ± 1.11 \\
		FGSM-RS && 32.01 ± 8.07 && 86.52 ± 2.67 && 2.04 ± 0.33 && 1.28 ± 0.69 \\
		GradAlign && 39.37 ± 11.13 && 65.33 ± 8.27 && 1.98 ± 0.54 && 1.69 ± 0.69 \\
		NuAT && 87.79 ± 1.85 && 55.68 ± 9.04 && 21.46 ± 1.23 && 13.01 ± 2.38 \\
		ZeroGrad && 51.96 ± 19.24 && 85.71 ± 3.27 && 3.24 ± 1.02 && 2.20 ± 0.18 \\
		MultiGrad && 85.79 ± 1.22 && 59.65 ± 0.94 && 40.97 ± 2.24 && 29.48 ± 2.50 \\
		N-FGSM && 84.01 ± 3.06 && 90.80 ± 5.19 && 16.34 ± 8.90 && 8.77 ± 6.44 \\
		ATAS && 55.53 ± 9.27 && 79.78 ± 5.60 && 2.34 ± 0.00 && 2.16 ± 0.24 \\
		AAER && 84.20 ± 1.75 && 87.43 ± 3.00 && 22.79 ± 4.16 && 12.92 ± 3.77 \\
		ELLE && 18.64 ± 0.00 && 18.64 ± 0.00 && 18.64 ± 0.00 && 18.64 ± 0.00 \\
		SORA (Ours) && 86.53 ± 1.35 && 53.89 ± 1.26 && 39.34 ± 2.87 && 31.73 ± 0.28 \\
		\midrule
		PGD-2 && 84.10 ± 0.64 && 54.88 ± 0.92 && 46.22 ± 0.68 && 38.68 ± 0.96 \\
		PGD-10 && 79.16 ± 1.11 && 58.68 ± 0.34 && 53.74 ± 0.41 && 49.47 ± 0.65 \\
		TRADES && 56.96 ± 27.11 && 43.20 ± 17.37 && 41.15 ± 15.92 && 38.00 ± 13.76 \\
		\bottomrule
	\end{tabular}
\end{table}

\begin{table}[ht]
	\caption{Results for \textsc{PathMNIST} on the SENet-18 architecture.}
	\label{tab:pathmnist-senet18}
	\centering
	\begin{tabular}{c l cl cl cl c}
		\toprule
		\textbf{Method} && \textbf{Clean} && \textbf{FGSM} && \textbf{PGD-10} && \textbf{AA} \\
		\midrule
		Benign && 85.75 ± 2.97 && 11.77 ± 1.45 && 0.28 ± 0.06 && 0.13 ± 0.09 \\
		\midrule
		FGSM && 55.05 ± 14.42 && 82.99 ± 2.73 && 3.88 ± 2.40 && 0.42 ± 0.12 \\
		Free AT && 69.68 ± 3.46 && 47.64 ± 3.61 && 36.06 ± 2.70 && 29.12 ± 0.23 \\
		FGSM-RS && 54.88 ± 14.63 && 89.31 ± 3.15 && 2.92 ± 2.47 && 1.03 ± 0.92 \\
		GradAlign && 41.02 ± 14.03 && 65.05 ± 4.76 && 0.86 ± 0.87 && 0.68 ± 0.66 \\
		NuAT && 86.37 ± 1.12 && 55.33 ± 7.60 && 24.19 ± 1.34 && 13.40 ± 2.96 \\
		ZeroGrad && 60.06 ± 9.01 && 83.25 ± 5.65 && 1.49 ± 0.75 && 0.88 ± 0.68 \\
		MultiGrad && 73.65 ± 1.64 && 49.05 ± 3.58 && 31.91 ± 0.74 && 21.31 ± 1.49 \\
		N-FGSM && 79.33 ± 1.78 && 77.91 ± 15.64 && 23.99 ± 15.29 && 18.98 ± 16.42 \\
		ATAS && 38.22 ± 16.22 && 81.83 ± 1.26 && 7.09 ± 7.50 && 2.59 ± 3.35 \\
		AAER && 75.07 ± 1.54 && 69.57 ± 13.60 && 27.79 ± 14.85 && 22.74 ± 15.58 \\
		ELLE && 76.45 ± 5.45 && 49.62 ± 2.31 && 44.10 ± 1.36 && 39.88 ± 1.23 \\
        SORA (Ours) && 85.32 ± 2.22 && 56.35 ± 3.13 && 41.21 ± 5.30 && 34.76 ± 6.06 \\
		\midrule
		PGD-2 && 82.37 ± 0.39 && 55.18 ± 0.88 && 48.43 ± 0.64 && 43.50 ± 1.07 \\
		PGD-10 && 76.23 ± 1.67 && 53.72 ± 2.39 && 50.50 ± 2.13 && 46.98 ± 2.15 \\
		TRADES && 18.64 ± 0.00 && 18.64 ± 0.00 && 18.64 ± 0.00 && 18.64 ± 0.00 \\
		\bottomrule
	\end{tabular}
\end{table}


\begin{table}[ht]
	\caption{Results for \textsc{TissueMNIST} on the PreActResNet-18 architecture.}
	\label{tab:tissuemnist-preactresnet18}
	\centering
	\begin{tabular}{c l cl cl cl c}
		\toprule
		\textbf{Method} && \textbf{Clean} && \textbf{FGSM} && \textbf{PGD-10} && \textbf{AA} \\
		\midrule
		Benign && 70.54 ± 0.03 && 14.27 ± 1.73 && 0.00 ± 0.00 && 0.00 ± 0.00 \\
		\midrule
		FGSM && 51.80 ± 2.75 && 81.39 ± 12.14 && 0.28 ± 0.11 && 0.04 ± 0.02 \\
		FGSM-RS && 52.32 ± 4.00 && 80.82 ± 3.75 && 0.35 ± 0.16 && 0.12 ± 0.05 \\
		GradAlign && 55.41 ± 2.72 && 70.13 ± 15.40 && 16.23 ± 22.48 && 15.51 ± 21.91 \\
		NuAT && 65.93 ± 0.30 && 27.30 ± 2.37 && 15.26 ± 1.14 && 9.91 ± 1.06 \\
		ZeroGrad && 57.36 ± 1.65 && 69.01 ± 2.22 && 0.10 ± 0.01 && 0.01 ± 0.00 \\
		MultiGrad && 61.79 ± 2.54 && 42.72 ± 4.99 && 16.92 ± 22.61 && 15.92 ± 22.30 \\
		N-FGSM && 57.47 ± 0.10 && 49.75 ± 0.02 && 49.25 ± 0.06 && 48.15 ± 0.07 \\
		ATAS && 31.95 ± 10.54 && 56.36 ± 6.04 && 0.08 ± 0.06 && 0.00 ± 0.00 \\
		AAER && 57.47 ± 0.10 && 49.75 ± 0.02 && 49.25 ± 0.06 && 48.15 ± 0.07 \\
		ELLE && 60.57 ± 0.48 && 47.61 ± 0.67 && 46.07 ± 0.92 && 42.67 ± 1.55 \\
		SORA (Ours) && 58.86 ± 0.56 && 49.43 ± 0.37 && 48.55 ± 0.47 && 47.06 ± 0.58 \\
		\midrule
		PGD-2 && 59.40 ± 0.23 && 49.15 ± 0.21 && 48.25 ± 0.26 && 46.75 ± 0.32 \\
		PGD-10 && 57.75 ± 0.09 && 50.13 ± 0.04 && 49.62 ± 0.09 && 48.56 ± 0.18 \\
		TRADES && 53.43 ± 0.00 && 45.30 ± 0.00 && 44.85 ± 0.00 && 42.71 ± 0.00 \\
		\bottomrule
	\end{tabular}
\end{table}

\begin{table}[ht]
	\caption{Results for \textsc{TissueMNIST} on the ResNet-18 architecture.}
	\label{tab:tissuemnist-resnet18}
	\centering
	\begin{tabular}{c l cl cl cl c}
		\toprule
		\textbf{Method} && \textbf{Clean} && \textbf{FGSM} && \textbf{PGD-10} && \textbf{AA} \\
		\midrule
		Benign && 70.52 ± 0.08 && 13.98 ± 0.84 && 0.00 ± 0.00 && 0.00 ± 0.00 \\
		\midrule
		FGSM && 50.54 ± 0.16 && 83.73 ± 5.25 && 0.26 ± 0.14 && 0.08 ± 0.06 \\
		FGSM-RS && 53.06 ± 1.58 && 85.55 ± 3.23 && 0.24 ± 0.22 && 0.06 ± 0.08 \\
		GradAlign && 48.24 ± 4.25 && 58.93 ± 14.77 && 5.90 ± 7.58 && 2.78 ± 3.67 \\
		NuAT && 65.95 ± 0.75 && 29.15 ± 1.96 && 16.57 ± 2.02 && 9.94 ± 0.96 \\
		ZeroGrad && 55.70 ± 1.05 && 67.73 ± 2.22 && 0.14 ± 0.07 && 0.01 ± 0.01 \\
		MultiGrad && 58.28 ± 0.30 && 50.34 ± 0.21 && 49.22 ± 0.06 && 47.61 ± 0.26 \\
		N-FGSM && 57.38 ± 0.18 && 49.97 ± 0.03 && 49.51 ± 0.05 && 48.40 ± 0.08 \\
		ATAS && 46.32 ± 18.21 && 42.74 ± 4.30 && 29.77 ± 21.03 && 28.40 ± 20.09 \\
		AAER && 57.38 ± 0.18 && 49.97 ± 0.03 && 49.51 ± 0.05 && 48.39 ± 0.08 \\
		ELLE && 59.18 ± 0.24 && 49.20 ± 0.11 && 48.43 ± 0.16 && 46.92 ± 0.21 \\
		SORA (Ours) && 58.71 ± 0.26 && 49.67 ± 0.19 && 48.81 ± 0.17 && 47.36 ± 0.17 \\
		\midrule
		PGD-2 && 59.27 ± 0.20 && 49.34 ± 0.10 && 48.55 ± 0.17 && 47.15 ± 0.21 \\
		PGD-10 && 57.62 ± 0.18 && 50.23 ± 0.08 && 49.76 ± 0.09 && 48.66 ± 0.16 \\
		TRADES && 53.70 ± 0.00 && 44.89 ± 0.00 && 44.45 ± 0.00 && 42.13 ± 0.00 \\
		\bottomrule
	\end{tabular}
\end{table}

\begin{table}[ht]
	\caption{Results for \textsc{TissueMNIST} on the WideResNet-28 architecture.}
	\label{tab:tissuemnist-wideresnet28}
	\centering
	\begin{tabular}{c l cl cl cl c}
		\toprule
		\textbf{Method} && \textbf{Clean} && \textbf{FGSM} && \textbf{PGD-10} && \textbf{AA} \\
		\midrule
		FGSM && 46.96 ± 5.88 && 80.31 ± 15.49 && 0.07 ± 0.06 && 0.01 ± 0.01 \\
		FGSM-RS && 43.32 ± 7.65 && 68.66 ± 15.07 && 0.11 ± 0.15 && 0.01 ± 0.01 \\
		N-FGSM && 57.69 ± 0.00 && 50.40 ± 0.00 && 49.99 ± 0.00 && 47.76 ± 0.00 \\
		AAER && 57.83 ± 0.00 && 50.39 ± 0.00 && 50.00 ± 0.00 && 48.95 ± 0.00 \\
		SORA (Ours) && 60.68 ± 0.87 && 49.33 ± 0.53 && 47.91 ± 0.99 && 45.92 ± 1.36 \\
		\bottomrule
	\end{tabular}
\end{table}

\begin{table}[ht]
	\caption{Results for \textsc{TissueMNIST} on the SENet-18 architecture.}
	\label{tab:tissuemnist-senet18}
	\centering
	\begin{tabular}{c l cl cl cl c}
		\toprule
		\textbf{Method} && \textbf{Clean} && \textbf{FGSM} && \textbf{PGD-10} && \textbf{AA} \\
		\midrule
		Benign && 70.57 ± 0.10 && 13.50 ± 3.61 && 0.00 ± 0.00 && 0.00 ± 0.00 \\
		\midrule
		FGSM && 47.26 ± 1.51 && 84.39 ± 2.67 && 0.26 ± 0.10 && 0.04 ± 0.03 \\
		FGSM-RS && 56.10 ± 0.98 && 86.98 ± 3.53 && 0.44 ± 0.21 && 0.06 ± 0.09 \\
		GradAlign && 52.50 ± 2.08 && 84.80 ± 5.94 && 0.02 ± 0.03 && 0.01 ± 0.01 \\
		NuAT && 65.64 ± 0.02 && 26.34 ± 1.95 && 15.99 ± 0.93 && 10.52 ± 0.35 \\
		ZeroGrad && 56.23 ± 0.91 && 70.19 ± 1.10 && 0.06 ± 0.03 && 0.00 ± 0.00 \\
		MultiGrad && 59.25 ± 1.65 && 48.02 ± 2.99 && 32.96 ± 22.97 && 31.89 ± 22.50 \\
		N-FGSM && 57.21 ± 0.05 && 50.01 ± 0.20 && 49.58 ± 0.20 && 48.56 ± 0.23 \\
		ATAS && 48.39 ± 11.60 && 51.68 ± 5.24 && 18.15 ± 17.66 && 15.02 ± 17.16 \\
		AAER && 57.21 ± 0.05 && 50.01 ± 0.20 && 49.58 ± 0.20 && 48.56 ± 0.23 \\
		ELLE && 60.09 ± 0.47 && 48.21 ± 0.51 && 47.22 ± 0.69 && 44.36 ± 1.32 \\
		SORA (Ours) && 57.90 ± 0.37 && 50.18 ± 0.30 && 49.48 ± 0.30 && 48.21 ± 0.44 \\
		\midrule
		PGD-2 && 59.11 ± 0.13 && 49.55 ± 0.13 && 48.86 ± 0.13 && 47.43 ± 0.22 \\
		PGD-10 && 57.19 ± 0.30 && 50.15 ± 0.12 && 49.73 ± 0.11 && 48.76 ± 0.11 \\
		\bottomrule
	\end{tabular}
\end{table}

    \clearpage
    \section{Fast Adversarial Training on Vision Transformers}\label{sec:fat-on-vit}

Most prior work in fast adversarial training focuses on the ResNet family (PreActResNet, ResNet, WideResNet); therefore, we included these architectures for consistency and added SENet to increase architectural diversity.

We also explored Vision Transformers (ViTs) to assess the applicability of FAT to modern architectures. 
Since ViTs are data-hungry and typically perform best on large-scale datasets such as \textsc{ImageNet} \citep{shao2022adversarial}, our limited computational resources prevented experiments on \textsc{ImageNet}. 
Instead, we trained ViTs on \textsc{CIFAR-10}, \textsc{CIFAR-100}, and \textsc{PathMNIST} using SORA and multiple baselines.

Although Vision Transformers are primarily used in large-scale pre-training settings, we study the ViT-Small~\citep{dosovitskiy2020image} variant on small datasets, following \citet{lin2024layer}. 
In this setting, ViTs are generally outperformed by convolutional architectures such as those in the ResNet family. 
Consequently, the goal of these experiments is not to provide a comprehensive comparative benchmark, but rather to investigate whether FAT can be effectively applied to ViTs.

Consistent with prior findings \citep{lin2024layer} suggesting that ViTs are less susceptible to Catastrophic Overfitting (CO), we observed PertAlign values close to 1 during ViT training, indicating minimal loss distortion. 
In contrast, PertAlign values for ResNet architectures may drop to around $0.7$. 
These observations support the hypothesis that ViTs are less prone to CO, although further investigation is needed.

Overall, the results in \threetabref{tab:cifar10-vits}{tab:cifar100-vits}{tab:pathmnist-vits} show that SORA enables fast adversarial training of ViTs even on small-scale datasets.

\begin{table}[ht!]
  \caption{Results for \textsc{CIFAR-10} on the ViT-Small architecture.}
  \label{tab:cifar10-vits}
\begin{center}
  \begin{tabular}{c l cl cl cl c}
    \toprule
    \textbf{Method} & & \textbf{Clean} & & \textbf{FGSM} & & \textbf{PGD-10} & & \textbf{AA} \\
    \midrule
    Benign && 72.95 && 0.34  && 0.00  && 0.00  \\
    \midrule
    FGSM && 38.04 && 25.41 && 25.39 && 21.91 \\
    FGSM-RS && 42.71 && 26.55 && 26.26 && 22.21 \\
    GradAlign && 43.42 && 26.39 && 26.19 && 21.93 \\
    NuAT && 58.39 && 22.54 && 20.38 && 16.81 \\
    ZeroGrad && 40.03 && 25.74 && 25.64 && 21.77 \\
    MultiGrad && 38.55 && 25.71 && 25.58 && 22.00 \\
    N-FGSM && 36.56 && 24.82 && 24.84 && 21.52 \\
    AAER && 36.56 && 24.82 && 24.84 && 21.52 \\
    ELLE && 45.02 && 26.32 && 25.92 && 22.05 \\
    SORA && 37.48 && 25.23 && 25.16 && 21.71 \\
    \midrule
    PGD-2 && 45.70 && 26.82 && 26.52 && 22.21 \\
    \bottomrule
  \end{tabular}
\end{center}
\end{table}

\begin{table}[ht!]
  \caption{Results for \textsc{CIFAR-100} on the ViT-Small architecture.}
  \label{tab:cifar100-vits}
\begin{center}
  \begin{tabular}{c l cl cl cl c}
    \toprule
    \textbf{Method} & & \textbf{Clean} & & \textbf{FGSM} & & \textbf{PGD-10} & & \textbf{AA} \\
    \midrule
    Benign   && 44.61 && 0.41  && 0.01  && 0.00 \\
    \midrule
    FGSM    && 23.01 && 11.37 && 11.31 && 8.28 \\
    FGSM-RS  && 26.84 && 12.47 && 12.29 && 8.77 \\
    GradAlign && 26.98 && 12.24 && 11.94 && 8.64 \\
    NuAT    && 16.94 && 9.53  && 9.46  && 6.49 \\
    ZeroGrad  && 25.71 && 11.99 && 11.81 && 8.57 \\
    MultiGrad && 23.06 && 11.47 && 11.21 && 8.22 \\
    N-FGSM   && 22.53 && 11.50 && 11.44 && 8.42 \\
    AAER    && 22.51 && 11.52 && 11.44 && 8.36 \\
    ELLE    && 28.05 && 11.88 && 11.44 && 8.28 \\
    SORA    && 22.98 && 11.52 && 11.51 && 8.31 \\
    \midrule
    PGD-2    && 29.08 && 12.13 && 11.80 && 8.61 \\
    \bottomrule
  \end{tabular}
\end{center}
\end{table}

\begin{table}[ht!]
  \caption{Results for \textsc{PathMNIST} on the ViT-Small architecture.}
  \label{tab:pathmnist-vits}
\begin{center}
  \begin{tabular}{c l cl cl cl c}
    \toprule
    \textbf{Method} & & \textbf{Clean} & & \textbf{FGSM} & & \textbf{PGD-10} & & \textbf{AA} \\
    \midrule
    Benign   && 80.85 && 17.87 && 0.47  && 0.47  \\
    \midrule
    FGSM    && 56.96 && 82.40 && 8.15  && 8.02  \\
    FGSM-RS  && 45.85 && 74.16 && 8.22  && 7.24  \\
    GradAlign && 32.31 && 84.07 && 26.66 && 11.34 \\
    NuAT    && 83.40 && 86.48 && 15.68 && 13.73 \\
    ZeroGrad  && 48.36 && 84.72 && 9.43  && 9.36  \\
    MultiGrad && 42.92 && 70.15 && 10.49 && 9.22  \\
    N-FGSM   && 51.89 && 67.70 && 20.43 && 7.47  \\
    AAER    && 18.64 && 18.64 && 18.64 && 18.64 \\
    ELLE    && 18.64 && 18.64 && 18.64 && 18.64 \\
    SORA    && 76.70 && 51.96 && 34.25 && 25.74 \\
    \midrule
    PGD-2    && 74.28 && 49.89 && 46.94 && 43.09 \\
    \bottomrule
  \end{tabular}
\end{center}
\end{table}

    \clearpage
    \section{Research Statement}\label{sec:statement}


\subsection{Limitations}\label{subsec:limitations}

\subsubsection{Single-Step versus Multi-Step}
\label{subsubsec:singlestep-vs-multistep}

As shown in \twofigref{fig:pathmnist}{fig:cost} and the accompanying tables, SORA achieves the best clean-robust accuracy trade-off while remaining one of the most efficient fast adversarial training methods. In most cases, it attains higher clean accuracy than multi-step methods while maintaining competitive robust accuracy. Even on \textsc{PathMNIST}, where there is a substantial gap in robust accuracy between multi-step and single-step methods, SORA achieves superior clean and robust accuracy compared to other fast methods, while being significantly more efficient than multi-step variants.  

That said, in use cases where computational cost is not a major concern, multi-step methods may be more suitable than single-step approaches.

\subsubsection{Generalizability versus Hyperparameter Search}
\label{subsubsec:generalizability-vs-hyperparameter-search}

Although SORA demonstrates superior generalizability across a wider range of datasets and architectures compared to other single-step methods when using a fixed set of hyperparameters, some alternative methods can achieve robust accuracy close to SORA in certain scenarios through setting-specific hyperparameter search. However, such tuning typically incurs significantly higher computational cost in practice. For more details, see \Appref{sec:brittleness}.

\subsubsection{Computational Constraints}
\label{subsubsec:computaional-constraints}
We attempted to provide a comprehensive evaluation of the baselines by considering as many dataset-architecture pairs as possible. 
Our evaluations were more diverse in terms of dataset and architecture diversity compared to the Fast AT benchmark~\citep{pan2026fastat}.
However, certain baselines required substantial memory, which prevented us from evaluating them on larger dataset-architecture pairs. 
These limitations are discussed in more detail in \Appref{sec:extended_results}.

\subsection{Reproducibility}\label{subsec:reproducibility}

\subsubsection{Main Results}
\label{subsubsec:main}
General hyperparameters and optimizer details are provided in \Secref{subsec:settings}.
As recommended by the original authors of the baseline models, we provide a comprehensive account of their hyperparameters and the rationale for their selection in \Appref{sec:baseline}.
Dataset details and corresponding augmentation strategies are also documented in \Appref{sec:dataset}.

All experimental configurations, including random seeds, are accessible in our code repository.
Due to computational constraints, we conducted experiments using three distinct random seeds to assess the stability of our results.
The only exception to this rule is our results on the \textsc{ImageNet-100} which was computationally expensive and the results are due to a single run without any tuning for our method whereas the baselines benefit from hyperparameters reported by the original papers on the same dataset if available.
The detailed outcomes for each seed are available in \Appref{sec:extended_results}.

\subsubsection{Figures}
The code for generating all figures is included in our repository. 
The data for \twofigref{fig:eps-acc}{fig:acc_vs_eps_overlay} were produced by our main codebase.
For visual clarity, these figures were subsequently generated by processing and combining this raw data. 
We emphasize that the co-occurrence of Catastrophic Overfitting with drops in PertAlign (and GradAlign) to zero was consistently observed across all experimental settings.
The figures are intended to illustrate this robust trend, which holds irrespective of specific configurations.

To ensure a fair and accurate comparison of computational cost in Figures~\ref{fig:cost} and~\ref{fig:cost-cifar10}, we measured the wall-clock time for the core training loop, explicitly excluding overhead from saving and loading model checkpoints.
Each run was repeated multiple times to verify the stability of timing measurements.
System resources were monitored to ensure that no extraneous processes interfered, and we confirmed that GPU thermal throttling did not impact the reported durations.

\subsubsection{Ablation Studies}
The configurations required to replicate our ablation studies are detailed in \Appref{sec:ablations}. 
The results can be directly obtained by executing the corresponding scripts in our codebase.

\subsection{Fair Baseline Comparisons}\label{subsec:fair}
A central finding of this work is that existing methods often perform poorly on previously untested datasets, such as \textsc{PathMNIST} and \textsc{TissueMNIST}~\citep{yang2023medmnist}. 
This issue is not unique to this field; other areas of machine learning have faced similar challenges due to over-reliance on limited benchmark sets.
Evaluation on a narrow range of datasets can create an illusion of progress and lead to indirect overfitting on test data through repeated hyperparameter tuning, even with proper validation splits~\citep{hardt2026emerging}.

To mitigate this risk and promote generalizability, we evaluated all methods on previously untested datasets and architectures. 
Nevertheless, we ensured that baseline methods were given a fair chance by diligently tuning their hyperparameters. 
We adhered to the guidelines provided by the original authors, using their exact hyperparameters for established datasets. 
For new datasets, we started from the authors' recommended parameters and made reasoned adjustments to improve performance.

The time invested in hyperparameter search for many baseline methods was equal to or greater than that spent on our own model. 
While our search was necessarily limited by computational resources and the desire to avoid excessive fine-tuning, we carefully reasoned about the impact of key hyperparameters to achieve competitive results. 
Certain hyperparameter adjustments were found to benefit all methods uniformly on specific datasets; these improved, universally beneficial settings are the ones we report.

A key motivation for fast adversarial training is efficiency and practical viability~\citep{wong2020fast}. 
If extensive tuning is required to achieve competitive results, one might as well revert to more reliable but computationally expensive methods like PGD.
This rationale underpins our commitment to using a fixed hyperparameter set across all datasets and architectures for our method, while permitting necessary adjustments for baselines to ensure fairness.

Our hyperparameter search was intentionally constrained to PreActResNet-18 and WideResNet-28-10 on \textsc{CIFAR-10} and \textsc{CIFAR-100}, since nearly all other baselines have also examined these settings (some use WideResNet-34-10 instead).
Since these settings were not challenging enough, for our method and the baselines, we did some limited experiments on \textsc{PathMNIST} as well.
The resulting parameters were then fixed for evaluations on \textsc{TissueMNIST} and \textsc{TinyImageNet}~\citep{le2015tiny}. 
For \textsc{TinyImageNet}, we incorporated published baseline hyperparameters where available; for \textsc{TissueMNIST}, no additional tuning was performed.

Notably, our method's key parameters ($\beta$ for reducing PertAlign variance, and $\alpha_0$/$\alpha_{\max}$ for step-size constraints) required minimal tuning. 
All results on \textsc{CIFAR-100}, \textsc{TissueMNIST}, \textsc{ImageNet-100} and \textsc{TinyImageNet} represent \textbf{first-run outcomes without modification}, whereas baselines benefited from previously optimized settings when available.
Similarly, all results on ResNet-18, WideResNet-28, and SENet-18 are also \textbf{first-run outcomes without modification}, while we allowed baselines to benefit from previously optimized settings when available.

\subsection{LLM Usage}\label{subsec:llm}
Large Language Models (LLMs) were used to assist with proofreading and polishing the text of this manuscript. 
LLMs were also used for initial code scaffolding in some instances (primarily for generating plots or loading raw data). 
All code generated with LLM assistance was rigorously reviewed, tested, and validated by multiple co-authors to ensure its correctness and integration into our codebase.
    

\end{document}